\newtheorem{definition}{Definition}
\newtheorem{theorem}{Theorem}
\newtheorem{example}{Example}
\newtheorem{lemma}{Lemma}
\newtheorem{corollary}{Corollary}
\newcommand{\sep}{, }
\begin{document}

\title{Local Causal Discovery with Background Knowledge}

\author[1]{Qingyuan Zheng}
\author[2]{Yue Liu\thanks{Corresponding authors.}}
\author[1]{Yangbo He\thanks{Corresponding authors.}}

\affil[1]{LMAM, School of Mathematical Sciences, LMEQF, and Center of Statistical Science, Peking University.}
\affil[2]{Center for Applied Statistics and School of Statistics, Renmin University of China, Beijing, China.}

\date{}
\maketitle

\begin{abstract}
%% Text of abstract
Causality plays a pivotal role in various fields of study. Based on the framework of causal graphical models, previous works have proposed identifying whether a variable is a cause or non-cause of a target in every Markov equivalent graph solely by learning a local structure. However, the presence of prior knowledge, often represented as a partially known causal graph, is common in many causal modeling applications. Leveraging this prior knowledge allows for the further identification of causal relationships. In this paper, we first propose a method for learning the local structure using all types of causal background knowledge, including direct causal information, non-ancestral information and ancestral information. Then we introduce criteria for identifying causal relationships based solely on the local structure in the presence of prior knowledge. We also apply out method to fair machine learning, and experiments involving local structure learning, causal relationship identification, and fair machine learning demonstrate that our method is both effective and efficient.
\end{abstract}

\begin{keywords}
Causal relationship\sep Background knowledge\sep Causal DAG models\sep Local learning method\sep Maximally partially directed acyclic graph
\end{keywords}

\section{Introduction}
\label{sec:intro}
Causality is crucial in artificial intelligence research. 
Recent work emphasizes that causal models should be applied in artificial intelligence systems to support explanation and understanding~\citep{lake2017building}. 
Specifically, to achieve fairness in machine learning algorithms, some approaches propose excluding all variables influenced by the sensitive variable from the set of predictors~\citep{wu2019counterfactual, zuo2022counterfactual}.
This highlights the importance of understanding the causal relationships between other variables and the sensitive variable. 

Randomized experiments are the gold standard for identifying causal relationships~\citep{robins2000marginal}.
However, randomized trials may be impractical due to cost,  ethical considerations, or infeasibility for non-manipulable variables.
As a result, inferring causal relationships from observational data has gained increasing attention~\citep{cooper1997simple, cox2018modernizing}.
Causal relationships among multiple variables in observational data can be compactly represented using causal directed acyclic graphs (DAGs)~\citep{pearl1995causal, spirtes2000causation, geng2019evaluation}.
In a causal DAG, a treatment variable is a cause of an outcome variable if and only if there exists a directed path from treatment variable to outcome variable~\citep{pearl2009causality}.
Therefore, if the underlying DAG is known, causal relationships between any two variables can be judged directly.
However, without additional distributional assumptions, we can only identify a Markov equivalence class (MEC) from observational data~\citep{chickering2002learning}. An MEC is a set of DAGs where each DAG encodes the same conditional independencies.

The DAGs in the same MEC may encode different causal relationships.
For example, in Figure~\ref{fig:intro}, $\mathcal{G}_2$ and $\mathcal{G}_3$ are two Markov equivalent DAGs. However, $A$ is a cause of $B$ in $\mathcal{G}_3$, whereas $A$ is not a cause of $B$ in $\mathcal{G}_2$.
Typically, we define the causal relationship between two nodes $X$ and $Y$ within a set of DAGs, denoted as $\mathcal{S}$, as follows~\citep{fang2022local, zuo2022counterfactual}:
\begin{itemize}
    \item $X$ is a \textit{definite cause} of $Y$ in $\mathcal{S}$ if $X$ is a cause of $Y$ in every DAG $\mathcal{G}\in\mathcal{S}$;
    \item $X$ is a \textit{definite non-cause} of $Y$ in $\mathcal{S}$ if $X$ is not a cause of $Y$ in every DAG $\mathcal{G}\in\mathcal{S}$;
    \item $X$ is a \textit{possible cause} of $Y$ in $\mathcal{S}$ otherwise; that is, $X$ is a cause of $Y$ in some DAGs in $\mathcal{S}$, and $X$ is not a cause of $Y$ in others.
\end{itemize}
Respectively, $Y$ is called a definite descendant, definite non-descendant, or possible descendant of $X$.

\begin{figure}[t]
\centering
\subfloat[$\mathcal{G}^*$]
{
\label{fig_intro:subfig1}\includegraphics[width=0.15\textwidth]{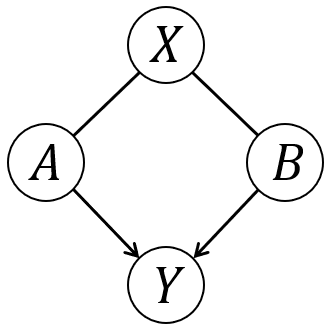}
}
\subfloat[$\mathcal{G}_1$]
{
\label{fig_intro:subfig2}\includegraphics[width=0.15\textwidth]{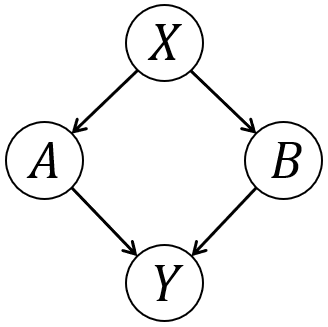}
}
\subfloat[$\mathcal{G}_2$]
{
\label{fig_intro:subfig3}\includegraphics[width=0.15\textwidth]{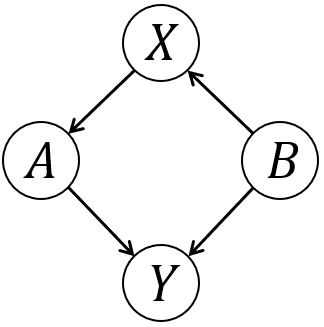}
}
\subfloat[$\mathcal{G}_3$]
{
\label{fig_intro:subfig4}\includegraphics[width=0.15\textwidth]{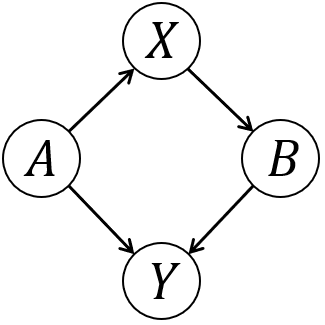}
}
\caption{An example of three Markov equivalent DAGs $\mathcal{G}_1$-$\mathcal{G}_3$ and their corresponding CPDAG $\mathcal{G}^*.$ These Markov equivalent DAGs have the same edges despite their orientation and share the same v-structure $A\to Y\leftarrow B$. The undirected edges $A-X-B$ in $\mathcal{G}^*$ indicate that these edges may have different orientations in the Markov equivalent DAGs.} 
\label{fig:intro}
\end{figure}

We can see that if $|\mathcal{S}|=1,$ $X$ is either a definite cause or a definite non-cause of $Y$ for every pair $(X,Y),$ and causal relationships are clearly identified. 
When $|\mathcal{S}|$ becomes larger, the causal relationships become more uncertain.
Therefore, if we can restrict the underlying DAG to a small set, we can obtain a better understanding of the causal relationships.

In practice, there often exists some prior knowledge about the causal structure, which encodes causal relationship among observed variables~\citep{sinha2021using}.
For example, in clinical trials, prior knowledge is often available from previous research or expert opinions~\citep{hasan2022kcrl}.
In algorithm fairness research, some works assume that the sensitive attribute set is closed under ancestral relationships~\citep{kusner2017counterfactual, zuo2022counterfactual}. By applying this prior knowledge, the MEC can be refined to a smaller set of DAGs, enabling more precise inference about causal relationships.

An MEC can be uniquely represented by a complete partially directed acyclic graph (CPDAG)~\citep{meek1995causal, andersson1997characterization, spirtes2000causation, chickering2002learning}. 
By applying some type of prior knowledge, such as pairwise direct causal relationships, tiered orderings, or non-ancestral knowledge, the refined set of MEC can be represented by a maximally partially directed acyclic graph (MPDAG)~\citep{meek1995causal,fang2020ida,fang2022representation, bang2023we}\footnote{Note that, with some type of prior knowledge such as ancestral relations, the refined set cannot be represented only by an MPDAG, but an MPDAG with a minimal residual set of direct causal clauses~\citep{fang2022representation}. We discuss this type of prior knowledge in \ref{append:algorithm}.}.
In this way, prior knowledge can be seen as a partially known part of the underlying DAG.
For example, Figure~\ref{fig:intro} shows three Markov equivalent DAGs, $\mathcal{G}_1$-$\mathcal{G}_3,$ and their corresponding CPDAG, $\mathcal{G}^*.$ The MEC is $[\mathcal{G}^*]=\{\mathcal{G}_1,\mathcal{G}_2,\mathcal{G}_3\}.$ Since $X$ is a cause of $Y$ in all DAGs, $X$ is a definite cause of $Y$ in $[\mathcal{G}^*].$ On the other hand, $A$ is a cause of $B$ in $\mathcal{G}_3$ but not in $\mathcal{G}_1$ and $\mathcal{G}_2.$ If $A\to X$ is known as prior knowledge, the MEC is restricted to $\{\mathcal{G}_3\},$ making $A$ a definite cause of $B$.

In recent years, several studies have investigated the issue of identifying causal relationships~\citep{roumpelaki2016marginal,mooij2020constraint,fang2022local,zuo2022counterfactual}. Among them,~\citet{fang2022local} proposed a method for determining causal relationships in CPDAG through a local learning approach. However, this method cannot be directly applied to equivalence classes constrained by background knowledge. \citet{zuo2022counterfactual} proposed a method for determining causal relationships in MPDAGs. Nevertheless, the performance of this method is constrained by the estimation errors inherent in learning the entire MPDAG.

In this paper, we consider the identification of causal relationships in the refined set of MEC represented by an MPDAG.
We assume no selection bias or presence of unmeasured confounders, consistent with other causal discovery methods~\citep{liu2020local, fang2022local, zuo2022counterfactual}.
Unlike existing method~\citep{zuo2022counterfactual}, we propose a local method for identifying causal relationships in an MPDAG. This approach requires only the local structure around treatment variable and some additional conditional independencies for identifying of causal relationships.
To achieve this, we introduce an algorithm for learning the local structure in an MPDAG, based on the MB-by-MB algorithm~\citep{wang2014discovering, xie2024local}.

The remainder of this paper is organized as follows. 
After reviewing existing methods in Section~\ref{sec:related works} and introducing basic concepts in Section~\ref{sec:preliminaries}, we propose a sound and complete algorithm in Section~\ref{sec:mb by mb in mpdag} for learning the local structure around the treatment variable in an MPDAG.
In Section~\ref{sec:local theorems}, we introduce criteria for judging causal relationships between the treatment and outcome given the local structure of the treatment variable.
In Section~\ref{sec:experiments}, we test our method and compare it with existing methods by experiments involving local structure learning, causal relationship identification, and fair machine learning.
Additionally, We apply our method to a real-world dataset in Section~\ref{sec:application}.

Our main contributions are summarized as follows:
\begin{itemize}
    \item We propose a sound and complete algorithm for learning the local structure around any given variable in an MPDAG.
    \item We provide sufficient and necessary conditions for causal relationships in the equivalence class represented by an MPDAG. These conditions can be assessed solely based on the local structure of the treatment variable and some additional conditional independencies.
    \item Based on these conditions, we propose a sound and complete algorithm for identifying all definite descendants, definite non-descendants, and possible descendants of the treatment variable locally in an MPDAG.
    \item Our experiments demonstrate the efficiency and efficacy of our algorithm. It shows superior performance in experiments related to local structure learning, causal relationship identification, and fair machine learning. We also apply our method to a real-world dataset.
\end{itemize}

\section{Related Works and Methods}
\label{sec:related works}
\textbf{Local structure learning.} Local structure learning refers to finding the parent-children (PC) set of a target variable and distinguishing between the parent and child identities of nodes in the PC set.
Multiple algorithms have been proposed for learning local structure in a CPDAG~\citep{yin2008partial, wang2014discovering, gao2015local, liu2020local, gupta2023local}.
Among them, the PCD-by-PCD~\citep{yin2008partial}, MB-by-MB~\citep{wang2014discovering} and CMB algorithms~\citep{gao2015local} sequentially find a local set, such as the PC set or the Markov blanket (MB)~\citep{tsamardinos2003algorithms, tsamardinos2003time, aliferis2003hiton, pena2007towards}, for nodes connected to the target node. They then find v-structures in this local set to orient edges in the local structure of the target node. This process continues until all edges around the target node are oriented or it is determined that further orientation is not possible. 
The LDECC algorithm~\citep{gupta2023local} incorporates additional conditional independence tests into the PC algorithm, which may help in learning the local structure of the target variable more efficiently.
Recently,~\citet{liu2020local} extended the MB-by-MB algorithm to learn the chain component containing the target node and the directed edges around it in a CPDAG.

To obtain the local structure around the target node in an MPDAG, we can first use the aforementioned algorithms to learn the local structure in the corresponding CPDAG, then add background knowledge and use Meek's rules~\citep{meek1995causal} for edge orientation.
However, since Meek's rules are applied on the entire graph, the completeness of such a method is not guaranteed.
Additionally, the background knowledge is not exploited in this local structure learning process.
In this paper, we propose a sound and complete algorithm based on the MB-by-MB algorithm to learn the local structure in an MPDAG, which utilizes the background knowledge during the learning process.

\textbf{Causal relationship identification.}  To judge causal relationship between a treatment variable $X$ and an outcome variable $Y$ in a set $\mathcal{S}$ of DAGs represented by an MPDAG $\mathcal{G}^*,$ an intuitive method is to enumerate all DAGs in $\mathcal{S}$~\citep{wienobst2023efficient}, and judge the causal relationship in each DAG.
However, this is generally computationally inefficient since the size of $\mathcal{S}$ is large in most cases~\citep{he2015counting, sharma2023counting}.

Another way involves estimating the causal effect of $X$ on $Y.$ 
For almost every distribution faithful to the underlying DAG, $X$ is a cause of $Y$ if and only if the causal effect of $X$ on $Y$ is not zero.
However, equivalent DAGs may encode different causal effects.
\citet{perkovic2020identifying} proposed a sufficient and necessary graphical condition for checking whether the causal effect is identifiable in an MPDAG, i.e., all DAGs represented by an MPDAG encode the same causal effect.
If the condition is met, we can estimate the causal effect using the causal identification formula given by~\citet{perkovic2020identifying} and judge the causal relation.

If the causal effect is not identified, an alternative approach involves enumerating all possible causal effects. $X$ is a definite cause (or definite non-cause) of $Y$ if and only if all possible causal effects of $X$ on $Y$ are non-zero (or zero).
This approach can be implemented through the intervention calculus when the DAG is absent (IDA) method without listing all equivalent DAGs~\citep{maathuis2009estimating, perkovic2017interpreting, nandy2017estimating, fang2020ida, liu2020collapsible, liu2020local, witte2020efficient, guo2021minimal}.
Intuitively, the IDA method enumerates all possible local structures around the treatment variable.
Each local structure represents a set of DAGs, which encode the same causal effect.
However, the computational complexity of IDA methods is exponential to the number of undirected edges incident to $X$ on a possibly causal path from $X$ to $Y$~\citep{guo2021minimal}, making it time-consuming in the worse case.

\citet{fang2022local} proposed a local method for identifying causal relations in a CPDAG. Their method focuses on learning a compact subgraph surrounding the target node and utilizes conditional independence tests to infer causal relations. However, directly applying their findings to MPDAGs is limited, as a CPDAG can be considered a special case of an MPDAG when the prior knowledge does not offer additional insights beyond observational data.

Recently,~\citet{zuo2022counterfactual} proposed a sufficient and necessary graphical condition for judging causal relations in an MPDAG.
However, judging this condition requires enumerating paths that satisfy certain conditions in an MPDAG.
Therefore, their method relies on a fully learned MPDAG.
In contrast, we propose a fully local method for identifying causal relations in an MPDAG.

\section{Preliminaries}
\label{sec:preliminaries}
We begin by clarifying our setting and listing existing results on identifying causal relations.
We state key concepts here and leave other graph terminologies and formal definitions in~\ref{append:graph}.

\subsection{Basic concepts}
Let $\mathcal{G}=(\mathbf{V},\mathbf{E})$ be a graph with node set $\mathbf{V}$ and edge set $\mathbf{E}.$
A graph $\mathcal{G}$ is a Directed Acyclic Graph (DAG) if all its edges are directed and it contains no cycles.
A path $p=\langle V_1,\dots,V_k \rangle$ is a sequence of adjacent nodes in $\mathcal{G}$.
We say that $p$ is causal if $V_i\to V_{i+1}$ for every $i.$
For two distinct nodes $X$ and $Y,$ we say $X$ is an ancestor (or cause) of $Y$ and $Y$ is a descendant of $X$, denoted by $X\in an(Y,\mathcal{G})$ and $Y\in de(X,\mathcal{G}),$ if there is a causal path from $X$ to $Y.$
By convention, $X\in an(X,\mathcal{G})$ and $X\in de(X,\mathcal{G}).$
For a non-endpoint $V_i$ on a path $p,$ $V_i$ is called a collider on $p$ if $V_{i-1}\to V_i\leftarrow V_{i+1};$ otherwise, $V_i$ is a non-collider on $p.$
A path $p$ is d-connected (or open) given a set of nodes $\mathbf{Z}$ if $\mathbf{Z}$ does not contain any of its non-colliders, and for each collider $V_i$ on $p,$ $de(V_i,\mathcal{G})\cap \mathbf{Z}\neq \emptyset.$
Otherwise, $p$ is blocked by $\mathbf{Z}.$
For distinct node sets $\mathbf{X},\mathbf{Y},\mathbf{Z},$ we say that $\mathbf{X}$ and $\mathbf{Y}$ are d-separated by $\mathbf{Z}$ if all paths from any $X\in \mathbf{X}$ to $Y\in \mathbf{Y}$ are blocked by $\mathbf{Z}.$
Let $P$ be a distribution over $\mathbf{V}.$ $P$ is Markovian and faithful with respect to $\mathcal{G}$ if conditional independencies in $P$ are equivalent to d-separations in $\mathcal{G}.$
We use $\mathbf{X}\perp \mathbf{Y} \mid \mathbf{Z}$ to denote that (i) $\mathbf{X},\mathbf{Y}$ are d-separated by $\mathbf{Z}$ in $\mathcal{G}$, and (ii) $\mathbf{X}$ and $\mathbf{Y}$ are conditionally independent given $\mathbf{Z}$ in a distribution $P$ that is Markovian and faithful with respect to $\mathcal{G}.$

For a DAG $\mathcal{G},$ its skeleton is the undirected graph obtained by replacing all directed edges in $\mathcal{G}$ with undirected edges while preserving the adjacency relations.
A v-structure in $\mathcal{G}$ is a triple $(X_{i-1},X_i,X_{i+1})$ with $X_{i-1}\to X_i\leftarrow X_{i+1}$ in $\mathcal{G}$ where $X_{i-1}$ and $X_{i+1}$ are not adjacent.
Two DAGs $\mathcal{G}_1,\mathcal{G}_2$ are Markov equivalent if they encode the same d-separation relations.
It has been shown that two DAGs are Markov equivalent if and only if they have the same skeleton and the same v-structures~\citep{verma1990equivalence}.
Let $\left[\mathcal{G}\right]$ denote the set of all DAGs that are Markov equivalent to $\mathcal{G},$ also known as the Markov equivalence class (MEC) of $\mathcal{G}.$
The MEC $\left[\mathcal{G}\right]$ can be uniquely represented by a complete partially directed acyclic graph (CPDAG) $\mathcal{C},$ which shares the same skeleton with $\mathcal{G},$ and any edge in $\mathcal{C}$ is directed if and only if it has the same direction in all DAGs in $\left[\mathcal{G}\right]$~\citep{verma1990equivalence}.
We also use $\left[\mathcal{C}\right]$ to denote the MEC represented by $\mathcal{C}.$

Let $\mathcal{B}$ be a set of background knowledge.
In this paper, we consider \textit{direct causal information}, as discussed by~\citet{zuo2022counterfactual}.
In this way, the background knowledge in $\mathcal{B}$ is of the form $X\to Y,$ meaning that $X$ is a direct cause of $Y.$
By orienting undirected edges in a CPDAG $\mathcal{C}$ according with $\mathcal{B}$ and applying Meek's rules (\citep{meek1995causal}, also see~\ref{append:graph}), we obtain a partially directed acyclic graph $\mathcal{G}^*,$ called the maximally partially directed acyclic graph (MPDAG) of $[\mathcal{C},\mathcal{B}]$~\citep{fang2022representation}.
Let $\left[\mathcal{G}^*\right]$ denote the set of DAGs in $\left[\mathcal{C}\right]$ which is consistent with $\mathcal{B}.$
It has been shown that an edge in $\mathcal{G}^*$ is directed if and only if it has the same orientation in every DAG in $\left[\mathcal{G}^*\right]$~\citep{meek1995causal}.
A path $p=\langle V_1,\dots,V_k\rangle$ is b-possibly causal in $\mathcal{G}^*$ if no edge $V_i\leftarrow V_j$ exists in $\mathcal{G}^*$ for $1\le i<j\le k.$
A path $p$ is chordless if any of its two non-consecutive nodes are not adjacent.
For distinct nodes $X$ and $Y$ in $\mathcal{G}^*,$ the critical set of $X$ with respect to $Y$ in $\mathcal{G}^*$ consists of all adjacent vertices of $X$ lying on at least one chordless b-possibly causal path from $X$ to $Y.$

In a partially directed graph $\mathcal{G}=(\mathbf{V},\mathbf{E}),$ which could be a DAG, CPDAG or MPDAG, we use $ch(X,\mathcal{G})=\{Y\in \mathbf{V}|X\to Y\mathrm{\ in\ }\mathcal{G}\},pa(X,\mathcal{G})=\{Y\in\mathbf{V}|Y\to X\mathrm{\ in\ }\mathcal{G}\},sib(X,\mathcal{G})=\{Y\in\mathbf{V}|X- Y\mathrm{\ in\ }\mathcal{G}\}$ to denote the children, parents and siblings of a node $X$ in $\mathcal{G}.$
Let $adj(X,\mathcal{G})=pa(X,\mathcal{G})\cup ch(X,\mathcal{G})\cup sib(X,\mathcal{G})$ denote the set of nodes adjacent with $X$ in $\mathcal{G}.$
For a set of nodes $\mathbf{W}\subseteq \mathbf{V},$ let $\mathcal{G}_\mathbf{W}$ denote the induced subgraph of $\mathcal{G}$ over $\mathbf{W},$ containing all and only the edges in $\mathcal{G}$ that are between nodes in $\mathbf{W}.$
For a set of nodes $\mathbf{Q}\subseteq \mathbf{V},$ we say $\mathbf{Q}$ is a clique if every pair of nodes in $\mathbf{Q}$ are adjacent in $G.$ A clique $\mathbf{Q}$ is maximal if for any clique $\mathbf{Q}',$ $\mathbf{Q}\subseteq \mathbf{Q}'$ implies $\mathbf{Q}=\mathbf{Q}'.$

\subsection{Problem formulation and existing result}

Let $\mathcal{D}$ be a set of data over variables $\mathbf{V},$ whose distribution is Markovian and faithful with respect to an underlying DAG $\mathcal{G}=\{\mathbf{V},\mathbf{E}\}.$
Let $\mathcal{B}\subseteq \mathbf{E}$ be a set of background knowledge.
Based on $\mathcal{D}$ and $\mathcal{B},$ there exists a unique MPDAG $\mathcal{G}^*$ such that $\mathcal{G}\in \left[\mathcal{G}^*\right].$
Consider a target variable $X\in\mathbf{V}$.
As defined in Section~\ref{sec:intro}, for any $Y\in\mathbf{V},$ $Y$ is a definite descendant (definite non-descendant, possible descendant) of $X$ in $\left[\mathcal{G}^*\right]$ if $Y$ is a descendant of $X$ in all (none, some) DAGs within $\left[\mathcal{G}^*\right].$
Our objective is to identify all definite descendants, definite non-descendants and possible descendants of $X$ in $\left[\mathcal{G}^*\right].$

If the true MPDAG $\mathcal{G}^*$ can be learned from $\mathcal{D}$ and $\mathcal{B}$, it has been shown that the causal relationship between any pair of nodes $(X,Y)$ can be judged by examining the critical set of $X$ with respect to $Y$ in $\mathcal{G}^*.$

\begin{lemma}
\label{lemma:zuo}
(\citep{zuo2022counterfactual}, Theorem 4.5) Let $X$ and $Y$ be two distinct vertices in an MPDAG $\mathcal{G}^*,$ and $\mathbf{C}$ be the critical set of $X$ with respect to $Y$ in $\mathcal{G}^*.$
Then $Y$ is a definite descendant of $X$ in $\left[\mathcal{G}^*\right]$ if and only if either $\mathbf{C}\cap ch(X,\mathcal{G}^*) \neq \emptyset,$ or $\mathbf{C}$ is non-empty and induces an incomplete subgraph of $\mathcal{G}^*.$
\end{lemma}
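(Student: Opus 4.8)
The plan is to translate ``$Y$ is a definite descendant of $X$'' into a local statement about the orientation of edges incident to $X$. The technical core is the following \emph{claim}: if $p=\langle X=V_1,\ldots,V_k=Y\rangle$ is a chordless b-possibly causal path in $\mathcal{G}^*$ and $\mathcal{G}\in[\mathcal{G}^*]$ is any DAG in which $V_1\to V_2$, then $p$ is causal in $\mathcal{G}$, so $Y\in de(X,\mathcal{G})$. I would prove this by induction along $p$: assuming $V_1\to\cdots\to V_i$ in $\mathcal{G}$ with $2\le i\le k-1$, if the edge between $V_i$ and $V_{i+1}$ were oriented $V_{i+1}\to V_i$ in $\mathcal{G}$, then $V_{i-1}\to V_i\leftarrow V_{i+1}$ would be a v-structure of $\mathcal{G}$, since $V_{i-1}$ and $V_{i+1}$ are non-consecutive on a chordless path and hence non-adjacent. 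All DAGs in $[\mathcal{G}^*]$ are Markov equivalent and so share their v-structures; hence this v-structure occurs in every such DAG, which forces both of its edges to be directed in $\mathcal{G}^*$, so $V_{i+1}\to V_i$ in $\mathcal{G}^*$ --- contradicting that $p$ is b-possibly causal. Thus $V_i\to V_{i+1}$ in $\mathcal{G}$, and the induction goes through.

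From the claim I would extract the \emph{equivalence} that drives the whole proof: for every DAG $\mathcal{G}\in[\mathcal{G}^*]$, one has $Y\in de(X,\mathcal{G})$ if and only if $X\to V$ in $\mathcal{G}$ for some $V\in\mathbf{C}$. The ``if'' direction is immediate, since $V\in\mathbf{C}$ comes with a chordless b-possibly causal path from $X$ to $Y$ having $V$ as its second vertex, to which the claim applies. For ``only if'', take a causal path from $X$ to $Y$ in $\mathcal{G}$ and repeatedly shortcut it --- shortcuts exist by acyclicity --- to obtain a chordless causal path of $\mathcal{G}$; because $\mathcal{G}$ and $\mathcal{G}^*$ share the skeleton and the directed edges of $\mathcal{G}^*$ keep their orientation in $\mathcal{G}$, this is a chordless b-possibly causal path of $\mathcal{G}^*$, so its second vertex $V$ lies in $\mathbf{C}$ and has $X\to V$ in $\mathcal{G}$.

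For the ``$\Leftarrow$'' direction: if $\mathbf{C}\cap ch(X,\mathcal{G}^*)\neq\emptyset$, pick $V$ there; then $X\to V$ in every DAG of $[\mathcal{G}^*]$, and the equivalence makes $Y$ a definite descendant. In the remaining case $\mathbf{C}$ is non-empty, incomplete, and disjoint from $ch(X,\mathcal{G}^*)$, so every element of $\mathbf{C}$ is a sibling of $X$ in $\mathcal{G}^*$ (a member of $\mathbf{C}$ can only appear as the second vertex of a chordless path out of $X$, and if it were a parent of $X$ that first edge would point back into $X$). Choose non-adjacent $A,B\in\mathbf{C}$. No DAG in $[\mathcal{G}^*]$ can have both $A\to X$ and $B\to X$, as that would be a v-structure $A\to X\leftarrow B$ not present in $\mathcal{G}^*$; hence in every such DAG $X\to A$ or $X\to B$, and the equivalence again yields $Y\in de(X,\mathcal{G})$, so $Y$ is a definite descendant.

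For ``$\Rightarrow$'' I would prove the contrapositive. Assume neither condition holds, so $\mathbf{C}\cap ch(X,\mathcal{G}^*)=\emptyset$ and $\mathbf{C}$ is empty or a clique. If $\mathbf{C}=\emptyset$, the equivalence gives $Y\notin de(X,\mathcal{G})$ for every $\mathcal{G}\in[\mathcal{G}^*]$. If $\mathbf{C}$ is a non-empty clique of siblings of $X$, it suffices by the equivalence to produce one DAG $\mathcal{G}\in[\mathcal{G}^*]$ in which no element of $\mathbf{C}$ is a child of $X$; the candidate is obtained by orienting every edge $X-V$, $V\in\mathbf{C}$, as $V\to X$. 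Such a choice creates no forbidden v-structure at $X$, since the parents of $X$ become $pa(X,\mathcal{G}^*)\cup\mathbf{C}$, with $\mathbf{C}$ a clique by hypothesis and every parent of $X$ already adjacent to every sibling of $X$ (otherwise Meek's rule R1 would have oriented that sibling edge). The step I expect to be the main obstacle is showing this local choice extends to a genuine DAG of $[\mathcal{G}^*]$ --- that closing $\mathcal{G}^*$ together with $\{V\to X:V\in\mathbf{C}\}$ under Meek's rules remains acyclic and introduces no new v-structure anywhere --- which I would handle using chordality of the undirected part of $\mathcal{G}^*$ together with standard facts on consistent DAG extensions of MPDAGs that respect a prescribed local orientation. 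Given such a $\mathcal{G}$, the equivalence gives $Y\notin de(X,\mathcal{G})$, so $Y$ is not a definite descendant, completing the contrapositive.
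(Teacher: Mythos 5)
The paper never proves this lemma: it is imported verbatim as Theorem~4.5 of \citet{zuo2022counterfactual}, so there is no in-paper proof to compare against. Judged on its own terms, your reduction is the right one and most of it is sound: the induction showing that a chordless b-possibly causal path of $\mathcal{G}^*$ whose first edge is oriented out of $X$ in some $\mathcal{G}\in[\mathcal{G}^*]$ must be causal in that $\mathcal{G}$ (this is the DAG-level analogue of the paper's Lemma~\ref{lemma: perkovic}), the resulting equivalence that $Y\in de(X,\mathcal{G})$ if and only if $X\to V$ in $\mathcal{G}$ for some $V\in\mathbf{C}$ (with the shortcut argument for the ``only if'' half), the entire $\Leftarrow$ direction, and the $\mathbf{C}=\emptyset$ case of $\Rightarrow$ all go through. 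This is also exactly the machinery the paper deploys when it \emph{uses} the lemma, e.g.\ in the proofs of Lemma~\ref{lemma:identify critical set} and Theorem~\ref{thm:implicit cause}.

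The genuine gap is the step you flagged in the $\Rightarrow$ direction: exhibiting a DAG of $[\mathcal{G}^*]$ in which every element of the clique $\mathbf{C}\subseteq sib(X,\mathcal{G}^*)$ is a parent of $X$. Verifying that $pa(X,\mathcal{G}^*)\cup\mathbf{C}$ creates no new v-structure collided on $X$ is not enough. By the IDA characterization for MPDAGs (Theorem~1 of \citet{fang2020ida}, the ``standard fact'' you would need, and the content of the paper's Lemma~\ref{lemma: maxclique is pa}), a clique $\mathbf{Q}\subseteq sib(X,\mathcal{G}^*)$ is realizable as the extra parent set of $X$ only if no $W\in sib(X,\mathcal{G}^*)\setminus\mathbf{Q}$ and $R\in\mathbf{Q}$ satisfy $W\to R$ in $\mathcal{G}^*$; otherwise orienting $R\to X$ and $X\to W$ forces the directed cycle $X\to W\to R\to X$. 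The set $\mathbf{C}$ itself need not satisfy this, so one must enlarge it to $an(\mathbf{C},\mathcal{G}^*)\cap sib(X,\mathcal{G}^*)$ and then prove that the enlarged set is still a clique --- that closure argument (Lemma~B.1 of \citet{zuo2022counterfactual}, used in part~(ii) of the paper's proof of Lemma~\ref{lemma:identify critical set}) is the real content of this direction and is absent from your sketch. Your proposed substitute, chordality of the undirected part of $\mathcal{G}^*$, is a false premise for general MPDAGs: starting from a complete undirected CPDAG on $\{a,b,c,d\}$ and adding the background knowledge $a\to c$ and $b\to d$ yields an MPDAG (no Meek rule fires, and the order $a<b<c<d$ witnesses consistency) whose undirected part is the chordless four-cycle $a-b-c-d-a$. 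So the construction is salvageable, but not by the route you indicate.
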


However, learning the entire MPDAG is time-consuming and may introduce more errors~\citep{chickering1996learning, chickering2004large}.
In this paper, we propose a method to judge the causal relationship between $X$ and $Y$ in $\left[\mathcal{G}^*\right]$ using only the local structure around $X,$ specifically the tuple $\left(pa(X,\mathcal{G}^*),ch(X,\mathcal{G}^*),sib(X,\mathcal{G}^*)\right)$ and the skeleton of $\mathcal{G}^*_{sib(X,\mathcal{G}^*)}$, thus avoiding the extra cost and potential errors associated with learning the entire MPDAG.

\section{Local causal structure learning with background knowledge}
\label{sec:mb by mb in mpdag}
Before presenting the theorems for identifying causal relations, we must address an important question: can we learn the local structure around $X$ without learning the entire MPDAG? In this section, we provide an affirmative answer to this question. In Section~\ref{subsec:local learning direct cause}, we first provide an algorithm to learn the local structure when background knowledge consists of only direct causal information. After that, in Section~\ref{subsec:local learn non-ans}, we incorporate non-ancestral information and ancestral information, and propose algorithms learning the local structure when background knowledge consists of direct causal information and non-ancestral information, or when all three types of background knowledge are available.

\subsection{Local structure learning with direct causal information}
\label{subsec:local learning direct cause}

We first introduce the MB-by-MB in MPDAG algorithm to find the local structure when background knowledge consists of only direct causal information. This algorithm is an extension of the MB-by-MB algorithm, which discovers the local structure of $X$ in the corresponding CPDAG~\citep{wang2014discovering, liu2020local, xie2024local}.

The main procedure of the algorithm in summarized in Algorithm~\ref{alg:mb-by-mb in MPDAG}.
Let $X$ be the target node in an MPDAG $\mathcal{G}^*.$
We want to learn $pa(X,\mathcal{G}^*),ch(X,\mathcal{G}^*),sib(X,\mathcal{G}^*)$ and the skeleton of $\mathcal{G}^*_{sib(X,\mathcal{G}^*)}$ from observational data $\mathcal{D}$ and background knowledge $\mathcal{B}$ consisting of direct causal information.
In the learning procedure, we maintain a set of conditional independencies called IndSet.
We use $(a,b,S_{ab})\in \mathrm{IndSet}$ for two distinct nodes $a,b$ and a node set $S_{ab}$ to denote that $a,b$ are conditional independent given $S_{ab}.$

\begin{algorithm}[t]
\LinesNumbered
\caption{MB-by-MB in MPDAG: find local causal structure of a certain node in MPDAG $\mathcal{G}^*$.}
\label{alg:mb-by-mb in MPDAG}
\SetKwInOut{Input}{Input}
\SetKwInOut{Output}{Output}
\small

\Input{A target $X$, observational data $\mathcal{D}$, background knowledge $\mathcal{B}$.}
\Output{A PDAG $G$ containing the local structure of $X$.}
Initialize DoneList $=\emptyset$ (list of nodes whose MBs have been found); \\
Initialize WaitList $=\{X\}$ (list of nodes whose MBs will be found); \\
Initialize $G=(\mathbf{V},\mathcal{B})$ (initial local network around $X$, with known edges in background knowledge). \\
\Repeat{$\mathrm{WaitList\ is\ empty}$}{
Pop a node $Z$ from the head of WaitList and add $Z$ to DoneList; \\
Find $\mathrm{MB}(Z)$ by a variable selection procedure such as IAMB~\citep{tsamardinos2003algorithms}; \\
Learn the marginal graph $L_Z$ over $\mathrm{MB}^+(Z)$ from data $\mathcal{D}$ or directly from learned graph. \\
Put the edges connected to $Z$ and the v-structures containing $Z$ in $L_Z$ to $G.$ \\
\Repeat{$G$ is not changed}{
For $(a\to b- c)\in G,$ if $(a,c,S_{ac})\in $ IndSet and $b\in S_{ac},$ orient $b\to c.$ \\
For $(a\to b\to c - a)\in G,$ orient $a\to c.$ \\
For $a-b,\,a-c\to b$ and $a-d\to b\in G$, if $(c,d,S_{cd})\in$ IndSet and $a\in S_{cd},$ orient $a\to b.$ \\
For $a-b,\,a-c\to b$ and $a-d\to c\in G$, if $(b,d,S_{bd})\in$ IndSet and $a\in S_{bd},$ orient $a\to b.$
}
Update WaitList $=\{U\in \mathbf{V}|U\mathrm{\ and\ }X\mathrm{\ are\ connected\ by\ an\ undirected\ path\ in\ }G\}\ \setminus$ DoneList; \\
}

\Return $G$.

\end{algorithm}

For the beginning of the algorithm, we construct a partially directed graph $G$ with edges only from background knowledge $\mathcal{B}.$
In the learning process, we gradually add undirected or directed edges into $G,$ which are guaranteed to be true edges in $\mathcal{G}^*.$
In Steps $4$ to $16,$ we iteratively consider each node $Z$ which is connected with $X$ by an undirected path in $G.$

For each $Z$ in the iteration, we find the Markov blanket of $Z$ at Step $6.$ Let $\mathrm{MB}(Z)$ denote the Markov blanket of $Z,$ which is the smallest set such that $Z\perp \mathbf{V}\setminus \mathrm{MB}(Z)\mid \mathrm{MB}(Z),$ and let $\mathrm{MB}^+(Z)=\mathrm{MB}(Z)\cup \{Z\}.$
Multiple methods have been proposed for finding the Markov blanket of a given node, such as IAMB~\citep{tsamardinos2003algorithms}, MMMB~\citep{tsamardinos2003time}, HITON-MB~\citep{aliferis2003hiton}, PCMB and KIAMB~\citep{pena2007towards}.
We choose one of such methods to find $\mathrm{MB}(Z)$ from observational data $\mathcal{D}.$

After that, we find the marginal graph $L_Z$ over $\mathrm{MB}^+(Z)$ at Step $7.$
The marginal graph $L_Z$ over $\mathrm{MB}^+(Z)$ is defined as a DAG that the marginal distribution $P_{\mathrm{MB}^+(Z)}$ over $\mathrm{MB}^+(Z)$ is Markovian and faithful to $L_Z.$
Namely, for any distinct subset $\mathbf{X}',\mathbf{Y}',\mathbf{Z}'$ of $\mathrm{MB}^+(Z),$ $\mathbf{X}'$ and $\mathbf{Y}'$ are d-separated by $\mathbf{Z}'$ in $L_Z$ if and only if $\mathbf{X}'$ and $\mathbf{Y}'$ are conditional independent given $\mathbf{Z}'$.
Therefore, we can learn the CPDAG corresponding to $L_Z$ by causal discovery algorithms such as the IC algorithm~\citep{pearl2000causality} or PC algorithm~\citep{spirtes2000causation}.
On the other hand, if $\mathrm{MB}^+(Z)\subseteq \mathrm{MB}^+(Z')$ for some $Z'\in \mathrm{DoneList},$ we directly set $L_Z=(L_{Z'})_{\mathrm{MB}^+(Z)};$ if $\mathrm{MB}(Z)\subseteq\mathrm{DoneList},$ we set $L_Z$ be the subgraph of $G$ over $\mathrm{MB}^+(Z).$
While finding MB and learning the marginal graph, we also collect discovered conditional independencies in IndSet.
Those conditional independencies can be used to check whether an edge does not exist in $\mathcal{G}^*.$

At Step $8,$ we put the edges connected to $Z$ and the v-structures containing $Z$ in $L_Z$ to $G.$
These edges are guaranteed to be existing also in $\mathcal{G}^*$ (Theorem 1,2 in~\citep{wang2014discovering}).
Then we update the orientation of edges in $G$ by Meek's rules in Step $9$ to $14$.
At Step $15,$ we update WaitList to be the nodes connected with $X$ by an undirected path in $G$ which is not considered previously.
In this way, nodes are dynamically added into or removed from WaitList as new undirected edges are added into $G$ or existing undirected edges in $G$ are oriented.

The following theorem shows that we can obtain the local structure around $X$ by running Algorithm~\ref{alg:mb-by-mb in MPDAG}:

\begin{theorem}
\label{thm:mb-by-mb}
Let $\mathcal{G}^*$ be the MPDAG under background knowledge $\mathcal{B}.$ Let $\mathcal{G}\in [\mathcal{G}^*]$ be the true underlying DAG and $\mathcal{D}$ be i.i.d. observations generated from a distribution Markovian and faithful with respect to $\mathcal{G}.$ Let $X$ be a target node in $\mathcal{G}.$ Suppose all conditional independencies are correctly checked, and let $G$ be the output of MB-by-MB in MPDAG (Algorithm~\ref{alg:mb-by-mb in MPDAG}) with input $X,\mathcal{D},\mathcal{B}$.
Then for each $Z$ connected with $X$ by an undirected path in $\mathcal{G}^*,$ including $X$ itself, we have $pa(Z,G)=pa(Z,\mathcal{G}^*),ch(Z,G)=ch(Z,\mathcal{G}^*),sib(Z,G)=sib(Z,\mathcal{G}^*).$
\end{theorem}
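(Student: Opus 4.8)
The plan is to prove Theorem~\ref{thm:mb-by-mb} in two directions: \emph{soundness} (every edge the algorithm adds or orients in $G$ agrees with $\mathcal{G}^*$) and \emph{completeness} (for every $Z$ reachable from $X$ by an undirected path in $\mathcal{G}^*$, \emph{all} edges incident to $Z$ in $\mathcal{G}^*$ eventually appear in $G$ with the correct orientation). Throughout I would exploit the defining property of an MPDAG (an edge is directed iff it is so in all DAGs of $[\mathcal{G}^*]$, and $\mathcal{G}^*$ is closed under Meek's rules applied to the CPDAG plus $\mathcal{B}$), together with the correctness of the MB-by-MB primitives already cited (Theorems~1--2 of~\citep{wang2014discovering}): for any node $Z$, the marginal graph $L_Z$ over $\mathrm{MB}^+(Z)$ has the correct skeleton restricted to $\mathrm{MB}^+(Z)$, the edges incident to $Z$ in the CPDAG of $L_Z$ equal the adjacencies of $Z$ in the true skeleton, and the v-structures through $Z$ in $L_Z$ are exactly the true v-structures through $Z$.

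For \textbf{soundness}, I would argue inductively over the algorithm's steps. The initialization loads only $\mathcal{B}\subseteq\mathbf{E}$, which is by assumption correct and hence present in $\mathcal{G}^*$. At Step~8, the adjacencies and v-structures of $Z$ extracted from $L_Z$ are true in $\mathcal{G}$ (by the cited MB-by-MB results) and therefore also in $\mathcal{G}^*$, since $\mathcal{G}^*$ shares $\mathcal{G}$'s skeleton and all v-structures of $[\mathcal{C}]$. For Steps~10--13, I would check that each of the four rules is sound: rule at Step~10 is Meek's Rule~1 restricted to a local triple, and rules at Steps~11--13 are local instances of Meek's Rules~2--3 but triggered by a \emph{recorded} conditional independence $(c,d,S_{cd})\in\mathrm{IndSet}$ with $a\in S_{cd}$ rather than by non-adjacency drawn from a global skeleton. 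The key sub-lemma here is that $(a,b,S_{ab})\in\mathrm{IndSet}$ genuinely certifies non-adjacency of $a,b$ in $\mathcal{G}$ (hence in $\mathcal{G}^*$) and, when $a$ lies in the separating set, forces the orientation exactly as Meek's rule would on the full graph; so every orientation added is forced in every DAG of $[\mathcal{G}^*]$, i.e.\ is the orientation in $\mathcal{G}^*$. Combining, all edges ever placed in $G$ match $\mathcal{G}^*$ in adjacency and direction.

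For \textbf{completeness}, I would first show the WaitList mechanism explores the right set of nodes: by induction on iterations, once $G$ restricted to the explored region has the correct skeleton and orientations among processed nodes, any $Z$ connected to $X$ by an undirected path in $\mathcal{G}^*$ is connected to $X$ by an undirected path in the current $G$ (undirected edges of $\mathcal{G}^*$ along that path are never prematurely oriented by soundness, and get added as their endpoints are processed), so $Z$ is eventually popped and $\mathrm{MB}(Z)$ computed; all of $\mathrm{adj}(Z,\mathcal{G}^*)\subseteq\mathrm{MB}(Z)\subseteq\mathrm{MB}^+(Z)$ by the Markov-blanket property, so Step~8 inserts \emph{every} edge incident to $Z$ into $G$. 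It remains to show every such edge gets correctly oriented. Here I would invoke the known fact (Meek) that $\mathcal{G}^*$ is obtained by closing the CPDAG-plus-$\mathcal{B}$ under Meek's four rules, and argue that the local Meek applications in Steps~10--13, \emph{fed by the accumulated} $\mathrm{IndSet}$ and the v-structures harvested at Step~8, reproduce every such closure step that affects an edge incident to an explored node. The crux is that whenever a global Meek rule orients an edge $a\to b$ using a non-adjacency $\{c,d\}$, that non-adjacency was witnessed by some conditional-independence test performed while building some $L_W$ or some $\mathrm{MB}(W)$ for an explored $W$ (because the relevant nodes lie within the explored neighborhood), so the corresponding local rule fires; this needs a careful argument that the separating sets recorded are of the right form (a separating set $S_{cd}$ containing $a$ exists precisely when Meek's Rule~3 would apply, and analogously for the parent-of-parent configurations).

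The \textbf{main obstacle} I anticipate is precisely this last point: proving that the \emph{local} orientation rules, driven only by $\mathrm{IndSet}$ and locally discovered v-structures, are as powerful as running Meek's rules on the whole MPDAG. One must show no ``non-local'' forcing is lost --- e.g.\ that an edge incident to $X$ cannot be forced in $\mathcal{G}^*$ only via a chain of Meek deductions passing through nodes far from $X$ that the algorithm never examines. I would handle this by showing that any chain of Meek deductions terminating at an edge incident to an explored node can be ``pulled back'' to stay within the explored region: every node participating in such a deduction is either in some $\mathrm{MB}^+(W)$ for an explored $W$, or is itself explored, because the WaitList closure keeps expanding along undirected paths until no undirected edge incident to an explored node remains unoriented-or-confirmed; one then verifies the conditional independence needed at each pulled-back step was indeed logged in $\mathrm{IndSet}$. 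Establishing this ``locality of forcing'' rigorously, together with the bookkeeping that $L_Z$ reuse (the shortcuts $L_Z=(L_{Z'})_{\mathrm{MB}^+(Z)}$ or $L_Z=G_{\mathrm{MB}^+(Z)}$) preserves the needed separating sets, is where the real work lies; everything else reduces to the cited MB-by-MB guarantees and the standard soundness/completeness of Meek's rules.
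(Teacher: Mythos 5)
Your plan follows essentially the same route as the paper: soundness is proved by induction over the three sources of directed edges (background knowledge, locally detected v-structures, and the four IndSet-driven Meek rules, each checked case by case), and completeness by arguing that any forced orientation missed at an explored node would require a prerequisite forced orientation that is also missed and also incident to explored nodes, giving a contradiction by descent. The paper formalizes your ``pull-back'' of Meek deductions as a \emph{directed edge generation sequence} (Lemmas~\ref{lemma:exist DEGS}--\ref{lemma:complete}), whose per-rule case analysis is precisely the ``locality of forcing'' step you correctly flag as the remaining work.
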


Theorem~\ref{thm:mb-by-mb} shows that some critical properties for local structure of $X$ are covered by the output of Algorithm~\ref{alg:mb-by-mb in MPDAG}.
The parents, children and siblings of $X$ and each sibling of $X$ are identified.
Therefore, local methods such as IDA~\citep{fang2020ida} can be applied to estimate the treatment effect of $X$ on another vertice $Y,$ which needs to check whether orienting the siblings of $X$ forms a new v-structure or directed triangle.
Our method in Section~\ref{sec:local theorems} also needs identification of $pa(X,\mathcal{G}^*),ch(X,\mathcal{G}^*),sib(X,\mathcal{G}^*)$ and $adj(Z,\mathcal{G}^*)$ for each $Z\in sib(X,\mathcal{G}^*).$
Actually, the output of Algorithm~\ref{alg:mb-by-mb in MPDAG} covers the B-component containing $X$, which is defined in~\citep{fang2022representation}, and all directed edges around the B-component.
That is similar to Corollary 4 in~\citep{liu2020local}.

As an alternative baseline method, we can first learn a part of the CPDAG $\mathcal{C}$ corresponding to $\mathcal{G},$ and infer the orientation of other edges from background knowledge and Meek's rules.
Let $ChComp(X)$ be a partially directed graph consisting with the largest undirected subgraph of $\mathcal{C}$ containing $X,$ which is also called the chain component containing $X,$ and all directed edges connected with the chain component.
We can learn $ChComp(X)$ by Algorithm 3 in~\citep{liu2020local}.
For each background knowledge $A\to B$ in $\mathcal{B},$ we either orient $A\to B$ if $A-B$ in $ChComp(X)$ or otherwise directly add this edge to $ChComp(X).$
Then we apply Meek's rules and get the local structure around $X.$

In the following theorem, we show that Algorithm~\ref{alg:mb-by-mb in MPDAG} outperforms the baseline method in the worst case:
\begin{theorem}
\label{thm:algorithm faster}
Suppose all conditional independencies are correctly checked. Let $(V_1,V_2,\dots,V_p)$ be the nodes that are sequentially considered in Step $4$-$16$ of Algorithm~\ref{alg:mb-by-mb in MPDAG}. If they are considered in the same order in the baseline method\footnote{That is, if for some $i<j,$ $V_i,V_j$ are both in the WaitQueue in Algorithm 3 in~\citep{liu2020local}, $V_j$ is never popped before $V_i.$}, the number of conditional independence test used by Algorithm~\ref{alg:mb-by-mb in MPDAG} is not larger than the baseline method.
\end{theorem}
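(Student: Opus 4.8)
The plan is to split the conditional independence (CI) tests used by each method into two groups --- those spent in Step~6 discovering a Markov blanket and those spent in Step~7 learning a marginal graph $L_Z$ (one PC-type call on $\mathrm{MB}^+(Z)$) --- and to compare the two groups separately before adding them. Because all CI tests are assumed correct, the number of tests needed to find $\mathrm{MB}(Z)$ and the number needed to learn the marginal CPDAG on $\mathrm{MB}^+(Z)$ are functions of $Z$ and the distribution alone, independent of which method issues the call; this is what makes a group-wise comparison meaningful.

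The structural fact I would establish first is that $\{V_1,\dots,V_p\}$ is contained in the set of nodes popped by the baseline. Orienting edges of the CPDAG $\mathcal{C}$ and closing under Meek's rules never turns a directed edge back into an undirected one, so every edge undirected in $\mathcal{G}^*$ is undirected in $\mathcal{C}$; moreover every undirected edge that Algorithm~\ref{alg:mb-by-mb in MPDAG} ever inserts into $G$ in Step~8 is undirected in $\mathcal{C}$ (Theorems~1 and~2 of~\citet{wang2014discovering}), and Steps~9--14 only orient edges. Hence any node popped by Algorithm~\ref{alg:mb-by-mb in MPDAG} is reachable from $X$ along undirected edges of $\mathcal{C}$, i.e.\ lies in the chain component $ChComp(X)$ enumerated by Algorithm~3 of~\citet{liu2020local}. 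Combined with the order hypothesis of the footnote, $\{V_1,\dots,V_p\}$ appears, in the same relative order, inside the baseline's longer processing list, and when each method pops $V_i$ the DoneList of Algorithm~\ref{alg:mb-by-mb in MPDAG} equals $\{V_1,\dots,V_{i-1}\}$ and is contained in the baseline's DoneList at that moment. For the Markov-blanket group this settles the comparison at once: the baseline pays the (method-independent) MB-discovery cost of each node it pops, and it pops a superset of $\{V_1,\dots,V_p\}$.

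The marginal-graph group is where the argument is delicate, because Step~7 caches: Algorithm~\ref{alg:mb-by-mb in MPDAG} actually calls PC on $\mathrm{MB}^+(V_i)$ only when no earlier $V_j$ has $\mathrm{MB}^+(V_i)\subseteq\mathrm{MB}^+(V_j)$ and $\mathrm{MB}(V_i)\not\subseteq\{V_1,\dots,V_{i-1}\}$, and the baseline's larger DoneList makes it \emph{more} prone to caching, so no node-by-node inequality holds. I would instead charge each PC call of Algorithm~\ref{alg:mb-by-mb in MPDAG} to a PC call of the baseline whose vertex set is at least as large: if the baseline also calls PC on $V_i$, charge $V_i$ to that call; otherwise the baseline short-cut $V_i$ through some earlier node $W$ with $\mathrm{MB}^+(V_i)\subseteq\mathrm{MB}^+(W)$, and since the same shortcut did not fire for Algorithm~\ref{alg:mb-by-mb in MPDAG} the node $W$ is not one of $V_1,\dots,V_{i-1}$, hence is processed only by the baseline; following the baseline's own chain of caching back to a genuine PC call yields a vertex set containing $\mathrm{MB}^+(V_i)$, to which $V_i$ is charged. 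A monotonicity property of the PC test count in the vertex set then makes every individual charge non-increasing, and summing over the charge bounds the marginal-graph group of Algorithm~\ref{alg:mb-by-mb in MPDAG} by that of the baseline; adding the two group bounds proves the theorem.

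The step I expect to be the real obstacle is making this charging injective --- ruling out two distinct $V_i,V_{i'}$ charged to the same baseline PC call --- because $\mathrm{MB}^+(V_i)$ and $\mathrm{MB}^+(V_{i'})$ both sitting inside one vertex set need not be nested, so one cannot immediately argue that the later of the two would have triggered Algorithm~\ref{alg:mb-by-mb in MPDAG}'s first caching rule. I plan to route every charge to the \emph{earliest} baseline PC call whose vertex set contains $\mathrm{MB}^+(V_i)$ and to extract injectivity (or, failing that, a weighted split of each baseline call's cost that still dominates the total charged to it) from that minimality, and to handle the second caching rule --- where a cached marginal graph is read off $G$ rather than copied from a single predecessor --- by charging to the collection of PC calls that contributed the relevant edges of $G$. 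As a consistency check, note that if the baseline is taken \emph{without} the Step~7 caching, the marginal-graph comparison is immediate from $\{V_i\}\subseteq\{W\}$ and only the structural fact of the second paragraph is needed, so the caching interaction is the sole source of difficulty.
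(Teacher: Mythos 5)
Your first two paragraphs are essentially the paper's entire proof. The paper argues exactly your structural fact --- every node popped by Algorithm~\ref{alg:mb-by-mb in MPDAG} is still connected to $X$ by an undirected path in the baseline's graph at the corresponding moment (since the baseline's learned graph is never ``more oriented'' than Algorithm~\ref{alg:mb-by-mb in MPDAG}'s), so the baseline processes a superset of $(V_1,\dots,V_p)$ in the same relative order --- and then simply asserts that the conditional-independence tests performed when handling a node $Z$ are determined by $Z$ and the distribution alone (``this procedure is not affected by learned edges''), so a per-node comparison finishes the argument. Under that reading of Step~7, your MB-group and marginal-graph-group comparison is the paper's proof, just stated with the two cost sources separated.

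The difference is your fourth paragraph: you take the caching rules of Step~7 seriously, observe that the baseline's larger DoneList can let it \emph{skip} a PC call that Algorithm~\ref{alg:mb-by-mb in MPDAG} must perform, and try to repair the per-node comparison with a charging argument. This is a real subtlety that the paper's proof does not engage with at all, and your repair is not closed: you leave open both the injectivity of the charge and the claim that the PC-style test count is monotone in the vertex set. The latter is the more serious gap --- for a pair that is separable, PC stops at the first separating set it finds among the current adjacency sets, and enlarging the vertex set changes both the candidate conditioning sets and the order in which they are tried, so there is no a priori inequality between the test counts on $\mathrm{MB}^+(W)$ and on $\mathrm{MB}^+(V_i)\subseteq\mathrm{MB}^+(W)$. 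So either you adopt the paper's (implicit) convention that Step~7 re-runs the same fixed test battery for every popped node, in which case you should delete the charging machinery and your proof is complete and identical to the paper's, or you keep the caching in the model, in which case the charging argument as sketched does not yet prove the theorem.
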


We then show how Algorithm~\ref{alg:mb-by-mb in MPDAG} is performed by a running example. 
A variant of this example in \ref{append:example} shows that in an extreme situation, we may need more conditional independence test comparing to the baseline method.
See Section~\ref{exp:local structure} for experimental comparison between these two methods.

\begin{figure}[t]
\centering
\subfloat[$\mathcal{G}$]
{
\label{fig1:subfig1}\includegraphics[width=0.15\textwidth]{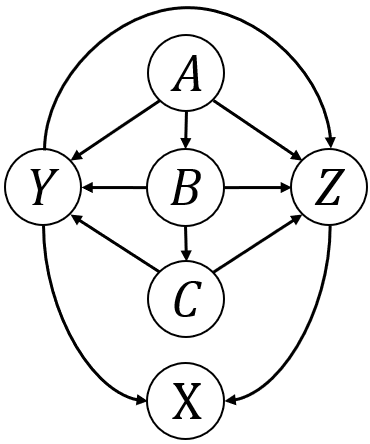}
}
\subfloat[$G_{(X)}$]
{
\label{fig1:subfig2}\includegraphics[width=0.15\textwidth]{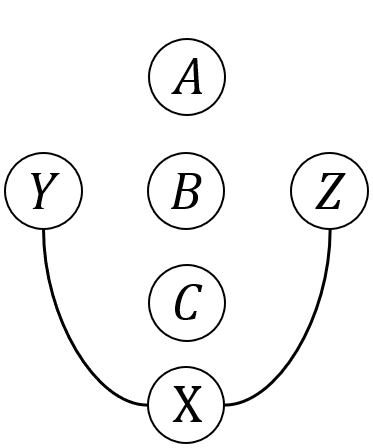}
}
\subfloat[$G_{(X,Y)}$]
{
\label{fig1:subfig3}\includegraphics[width=0.15\textwidth]{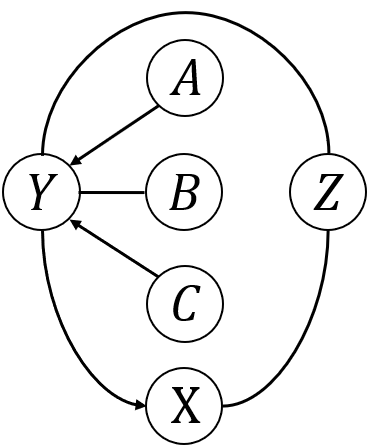}
}
\subfloat[$G_{(X,Y,Z)}$]
{
\label{fig1:subfig4}\includegraphics[width=0.15\textwidth]{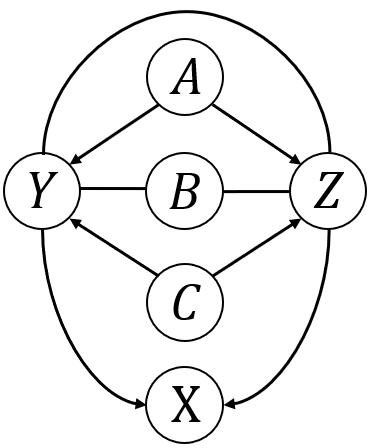}
}
\subfloat[$G^{A\to Y}_{(X)}$]
{
\label{fig1:subfig5}\includegraphics[width=0.15\textwidth]{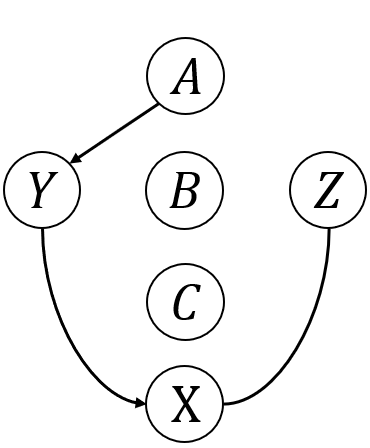}
}
\subfloat[$G^{A\to Y}_{(X,Z)}$]
{
\label{fig1:subfig6}\includegraphics[width=0.15\textwidth]{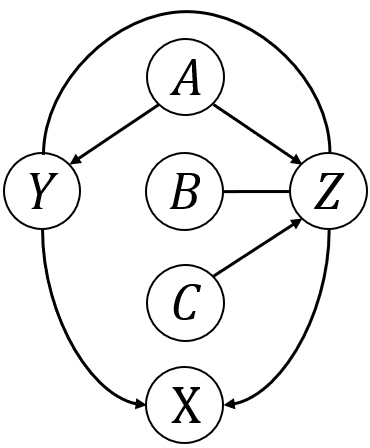}
}
\caption{An example for learning local structure. (a) The original DAG. (b)-(f) Learned structure $G$ defined at Step $3$ in Algorithm~\ref{alg:mb-by-mb in MPDAG}. $G_{D}^{\mathcal{B}}$ denote the learned graph after running Step $4$-$16$ in Algorithm~\ref{alg:mb-by-mb in MPDAG} for each node in $D$, with background knowledge $\mathcal{B}$.} 
\label{fig:subfig_1}
\end{figure}

\begin{example}
Consider the DAG $\mathcal{G}$ shown in Figure~\ref{fig1:subfig1}.
We want to learn the local structure around $X.$
Suppose first that we do not have any background knowledge about $\mathcal{G}.$
In this case, Algorithm~\ref{alg:mb-by-mb in MPDAG} degenerates to the MB-by-MB algorithm.
Firstly, we find the MB of $X,$ which is $\{Y,Z\},$ and learn the marginal graph over $\{X,Y,Z\}.$
Since there is no conditional independence over them, we remain edges $Y-X-Z$ and add them into $G,$ shown in Figure~\ref{fig1:subfig2}.
Now both $Y$ and $Z$ are connected with $X$ by an undirected path in $G.$
We then find the MB of $Y,$ which is $\{A,B,C,X,Z\}.$
By performing conditional independence test among these nodes, we find that $A\perp C|B,$ which implies that $A\to Y\leftarrow C$ is a v-structure.
By Meek's rules, we orient $Y\to X$ in $G.$
That gives Figure~\ref{fig1:subfig3}, and the only node connected with $X$ by an undirected path is $Z.$
After finding the MB of $Z,$ we know that $A$ and $C$ are both adjacent to $Z,$ so $A\perp C|B$ implies that $A\to Z\leftarrow C$ is a v-structure, and we can orient $Z\to X$ by Meek's rules, giving Figure~\ref{fig1:subfig4}.
Therefore, we conclude that the parents of $X$ in $\mathcal{G}^*$ is $\{Y,Z\},$ and it has no sibling or children.

Now suppose that background knowledge $\mathcal{B}=\{A\to Y\}$ is known in prior.
After finding $\mathrm{MB}(X)=\{Y,Z\},$ we know that $A$ and $X$ are not adjacent, so we can orient $Y\to X$ by Meek's rules, giving Figure~\ref{fig1:subfig5}.
Then we find the MB of $Z$ and learn the marginal graph over $\mathrm{MB}^+(Z).$
We orient $A\to Z\leftarrow C$ after finding $A\perp C|B$ and orient $Z\to X$ by Meek's rules, giving Figure~\ref{fig1:subfig6}.

The baseline method first follows the procedure of MB-by-MB algorithm, which also gives Figure~\ref{fig1:subfig4}.
Then it applies background knowledge $A\to Y,$ which is already learned in previous steps, and gives Figure~\ref{fig1:subfig4} as the final output.
We can see that, Algorithm~\ref{alg:mb-by-mb in MPDAG} explores less nodes than the baseline method in this example.
\end{example}

\subsection{Local structure learning with non-ancestral or ancestral information}
\label{subsec:local learn non-ans}

Non-ancestral information and ancestral information are characterized as orientation of edges between a set of nodes~\citep{fang2022representation}. For distinct nodes $X,Y$ in an MPDAG $\mathcal{G}^*$ with true underlying DAG $\mathcal{G},$ let $\mathbf{C}$ denote the critical set of $X$ respect to $Y$ in $\mathcal{G}^*.$ It has been shown that, $X$ is not an ancestor of $Y$ in $\mathcal{G}$ if and only if for all $C\in \mathbf{C},$ $C\to X$ in $\mathcal{G}.$ On the other hand, $X$ is an ancestor of $Y$ if and only if there exists $C\in \mathbf{C}$ such that $X\to C$ in $\mathcal{G}.$ Therefore, non-ancestral information can be equivalently transformed into a set of direct edges in the MPDAG, while ancestral information may not be fully represented by an MPDAG, but an MPDAG with a set of direct causal clauses (DCC). For example, $X\overset{or}{\to} \mathbf{C}$ denotes the fact that there exists $C\in \mathbf{C}$ such that $X\to C$.

However, the definition of critical set is based on the entire MPDAG: in order to check whether a node adjacent to $X$ is in the critical set of $X$ with respect to $Y$ in $\mathcal{G}^*,$ we need to list all chordless b-possibly causal paths from $X$ to $Y$ in $\mathcal{G}^*.$ Proposition 4.7 in~\citet{zuo2022counterfactual} proposed that we can list all b-possibly causal path of definite status from $X$ to $Y,$ but it still needs to learn the entire MPDAG. Instead, we want to find a local characterization of the critical set. The following lemma relates the IDA framework with the critical set.

\begin{lemma}
\label{lemma:identify critical set}
Let $\mathcal{G}^*$ be an MPDAG and $X, Y$ be two distinct nodes in $\mathcal{G}^*.$ Suppose that $X$ is not a definite cause of $Y$ in $\mathcal{G}^*.$ Let $\mathbf{C}$ be the critical set of $X$ with respect to $Y$ in $\mathcal{G}^*.$ Let $\mathcal{Q}$ be the set of all $\mathbf{Q}\subseteq sib(X,\mathcal{G}^*)$ such that orienting $\mathbf{Q}\to X$ and $X\to sib(X,\mathcal{G}^*)\setminus \mathbf{Q}$ does not introduce any v-structure collided on $X$ or any directed triangle containing $X.$ Then we have
$$\cap \left\{ \mathbf{Q}\in \mathcal{Q} \mid X\perp Y \mid pa(X,\mathcal{G}^*)\cup \mathbf{Q} \right\} = an(\mathbf{C},\mathcal{G}^*)\cap sib(X,\mathcal{G}^*).$$
\end{lemma}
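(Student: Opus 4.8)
The plan is to convert the conditional-independence condition defining the left-hand side into a purely graphical condition on $\mathbf{Q}$, and then identify the intersection of the resulting family of subsets of $sib(X,\mathcal{G}^*)$. Write $\mathbf{C}_s=\mathbf{C}\cap sib(X,\mathcal{G}^*)$ and $A=an(\mathbf{C},\mathcal{G}^*)\cap sib(X,\mathcal{G}^*)$, so that $\mathbf{C}_s\subseteq A$ trivially. Since $X$ is not a definite cause of $Y$, Lemma~\ref{lemma:zuo} forces $\mathbf{C}\cap ch(X,\mathcal{G}^*)=\emptyset$ and either $\mathbf{C}=\emptyset$ or $\mathbf{C}$ induces a complete subgraph of $\mathcal{G}^*$; in particular $\mathbf{C}\subseteq pa(X,\mathcal{G}^*)\cup sib(X,\mathcal{G}^*)$, which will be used repeatedly. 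By the hypothesis, $Y\notin ch(X,\mathcal{G}^*)$; we may further assume $Y\notin adj(X,\mathcal{G}^*)$, the adjacent cases being degenerate. With this notation the goal splits as: (i) the family $\{\mathbf{Q}\in\mathcal{Q}\mid X\perp Y\mid pa(X,\mathcal{G}^*)\cup\mathbf{Q}\}$ equals $\{\mathbf{Q}\in\mathcal{Q}\mid \mathbf{C}_s\subseteq\mathbf{Q}\}$, and (ii) $\bigcap\{\mathbf{Q}\in\mathcal{Q}\mid\mathbf{C}_s\subseteq\mathbf{Q}\}=A$.

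\textbf{Step 1 (graphical reformulation).} Fix $\mathbf{Q}\in\mathcal{Q}$. Using the characterization of possible parent sets in an MPDAG (each locally valid orientation at $X$ extends to some DAG in $[\mathcal{G}^*]$, and conversely), pick a DAG $\mathcal{G}_{\mathbf{Q}}\in[\mathcal{G}^*]$ with $pa(X,\mathcal{G}_{\mathbf{Q}})=pa(X,\mathcal{G}^*)\cup\mathbf{Q}$. Since $\mathcal{G}$ and $\mathcal{G}_{\mathbf{Q}}$ are Markov equivalent, the tested independence $X\perp Y\mid pa(X,\mathcal{G}^*)\cup\mathbf{Q}$ holds iff $X$ and $Y$ are d-separated by $pa(X,\mathcal{G}_{\mathbf{Q}})$ in $\mathcal{G}_{\mathbf{Q}}$; by the local Markov property, together with the fact that a directed path out of $X$ is never blocked by $pa(X,\mathcal{G}_{\mathbf{Q}})$ (here $Y\notin pa(X,\mathcal{G}_{\mathbf{Q}})$ is used), this is in turn equivalent to $Y\notin de(X,\mathcal{G}_{\mathbf{Q}})$. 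Finally, by the characterization recalled just before the lemma ($X\notin an(Y,\mathcal{G}_{\mathbf{Q}})$ iff $C\to X$ in $\mathcal{G}_{\mathbf{Q}}$ for every $C\in\mathbf{C}$) and $\mathbf{C}\cap ch(X,\mathcal{G}^*)=\emptyset$, this amounts exactly to $\mathbf{C}_s\subseteq\mathbf{Q}$. This establishes (i), so it remains to prove (ii).

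\textbf{Step 2 ($A\subseteq$ every $\mathbf{Q}$).} Let $S\in A$, say $S\in an(C,\mathcal{G}^*)$ with $C\in\mathbf{C}$, and let $\mathbf{Q}\in\mathcal{Q}$ with $\mathbf{C}_s\subseteq\mathbf{Q}$; pick $\mathcal{G}_{\mathbf{Q}}$ as above. Then $C\to X$ in $\mathcal{G}_{\mathbf{Q}}$ (either $C\in pa(X,\mathcal{G}^*)$, or $C\in\mathbf{C}_s\subseteq\mathbf{Q}$), and the directed path $S\to\cdots\to C$ in $\mathcal{G}^*$ survives in $\mathcal{G}_{\mathbf{Q}}$. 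Since $S-X$ in $\mathcal{G}^*$ and $\mathcal{G}_{\mathbf{Q}}$ is acyclic, the edge between $S$ and $X$ cannot be $X\to S$ (else $X\to S\to\cdots\to C\to X$ is a cycle), so $S\to X$ in $\mathcal{G}_{\mathbf{Q}}$, i.e.\ $S\in\mathbf{Q}$. Hence $A\subseteq\bigcap\{\mathbf{Q}\in\mathcal{Q}\mid\mathbf{C}_s\subseteq\mathbf{Q}\}$.

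\textbf{Step 3 (reverse inclusion) --- the main obstacle.} It suffices to show that for each $S\in sib(X,\mathcal{G}^*)\setminus A$ there exists $\mathbf{Q}\in\mathcal{Q}$ with $\mathbf{C}_s\subseteq\mathbf{Q}$ and $S\notin\mathbf{Q}$; setting $\mathbf{Q}=pa(X,\mathcal{G}')\cap sib(X,\mathcal{G}^*)$ (which lies in $\mathcal{Q}$ since true orientations are locally valid), this reduces to producing a DAG $\mathcal{G}'\in[\mathcal{G}^*]$ with $X\to S$ and with $C\to X$ for every $C\in\mathbf{C}_s$. The plan is to add to $\mathcal{G}^*$ the orientations $X\to S$ and $\{C\to X:C\in\mathbf{C}_s\}$ as background knowledge, close under Meek's rules, and verify the result is a valid MPDAG with nonempty equivalence class, i.e.\ that the closure produces neither a conflicting edge nor a directed cycle; then take $\mathcal{G}'$ in that class. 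The key structural facts feeding this verification are: every $C\in\mathbf{C}_s$ is adjacent in $\mathcal{G}^*$ to every element of $pa(X,\mathcal{G}^*)$ and, by completeness of $\mathbf{C}$, to every other element of $\mathbf{C}$ (otherwise Meek's rule~1 would already have oriented $X\to C$, contradicting $C\in sib(X,\mathcal{G}^*)$), so orienting $\mathbf{C}_s\to X$ creates no new v-structure at $X$; and $S\notin an(\mathbf{C},\mathcal{G}^*)$ (because $S\notin A$), which must be shown to survive the Meek closure so that no directed path from $S$ back into $\mathbf{C}$ is forced and hence no cycle through the new edges arises. Carrying out this no-cycle/no-conflict check --- arguing that, starting from the clique structure on $\{X\}\cup\mathbf{C}$ and from $S\notin an(\mathbf{C},\mathcal{G}^*)$, Meek's rules cannot force $S$ into an ancestor of $\mathbf{C}$ --- is the technically delicate core of the argument; an alternative route is to construct $\mathcal{G}'$ directly as an acyclic, v-structure--preserving orientation of the undirected (chain) component containing $X$ that places $an(\mathbf{C},\mathcal{G}^*)$ before $X$ and $S$ (together with its forced descendants) after $X$. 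Either way, once such a $\mathcal{G}'$ is obtained, the associated $\mathbf{Q}$ witnesses $S\notin\bigcap\{\mathbf{Q}\in\mathcal{Q}\mid\mathbf{C}_s\subseteq\mathbf{Q}\}$, which combined with Steps~1 and~2 yields the claimed equality.
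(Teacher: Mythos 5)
Your Steps 1 and 2 are sound and together establish one direction of the claimed equality, plus a useful reformulation: the family $\left\{ \mathbf{Q}\in \mathcal{Q} \mid X\perp Y \mid pa(X,\mathcal{G}^*)\cup \mathbf{Q} \right\}$ coincides with $\left\{ \mathbf{Q}\in\mathcal{Q}\mid \mathbf{C}\subseteq\mathbf{Q}\right\}$ (note that $\mathbf{C}_s=\mathbf{C}$ here, since a b-possibly causal path cannot begin with $X\leftarrow C$ and the hypothesis that $X$ is not a definite cause rules out $\mathbf{C}\cap ch(X,\mathcal{G}^*)\neq\emptyset$), and every member of this family contains $an(\mathbf{C},\mathcal{G}^*)\cap sib(X,\mathcal{G}^*)$ by acyclicity of $\mathcal{G}_{\mathbf{Q}}$. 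This covers essentially the same ground as the paper's part (i), which instead argues by exhibiting a chordless path $\langle X,C,\dots,Y\rangle$ that stays open given $pa(X,\mathcal{G}^*)\cup\mathbf{Q}$ whenever some $C\in\mathbf{C}$ escapes $\mathbf{Q}$.

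The genuine gap is Step 3, which you flag as the main obstacle but do not close, and the route you sketch is harder than necessary. You aim to produce, for each $S\in sib(X,\mathcal{G}^*)$ outside $an(\mathbf{C},\mathcal{G}^*)$, a separate DAG $\mathcal{G}'\in[\mathcal{G}^*]$ with $X\to S$ and $\mathbf{C}\to X$, and the deferred no-cycle/no-conflict verification of the Meek closure is precisely the part that would require real work. No per-$S$ witness is needed: writing $\mathbf{A}=an(\mathbf{C},\mathcal{G}^*)\cap sib(X,\mathcal{G}^*)$, it suffices to show that $\mathbf{A}$ itself belongs to the family, since $\mathbf{C}\subseteq\mathbf{A}$ and your Step 1 then give $X\perp Y\mid pa(X,\mathcal{G}^*)\cup\mathbf{A}$, whence the intersection is contained in $\mathbf{A}$. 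So the only missing claim is $\mathbf{A}\in\mathcal{Q}$. The paper proves this by invoking the hypothesis that $X$ is not a definite cause of $Y$ (via Lemma~\ref{lemma:zuo}) to conclude that $\mathbf{C}$ induces a complete subgraph of $\mathcal{G}^*$, then growing $\mathbf{C}$ to $\mathbf{A}$ by repeatedly adjoining any $W\in sib(X,\mathcal{G}^*)$ with an edge $W\to R$ into the current set; Lemma B.1 of Zuo et al.\ guarantees completeness is preserved at each step, the terminal set is exactly $\mathbf{A}$, and by construction no sibling outside $\mathbf{A}$ points into $\mathbf{A}$, so orienting $\mathbf{A}\to X$ and $X\to sib(X,\mathcal{G}^*)\setminus\mathbf{A}$ introduces no v-structure collided on $X$ and no directed triangle containing $X$. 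Without this clique-and-closure argument (or a completed version of your construction), your proposal establishes only the inclusion $an(\mathbf{C},\mathcal{G}^*)\cap sib(X,\mathcal{G}^*)\subseteq \cap\left\{ \mathbf{Q}\in \mathcal{Q} \mid X\perp Y \mid pa(X,\mathcal{G}^*)\cup \mathbf{Q} \right\}$, not the stated equality.
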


Lemma~\ref{lemma:identify critical set} shows that we can  identify the ancestor of the critical set locally. If background knowledge shows that $X$ is not a cause of $Y,$ it is equivalent to $\mathbf{C}\to X,$ and by acyclic constraint it implies $an(\mathbf{C},\mathcal{G}^*) \cap sib(X,\mathcal{G}^*)\to X.$ This fact inspires Algorithm~\ref{alg:local struct:direct and non-an}, which learns the local structure around $X$ when background knowledge consists of direct causal information $\mathcal{B}_1=\{(F_i,T_i)\}_{i=1}^{k_1}$ where it is known that $F_i\to T_i$ exists in $\mathcal{G}$ for $i=1,2,\dots,k_1,$ and non-ancestral relationship $\mathcal{B}_2=\{(N_j,T_j)\}_{j=1}^{k_2},$ where it is known that $N_j$ is not a cause of $T_j$ in $\mathcal{G}$ for $j=1,2,\dots,k_2.$

\begin{algorithm}[t]
\LinesNumbered
\caption{Learning the local structure around $X$ given background knowledge consisting of direct causal information $\mathcal{B}_1$ and non-ancestral information $\mathcal{B}_2$.}
\label{alg:local struct:direct and non-an}
\SetKwInOut{Input}{Input}
\SetKwInOut{Output}{Output}
\small

\Input{Target node $X$, observational data $\mathcal{D}$, background knowledge $\mathcal{B}=\mathcal{B}_1\cup \mathcal{B}_2$.}
\Output{A PDAG $G$ containing the local structure of $X$.}
Let $G$ be the output of Algorithm~\ref{alg:mb-by-mb in MPDAG} under input $X,\mathcal{D},\mathcal{B}_1$.

\ForEach{$(N_j,T_j)\in \mathcal{B}_2$}{
\If{$N_j$ and $X$ are connected by an undirected path in $G$}{
Let $\mathbf{candC} = sib(N_j,G).$

\ForEach{$\mathbf{Q}\subseteq sib(N_j,G)$ such that orienting $\mathbf{Q}\to N_j$ does not introduce any v-structure collided on $N_j$ or any directed triangle containing $N_j$}{
\If{$N_j \perp T_j \mid pa(N_j,G)\cup \mathbf{Q}$}{
	Update $\mathbf{candC}:=\mathbf{candC}\cap \mathbf{Q}.$
}
}
Orient $\mathbf{candC}\to N_j$ and apply Meek's rules on $G.$
}
}
\Return $G$.

\end{algorithm}

In Algorithm~\ref{alg:local struct:direct and non-an}, we first learn the local structure $G$ under direct causal information $\mathcal{B}_1$ using Algorithm~\ref{alg:mb-by-mb in MPDAG}. Let $\mathcal{G}^*_1$ denote the MPDAG under background knowledge $\mathcal{B}_1.$ By Theorem~\ref{thm:mb-by-mb}, $G$ recovers the B-component including $X$ in $\mathcal{G}^*_1.$ For each $(N_j,T_j)\in \mathcal{B}_2,$ if $N_j$ and $X$ are not connected by an undirected path in $G,$ $N_j$ and $X$ are not in the same B-component in $\mathcal{G}^*_1,$ so orienting edges around $N_j$ do not affect the local structure around $X.$ On the other hand, if $N_j$ and $X$ are connected by an undirected path in $G,$ they are in the same B-component in $\mathcal{G}^*_1.$ Therefore, by Theorem~\ref{thm:mb-by-mb}, the local structure around $N_j$ in $\mathcal{G}^*_1$ is identical to the local structure around $N_j$ in $G.$ By Lemma~\ref{lemma:identify critical set}, Step 4 to 9 in Algorithm~\ref{alg:local struct:direct and non-an} finds $an(\mathbf{C},\mathcal{G}^*) \cap sib(X,\mathcal{G}^*),$ and we let them point to $N_j$ and update the local structure $G$ in Step 10. 

The following theorem shows the correctness of Algorithm~\ref{alg:local struct:direct and non-an}.

\begin{theorem}
\label{thm:correct alg non-an}
Let $\mathcal{G}^*$ be the MPDAG under background knowledge $\mathcal{B}$ consisting of direct causal information $\mathcal{B}_1$ and non-ancestral information $\mathcal{B}_2.$ Let $\mathcal{G}\in [\mathcal{G}^*]$ be the true underlying DAG and $\mathcal{D}$ be i.i.d. observations generated from a distribution Markovian and faithful with respect to $\mathcal{G}.$ Let $X$ be a target node in $\mathcal{G}.$ Suppose all conditional independencies are correctly checked, and let $G$ be the output of Algorithm~\ref{alg:local struct:direct and non-an} with input $X,\mathcal{D},\mathcal{B}$.
Then for each $Z$ connected with $X$ by an undirected path in $\mathcal{G}^*,$ including $X$ itself, we have $pa(Z,G)=pa(Z,\mathcal{G}^*),ch(Z,G)=ch(Z,\mathcal{G}^*),sib(Z,G)=sib(Z,\mathcal{G}^*).$
\end{theorem}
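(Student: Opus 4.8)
The plan is to reduce the correctness of Algorithm~\ref{alg:local struct:direct and non-an} to two already-established facts: Theorem~\ref{thm:mb-by-mb} (soundness and completeness of the MB-by-MB in MPDAG algorithm under direct causal information) and Lemma~\ref{lemma:identify critical set} (local identification of $an(\mathbf{C},\mathcal{G}^*)\cap sib(X,\mathcal{G}^*)$). Write $\mathcal{G}^*_1$ for the MPDAG under $\mathcal{B}_1$ alone, and $\mathcal{G}^*$ for the MPDAG under $\mathcal{B}=\mathcal{B}_1\cup\mathcal{B}_2$. The first step is to recall from~\citet{fang2022representation} that non-ancestral knowledge $N_j$ is not a cause of $T_j$ is equivalent, as a constraint on $[\mathcal{G}^*_1]$, to the conjunction of direct-edge constraints $C\to N_j$ for every $C$ in the critical set $\mathbf{C}_j$ of $N_j$ with respect to $T_j$ in $\mathcal{G}^*_1$; by acyclicity this further forces $an(\mathbf{C}_j,\mathcal{G}^*_1)\cap sib(N_j,\mathcal{G}^*_1)\to N_j$. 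So $\mathcal{G}^*$ is exactly the MPDAG obtained from $\mathcal{G}^*_1$ by adding, for each $j$, the orientations $\big(an(\mathbf{C}_j,\mathcal{G}^*_1)\cap sib(N_j,\mathcal{G}^*_1)\big)\to N_j$ and closing under Meek's rules. This is precisely what Steps~2--11 of the algorithm attempt to do on the partial graph $G$, so the task is to show the algorithm performs this faithfully within the region of $G$ that matters — the B-component of $X$.

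The second step handles the two branches of the \texttt{if} at Step~3. By Theorem~\ref{thm:mb-by-mb}, after Step~1 the graph $G$ agrees with $\mathcal{G}^*_1$ on the parents, children, and siblings of every node $Z$ reachable from $X$ by an undirected path, i.e.\ on the B-component $\mathbf{B}$ containing $X$. If $N_j$ is \emph{not} connected to $X$ by an undirected path in $G$, I argue that $N_j\notin\mathbf{B}$ in $\mathcal{G}^*_1$ and, moreover, that orienting edges incident to $N_j$ plus Meek-closure cannot propagate a new orientation into $\mathbf{B}$: new orientations only enter a chain component through its boundary, and the relevant Meek rules (R1--R4) require a directed edge into the chain component to trigger; since $\mathcal{G}^*_1$ is already Meek-closed and $N_j$ lies in a different chain component whose orientation is being completed \emph{consistently with some DAG in $[\mathcal{G}^*_1]$}, no edge of $\mathbf{B}$ can be forced. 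Hence skipping such $j$ is harmless. If $N_j$ \emph{is} connected to $X$ by an undirected path in $G$, then $N_j\in\mathbf{B}$, so again by Theorem~\ref{thm:mb-by-mb} the local neighborhood of $N_j$ in $G$ equals that in $\mathcal{G}^*_1$; this is exactly the hypothesis under which Lemma~\ref{lemma:identify critical set} is applicable with $N_j$ playing the role of $X$ and $T_j$ the role of $Y$, so Steps~4--9 compute $\mathbf{candC}=an(\mathbf{C}_j,\mathcal{G}^*_1)\cap sib(N_j,\mathcal{G}^*_1)$ correctly, and Step~10 adds the right orientations and Meek-closes.

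The third step is an induction over the loop on $j$, proving the invariant that after processing $(N_j,T_j)$ the graph $G$ agrees with the MPDAG $\mathcal{G}^*_{1,\le j}$ under $\mathcal{B}_1$ together with the first $j$ non-ancestral constraints, on the neighborhoods of all nodes in the B-component of $X$ in $\mathcal{G}^*_{1,\le j}$. The subtlety here is that adding orientations shrinks B-components and can merge or split the relevant region, and one must check that the critical set $\mathbf{C}_j$ computed relative to $\mathcal{G}^*_1$ rather than $\mathcal{G}^*_{1,\le j-1}$ is still the correct one — this follows because non-ancestral constraints commute (they are all just sets of direct-edge constraints once unpacked, and Meek-closure of a union equals iterated Meek-closure), so the order of processing does not change the final MPDAG, only possibly the intermediate graphs; one argues the algorithm never orients an edge that is undirected in $\mathcal{G}^*$ and never fails to orient one that is directed in $\mathcal{G}^*$ restricted to the B-component. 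After the loop, $G$ agrees with $\mathcal{G}^*$ on the neighborhoods of all nodes in the B-component of $X$ in $\mathcal{G}^*$, and since $Z$ connected to $X$ by an undirected path in $\mathcal{G}^*$ means exactly $Z$ in that B-component, the conclusion follows.

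The main obstacle I anticipate is the third step: carefully controlling how Meek-closure interacts with the B-component decomposition across iterations — specifically, ruling out that a newly oriented edge inside one B-component triggers, via Meek's rules, an orientation that the algorithm \emph{should} make inside the B-component of $X$ but does not, because that edge lies outside the currently-explored region of $G$. Resolving this requires the observation that all such forced orientations are already present in $\mathcal{G}^*_1$'s B-component structure (by completeness of Theorem~\ref{thm:mb-by-mb}) or are generated by the local Meek application at Step~10 precisely because $\mathbf{candC}\to N_j$ supplies the needed directed boundary edge; making this watertight is where the real work lies.
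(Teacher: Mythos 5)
Your overall strategy coincides with the paper's: reduce to Theorem~\ref{thm:mb-by-mb} for the $\mathcal{B}_1$-only base case, unpack each non-ancestral constraint into the direct-edge constraints $\mathbf{C}_j\to N_j$ (and, via acyclicity/Meek R2, $an(\mathbf{C}_j,\cdot)\cap sib(N_j,\cdot)\to N_j$), split on whether $N_j$ lies in the B-component of $X$, invoke Lemma~\ref{lemma:identify critical set} to compute $\mathbf{candC}$ in the connected case, and induct over the loop. However, there is a genuine gap exactly where you flag "the real work": you assert, but do not prove, that (i) orienting edges in a B-component other than the one containing $X$ cannot force any new orientation inside $X$'s B-component, and (ii) orienting $\mathbf{candC}\to N_j$ and applying Meek's rules \emph{locally} within the B-component reproduces the restriction of the true new MPDAG to that B-component. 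Your informal justification ("new orientations only enter a chain component through its boundary, and the relevant Meek rules require a directed edge into the chain component to trigger") is not a proof — Meek's rules R1--R4 are applied to the whole graph, and nothing a priori prevents a newly directed boundary edge from cascading orientations across B-components. The paper closes this gap with a dedicated decomposition theory: Theorem~\ref{thm:MEC decomposition} and Lemma~\ref{lemma:inverse decomposition of MEC} establish a bijection between $[\mathcal{G}^*]$ and the Cartesian product of the MECs of its B-components, and Theorem~\ref{thm:local orientation} then shows that adding a direct-edge constraint inside one B-component yields the new MPDAG by replacing only that B-component's induced subgraph with its locally Meek-closed version. Without some version of these results your Steps 2 and 3 do not go through.

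A secondary, smaller point: your worry in Step 3 about critical sets "computed relative to $\mathcal{G}^*_1$ rather than $\mathcal{G}^*_{1,\le j-1}$" misreads the algorithm. Step 10 updates $G$ after each constraint, so at iteration $j$ the sets $sib(N_j,G)$ and $pa(N_j,G)$ are taken from the current intermediate graph, and Lemma~\ref{lemma:identify critical set} is applied relative to the intermediate MPDAG $\mathcal{G}^*_{j-1}$ — which is precisely what the sequential unpacking of the $j$-th constraint requires. The commutation argument you sketch is therefore unnecessary; the clean route (and the paper's) is a straight induction in which the invariant is that $G_{j-1}$ agrees with $\mathcal{G}^*_{j-1}$ on the local structure of every node in $X$'s B-component, reducing everything to the single-constraint case $j=1$.
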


Then we consider ancestral information as background knowledge. Based on an learned MPDAG $\mathcal{G}^*$ with true underlying DAG $\mathcal{G}\in [\mathcal{G}^*],$ if there is extra information that $F_k$ is an ancestor of $T_k$ in $\mathcal{G},$ equivalently we have $T_k\overset{or}{\to}\mathbf{C},$ where $\mathbf{C}$ is the critical set of $T_k$ with respect to $F_k$ in $\mathcal{G}^*.$ 

Unfortunately, finding $an(\mathbf{C},\mathcal{G}^*)\cap sib(T_k,\mathcal{G}^*)$ is not enough in this case, since there is no extra information from the acyclic constraint. However, by Lemma 1 in~\citet{meek1995causal}, there is no partially directed cycle in CPDAGs. So when $\mathcal{G}^*$ is a CPDAG, we have $an(\mathbf{C},\mathcal{G}^*) \cap sib(X,\mathcal{G}^*)=\mathbf{C}.$ Therefore, in this case, we modify our algorithm to first learn a local structure of the CPDAG using MB-by-MB algorithm. Then we transform ancestral information into direct causal clauses by Lemma~\ref{lemma:identify critical set}, and last consider direct causal information and non-ancestral information and update the local structure. The main procedure of the algorithm is given in Algorithm~\ref{alg:local struct:all types} in~\ref{append:algorithm}.

\section{Local characterization of causal relations in maximal PDAG}
\label{sec:local theorems}
We now present the criteria for identifying all definite descendants, possible descendants, and definite non-descendants of the target node $X$ in the MPDAG $\mathcal{G}^*.$ These criteria rely only on the local structure learned by Algorithm~\ref{alg:mb-by-mb in MPDAG} and some additional conditional independence tests. Before presenting the theorems, we classify definite causal relationships into explicit and implicit causes, similar to the CPDAG version in~\citep{fang2022local}.

\begin{definition}
Let $X,Y$ be two distinct nodes in an MPDAG $\mathcal{G}^*$ such that $X$ is a definite cause of $Y$.
We say $X$ is an explicit cause of $Y$ if there is a common causal path from $X$ to $Y$ in every DAG in $\left[\mathcal{G}^*\right],$ otherwise $X$ is an implicit cause of $Y.$
\end{definition}

In other words, $X$ is an explicit cause of $Y$ if and only if $X\in an(Y,\mathcal{G}^*).$ For more details on explicit and implicit causal relationships, refer to Section 3 of~\citep{fang2022local}.

The first theorem gives a sufficient and necessary condition for any node to be a definite non-descendant of $X.$

\begin{theorem}
\label{thm:definite non-cause}
For any two distinct nodes $X$ and $Y$ in an MPDAG $\mathcal{G}^*,$ $Y$ is a definite non-descendant of $X$ if and only if $X\perp Y\mid pa(X,\mathcal{G}^*)$ holds.
\end{theorem}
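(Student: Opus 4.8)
The plan is to prove both directions by translating the statement "$Y$ is a definite non-descendant of $X$" into a statement about $b$-possibly causal paths in $\mathcal{G}^*$, and then connecting that to the $d$-separation $X \perp Y \mid pa(X,\mathcal{G}^*)$. Recall that $Y$ is a descendant of $X$ in \emph{some} DAG $\mathcal{G} \in [\mathcal{G}^*]$ if and only if there is a $b$-possibly causal path from $X$ to $Y$ in $\mathcal{G}^*$; equivalently, $Y$ is a definite non-descendant of $X$ if and only if no such path exists, which by a standard argument is equivalent to: every path from $X$ to $Y$ in $\mathcal{G}^*$ starts with an edge $X \leftarrow \cdot$ into $X$ (i.e. out of $pa(X,\mathcal{G}^*)$), OR contains an edge $V_i \leftarrow V_{i+1}$ somewhere. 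Actually the cleaner characterization I would use: $Y$ is a definite non-descendant of $X$ iff there is no $b$-possibly causal path from $X$ to $Y$, iff the critical set $\mathbf{C}$ of $X$ w.r.t. $Y$ is empty (this last equivalence should be extractable from the definitions and from Lemma~\ref{lemma:zuo}, since an empty critical set certainly means $Y$ is not a definite descendant, and conversely one must rule out the "possible descendant" case).

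First, for the direction ($\Leftarrow$): assume $X \perp Y \mid pa(X,\mathcal{G}^*)$. I would argue that this $d$-separation rules out every $b$-possibly causal path from $X$ to $Y$ in $\mathcal{G}^*$. The key observation is that a $b$-possibly causal path $p = \langle X = V_1, V_2, \dots, V_k = Y\rangle$ leaves $X$ via an edge $X \to V_2$ or $X - V_2$ (never $X \leftarrow V_2$, by the definition of $b$-possibly causal). Such a path, viewed in the underlying DAG $\mathcal{G}$, can be completed to (or already is close to) an open path given $pa(X,\mathcal{G}^*)$: since $V_2 \notin pa(X,\mathcal{G}^*)$ and the path exits $X$ "forward", $X$ is not a collider and the first node $V_2$ is not in the conditioning set. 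One must then handle colliders further along the path using faithfulness and the fact that in at least one DAG $\mathcal{G} \in [\mathcal{G}^*]$ the path can be taken to be causal (hence collider-free except possibly at internal chordal shortcuts). I would invoke the result that if there is a $b$-possibly causal path from $X$ to $Y$ then there is a chordless one, and a chordless $b$-possibly causal path has no colliders, so in the DAG where it is oriented causally it is an open causal path not blocked by $pa(X,\mathcal{G}^*)$ (as it never returns to $X$ and conditioning on parents of $X$ cannot block a forward-going causal path that doesn't pass through them — here one needs that the internal nodes of the causal path are descendants of $X$, hence distinct from $pa(X,\mathcal{G}^*)$ by acyclicity). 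This contradicts $X \perp Y \mid pa(X,\mathcal{G}^*)$ in that DAG's faithful distribution; hence no such path exists and $Y$ is a definite non-descendant.

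For the direction ($\Rightarrow$): assume $Y$ is a definite non-descendant of $X$, so the critical set $\mathbf{C}$ is empty and there is no $b$-possibly causal path from $X$ to $Y$ in $\mathcal{G}^*$. I must show $X \perp Y \mid pa(X,\mathcal{G}^*)$, i.e. in the true DAG $\mathcal{G}$, every path from $X$ to $Y$ is blocked by $pa(X,\mathcal{G}^*)$. Take any path $q$ in $\mathcal{G}$ from $X$ to $Y$. If $q$ starts as $X \leftarrow W$ with $W \in pa(X,\mathcal{G}^*)$... wait, $pa(X,\mathcal{G}^*) \subseteq pa(X,\mathcal{G})$ so any parent of $X$ in $\mathcal{G}^*$ is a parent in $\mathcal{G}$, but $\mathcal{G}$ may have more parents. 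The honest approach: if $q$ begins $X \leftarrow W$, then either $W \in pa(X,\mathcal{G}^*)$ (and since $W$ is a non-collider on $q$ at that position, $q$ is blocked), or $W \in sib(X,\mathcal{G}^*)$ oriented into $X$ in $\mathcal{G}$ — and here I use that $Y$ being a definite non-descendant forces, via Lemma~\ref{lemma:identify critical set}-type reasoning or directly via the emptiness of $\mathbf{C}$, that in fact \emph{every} sibling of $X$ that could start a forward path is irrelevant. Alternatively, if $q$ begins $X \to W$ or $X - W$, then since there is no $b$-possibly causal path from $X$ to $Y$, the path $q$ cannot remain "forward" all the way; it must contain a collider, and I need the first collider to be blocked by $pa(X,\mathcal{G}^*)$. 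The main obstacle — and where I'd spend the most care — is precisely this collider-blocking argument: showing that the first collider $V_i$ on $q$ after leaving $X$ forward satisfies $de(V_i,\mathcal{G}) \cap pa(X,\mathcal{G}^*) = \emptyset$. This should follow because $V_i$ is reached from $X$ by a directed path (the forward initial segment), so $V_i \in de(X,\mathcal{G})$; if some descendant of $V_i$ were in $pa(X,\mathcal{G}^*) \subseteq pa(X,\mathcal{G})$, we would get a directed cycle through $X$, contradicting acyclicity of $\mathcal{G}$. The remaining bookkeeping — paths starting with a sibling edge, and reducing general paths to the forward-segment-plus-collider case — is routine but needs to be written carefully to cover all cases. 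I expect this case analysis to be the technical heart of the proof, and I would organize it by the type of the first edge of the path out of $X$.
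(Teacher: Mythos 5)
Your ($\Leftarrow$) direction is essentially sound and close in spirit to the paper's: the paper argues the contrapositive by taking a causal path $\pi$ from $X$ to $Y$ in a witnessing DAG $G\in[\mathcal{G}^*]$ and observing that no node of $\pi$ can lie in $pa(X,\mathcal{G}^*)\subseteq pa(X,G)$ without creating a directed cycle, so $\pi$ is open given $pa(X,\mathcal{G}^*)$. Your detour through chordless b-possibly causal paths reaches the same conclusion with more machinery than is needed, but it is not wrong.

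The genuine gap is in your ($\Rightarrow$) direction. You try to show directly, in the true DAG $\mathcal{G}$, that every path from $X$ to $Y$ is blocked by $pa(X,\mathcal{G}^*)$, and you yourself isolate the case that breaks: a path beginning $X\leftarrow W$ with $W\in sib(X,\mathcal{G}^*)$ oriented into $X$ in $\mathcal{G}$. There $W$ is a non-collider outside the conditioning set, your acyclicity argument for colliders no longer applies (a collider reached from $W$ need not be a descendant of $X$, so it may well have a descendant in $pa(X,\mathcal{G}^*)$ and fail to block), and the appeal to ``Lemma~\ref{lemma:identify critical set}-type reasoning'' is a placeholder, not an argument; making it precise would require proving that a sibling of $X$ that is an ancestor of $Y$ in $\mathcal{G}$ forces $Y$ to be a possible descendant of $X$, which is real work. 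The paper avoids this case analysis entirely: by Theorem 1 of Fang and He (2020) there exists a DAG $G^{\dagger}\in[\mathcal{G}^*]$ with $pa(X,G^{\dagger})=pa(X,\mathcal{G}^*)$ exactly (all siblings oriented out of $X$); since $Y$ is a non-descendant of $X$ in $G^{\dagger}$, the local Markov property gives $X\perp Y\mid pa(X,G^{\dagger})=pa(X,\mathcal{G}^*)$ in $G^{\dagger}$, and d-separation statements are invariant across the Markov equivalence class, so the same separation holds in the true DAG. I recommend replacing your path-by-path argument with this two-line argument; as written, your proof of the forward direction is incomplete.
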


The condition in Theorem~\ref{thm:definite non-cause}, $X\perp Y\mid pa(X,\mathcal{G}^*),$ is analogous to the Markov property $X\perp Y\mid pa(X,\mathcal{G})$ for any non-descendant $Y$ of $X$ in a DAG $\mathcal{G}$~\citep{lauritzen1990independence, lauritzen1996graphical}. The key difference is that any node in $sib(X,\mathcal{G}^*)$ could also be a parent of $X$ in $\mathcal{G}$~\citep{meek1995causal}. Notably, the set $pa(X,\mathcal{G}^*)$ is always a subset of $pa(X,\mathcal{G})$ for every $\mathcal{G}\in [\mathcal{G}^*].$ Intuitively, if a smaller set blocks all paths from $X$ to $Y,$ a stronger non-ancestral relation from $X$ to $Y$ is established. This intuition also applies to the next two theorems, which judge explicit cause and implicit cause relationships.

\begin{theorem}
\label{thm:explicit cause}
For any two distinct nodes $X$ and $Y$ in an MPDAG $\mathcal{G}^*,$ $X$ is an explicit cause of $Y$ if and only if $X \not\perp Y\mid pa(X,\mathcal{G}^*)\cup sib(X,\mathcal{G}^*)$ holds.
\end{theorem}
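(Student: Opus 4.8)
The plan is to prove both directions by relating the d-connection statement $X \not\perp Y \mid pa(X,\mathcal{G}^*)\cup sib(X,\mathcal{G}^*)$ to the existence of a b-possibly causal path from $X$ to $Y$ that starts with an edge out of $X$ in $\mathcal{G}^*$, i.e.\ to $X\in an(Y,\mathcal{G}^*)$. Write $\mathbf{Z} = pa(X,\mathcal{G}^*)\cup sib(X,\mathcal{G}^*)$, so that $\mathbf{Z} = adj(X,\mathcal{G}^*)\setminus ch(X,\mathcal{G}^*)$; note that $\mathbf{Z}$ contains every neighbor of $X$ except its definite children, and in particular $\mathbf{Z}$ is exactly $pa(X,\mathcal{G})$ for the DAG $\mathcal{G}\in[\mathcal{G}^*]$ obtained by orienting every sibling edge $X-S$ as $X\to S$ (this is always a valid member of the equivalence class — orienting all siblings out of $X$ creates no new v-structure at $X$ and no directed cycle through $X$). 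I would use this distinguished DAG $\mathcal{G}$ as the anchor for the ``if'' direction.

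For the direction ``$X\in an(Y,\mathcal{G}^*)$ implies $X\not\perp Y\mid\mathbf{Z}$'': since $X\in an(Y,\mathcal{G}^*)$, there is a directed path $X\to V_1\to\cdots\to Y$ in $\mathcal{G}^*$ itself, so its first edge $X\to V_1$ has $V_1\in ch(X,\mathcal{G}^*)$, hence $V_1\notin\mathbf{Z}$. Take a shortest such directed path; I claim it is d-connecting given $\mathbf{Z}$. It has no colliders (it is directed), so I only need that no non-endpoint lies in $\mathbf{Z}$. A non-endpoint $V_i$ on this path is a descendant of $X$ in $\mathcal{G}^*$; if $V_i\in\mathbf{Z}$ then $V_i\in adj(X,\mathcal{G}^*)$, and since $X\to\cdots\to V_i$ is directed in $\mathcal{G}^*$, the edge between $X$ and $V_i$ must be $X\to V_i$ (an edge $V_i\to X$ or $V_i - X$ would create a directed cycle or a cycle-like contradiction once Meek's rules close the graph), contradicting $V_i\notin ch(X,\mathcal{G}^*)$ being violated — wait, $X\to V_i$ means $V_i\in ch(X,\mathcal{G}^*)$, so $V_i\notin\mathbf{Z}$, contradiction. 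Moreover shortestness gives chordlessness so $V_i$ and $X$ being adjacent would shortcut the path; either way no internal node is in $\mathbf{Z}$, so the path is open and $X\not\perp Y\mid\mathbf{Z}$.

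For the converse, suppose $X\not\perp Y\mid\mathbf{Z}$; then in the anchor DAG $\mathcal{G}$ (all siblings oriented out of $X$) there is a path $p$ from $X$ to $Y$ open given $\mathbf{Z}=pa(X,\mathcal{G})$. The first edge of $p$ cannot be into $X$: any parent of $X$ in $\mathcal{G}$ lies in $\mathbf{Z}$, and that node, being a non-collider adjacent to the endpoint $X$... here I must argue that an open path from $X$ may be taken to leave $X$ via an out-edge. Since conditioning on $pa(X,\mathcal{G})$ blocks every path of the form $X\leftarrow pa \to\cdots$ (the parent is a non-collider in $\mathbf{Z}$) and every path $X\leftarrow pa \leftarrow\cdots$, the open path must start $X\to W$ with $W\in ch(X,\mathcal{G})$. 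Standard d-separation reasoning (as in the proof of the Markov property / Lemma in Perković et al.) then shows that from an open path out of $X$ one extracts a directed path $X\to\cdots\to Y$ in $\mathcal{G}$, hence $X\in an(Y,\mathcal{G})$; and since $W\in ch(X,\mathcal{G})$ came from either $W\in ch(X,\mathcal{G}^*)$ or a sibling edge oriented outward, this directed path is b-possibly causal in $\mathcal{G}^*$, giving $X\in an(Y,\mathcal{G}^*)$ by the known characterization (a b-possibly causal path in $\mathcal{G}^*$ that is directed in some $\mathcal{G}\in[\mathcal{G}^*]$ witnesses $X\in an(Y,\mathcal{G}^*)$ iff it stays b-possibly causal), which by the remark after the Definition is equivalent to $X$ being an explicit cause of $Y$.

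The main obstacle I anticipate is the converse: turning an arbitrary $\mathbf{Z}$-open path in the anchor DAG $\mathcal{G}$ into a \emph{directed} path from $X$ to $Y$, and then certifying that this directed path of $\mathcal{G}$ is b-possibly causal in $\mathcal{G}^*$ (so that it certifies $X\in an(Y,\mathcal{G}^*)$ rather than merely $X\in an(Y,\mathcal{G})$ for this one DAG). The first half is the classical argument that $X\not\perp Y\mid pa(X,\mathcal{G})$ forces $Y\in de(X,\mathcal{G})$; the second half needs that no edge on the extracted path points ``backwards'' in $\mathcal{G}^*$ — I would handle it by choosing the path to be shortest/chordless and invoking the fact that $\mathcal{G}^*$ is closed under Meek's rules, so a chordless path directed in $\mathcal{G}$ whose first edge is $X\to W$ with $W$ a child or sibling of $X$ in $\mathcal{G}^*$ cannot contain an edge oriented against it in $\mathcal{G}^*$ without contradicting either acyclicity or Meek's rule R1. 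This is essentially the b-possibly-causal-path machinery already used for Lemma~\ref{lemma:zuo} and Lemma~\ref{lemma:identify critical set}, so I would lean on those tools rather than reprove them.
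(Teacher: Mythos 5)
Your forward direction (explicit cause $\Rightarrow X\not\perp Y\mid \mathbf{Z}$) is essentially the paper's Lemma~\ref{lemma:explicit if}: take a directed path from $X$ to $Y$ in $\mathcal{G}^*$, observe it has no colliders, and that any internal node adjacent to $X$ must be a child of $X$ in $\mathcal{G}^*$ (Lemma~\ref{lemma:partial cycle}), hence lies outside $\mathbf{Z}=pa(X,\mathcal{G}^*)\cup sib(X,\mathcal{G}^*)$. That half is sound.

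The converse has a genuine gap, and it begins with your anchor DAG. You assert that $\mathbf{Z}$ equals $pa(X,\mathcal{G})$ for the DAG obtained by orienting every sibling edge as $X\to S$; but orienting $X\to S$ makes $S$ a \emph{child} of $X$, so that DAG has $pa(X,\mathcal{G})=pa(X,\mathcal{G}^*)$, not $\mathbf{Z}$. If you instead meant to orient every sibling as $S\to X$, such a DAG generally does not exist in $[\mathcal{G}^*]$: by the IDA characterization (Lemma~\ref{lemma: maxclique is pa}), only a \emph{clique} of $sib(X,\mathcal{G}^*)$ can be added to the parent set, since two non-adjacent siblings both oriented into $X$ create a new v-structure. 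So $\mathbf{Z}$ is, in general, not the parent set of $X$ in any DAG of the class, and the classical local-Markov argument ``$X\not\perp Y\mid pa(X,\mathcal{G})$ forces $Y\in de(X,\mathcal{G})$'' does not apply. Concretely, in your anchor DAG the conditioning set $\mathbf{Z}$ contains children of $X$ (the former siblings), and conditioning on a child of $X$ can open a collider path such as $X\to S\leftarrow W\to\cdots\to Y$ with $S\in sib(X,\mathcal{G}^*)\subseteq\mathbf{Z}$; no directed path out of $X$ can be extracted from such a path, so the step ``standard d-separation reasoning extracts a directed path $X\to\cdots\to Y$'' fails. Showing that every such collider-opened path is nevertheless blocked by $\mathbf{Z}$ when $X\notin an(Y,\mathcal{G}^*)$ is exactly the hard content of the theorem; the paper devotes Lemmas~\ref{lemma:explicit only if adjacent}--\ref{lemma:explicit only if pa or sib} to a case analysis on the first collider of the path (shielded versus unshielded, second node in $ch(X,\mathcal{G}^*)$ versus in $pa(X,\mathcal{G}^*)\cup sib(X,\mathcal{G}^*)$), repeatedly using closure of $\mathcal{G}^*$ under Meek's rules to derive contradictions. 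The paper explicitly warns that this is not a routine extension of the chain-graph local Markov property because MPDAGs may contain partially directed cycles; your proposal defers precisely this step to a standard argument that does not cover it.
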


The condition in Theorem~\ref{thm:explicit cause}, $X \not\perp Y\mid pa(X,\mathcal{G}^*)\cup sib(X,\mathcal{G}^*),$ is similar to the local Markov property in chain graphs~\citep{frydenberg1990chain}. However, an MPDAG is not generally a chain graph, as MPDAGs may contain partially directed cycles, which are not allowed in chain graphs. Therefore, Theorem~\ref{thm:explicit cause} is not a straightforward extension of its CPDAG version. It can also serve as a technical lemma for deriving graphical properties in MPDAGs.

\begin{theorem}
\label{thm:implicit cause}
For any two distinct nodes $X$ and $Y$ in an MPDAG $\mathcal{G}^*,$ let $\mathcal{Q}$ be the set of maximal cliques of the induced subgraph of $\mathcal{G}^*$ over $sib(X,\mathcal{G}^*).$ Then, $X$ is an implicit cause of $Y$ if and only if $X\perp Y| pa(X,\mathcal{G}^*)\cup sib(X,\mathcal{G}^*)$ and $X\not\perp Y| pa(X,\mathcal{G}^*)\cup \mathbf{Q}$ for every $\mathbf{Q}\in \mathcal{Q}.$
\end{theorem}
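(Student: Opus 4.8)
The plan is to reduce the statement to the local characterizations already available --- Theorem~\ref{thm:explicit cause} for the ``explicit versus implicit'' split and Lemma~\ref{lemma:zuo} for the ``definite versus non-definite'' split --- after a short case analysis. First I would observe that both sides of the asserted equivalence force $Y\notin adj(X,\mathcal{G}^*)$: an implicit cause is by definition a definite cause that is not explicit, but a neighbour $Y$ of $X$ is an explicit cause's target when $X\to Y$ in $\mathcal{G}^*$, a definite non-cause's target when $Y\to X$, and only a possible descendant when $X-Y$; and $X\perp Y\mid pa(X,\mathcal{G}^*)\cup sib(X,\mathcal{G}^*)$ cannot hold for adjacent $X,Y$, since $\mathcal{G}$ and $\mathcal{G}^*$ share the same skeleton. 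So I may assume $Y\notin adj(X,\mathcal{G}^*)$. Since ``implicit cause'' means ``definite cause and not explicit cause'', and Theorem~\ref{thm:explicit cause} identifies ``not explicit cause'' with the first displayed condition, it suffices to prove the (unconditional) equivalence: $X$ is a definite cause of $Y$ if and only if $X\not\perp Y\mid pa(X,\mathcal{G}^*)\cup\mathbf{Q}$ for every maximal clique $\mathbf{Q}\in\mathcal{Q}$ of $\mathcal{G}^*_{sib(X,\mathcal{G}^*)}$.

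The technical core is to attach to each maximal clique $\mathbf{Q}\in\mathcal{Q}$ a DAG $\mathcal{G}_{\mathbf{Q}}\in[\mathcal{G}^*]$ with $pa(X,\mathcal{G}_{\mathbf{Q}})=pa(X,\mathcal{G}^*)\cup\mathbf{Q}$ and $ch(X,\mathcal{G}_{\mathbf{Q}})\cap sib(X,\mathcal{G}^*)=sib(X,\mathcal{G}^*)\setminus\mathbf{Q}$. I would get this by checking that orienting $\mathbf{Q}\to X$ and $X\to sib(X,\mathcal{G}^*)\setminus\mathbf{Q}$ is admissible in the sense of Lemma~\ref{lemma:identify critical set}, i.e.\ creates no v-structure collided on $X$ and no directed triangle through $X$, and then appealing to the standard fact underlying IDA-type estimation in MPDAGs that an admissible local orientation extends, after closure under Meek's rules, to a member of $[\mathcal{G}^*]$. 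The v-structure part is easy because Meek's rules force every sibling of $X$ to be adjacent to every parent of $X$, so $pa(X,\mathcal{G}^*)\cup\mathbf{Q}$ is a clique; the directed-triangle part is the delicate one: a directed edge $R\to Q$ with $Q\in\mathbf{Q}$ and $R\in sib(X,\mathcal{G}^*)\setminus\mathbf{Q}$, together with a vertex $Q'\in\mathbf{Q}$ non-adjacent to $R$ (supplied by maximality of $\mathbf{Q}$), would --- since $\mathcal{G}^*$ is already closed under Meek's rules --- already have forced an orientation of an $X$-incident edge, contradicting $Q,Q',R\in sib(X,\mathcal{G}^*)$. Alongside this I would use the transfer fact, employed the same way in Theorems~\ref{thm:definite non-cause} and~\ref{thm:explicit cause}, that a d-separation $X\perp Y\mid\mathbf{Z}$ with $pa(X,\mathcal{G}^*)\subseteq\mathbf{Z}\subseteq pa(X,\mathcal{G}^*)\cup sib(X,\mathcal{G}^*)$ holds in the true DAG $\mathcal{G}$ if and only if it holds in $\mathcal{G}_{\mathbf{Q}}$.

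With this machinery I would prove the equivalence by its contrapositive: $X$ is not a definite cause of $Y$ if and only if $X\perp Y\mid pa(X,\mathcal{G}^*)\cup\mathbf{Q}$ for some $\mathbf{Q}\in\mathcal{Q}$. For the forward direction, if $X$ is not a definite cause then Lemma~\ref{lemma:zuo} says the critical set $\mathbf{C}$ of $X$ with respect to $Y$ is empty or a clique, is disjoint from $ch(X,\mathcal{G}^*)$, and (it consists of second vertices of chordless b-possibly causal paths from $X$, none of which can be a parent of $X$) is contained in $sib(X,\mathcal{G}^*)$; I extend $\mathbf{C}$ to a maximal clique $\mathbf{Q}\in\mathcal{Q}$. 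In $\mathcal{G}_{\mathbf{Q}}$ a shortest directed path from $X$ to $Y$, were one to exist, would be chordless and forward-oriented also in $\mathcal{G}^*$ (same skeleton, and directed edges of $\mathcal{G}^*$ are preserved), hence a chordless b-possibly causal path there, so its second vertex would lie in $\mathbf{C}$; but that vertex is an out-neighbour of $X$ in $\mathcal{G}_{\mathbf{Q}}$, contradicting $\mathbf{C}\subseteq\mathbf{Q}=pa(X,\mathcal{G}_{\mathbf{Q}})\cap sib(X,\mathcal{G}^*)$ if it is a sibling, or $\mathbf{C}\cap ch(X,\mathcal{G}^*)=\emptyset$ if it is a child. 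Hence $Y\notin de(X,\mathcal{G}_{\mathbf{Q}})$, the ordered local Markov property gives $X\perp Y\mid pa(X,\mathcal{G}_{\mathbf{Q}})=pa(X,\mathcal{G}^*)\cup\mathbf{Q}$ in $\mathcal{G}_{\mathbf{Q}}$, and the transfer fact carries it to $\mathcal{G}$. The backward direction is immediate: $X\perp Y\mid pa(X,\mathcal{G}^*)\cup\mathbf{Q}$ transfers to $\mathcal{G}_{\mathbf{Q}}$, where it forces $Y\notin de(X,\mathcal{G}_{\mathbf{Q}})$ by ordered local Markov, so $X$ fails to cause $Y$ in the member $\mathcal{G}_{\mathbf{Q}}$ of $[\mathcal{G}^*]$ and is not a definite cause.

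I expect the real obstacles to be exactly the two structural inputs of the second paragraph: proving that a maximal clique of $\mathcal{G}^*_{sib(X,\mathcal{G}^*)}$ is always realizable as precisely the sibling-parents of $X$ in some consistent DAG --- i.e.\ that directed edges among the siblings of $X$ can never manufacture a ``bad'' clique, which needs a careful use of closure under Meek's rules --- and the cross-equivalence-class invariance of these $X$-local d-separations. Everything else is bookkeeping around Theorem~\ref{thm:explicit cause}, Lemma~\ref{lemma:zuo}, and Lemma~\ref{lemma:identify critical set}.
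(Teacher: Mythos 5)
Your proposal is correct and follows essentially the same route as the paper: it hinges on the same three ingredients, namely Theorem~\ref{thm:explicit cause} to handle the explicit/implicit split, the realizability of each maximal clique $\mathbf{Q}$ as the extra parent set of $X$ in some DAG of $[\mathcal{G}^*]$ (the paper's Lemma~\ref{lemma: maxclique is pa}, proved exactly as you sketch via the IDA condition and the Meek-closure argument), and the critical-set characterization of Lemma~\ref{lemma:zuo}. The only cosmetic difference is that the paper proves the ``definite cause $\Rightarrow$ no separating $\mathbf{Q}$'' direction by directly exhibiting an open chordless b-possibly causal path through a vertex of $\mathbf{C}\setminus\mathbf{Q}$, whereas you route both directions through $\mathcal{G}_{\mathbf{Q}}$ and the local Markov property; the two are logically interchangeable.
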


In Theorem~\ref{thm:implicit cause}, we need to find all maximal cliques in the induced subgraph of $\mathcal{G}^*$ over $sib(X,\mathcal{G}^*).$ By Theorem~\ref{thm:mb-by-mb}, Algorithm~\ref{alg:mb-by-mb in MPDAG} can correctly discover the parents, children and sibling for each node in $sib(X,\mathcal{G}^*),$ thus fully recovering the induced subgraph of $\mathcal{G}^*$ over $sib(X,\mathcal{G}^*).$

The proof of Theorem~\ref{thm:implicit cause} is based on Lemma~\ref{lemma: maxclique is pa} in~\ref{append:proof}, which shows that for any maximal clique $\mathbf{Q}$ in $\mathcal{Q},$ there exists a DAG $\mathcal{G}\in [\mathcal{G}^*]$ such that $pa(X,\mathcal{G})=pa(X,\mathcal{G}^*)\cup \mathbf{Q}.$ In other words, any maximal clique in $\mathcal{Q}$ can serve as an additional parent set of $X$. This lemma is fundamental in IDA theory. Let $\mathcal{R}$ be the set that includes all sets of nodes $\mathbf{R}$ such that there exists a DAG $\mathcal{G}\in [\mathcal{G}^*]$ with $pa(X,\mathcal{G})=pa(X,\mathcal{G}^*)\cup \mathbf{R}.$ Lemma~\ref{lemma: maxclique is pa} divides $\mathcal{R}$ into two parts: $\mathcal{Q},$ the set of maximal cliques of the induced subgraph of $\mathcal{G}^*$ over $sib(X,\mathcal{G}^*),$ and the set of all cliques $\mathbf{R}$ that are not maximal but for which no $R\in \mathbf{R}$ and $C\in sib(X,\mathcal{G}^*)\backslash \mathbf{R}$ exist such that $C\to R$ in $\mathcal{G}^*$~\citep{fang2020ida}.

We can combine these two conditions for judging the definite cause relationship.

\begin{corollary}
\label{cor:definite cause}
For any two distinct nodes $X$ and $Y$ in an MPDAG $\mathcal{G}^*,$ let $\mathcal{Q}$ be the set of maximal cliques of the induced subgraph of $\mathcal{G}^*$ over $sib(X,\mathcal{G}^*).$ Then $X$ is an definite cause of $Y$ if and only if $X \not\perp Y|pa(X,\mathcal{G}^*)\cup sib(X,\mathcal{G}^*)$ or $X\not\perp Y| pa(X,\mathcal{G}^*)\cup \mathbf{Q}$ for every $\mathbf{Q}\in \mathcal{Q}.$
\end{corollary}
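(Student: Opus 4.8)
The plan is to simply combine Theorem~\ref{thm:explicit cause} and Theorem~\ref{thm:implicit cause}, using the dichotomy that a definite cause is either an explicit cause or an implicit cause (by Definition~1). First I would observe that, by definition, $X$ is a definite cause of $Y$ if and only if $X$ is either an explicit cause or an implicit cause of $Y$. By Theorem~\ref{thm:explicit cause}, $X$ is an explicit cause of $Y$ iff $X \not\perp Y \mid pa(X,\mathcal{G}^*)\cup sib(X,\mathcal{G}^*)$. By Theorem~\ref{thm:implicit cause}, $X$ is an implicit cause of $Y$ iff $X\perp Y\mid pa(X,\mathcal{G}^*)\cup sib(X,\mathcal{G}^*)$ together with $X\not\perp Y\mid pa(X,\mathcal{G}^*)\cup\mathbf{Q}$ for every $\mathbf{Q}\in\mathcal{Q}$.

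Next I would take the disjunction of these two characterizations. Writing $\varphi$ for the statement $X \not\perp Y \mid pa(X,\mathcal{G}^*)\cup sib(X,\mathcal{G}^*)$ and $\psi$ for the statement ``$X\not\perp Y\mid pa(X,\mathcal{G}^*)\cup\mathbf{Q}$ for every $\mathbf{Q}\in\mathcal{Q}$'', the explicit case is $\varphi$ and the implicit case is $\neg\varphi\wedge\psi$. Their disjunction $\varphi\vee(\neg\varphi\wedge\psi)$ is logically equivalent to $\varphi\vee\psi$, which is exactly the condition stated in the corollary. I should also briefly note why the two cases are genuinely exhaustive and give the stated equivalence in both directions: if $X$ is a definite cause it falls into exactly one of the two (disjoint) cases, hence satisfies $\varphi\vee\psi$; conversely if $\varphi\vee\psi$ holds, then either $\varphi$ holds, in which case Theorem~\ref{thm:explicit cause} makes $X$ an explicit (hence definite) cause, or $\varphi$ fails but $\psi$ holds, in which case Theorem~\ref{thm:implicit cause} makes $X$ an implicit (hence definite) cause.

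Since this is purely a Boolean combination of two already-established biconditionals together with the definitional dichotomy between explicit and implicit causes, there is no real obstacle; the only point requiring a word of care is confirming that ``definite cause'' is exhausted by ``explicit cause'' and ``implicit cause'' — but this is immediate from Definition~1, which defines an implicit cause as precisely a definite cause that is not explicit. Thus the corollary follows at once.
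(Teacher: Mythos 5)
Your proposal is correct and matches the paper's own (very brief) justification: the corollary is obtained exactly by combining Theorem~\ref{thm:explicit cause} and Theorem~\ref{thm:implicit cause} via the explicit/implicit dichotomy, with the Boolean simplification $\varphi\vee(\neg\varphi\wedge\psi)\equiv\varphi\vee\psi$. Nothing further is needed.
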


Based on these conditions, we can finally present an algorithm to identify all definite descendants, definite non-descendants and possible descendants of $X,$ given data $\mathcal{D}$ and background knowledge $\mathcal{B}.$ This algorithm is shown in Algorithm~\ref{alg:find descendants}.
We call this Algorithm by LABITER (\textbf{L}ocal \textbf{A}lgorithm with \textbf{B}ackground knowledge for \textbf{I}dentifying the \textbf{T}yp\textbf{e} of causal \textbf{R}elations). The following theorem, directly derived from Theorem~\ref{thm:definite non-cause} and Corollary~\ref{cor:definite cause}, verifies the correctness of LABITER. 

\begin{algorithm}[t]
\LinesNumbered
\caption{LABITER: Local algorithm for finding all definite descendants, definite non-descendants and possible descendants of $X$ in $\mathcal{G}^*$.}
\label{alg:find descendants}
\SetKwInOut{Input}{Input}
\SetKwInOut{Output}{Output}
\small

\Input{A target $X$, observational data $\mathcal{D}$, background knowledge $\mathcal{B}$.}
\Output{The set of definite descendants, definite non-descendants and possible descendants of $X$ in the MPDAG $\mathcal{G}^*$.}

Learn a local structure $G$ by Algorithm~\ref{alg:mb-by-mb in MPDAG}.

Let $\mathcal{Q}$ be the set of maximal cliques of the induced subgraph of $G$ over $sib(X,G).$

Initialize DefDes, DefNonDes, PosDes $=\emptyset, \emptyset, \emptyset$

\For{$Y \in \mathbf{V}\,\backslash\, \{X\}$} {
\uIf{$X\perp Y | pa(X,G)$}{
Add $Y$ to DefNonDes;
}
\uElseIf{$X \not\perp Y|pa(X,G)\cup sib(X,G)$ or $X\not\perp Y| pa(X,G)\cup \mathbf{Q}$ for every $\mathbf{Q}\in \mathcal{Q}.$}{
Add $Y$ to DefDes;
}
\Else{
Add $Y$ to PosDes;
}
}
\Return DefDes, DefNonDes, PosDes.
\end{algorithm}

\begin{theorem}
Suppose all conditional independencies are correctly checked, then Algorithm~\ref{alg:find descendants} correctly returns the definite descendants, definite non-descendants and possible descendants of $X$ in $\left[\mathcal{G}^*\right].$
\end{theorem}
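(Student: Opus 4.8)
The plan is to derive the final theorem as an essentially immediate corollary of the machinery already assembled, so the proof is a matter of carefully chaining together the three ingredients: (i) soundness and completeness of the local structure learned by Algorithm~\ref{alg:mb-by-mb in MPDAG} (Theorem~\ref{thm:mb-by-mb}), (ii) the dichotomy for definite non-descendants (Theorem~\ref{thm:definite non-cause}), and (iii) the dichotomy for definite descendants (Corollary~\ref{cor:definite cause}). First I would invoke Theorem~\ref{thm:mb-by-mb} to argue that, under correctly resolved conditional independence tests, the graph $G$ returned in Step~1 of Algorithm~\ref{alg:find descendants} satisfies $pa(X,G)=pa(X,\mathcal{G}^*)$, $ch(X,G)=ch(X,\mathcal{G}^*)$, $sib(X,G)=sib(X,\mathcal{G}^*)$, and moreover $pa(Z,G)=pa(Z,\mathcal{G}^*)$, $ch(Z,G)=ch(Z,\mathcal{G}^*)$, $sib(Z,G)=sib(Z,\mathcal{G}^*)$ for every $Z\in sib(X,\mathcal{G}^*)$ (since each such $Z$ is connected to $X$ by an undirected path in $\mathcal{G}^*$). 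This last point is exactly what is needed to conclude that the induced subgraph of $G$ over $sib(X,G)$ coincides with the induced subgraph of $\mathcal{G}^*$ over $sib(X,\mathcal{G}^*)$, hence that the family $\mathcal{Q}$ of maximal cliques computed in Step~2 equals the family $\mathcal{Q}$ appearing in Theorems~\ref{thm:explicit cause}, \ref{thm:implicit cause} and Corollary~\ref{cor:definite cause}.

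Next I would fix an arbitrary $Y\in\mathbf{V}\setminus\{X\}$ and trace the branches of the \textbf{For} loop. In the first branch, $Y$ is placed in DefNonDes exactly when $X\perp Y\mid pa(X,G)=pa(X,\mathcal{G}^*)$; by Theorem~\ref{thm:definite non-cause} this holds if and only if $Y$ is a definite non-descendant of $X$ in $[\mathcal{G}^*]$. Since a node is either a definite non-descendant, a definite descendant, or a possible descendant, and these three classes partition $\mathbf{V}\setminus\{X\}$, it suffices to show that the second branch captures precisely the definite descendants among the nodes that are \emph{not} definite non-descendants. This is exactly the content of Corollary~\ref{cor:definite cause}: conditioned on $Y$ not being a definite non-descendant, $X$ is a definite cause of $Y$ (equivalently $Y$ is a definite descendant of $X$) if and only if $X\not\perp Y\mid pa(X,\mathcal{G}^*)\cup sib(X,\mathcal{G}^*)$ or $X\not\perp Y\mid pa(X,\mathcal{G}^*)\cup\mathbf{Q}$ for every $\mathbf{Q}\in\mathcal{Q}$ — which, after substituting $pa(X,G)=pa(X,\mathcal{G}^*)$, $sib(X,G)=sib(X,\mathcal{G}^*)$ and the identification of $\mathcal{Q}$ above, is verbatim the condition tested in the \textbf{ElseIf} branch. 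Consequently every $Y$ reaching the \textbf{Else} branch is neither a definite non-descendant nor a definite descendant, i.e. a possible descendant, and is correctly placed in PosDes.

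Finally I would note a small logical subtlety worth spelling out: the \textbf{ElseIf} is only reached when $X\not\perp Y\mid pa(X,\mathcal{G}^*)$, so one should check that Corollary~\ref{cor:definite cause} is indeed being applied in a regime consistent with its hypotheses — but Corollary~\ref{cor:definite cause} is stated for \emph{all} distinct $X,Y$ with no such precondition, so this is automatic; the precondition only matters in that the partition argument uses the mutual exclusivity of the three classes. I expect the only nontrivial aspect of writing this up is making the cross-reference chain airtight, namely (a) justifying that each $Z\in sib(X,\mathcal{G}^*)$ is connected to $X$ by an undirected path in $\mathcal{G}^*$ so that Theorem~\ref{thm:mb-by-mb} applies to it, and (b) confirming that knowing $pa$, $ch$, $sib$ of every such $Z$ in $G$ pins down the induced subgraph of $G$ over $sib(X,G)$ exactly — both of which are routine, so the proof is genuinely short. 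The main "obstacle", to the extent there is one, is simply bookkeeping: ensuring the set expressions in Algorithm~\ref{alg:find descendants} (written with $G$) are substituted correctly for the set expressions in Theorem~\ref{thm:definite non-cause} and Corollary~\ref{cor:definite cause} (written with $\mathcal{G}^*$), which is exactly what Theorem~\ref{thm:mb-by-mb} licenses.
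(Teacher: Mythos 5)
Your proposal is correct and follows essentially the same route as the paper, which presents this theorem as directly derived from Theorem~\ref{thm:definite non-cause} and Corollary~\ref{cor:definite cause}, with Theorem~\ref{thm:mb-by-mb} implicitly licensing the substitution of the learned local structure $G$ for $\mathcal{G}^*$ in the tested conditions. Your explicit treatment of the clique-set identification over $sib(X,\mathcal{G}^*)$ and of the partition argument for the \textbf{Else} branch is a faithful (and slightly more careful) elaboration of the paper's one-line justification.
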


\section{Experiments}
\label{sec:experiments}
\subsection{Learning local structure in MPDAG}
\label{exp:local structure}
In this section, we conduct a simulation study to test  Algorithm~\ref{alg:mb-by-mb in MPDAG}.
The MB-by-MB algorithm learns a chain component containing the target node and edges around it~\citep{liu2020local}, and it has been shown to outperforms other algorithms when learning local structure in sparse networks~\citep{wang2014discovering}.
Therefore, we compare the performance of Algorithm~\ref{alg:mb-by-mb in MPDAG} with the MB-by-MB and PC algorithms when learning local structure in a single chain component.

In sparse DAGs, the maximal size of chain component is relatively small even in large graphs~\citep{he2013reversible}.
A chain component with size $10$ is larger than $95\%$ of  chain components in a DAG with $200$ nodes and $300$ edges~\citep{he2013reversible}.
In our experiment, the size of chain components $n$ is chosen from $\{5,10,15\}.$
For each graph, we randomly select some edges as background knowledge and run three algorithms: MB-by-MB in MPDAG (Algorithm~\ref{alg:mb-by-mb in MPDAG}), the original MB-by-MB algorithm implemented on the CPDAG~\citep{wang2014discovering}, and the PC algorithm~\citep{spirtes1991algorithm}.
The proportion of background knowledge in the undirected edges ranges from $0.1$ to $0.9$.

We first use an oracle conditional independence test, implemented via d-separation in the true DAG, and collect the number of conditional independence tests used by each algorithm. The number of conditional independence tests indicates the efficiency of each algorithm without considering the estimation error. The experiment is repeated $500$ times, and the results are plotted in Figure~\ref{fig:exp1}.
Then, we draw $N=2000$ samples from a linear Gaussian model faithful to the true DAG, where weights are randomly drawn from a uniform distribution between $[0.6,1.2].$
We compare the structural Hamming distance (SHD) between the learned structure and the true local structure to evaluate the efficacy of each algorithm.
The experiment is repeated for $5000$ times, and the results are plotted in Figure~\ref{fig:exp1_2}.

\begin{figure}[t]
\centering
\subfloat[$n=5$] %[$\mathcal{G}$]
{
\label{fig2:subfig1}\includegraphics[width=0.32\textwidth]{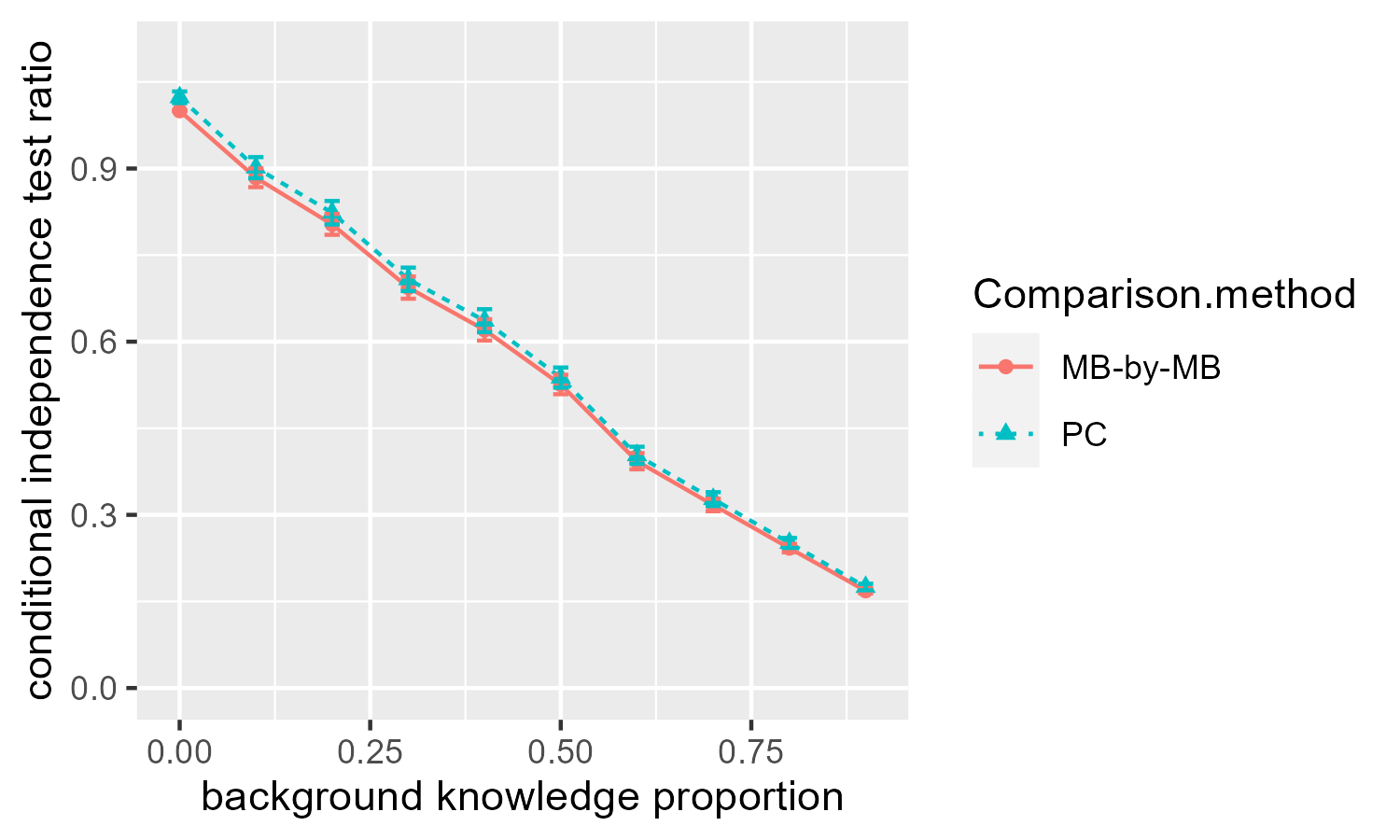}
}
\subfloat[$n=10$] %[$G_{(X)}$]
{
\label{fig2:subfig2}\includegraphics[width=0.32\textwidth]{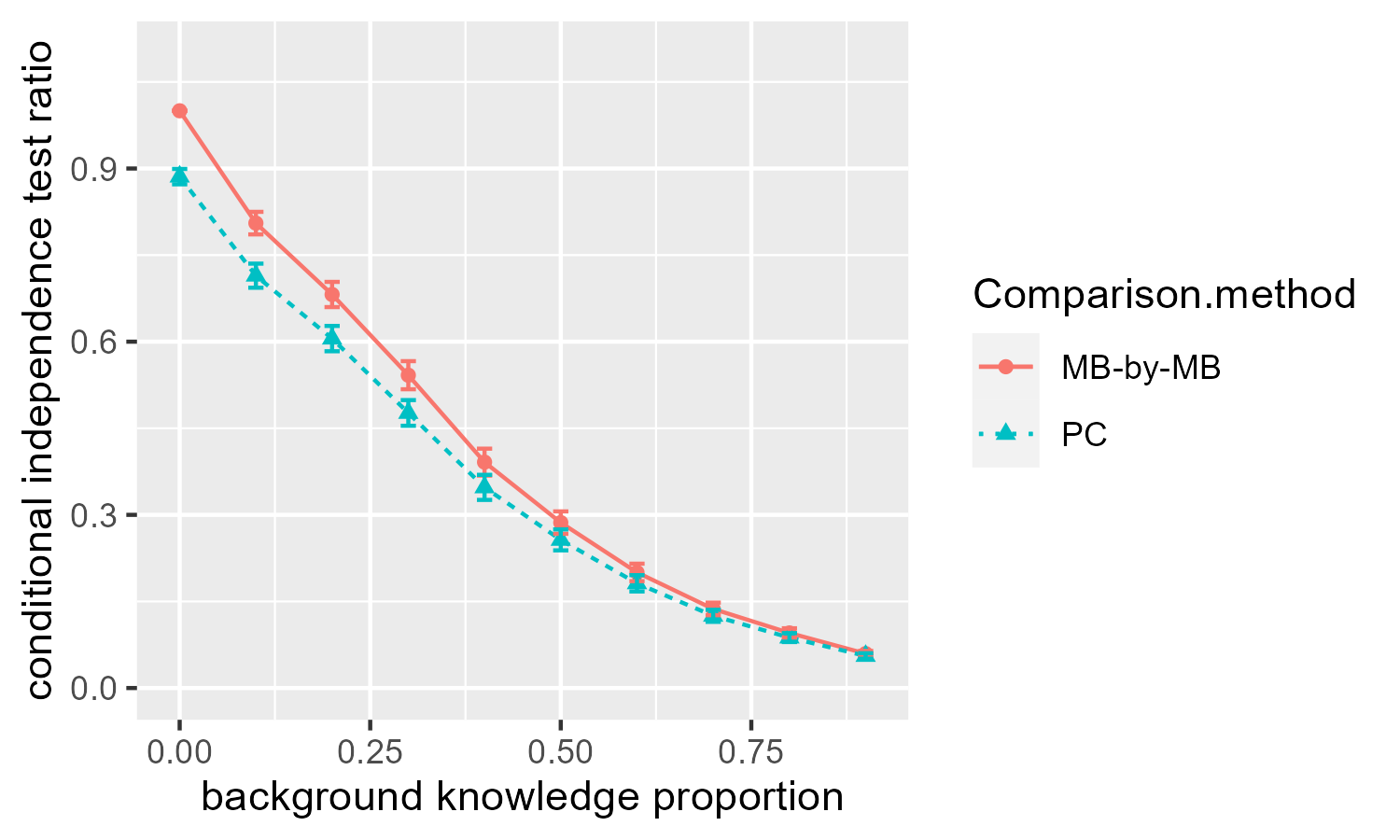}
}
\subfloat[$n=15$] %[$G_{(X)}$]
{
\label{fig2:subfig3}\includegraphics[width=0.32\textwidth]{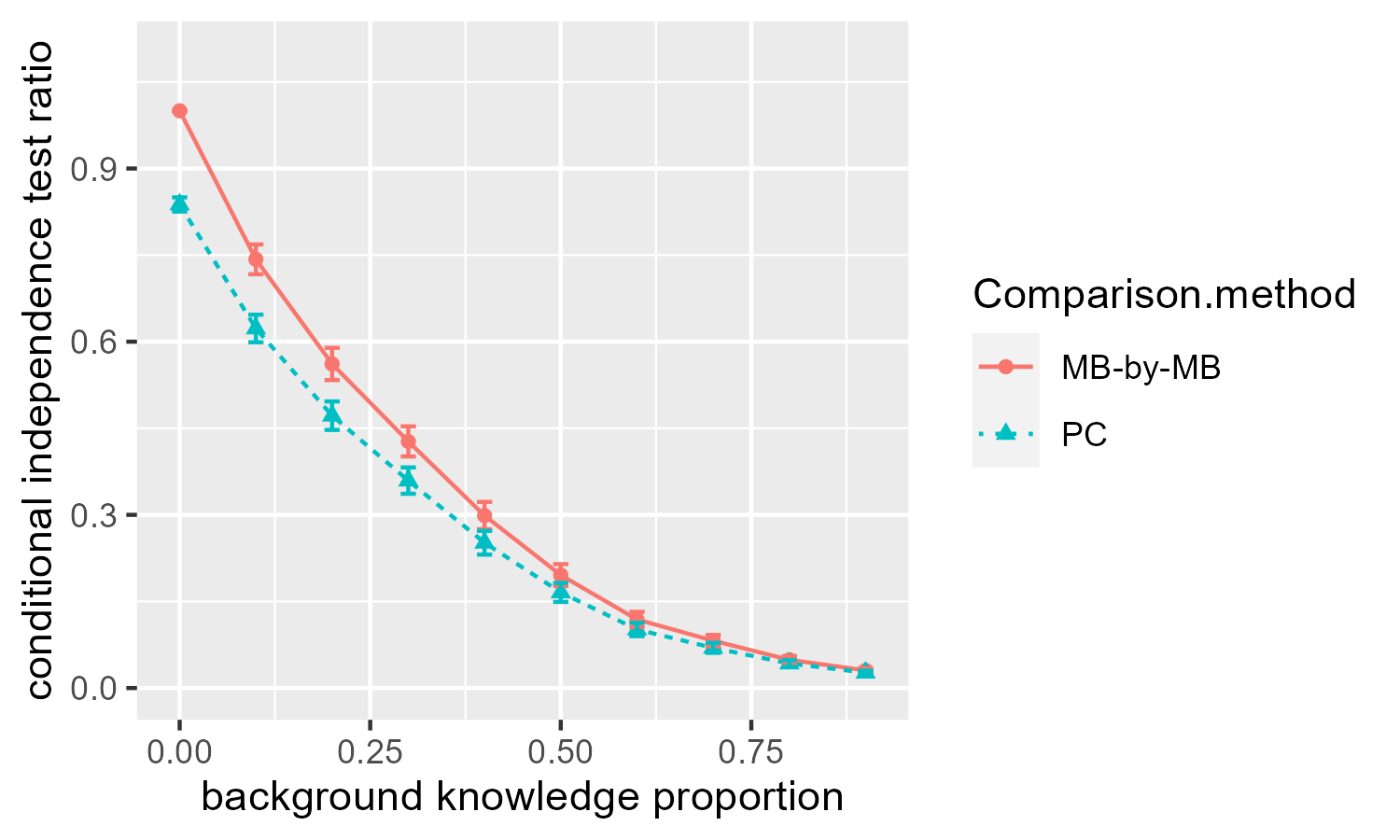}
}

\caption{Ratio of the number of conditional independence test required by MB-by-MB in MPDAG (Algorithm~\ref{alg:mb-by-mb in MPDAG}) to the comparison method (solid red line: MB-by-MB algorithm, dashed cyan line: PC algorithm) when learning local structure in a chain component.} 
\label{fig:exp1}
\end{figure}

\begin{figure}[t]
\centering
\subfloat[$n=5$] %[$\mathcal{G}$]
{
\label{fig3:subfig1}\includegraphics[width=0.32\textwidth]{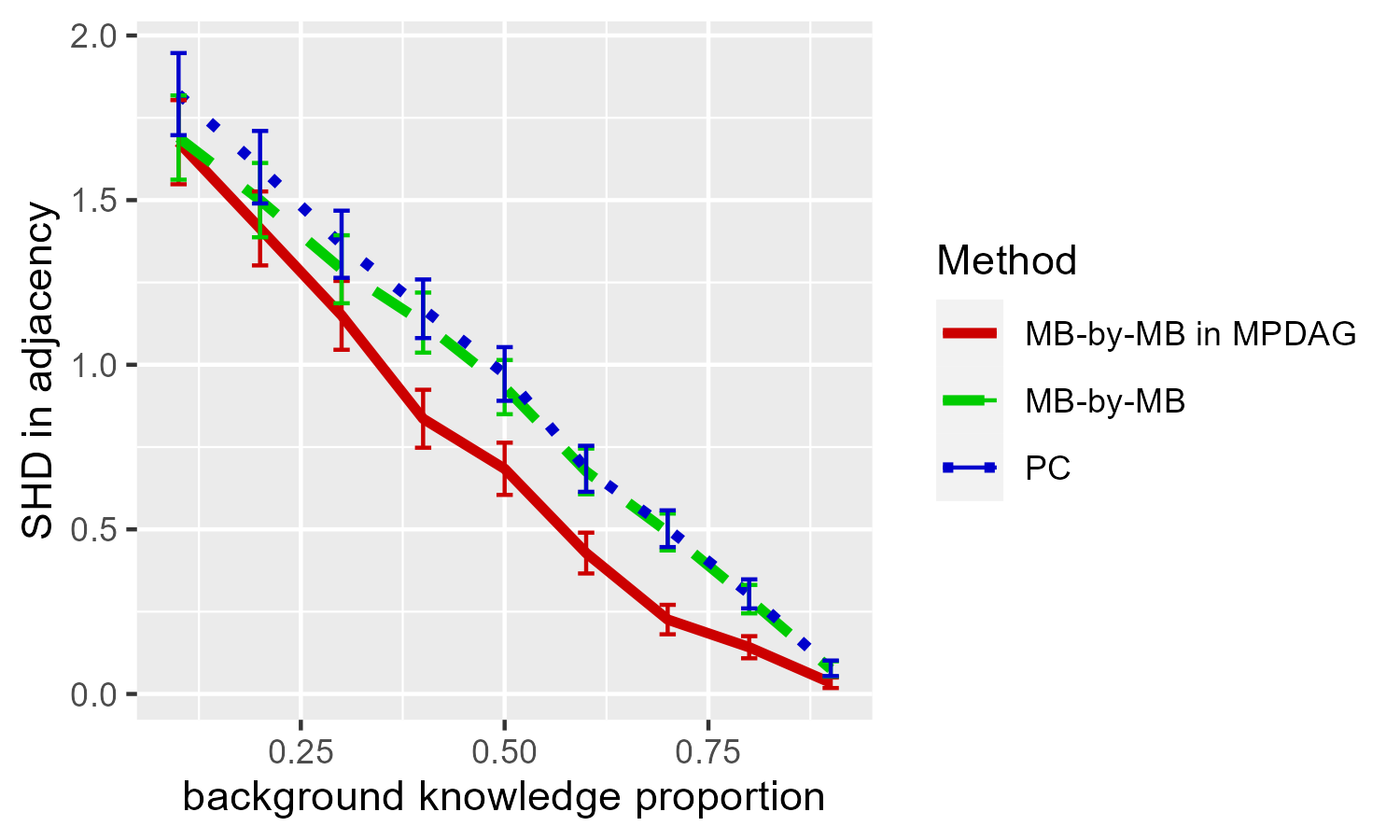}
}
\subfloat[$n=10$] %[$G_{(X)}$]
{
\label{fig3:subfig2}\includegraphics[width=0.32\textwidth]{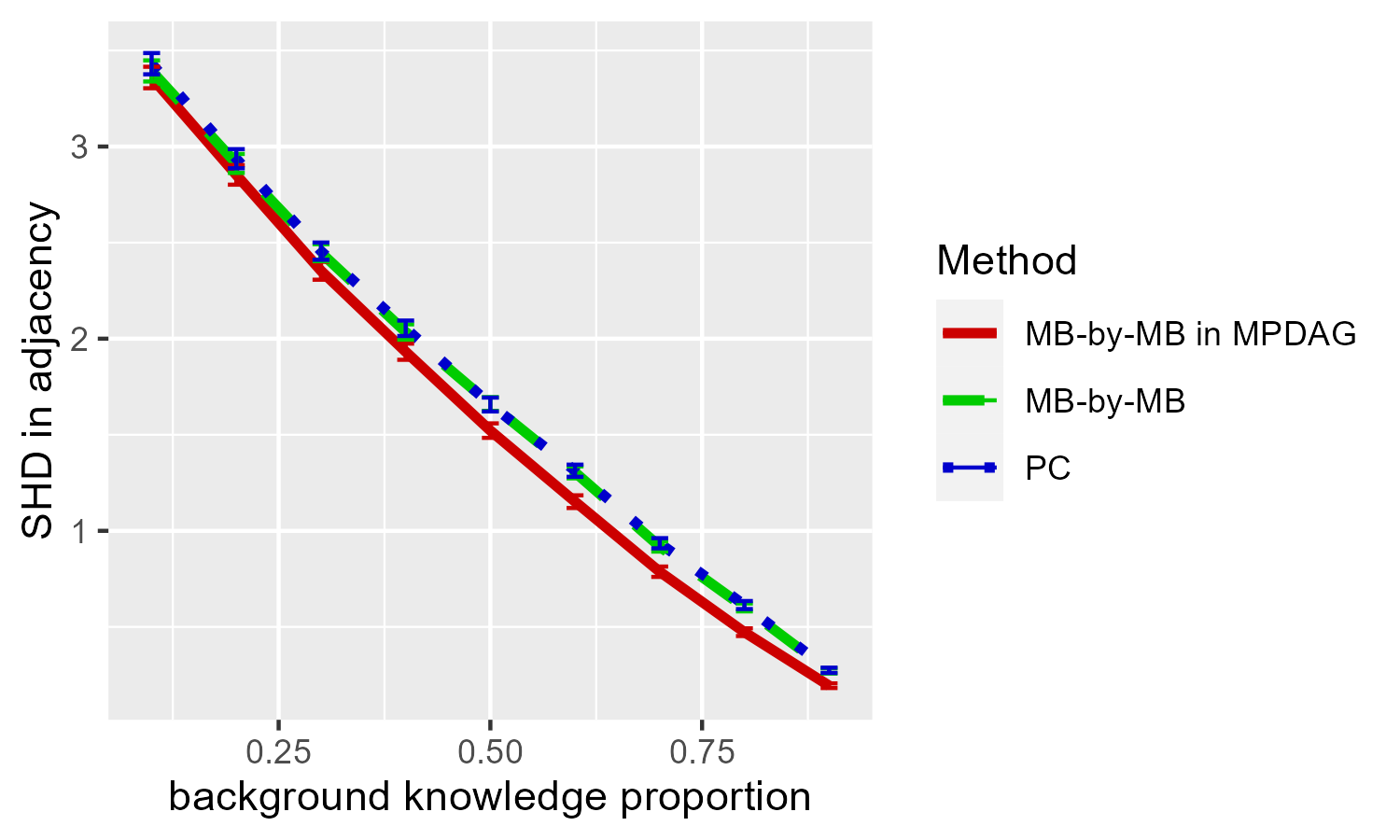}
}
\subfloat[$n=15$] %[$G_{(X)}$]
{
\label{fig3:subfig3}\includegraphics[width=0.32\textwidth]{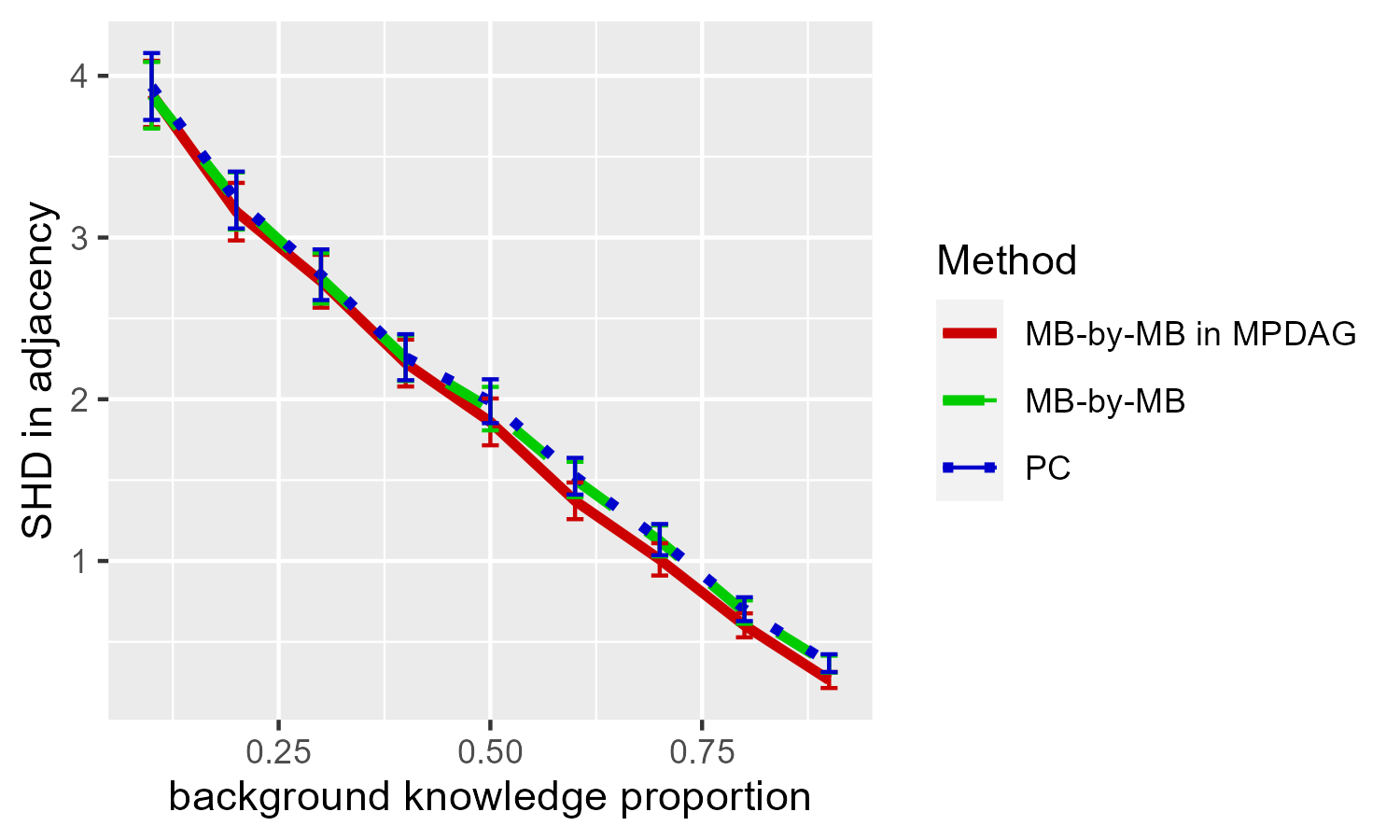}
}

\caption{Average SHD between the learned local structure and true MPDAG around the target node. Solid red line: MB-by-MB in MPDAG (Algorithm~\ref{alg:mb-by-mb in MPDAG}); dashed green line: MB-by-MB algorithm; dotted blue line: PC algorithm.} 
\label{fig:exp1_2}
\end{figure}

Figure~\ref{fig:exp1} shows the ratio of the number of conditional independence test required by Algorithm~\ref{alg:mb-by-mb in MPDAG} to that required by the comparison method.
We can see that, as the amount of background knowledge increases, the number of conditional independence tests required by the algorithm indeed decreases.
Since we focus on a single chain component instead of the whole DAG, the performance of the MB-by-MB algorithm is similar to the PC algorithm.
When background knowledge is added, the number of nodes connected to the target node by an undirected path in the MPDAG decreases.
Therefore, Algorithm~\ref{alg:mb-by-mb in MPDAG} only needs to learn a smaller local structure, leading to fewer conditional independence tests.

Figure~\ref{fig:exp1_2} shows the SHD between the learned local structure and the true local structure around the target node in the MPDAG for each algorithm.
All algorithms perform better as the proportion of background knowledge increases, since more edges around the target node are given as background knowledge.
Nevertheless, the MB-by-MB in MPDAG algorithm (Algorithm~\ref{alg:mb-by-mb in MPDAG}) outperforms other algorithms when more edges are known in prior.

\subsection{Identifying causal relations}

\begin{figure}[t!]
	% \vspace{-1.0em}
	\centering
	
	\subfloat[$n=50$, $N=100$ \label{fig:kappa_50_100}]{
		\begin{minipage}[t]{0.22\textwidth}
			\centering
			\includegraphics[width=\textwidth]{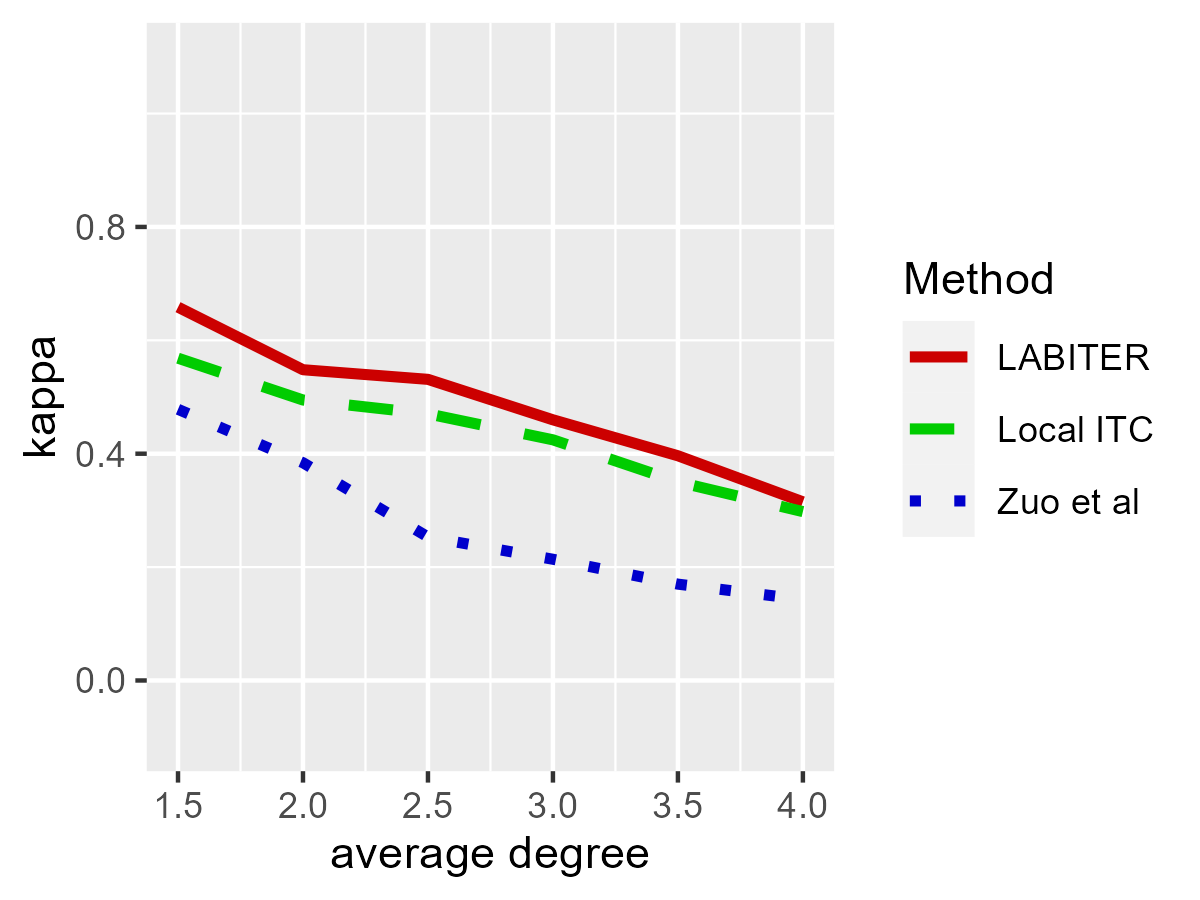}
			%\caption{fig1}
		\end{minipage}%
	}%
	\hspace{0.01\textwidth}
	\subfloat[$n=50$, $N=200$  \label{fig:kappa_50_200}]{
		\begin{minipage}[t]{0.22\textwidth}
			\centering
			\includegraphics[width=\textwidth]{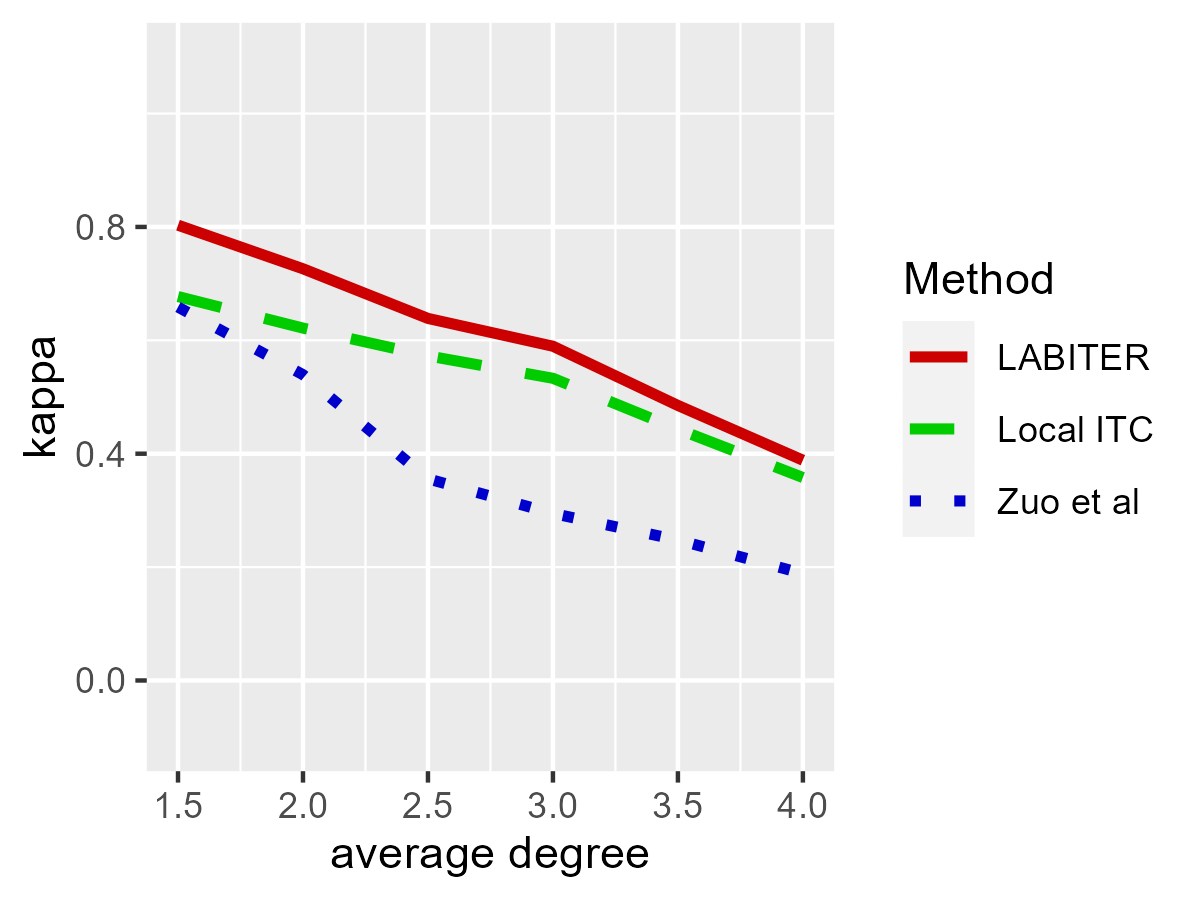}
			%\caption{fig1}
		\end{minipage}%
	}%
	\hspace{0.01\textwidth}
	\subfloat[$n=50$, $N=500$ \label{fig:kappa_50_500}]{
		\begin{minipage}[t]{0.22\textwidth}
			\centering
			\includegraphics[width=\textwidth]{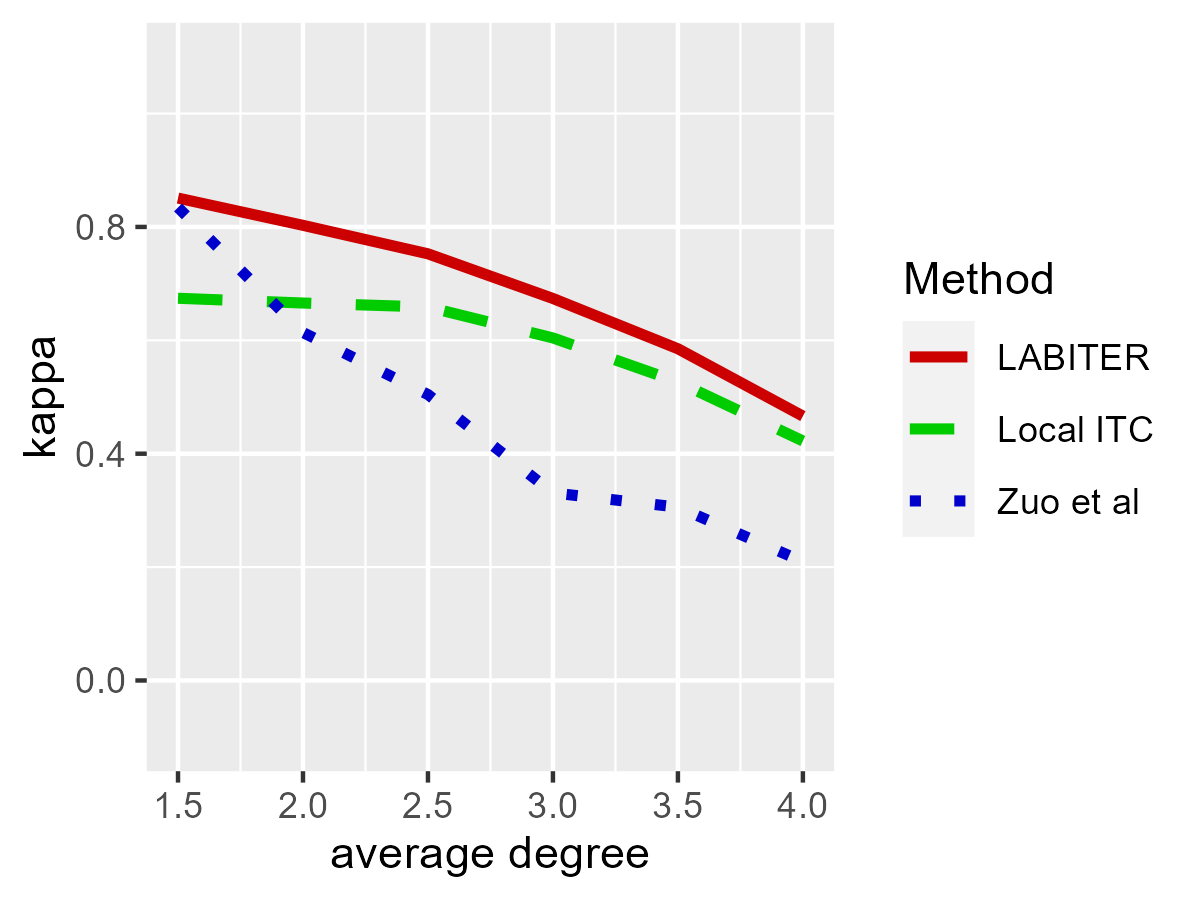}
			%\caption{fig1}
		\end{minipage}%
	}%
	\hspace{0.01\textwidth}
	\subfloat[$n=50$, $N=1000$ \label{fig:kappa_50_1000}]{
		\begin{minipage}[t]{0.22\textwidth}
			\centering
			\includegraphics[width=\textwidth]{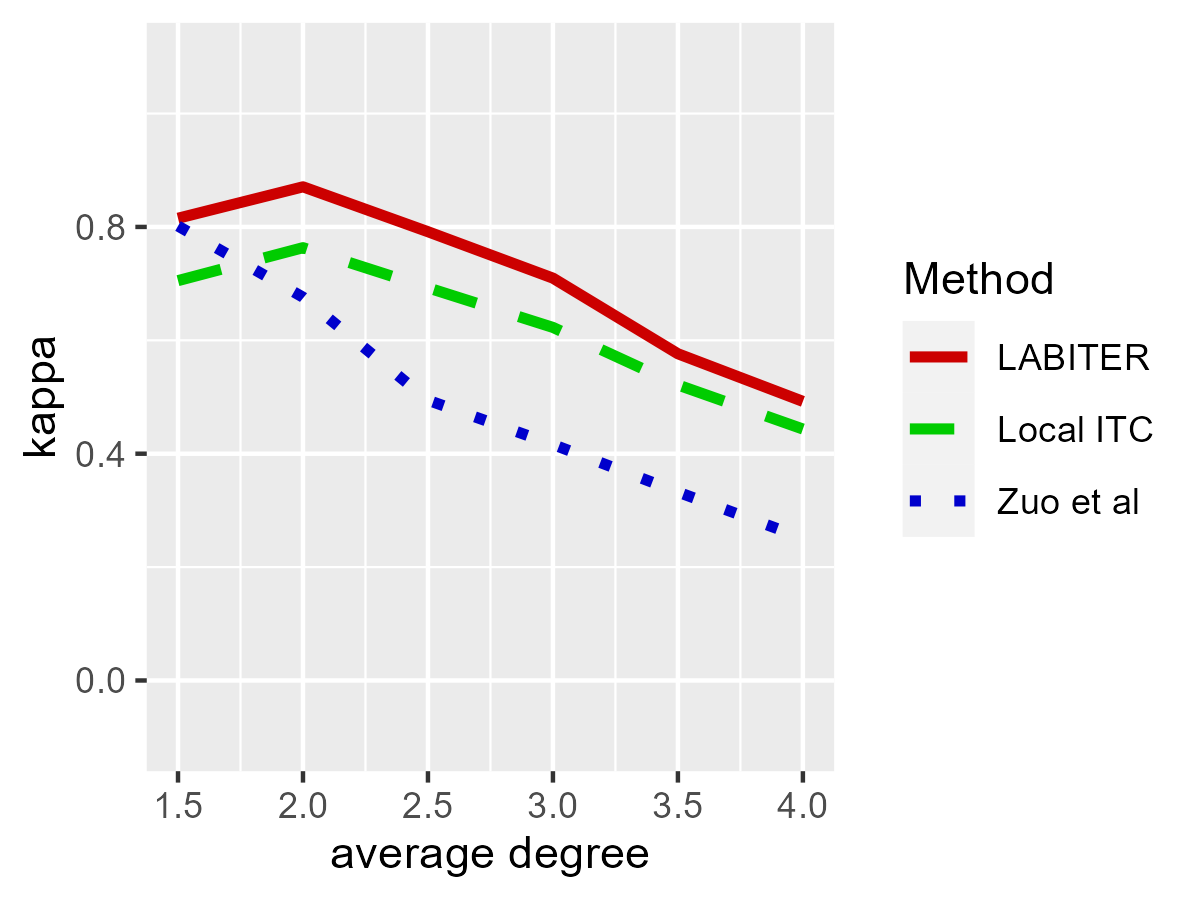}
			%\caption{fig1}
		\end{minipage}%
	}%

    \subfloat[$n=50$, $N=100$ \label{fig:time_50_100}]{
		\begin{minipage}[t]{0.22\textwidth}
			\centering
			\includegraphics[width=\textwidth]{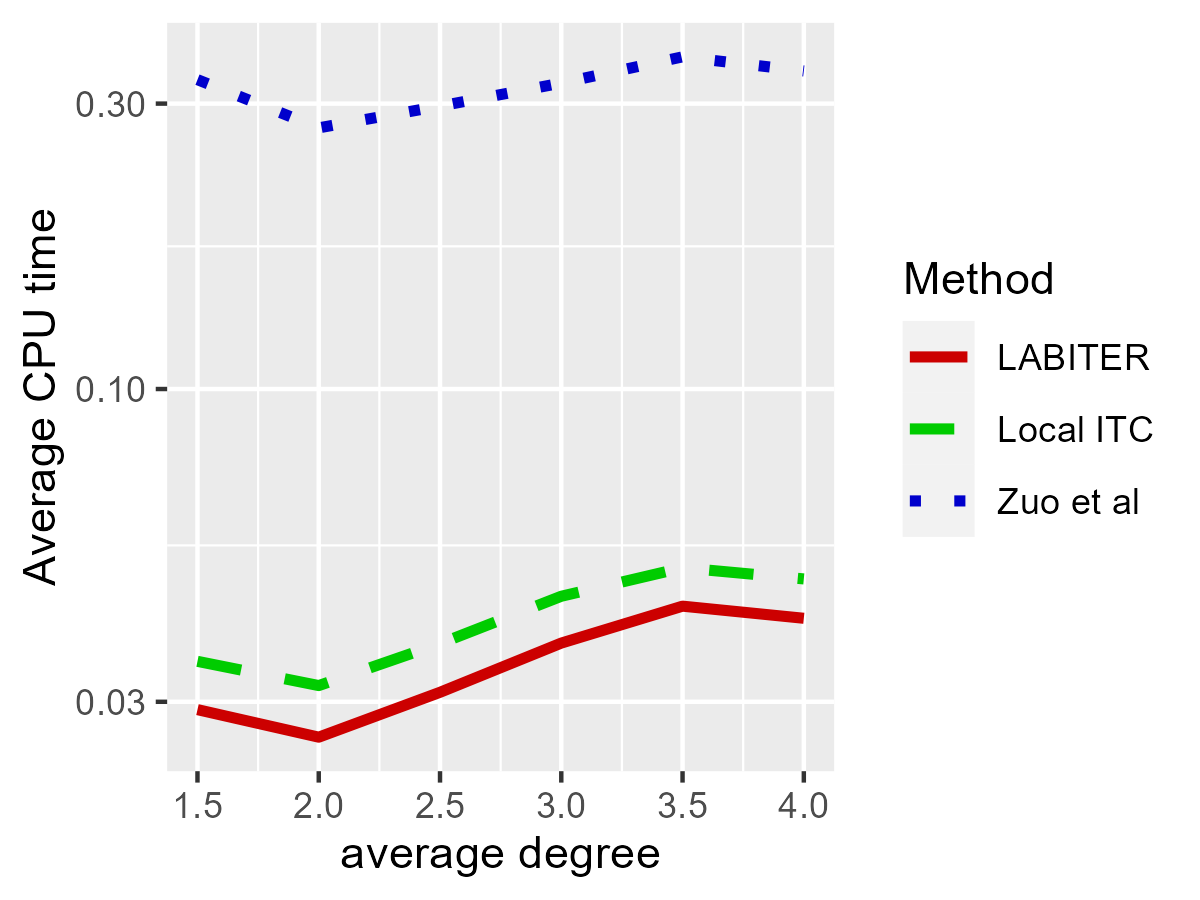}
			%\caption{fig1}
		\end{minipage}%
	}%
	\hspace{0.01\textwidth}
	\subfloat[$n=50$, $N=200$  \label{fig:time_50_200}]{
		\begin{minipage}[t]{0.22\textwidth}
			\centering
			\includegraphics[width=\textwidth]{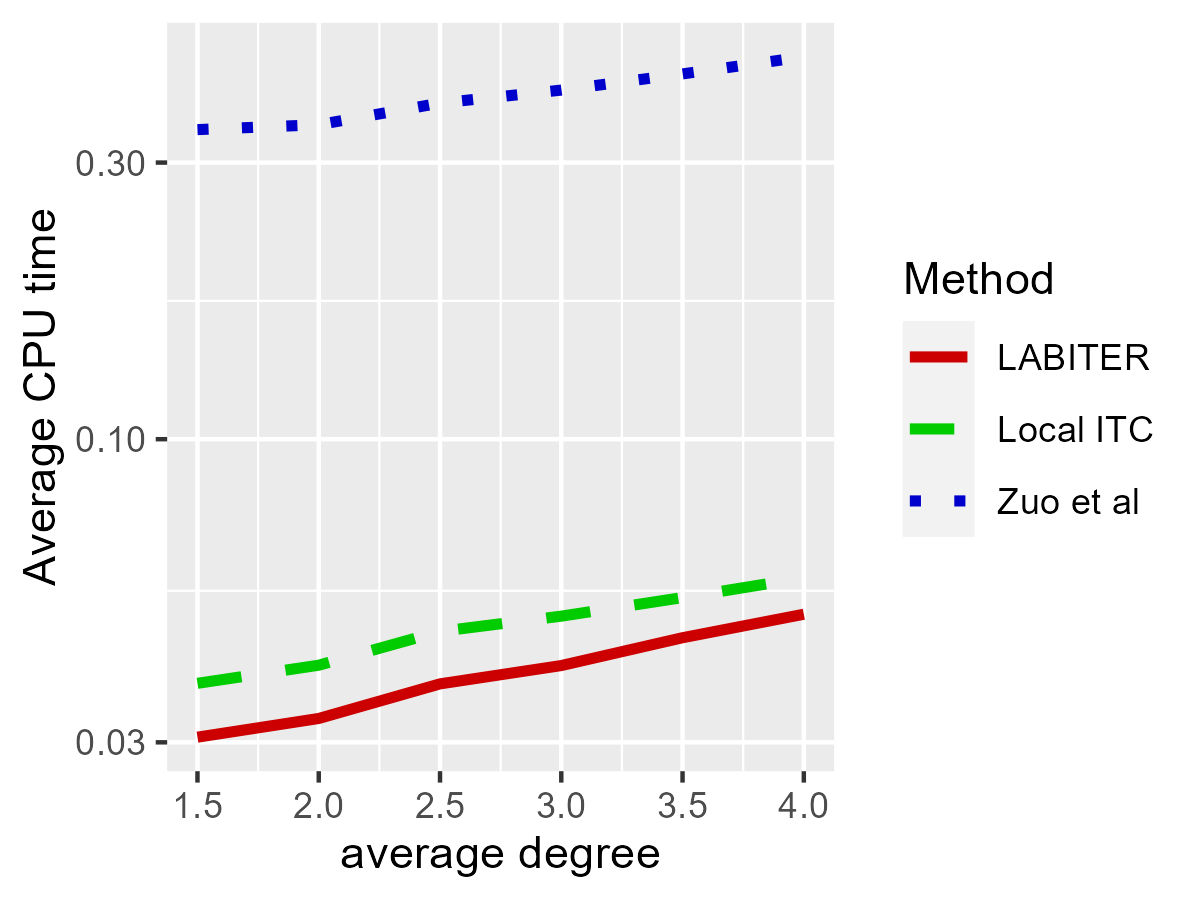}
			%\caption{fig1}
		\end{minipage}%
	}%
	\hspace{0.01\textwidth}
	\subfloat[$n=50$, $N=500$ \label{fig:time_50_500}]{
		\begin{minipage}[t]{0.22\textwidth}
			\centering
			\includegraphics[width=\textwidth]{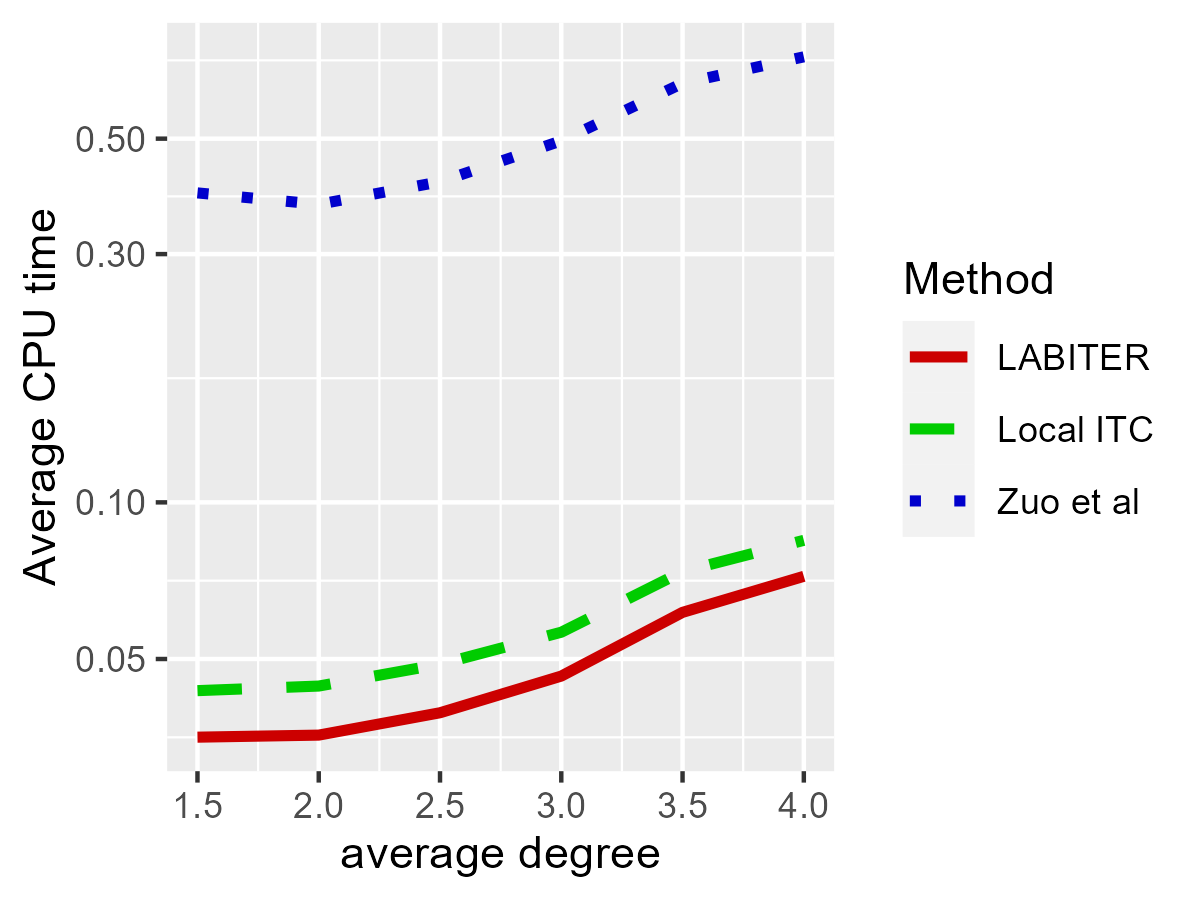}
			%\caption{fig1}
		\end{minipage}%
	}%
	\hspace{0.01\textwidth}
	\subfloat[$n=50$, $N=1000$ \label{fig:time_50_1000}]{
		\begin{minipage}[t]{0.22\textwidth}
			\centering
			\includegraphics[width=\textwidth]{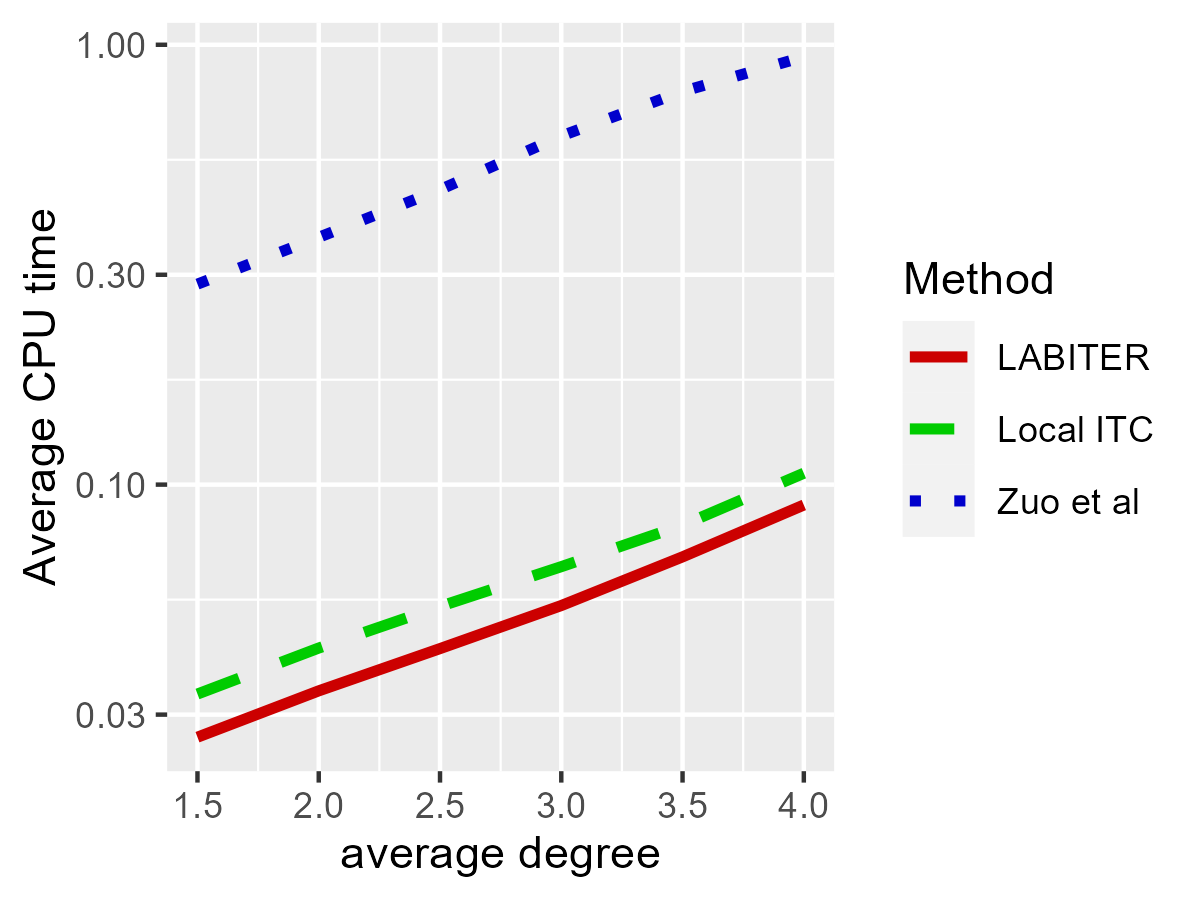}
			%\caption{fig1}
		\end{minipage}%
	}%
	
	\caption{(a)-(d) The Kappa coefficients of different methods on random graphs with $n$ nodes and $N$ samples are drawn. (e)-(h) The average CPU time of different methods. }
	\label{fig:true:kappa}
	% \vspace{-1.0em}
\end{figure}

Next, we conduct an experiment to identify causal relations in an MPDAG.
We sample random DAGs $\mathcal{G}$ with $n\in \{50, 100\}$ nodes and average degrees $d\in \{1.5, 2, 2.5, 3, 3.5, 4\}.$ A sample of size $N\in \{100, 200, 500, 1000\}$ is drawn from a linear Gaussian model faithful to $\mathcal{G},$ with weights  randomly drawn from a uniform distribution between $[0.6, 1.2].$
Thirty percent of the edges in $\mathcal{G}$ are chosen as background knowledge.
We compare the performance of three algorithms for identifying causal relations: LABITER (Algorithm~\ref{alg:find descendants}), local ITC (Algorithm 1 in~\citet{fang2022local}) and Algorithm 2 in~\citet{zuo2022counterfactual}, which first learns the whole MPDAG and then identifies causal relations based on Lemma~\ref{lemma:zuo}.

In each experiment, three possible causal relations between two randomly sampled nodes, $X$ and $Y$, are considered: $X$ is a definite cause, a definite non-cause, or a possible cause of $Y.$
Let $I_{ij}(k)$ be the indicator that in the $k$-th experiment, the true causal relation is the $i$-th one and the output is the $j$-th one, where $1\le i,j\le 3$.
The experiment is repeated for $m=5000$ times.
We use the Kappa coefficient~\citep{cohen1960coefficient} as an metric for evaluating the learning of causal relations, which is also used in~\citep{fang2022local}. Denote $M_{ij}=\sum_{k=1}^{m}I_{ij}(k),$ and
$$p=\frac{\sum_{i=1}^{3}M_{ii}}{\sum_{i=1}^{3}\sum_{j=1}^{3}M_{ij}}, \quad q=\frac{\sum_{i=1}^{3}\left(\sum_{j=1}^{3}M_{ij}\right)\cdot \left(\sum_{j=1}^{3}M_{ji}\right)}{\left(\sum_{i=1}^{3}\sum_{j=1}^{3}M_{ij}\right)^2},$$
then the Kappa coefficient is given by
$$\kappa = \frac{p-q}{1-q}.$$
A higher value of $\kappa$ implies better performance for classification. We plot the kappa coefficient for three algorithms in Figure~\ref{fig:true:kappa}(a)-(d) and the corresponding average CPU time in Figure~\ref{fig:true:kappa}(e)-(h). Results for $n=100$ are shown in \ref{append:experiment}.

From the results shown in Figure~\ref{fig:true:kappa}, it is evident that our algorithm (LABITER) outperforms the other algorithms in most cases. Local ITC does not utilize the additional information from background knowledge, which could further determine some possible cause relationships as definite cause or definite non-cause relationships. Although Algorithm 2 in~\citet{zuo2022counterfactual} incorporates background knowledge, its performance is constrained by the accuracy of  CPDAG learning. In contrast, our method not only leverages background knowledge but also reduces errors through local graph learning, resulting in superior performance.

\subsection{Counterfactual fairness}

We also run an experiment on counterfactual fairness, similar to the synthetic data experiment in~\citep{zuo2022counterfactual}, comparing our method with Algorithm 2 in~\citep{zuo2022counterfactual}. We randomly sample DAGs with $n\in\{10,20,30,40\}$ nodes with $2n$ edges. Data are generated from a linear Gaussian model, with each variable uniformly discretized based on its sorted order. The sensitive variable $A$ and the target variable $Y$ are randomly selected. The range of all variables, except the binary sensitive variable $A$, are randomly sampled from the ranges of variables in the UCI Student Performance Dataset~\citep{cortez2008using}. In generating counterfactual data, $A$ is set to the other level while other variables are generated normally. Let $\hat{Y}(a)$ and $\hat{Y}(a')$ represent the estimates of $Y$ in the observed and counterfactual data, respectively. Unfairness is measured as $|\hat{Y}(a) - \hat{Y}(a')|$. According to~\citet{kusner2017counterfactual}, $\hat{Y}$ is counterfactually fair if it depends only on the non-descendants of $A$. To reduce unfairness, we should therefore exclude the descendants of $A$ from the prediction model.

We consider five baseline models as in~\citep{zuo2022counterfactual}. 1) \texttt{Full} uses all variables except $Y$ to predict $Y.$ 2) \texttt{Unaware} uses all variables except $X,Y$ to predict $Y.$ 3) \texttt{Oracle} uses all true non-descendants of $X$ to predict $Y.$ 4) \texttt{FairRelax} first learns the MPDAG, then use Lemma~\ref{lemma:zuo} to identify causal relations, and uses all definite non-descendants and possible descendants of $X$ to predict $Y.$ 5) \texttt{Fair} also identify causal relations as \texttt{FairRelax} does, and uses all definite non-descendants of $X$ to predict $Y.$ We propose two methods: \texttt{LFairRelax} uses LABITER to identify causal relations and uses all definite non-descendants and possible descendants of $X$ to predict $Y,$ and \texttt{LFair} uses LABITER to identify causal relations and uses all definite non-descendants of $X$ to predict $Y.$

\begin{table}[h]
\centering
\label{tab:my-table}
\small
\resizebox{\textwidth}{!}{
\begin{tabular}{ccccccccc}
\hline
\multirow{2}{*}{} & \multirow{2}{*}{Node} & \multicolumn{7}{c}{Method} \\ \cline{3-9} 
 &  & Full & Unaware & Oracle & FairRelax & Fair & LFairRelax & LFair \\ \hline
\multirow{4}{*}{\rotatebox{90}{Unfairness}} & 10 & $0.251\pm0.37$ & $0.118\pm0.288$ & $0.000\pm0.000$ & $0.069\pm0.173$ & $0.058\pm0.162$ & $0.012\pm0.041$ & $0.012\pm0.041$ \\ 
 & 20 & $0.165\pm0.267$ & $0.085\pm0.211$ & $0.000\pm0.000$ & $0.079\pm0.199$ & $0.067\pm0.152$ & $0.035\pm0.11$ & $0.034\pm0.109$ \\ 
 & 30 & $0.099\pm0.156$ & $0.044\pm0.103$ & $0.000\pm0.000$ & $0.055\pm0.281$ & $0.053\pm0.279$ & $0.028\pm0.157$ & $0.029\pm0.166$ \\ 
 & 40 & $0.098\pm0.18$ & $0.054\pm0.154$ & $0.000\pm0.000$ & $0.054\pm0.17$ & $0.026\pm0.077$ & $0.025\pm0.118$ & $0.025\pm0.122$ \\ \hline
\multirow{4}{*}{\rotatebox{90}{RMSE}} & 10 & $0.431\pm0.189$ & $0.441\pm0.193$ & $0.513\pm0.268$ & $0.491\pm0.232$ & $0.518\pm0.251$ & $0.591\pm0.312$ & $0.59\pm0.312$ \\ 
 & 20 & $0.529\pm0.434$ & $0.537\pm0.438$ & $0.628\pm0.643$ & $0.623\pm0.559$ & $0.69\pm0.646$ & $0.794\pm0.919$ & $0.794\pm0.919$ \\ 
 & 30 & $0.732\pm0.745$ & $0.731\pm0.744$ & $0.911\pm1.118$ & $0.816\pm0.801$ & $0.892\pm0.95$ & $0.891\pm0.91$ & $0.891\pm0.91$ \\ 
 & 40 & $0.703\pm0.72$ & $0.708\pm0.724$ & $0.849\pm1.001$ & $0.774\pm0.819$ & $0.843\pm0.912$ & $0.947\pm1.092$ & $0.947\pm1.093$ \\ \hline
\end{tabular}
}
\caption{Average RMSE and unfairness for each method in the counterfactual fairness experiment.}
\end{table}

We draw $N=1000$ samples in each experiment and use all methods to obtain $\hat{Y}$ from observational data. The experiment is repeated for $m=100$ times. We collect the RMSE on the test set and unfairness for $\hat{Y}$ for each method, and show them in Table 1. We can see that as RMSE is similar across different methods, out method can achieve a lower unfairness. 

\section{Application}
\label{sec:application}

We apply our method to the bank marketing dataset\footnote{Data download and data information are available at \url{https://archive.ics.uci.edu/ml/datasets/Bank+Marketing}.}~\citep{moro2014data}. The Bank dataset pertains to direct marketing campaigns conducted by a Portuguese banking institution, primarily through phone calls. These campaigns often required multiple contacts with the same client to determine if they would subscribe to a bank term deposit. The dataset's classification goal is to predict whether a client will subscribe to a term deposit, indicated by the binary variable \texttt{y}. Existing studies have recognized marital status as a protected attribute~\citep{chierichetti2017fair, backurs2019scalable, bera2019fair, hu2020fairnn, ziko2021variational}. Therefore, to predict \texttt{y} under the premise of counterfactual fairness, we should exclude descendant variables of marital status from the predictive variables. Due to the absence of counterfactual data to measure unfairness, we merely identify causal relationships between other variables and marital status and provide discussion accordingly.

\begin{figure}[t]
\centering
\includegraphics[width=0.5\textwidth]{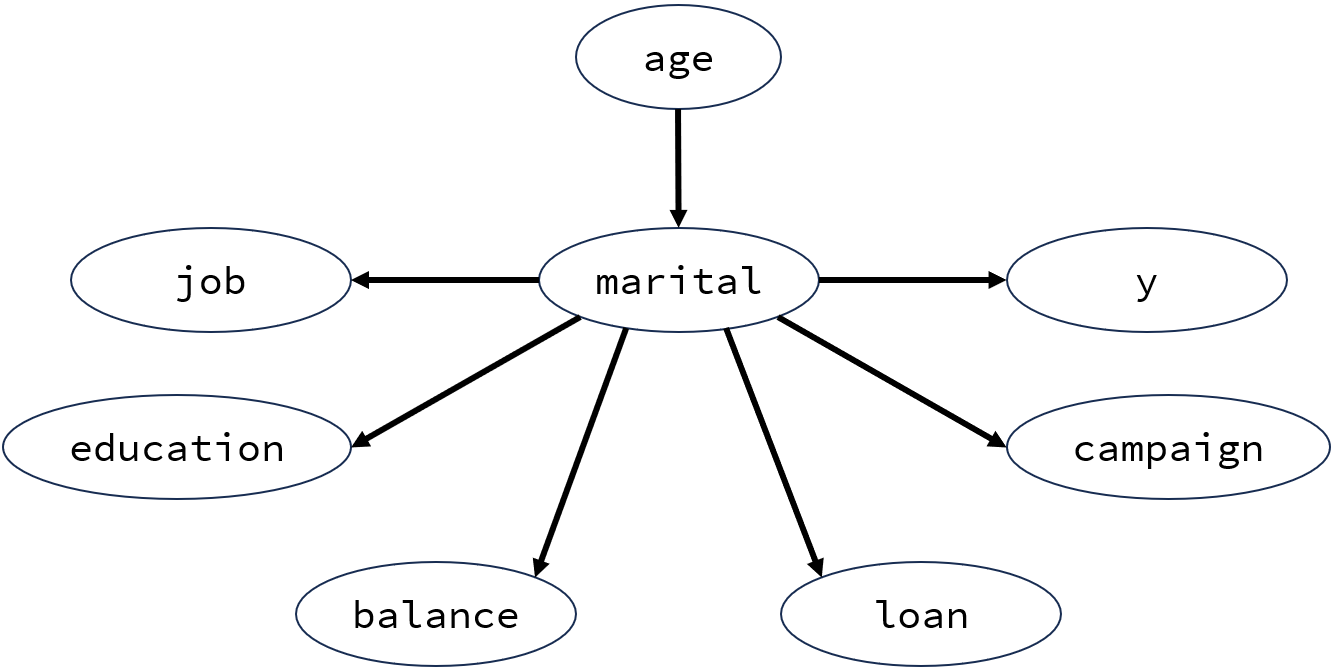}

\caption{Learned local structure from the Bank dataset. Marital status is directly affected by age, and it directly affects job, education, balance, personal loan, campaign (number of contacts performed during this campaign and for this client) and \texttt{y}.} 
\label{fig:application}
\end{figure}

\begin{table}[t]
    \centering
    \resizebox{\textwidth}{!}{
    \begin{tabular}{|c|c|c|c|c|}
\hline
\multicolumn{2}{|c|}{LABITER} & \multicolumn{2}{|c|}{Zuo et al.} \\
\hline
Definite descendant & Definite non-descendant & Definite descendant & Definite non-descendant \\
\hline
\textbf{job} & age & education & age \\
education & default & balance & \textbf{job} \\
balance & day & housing & default \\
housing & duration & loan & day \\
loan & previous & contact & \textbf{month} \\
contact & poutcome & y & duration \\
\textbf{month} & & & \textbf{campaign} \\
\textbf{campaign} & & & \textbf{pdays} \\
\textbf{pdays} & & & previous \\
y & & & poutcome \\
\hline
\end{tabular}
}
    \caption{Identified causal relations for marital status by running LABITER (left two columns) and Algorithm 2 in~\citet{zuo2022counterfactual} with learned MPDAG (right two columns). Differences are marked with bold.}
    \label{tab:application}
\end{table}

Among the variables in the dataset, age is not influenced by other variables, which we consider as background knowledge. Under this premise, we use Algorithm~\ref{alg:mb-by-mb in MPDAG} to learn the local structure around \texttt{marital}, with results shown in Figure~\ref{fig:application}. After that, we apply two algorithms, LABITER (Algorithm~\ref{alg:find descendants}) and Algorithm 2 in~\citet{zuo2022counterfactual}, to find definite descendants and definite non-descendants of marital status, with results shown in Table~\ref{tab:application}. Note that both algorithms do not identify any variable as a possible descendant of marital status.

We can see that, LABITER identifies job, month, campaign and pdays as definite descendants of marital status, while Algorithm 2 in~\citet{zuo2022counterfactual} identifies them as definite non-descendants. Specifically, it is plausible to argue that marital status can significantly influence one's occupation or job type. Therefore, our algorithm's recognition of job as a descendant of marital status is more in line with real-world causal dynamics.

\section{Discussion}
\label{sec:discussion}
In this paper, we propose a local learning approach for finding definite descendants, possible descendants and definite non-descendants of a certain variable in an MPDAG. We first propose an algorithm learning the local structure around any variable in the MPDAG. The algorithm find the adjacent nodes of the variable, distinguish them by parents, children, and siblings in the MPDAG, and provide similar results for its sibling nodes. Based on the local structure, we propose a criterion that identifies the definite descendants, definite non-descendants and possible descendants of the target variable. Different from similar results in~\citep{zuo2022counterfactual} which needs to learn the whole MPDAG and check paths on it, our criterion only need the local structure and some conditional independent tests. Moreover, our criterion can further distinguish definite cause relationship into explicit cause and implicit cause relationship, defined as whether the causal path between two nodes remains the same for every DAG represented by the MPDAG.

Our work has made some progress in the theoretical understanding of MPDAG. Combining with previous studies~\citep{fang2020ida,zuo2022counterfactual}, it shows that even if MPDAG is not chain graph in general, most results for CPDAG can also be extended to MPDAG. MPDAG has a wider use than CPDAG in practice, especially for scenarios where most part of causal graph is known as prior from expert knowledge. Our results for judging causal relations can also be applied to fairness machine learning, which alleviates the issue of estimation errors for learning causal graphs by learning a local structure around the target variable.

A possible direction for future work is to consider unmeasured confounding variables and selection bias, which can be characterized using partial ancestral graphs~\citep{richardson2003causal, zhang2008completeness}. Recently,~\citet{xie2024local} proposed an algorithm to learn the local structure in a partial ancestral graph. Based on that algorithm, there may be similar criteria for identification of causal relationships in a partial ancestral graph.

\bibliographystyle{apalike}
\bibliography{ref}

\appendix

\section{Graph terminologies and definitions}
\label{append:graph}
In this section, we review the graph terminologies and give the detailed definitions. Some definitions used in proofs but not in the main text are also listed.

A graph $G=(\mathbf{V},\mathbf{E})$ is a tuple with node set $\mathbf{V}$ and edge set $\mathbf{E}\subseteq \mathbf{V}\times \mathbf{V},$ where $\times$ denotes the Cartesian product. For two distinct nodes $X,Y\in \mathbf{V},$ we say that there is a directed edge from $X$ to $Y$ in $G$ if $(X,Y)\in \mathbf{E}$ and $(Y,X)\not\in \mathbf{E},$ denoted by $X\to Y\in G.$ If $(X,Y)\in \mathbf{E}$ and $(Y,X)\in \mathbf{E},$ we say that there is a undirected edge between $X$ and $Y$ in $G,$ denoted by $X-Y\in G.$ Two nodes $X$ and $Y$ are adjacent in $G$ if $X\to Y$ or $Y\to X$ or $X-Y$ in $G.$ 

A path $p=\langle V_1,\dots,V_k \rangle$ is a sequence of nodes where $V_i$ and $V_{i+1}$ are adjacent in $G$ for $i=1,2,\dots,k-1.$ We say that $p$ is causal (or directed) if $V_i\to V_{i+1}\in G$ for each $i.$ We say that $p$ is partially directed if there is no $i$ such that $V_i\leftarrow V_{i+1}\in G.$
We call $p$ a cycle if $V_1=V_k.$ If a cycle is causal, we call it a directed cycle. A graph $G$ is directed (or undirected) if all of its edges are directed (or undirected). A directed acyclic graph (DAG) is a directed graph which does not contain any directed cycle. 
For two paths $p=\langle V_1,\dots,V_k \rangle$ and $q=\langle W_1,\dots,W_l\rangle,$ let $p\oplus q=\langle V_1,\dots,V_k,W_1,\dots,W_l\rangle$ denote the connected path of $p$ and $q.$

For two distinct nodes $X,Y$ in a graph $G,$ $X$ is a parent (children, sibling, ancestor, descendant) of $Y$ if there is a directed edge from $X$ to $Y$ (directed edge from $Y$ to $X$, undirected edge between $X$ and $Y,$ directed path from $X$ to $Y$, directed path from $Y$ to $X$). They are adjacent if there is a directed or undirected edges between $X$ and $Y$. We use the convention that each node is an ancestor and descendant of itself. The set of parents, children, siblings, ancestors, descendants and adjacent nodes of $X$ are denoted as $pa(X,G),ch(X,G),sib(X,G), \\an(X,G),de(X,G),adj(X,G).$ A collider is a triple $(X,Y,Z)$ where $X\to Y\leftarrow Z,$ and it is also called a v-structure if $X$ and $Z$ are not adjacent. In a path $p,$ $V_i$ is a collider if $V_{i-1}\to V_i\leftarrow V_{i+1},$ otherwise $V_i$ is called a non-collider. A path $p$ is open given node set $\mathbf{S}$ if none of its non-collider is in $\mathbf{S}$ and all its colliders have a descendant in $\mathbf{S}.$ Otherwise, $p$ is blocked by $\mathbf{S}.$ Two nodes $X,Y$ is d-separated by $\mathbf{S}$ if all paths between $X$ and $Y$ are blocked by $\mathbf{S}.$ Otherwise, they are d-connected given $\mathbf{S}.$ Let $X\perp Y |\mathbf{S}$ denote that $X,Y$ are d-separated by $\mathbf{S}.$

Different DAGs may encode same d-separations. For example, $X\to Y$ and $X\leftarrow Y$ both imply that $X,Y$ are d-connected given the empty set. Two DAGs that encode the same d-separations are called to be Markov equivalent. Let $\mathcal{G}$ be a DAG. Let $[\mathcal{G}]$ denote the set of DAGs that are Markov equivalent with $\mathcal{G},$ which is also called the Markov equivalence class (MEC) of $\mathcal{G}.$ The MEC $[\mathcal{G}]$ can be represented by a complete partially directed acyclic graph (CPDAG) $\mathcal{C},$ which shares the same skeleton with $\mathcal{G},$ and any edge in $\mathcal{C}$ is directed if and only if it has the same direction in all DAGs in $[\mathcal{G}].$ 

Under a set of background knowledge $\mathcal{B}$ consisting of direct causal information with form $V_i\to V_j,$ the MEC is restricted to a smaller set which contains Markov equivalent DAGs that are consistent with background knowledge $\mathcal{B},$ that is, all background knowledge $V_i\to V_j$ are correctly oriented in all DAGs in this set. Just like CPDAG, this restricted set can be represented by a partially directed graph $\mathcal{G}^*,$ which is called the maximally partially directed acyclic graph (MPDAG) of $\mathcal{G}$ with background knowledge $\mathcal{B}.$ 

\section{Proofs}
\label{append:proof}
\subsection{Existing results and lemmas}

\begin{lemma}
\label{lemma:mpdag def}
(Theorem 4 in~\citet{meek1995causal}) Let $\mathcal{G}^*$ be an MPDAG and $[\mathcal{G}^*]$ be the restricted Markov equivalence class represented by $\mathcal{G}^*.$ Let $X,Y$ be two distinct nodes in $\mathcal{G}^*,$ then $X\to Y\in \mathcal{G}^*$ if and only if $X\to Y\in \mathcal{G}$ for any $\mathcal{G}\in [\mathcal{G}^*],$ and $X-Y\in \mathcal{G}^*$ if and only if there exists two DAGs $\mathcal{G}_1,\mathcal{G}_2\in [\mathcal{G}^*]$ such that $X\to Y\in\mathcal{G}_1$ and $X\leftarrow Y\in \mathcal{G}_2.$
\end{lemma}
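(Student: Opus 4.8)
The plan is to reduce the two biconditionals to two core facts about Meek's orientation rules and then establish those. First I would record that every $\mathcal{G}\in[\mathcal{G}^*]$ shares the skeleton of $\mathcal{G}^*$, so each adjacent pair $X,Y$ is in exactly one of the three states $X\to Y$, $X\leftarrow Y$, $X-Y$ in $\mathcal{G}^*$, while every $\mathcal{G}\in[\mathcal{G}^*]$ orients the pair in one of the two directions; note also that $[\mathcal{G}^*]\neq\emptyset$, since the true DAG lies in it. The whole statement then follows from two claims: (A) \emph{soundness}, that every directed edge of $\mathcal{G}^*$ has the same orientation in every $\mathcal{G}\in[\mathcal{G}^*]$; and (B) \emph{completeness}, that for every undirected edge $X-Y$ of $\mathcal{G}^*$ there exist $\mathcal{G}_1,\mathcal{G}_2\in[\mathcal{G}^*]$ with $X\to Y\in\mathcal{G}_1$ and $X\leftarrow Y\in\mathcal{G}_2$. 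Indeed, (A) is the forward implication of the first biconditional; for its converse, if $X\to Y$ held in every $\mathcal{G}\in[\mathcal{G}^*]$ then the $\mathcal{G}^*$-edge can be neither undirected (else (B) supplies a reverse-oriented DAG) nor reversed (else (A) together with nonemptiness forces some DAG with $Y\to X$), hence it is $X\to Y$. Symmetrically, (B) is the forward implication of the second biconditional, whose converse is immediate from (A), since a directed $\mathcal{G}^*$-edge is oriented identically throughout $[\mathcal{G}^*]$ and so cannot be realized both ways.

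For (A) I would induct on the order in which the edges of $\mathcal{G}^*$ acquire their orientation during its construction: first orient the background edges $\mathcal{B}$, then close under Meek's rules R1--R4. The base cases are immediate, since background edges are invariant by the definition of $[\mathcal{G}^*]$, and edges already directed in the underlying CPDAG $\mathcal{C}$ are invariant across $[\mathcal{C}]\supseteq[\mathcal{G}^*]$ by the defining property of a CPDAG. For the inductive step I would check each rule in turn, assuming all previously oriented edges are invariant: reversing the edge that R1 orients would create a new v-structure, violating equality of v-structures within a Markov equivalence class; reversing the edge R2 orients would close a directed cycle; and reversing the edges R3 and R4 orient would likewise force either a new v-structure or a cycle. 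Hence each newly oriented edge is invariant across $[\mathcal{G}^*]$.

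Claim (B) is the main obstacle. Given an undirected edge $X-Y$ in the closed graph $\mathcal{G}^*$, I would construct a consistent DAG extension realizing $X\to Y$, the case $X\leftarrow Y$ being symmetric. The plan is to orient $X\to Y$ and then extend the remaining undirected edges by the Dor--Tarsi sink-extraction procedure: repeatedly delete a vertex $v$ that has no directed out-edge and all of whose undirected neighbors are adjacent to every neighbor of $v$, orienting each such undirected edge into $v$. The resulting directed graph $\mathcal{G}_1$ preserves the skeleton and the orientation of $\mathcal{B}$ and is acyclic by construction; it remains to verify that it introduces no new v-structure and that the extraction never stalls, which together place $\mathcal{G}_1$ in $[\mathcal{G}^*]$. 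This verification is exactly where closure under R1--R4 is indispensable, because the configurations that would either force a collider or obstruct the existence of an extractable sink (for instance an induced pattern forcing a partially directed cycle) would already have triggered one of the rules in $\mathcal{G}^*$, contradicting closure.

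The hardest technical point is thus the completeness step (B): showing that after orienting the chosen undirected edge the graph still admits a consistent extension, i.e. that the greedy extraction cannot stall and no forbidden collider is forced. I would argue this by contradiction, showing that any stalling vertex configuration or any forced new collider yields a local triple or cycle matching the premise of R1, R2, R3, or R4, so that closure of $\mathcal{G}^*$ under the rules rules it out; the delicate case bookkeeping here is the heart of Meek's original argument, whose case analysis I would follow. Running the symmetric construction with $Y\to X$ then produces $\mathcal{G}_2$, completing (B) and hence the theorem.
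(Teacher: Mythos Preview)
The paper does not prove this lemma: it is listed under ``Existing results and lemmas'' and simply cited as Theorem~4 of \citet{meek1995causal}, with no accompanying proof. Your proposal therefore cannot be compared to a proof in the paper; it is instead a reconstruction of Meek's original argument, and structurally it matches that argument closely---splitting into soundness of each orientation rule and completeness via a consistent-extension construction.

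Your plan for (A) is fine and standard. For (B), one point to tighten: after orienting $X\to Y$, you propose to run Dor--Tarsi sink extraction directly and argue that any obstruction would have triggered a Meek rule in $\mathcal{G}^*$. The cleaner route, and the one Meek actually takes, is to first close the graph with the added edge $X\to Y$ under R1--R4, obtaining a new MPDAG $\mathcal{G}^{**}$, and only then extend. The key intermediate claim is that this closure never forces $Y\to X$ (i.e., no contradiction arises); this is where the case analysis you allude to lives, and it is logically prior to the Dor--Tarsi step, not intertwined with it. With that ordering your sketch is correct, and you rightly identify the completeness step as the technical heart.
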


\begin{lemma}
\label{lemma: perkovic}
(Lemma B.1 in \citep{perkovic2017interpreting}) Let $p=\langle V_1,V_2,\dots,V_k\rangle$ be a b-possibly causal definite path in a MPDAG $M.$ If there is $i\in \{1,2,\dots,n-1\}$ such that $V_i\to V_{i+1}$ in $M,$ then $p(V_i,V_k)$ is causal in $M.$
\end{lemma}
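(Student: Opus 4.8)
The plan is to prove the statement by a short forward induction along the path, propagating the orientation of $V_i\to V_{i+1}$ to every later edge. The only structural inputs I need are the two defining properties of the path: (i) b-possibly causal forbids any edge $V_a\leftarrow V_b$ with $a<b$, and in particular any consecutive backward edge $V_m\leftarrow V_{m+1}$; and (ii) definite status means that every intermediate triple $\langle V_{m-1},V_m,V_{m+1}\rangle$ is either a definite collider $V_{m-1}\to V_m\leftarrow V_{m+1}$, or a definite non-collider, i.e.\ at least one of the two edges incident to $V_m$ has its tail at $V_m$ ($V_m\to V_{m-1}$ or $V_m\to V_{m+1}$), or both edges are undirected with $V_{m-1},V_{m+1}$ non-adjacent.

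The claim is that $V_j\to V_{j+1}$ in $M$ for every $j$ with $i\le j\le k-1$, which I would establish by induction on $j$. The base case $j=i$ is the hypothesis. For the inductive step, suppose $V_j\to V_{j+1}$ with $i\le j\le k-2$ and examine the triple $\langle V_j,V_{j+1},V_{j+2}\rangle$, which is of definite status. I first rule out that it is a definite collider: that would require $V_{j+2}\to V_{j+1}$, i.e.\ the backward edge $V_{j+1}\leftarrow V_{j+2}$, contradicting b-possible causality. Hence the triple is a definite non-collider. Since the edge between $V_j$ and $V_{j+1}$ is the directed edge $V_j\to V_{j+1}$, it is neither undirected nor does it have a tail at $V_{j+1}$; this excludes the ``both edges undirected'' branch of the definite-non-collider definition, so the only remaining realization is a tail at $V_{j+1}$ on the other edge, namely $V_{j+1}\to V_{j+2}$. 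The induction then yields $V_j\to V_{j+1}$ for all $i\le j\le k-1$, so $p(V_i,V_k)$ is causal.

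I expect the main subtlety, rather than a genuine obstacle, to lie in the case analysis at the inductive step, which must invoke both hypotheses at once: b-possible causality alone still permits the undirected configuration $V_j\to V_{j+1}-V_{j+2}$, while definite status alone still permits the collider $V_j\to V_{j+1}\leftarrow V_{j+2}$, and it is precisely their conjunction that forces $V_{j+1}\to V_{j+2}$. As a consistency check, note that the undirected configuration $V_j\to V_{j+1}-V_{j+2}$ with $V_j,V_{j+2}$ non-adjacent is exactly what Meek's rule R1 would orient in an MPDAG closed under Meek's rules; the definite-status hypothesis, however, lets me avoid any adjacency case split and keep the argument purely local to each consecutive triple.
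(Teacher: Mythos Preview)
The paper does not supply its own proof of this lemma; it is quoted as an existing result from Perkovi\'c et al.\ (2017) and used as a black box. Your proposed argument is correct and is in fact the same forward-induction proof that appears in the original source: at each intermediate triple, b-possible causality excludes the collider configuration $V_j\to V_{j+1}\leftarrow V_{j+2}$, and then the definite-status hypothesis together with the already-oriented edge $V_j\to V_{j+1}$ eliminates the remaining non-collider alternatives except $V_{j+1}\to V_{j+2}$.
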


\begin{lemma}
\label{lemma:partial cycle}
Let $X,Y$ be two distinct nodes in an MPDAG $\mathcal{G}^*.$ If $X \in an(Y,\mathcal{G}^*)$ and $X,Y$ are adjacent, then $X\to Y\in \mathcal{G}^*.$ 
\end{lemma}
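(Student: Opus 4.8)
The plan is to argue by contradiction. Suppose $X \in an(Y,\mathcal{G}^*)$ and $X,Y$ are adjacent in $\mathcal{G}^*$, but $X \to Y \notin \mathcal{G}^*$. Since $X$ and $Y$ are adjacent, the edge between them must then be either $X - Y$ or $Y \to X$ in $\mathcal{G}^*$. In either case, by Lemma~\ref{lemma:mpdag def}, there is a DAG $\mathcal{G} \in [\mathcal{G}^*]$ with $Y \to X \in \mathcal{G}$ (for the undirected case this is the "$\mathcal{G}_2$" in the statement of Lemma~\ref{lemma:mpdag def}; for the directed case $Y \to X$ already holds in every DAG of $[\mathcal{G}^*]$). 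I want to contradict $X \in an(Y,\mathcal{G}^*)$ by producing a directed cycle in this $\mathcal{G}$.

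The first key step is to convert $X \in an(Y,\mathcal{G}^*)$ into the existence of a directed path from $X$ to $Y$ inside the DAG $\mathcal{G}$ constructed above. This is the crux and the main obstacle: being an ancestor in an MPDAG means there is a directed path $X \to \cdots \to Y$ in $\mathcal{G}^*$, but I need a directed path from $X$ to $Y$ in the \emph{specific} member $\mathcal{G}$ that orients $Y \to X$. Every directed edge of $\mathcal{G}^*$ is preserved in $\mathcal{G}$ by Lemma~\ref{lemma:mpdag def}, so the directed path $X \to \cdots \to Y$ of $\mathcal{G}^*$ is still a directed path from $X$ to $Y$ in $\mathcal{G}$. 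Hence $\mathcal{G}$ contains both a directed path from $X$ to $Y$ and the edge $Y \to X$, which closes a directed cycle, contradicting that $\mathcal{G}$ is a DAG.

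Putting the steps in order: (1) assume for contradiction the edge between $X$ and $Y$ is not $X \to Y$ in $\mathcal{G}^*$; (2) invoke Lemma~\ref{lemma:mpdag def} to obtain $\mathcal{G} \in [\mathcal{G}^*]$ with $Y \to X \in \mathcal{G}$; (3) use $X \in an(Y,\mathcal{G}^*)$ to get a directed path $X \to \cdots \to Y$ in $\mathcal{G}^*$, and note every edge on it is directed, hence by Lemma~\ref{lemma:mpdag def} the same path is directed in $\mathcal{G}$; (4) concatenate this path with $Y \to X$ to form a directed cycle in $\mathcal{G}$, contradicting acyclicity; (5) conclude the edge must be $X \to Y \in \mathcal{G}^*$. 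The only subtlety to be careful about is the degenerate reading of "$an$": since the paper adopts the convention $X \in an(X,\mathcal{G}^*)$, I should note that the hypothesis that $X$ and $Y$ are \emph{distinct} rules out the trivial case, so the directed path from $X$ to $Y$ has length at least one and genuinely uses directed edges. I do not expect any step here to require real work beyond a clean application of Lemma~\ref{lemma:mpdag def}; the lemma is essentially the statement that MPDAGs have no "partially directed cycle" through an already-adjacent pair.
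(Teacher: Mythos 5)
Your proposal is correct and uses the same essential ingredients as the paper's own proof: Lemma~\ref{lemma:mpdag def} to transfer the directed path $X\to\cdots\to Y$ and the edge orientation between $\mathcal{G}^*$ and a member DAG, followed by an acyclicity contradiction. The paper phrases it directly (every $\mathcal{G}\in[\mathcal{G}^*]$ must orient the edge as $X\to Y$, hence so does $\mathcal{G}^*$) rather than by contradiction, but the argument is the same.
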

\begin{proof}
Let $\mathcal{G}$ be any DAG in $[\mathcal{G}^*]$ By Lemma~\ref{lemma:mpdag def}, $X\in an(Y,\mathcal{G})$ and $X,Y$ are adjacent in $\mathcal{G}.$ If $X\leftarrow Y\in \mathcal{G},$ it will form a directed cycle, which contradicts with the definition of DAG. Therefore, $X\to Y\in \mathcal{G},$ and by Lemma~\ref{lemma:mpdag def} we have $X\to Y\in \mathcal{G}^*.$ 
\end{proof}
\begin{corollary}
\label{cor:partial cycle}
Let $X,Y$ be two distinct nodes in an DAG $\mathcal{G}.$ If $X\in an(Y,\mathcal{G})$ and $X,Y$ are adjacent, then $X\to Y\in \mathcal{G}.$ \qedsymbol
\end{corollary}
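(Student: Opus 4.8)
The plan is to obtain Corollary~\ref{cor:partial cycle} immediately as the degenerate case of Lemma~\ref{lemma:partial cycle}, the one in which the MPDAG is already a DAG. First I would note that any DAG $\mathcal{G}$ is in particular an MPDAG: taking the background knowledge to be all of $\mathcal{G}$'s edges (equivalently, taking $\mathcal{G}^*=\mathcal{G}$), every edge is already directed, Meek's rules have nothing left to orient, and the restricted Markov equivalence class represented is the singleton $[\mathcal{G}^*]=\{\mathcal{G}\}$ --- this is immediate from the definitions recalled in \ref{append:graph}, since an edge of $\mathcal{G}^*$ is directed iff it has the same orientation in every DAG of $[\mathcal{G}^*]$, and here there is only one such DAG. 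Then the hypotheses of Corollary~\ref{cor:partial cycle}, namely $X\in an(Y,\mathcal{G})$ and $X,Y$ adjacent with $X\neq Y$, are exactly the hypotheses of Lemma~\ref{lemma:partial cycle} applied to $\mathcal{G}^*=\mathcal{G}$, whose conclusion is $X\to Y\in\mathcal{G}^*=\mathcal{G}$.

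Alternatively, I would give the one-line direct argument, which avoids invoking Lemma~\ref{lemma:partial cycle} at all. Since $\mathcal{G}$ is a DAG, every edge is directed, so the edge joining the adjacent nodes $X$ and $Y$ is either $X\to Y$ or $Y\to X$. Suppose toward a contradiction that $Y\to X\in\mathcal{G}$. Because $X\neq Y$ and $X\in an(Y,\mathcal{G})$, there is a genuinely directed path $p=\langle X=V_1\to V_2\to\cdots\to V_k=Y\rangle$ with $k\ge 2$. Appending the edge $Y\to X$ to $p$ yields the closed directed walk $\langle V_1\to\cdots\to V_k\to V_1\rangle$, which contains a directed cycle, contradicting the acyclicity of $\mathcal{G}$. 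Hence $X\to Y\in\mathcal{G}$.

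There is essentially no hard step here: this corollary is just the DAG-level shadow of the ``no partially directed cycle'' phenomenon that Lemma~\ref{lemma:partial cycle} captures for MPDAGs. The only minor points that need a moment of care are (i) that $X\neq Y$ forces the path witnessing $X\in an(Y,\mathcal{G})$ to have length at least one, so that appending $Y\to X$ really does create a cycle (the degenerate convention $X\in an(X,\mathcal{G})$ being vacuous in this case), and (ii), if one prefers the route through Lemma~\ref{lemma:partial cycle}, the trivial bookkeeping that a DAG legitimately counts as an MPDAG with a one-element restricted equivalence class. Neither is a genuine obstacle, so I would present the direct argument as the proof and remark in passing that it also follows from Lemma~\ref{lemma:partial cycle}.
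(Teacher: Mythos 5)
Your proposal is correct, and the direct argument you settle on is exactly the one the paper uses: the paper's proof of Lemma~\ref{lemma:partial cycle} contains precisely this step (``if $X\leftarrow Y\in\mathcal{G}$, it will form a directed cycle''), and the corollary is then recorded as immediate. Your extra care about $X\neq Y$ forcing the witnessing path to have positive length, and your remark that a DAG is a legitimate MPDAG with singleton equivalence class, are both sound but not needed beyond what the paper already does.
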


\begin{lemma}
\label{lemma:shortest+collider}
Let $\mathbf{X}, \mathbf{Y}$ and $\mathbf{S}$ be disjoint node sets in a DAG $\mathcal{G}$ such that $\mathbf{X}\perp \mathbf{Y} | \mathbf{S}$ in $\mathcal{G}.$ Let $p$ be the shortest d-connecting path between $\mathbf{X}$ and $\mathbf{Y}$ given $\mathbf{S}.$ Then there is no shielded collider on $p.$ Namely, there does not exist $A,B,C$ in $p$ such that $A\to B\leftarrow C$ in $\mathcal{G}$ and $A,C$ are adjacent in $\mathcal{G}.$
\end{lemma}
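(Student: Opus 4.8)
The plan is to prove Lemma~\ref{lemma:shortest+collider} by a minimality argument: if a shortest d-connecting path $p$ between $\mathbf{X}$ and $\mathbf{Y}$ given $\mathbf{S}$ had a shielded collider, I would build a strictly shorter d-connecting path, contradicting minimality. So suppose $p = \langle V_1, \dots, V_k\rangle$ with $V_1 \in \mathbf{X}$, $V_k \in \mathbf{Y}$, and suppose there is an index $i$ with $V_{i-1} \to V_i \leftarrow V_{i+1}$ a collider on $p$ and $V_{i-1}, V_{i+1}$ adjacent in $\mathcal{G}$. Since $p$ is d-connecting given $\mathbf{S}$, the collider $V_i$ must have a descendant in $\mathbf{S}$, and every non-collider on $p$ must be outside $\mathbf{S}$.

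The key step is to replace the sub-path $\langle V_{i-1}, V_i, V_{i+1}\rangle$ by the single edge between $V_{i-1}$ and $V_{i+1}$, obtaining $p' = \langle V_1, \dots, V_{i-1}, V_{i+1}, \dots, V_k\rangle$, which is shorter. I then need to check $p'$ is still d-connecting given $\mathbf{S}$. The only triples whose collider/non-collider status changes are those involving the new edge, i.e.\ the triple $\langle V_{i-2}, V_{i-1}, V_{i+1}\rangle$ at $V_{i-1}$ and the triple $\langle V_{i-1}, V_{i+1}, V_{i+2}\rangle$ at $V_{i+1}$ (and the degenerate cases where $V_{i-1} = V_1$ or $V_{i+1} = V_k$, which are even easier since endpoints impose no condition). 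I would argue that in the original path $V_{i-1}$ was a non-collider (because $V_{i-1} \to V_i$, so the edge at $V_{i-1}$ pointing toward $V_i$ is outgoing, hence $V_{i-1}$ is a non-collider on $p$), so $V_{i-1} \notin \mathbf{S}$; the case analysis on the direction of the edge between $V_{i-1}$ and $V_{i+1}$ shows that if $V_{i-1}$ becomes a collider on $p'$ it has $V_i$ as a child and $V_i$ has a descendant in $\mathbf{S}$, so the collider condition is met; if it stays a non-collider, it is still outside $\mathbf{S}$, so the non-collider condition is met. The symmetric argument handles $V_{i+1}$. Hence $p'$ is d-connecting and shorter, a contradiction.

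The main obstacle I anticipate is the bookkeeping in the case analysis at the two affected vertices: there are four orientations of the shielding edge ($V_{i-1} \to V_{i+1}$, $V_{i-1} \leftarrow V_{i+1}$, or in a general graph $V_{i-1} - V_{i+1}$, though here $\mathcal{G}$ is a DAG so only the two directed cases arise), and for each I must track whether $V_{i-1}$ and $V_{i+1}$ become colliders or non-colliders on $p'$ and verify the corresponding $\mathbf{S}$-condition, using that $V_i$ has a descendant in $\mathbf{S}$ and that $V_{i-1}, V_{i+1} \notin \mathbf{S}$. One subtlety to handle carefully is that $p'$ must genuinely be a path, i.e.\ it should have no repeated vertices; since $p$ was already a path and we only deleted $V_i$, the sequence $p'$ repeats a vertex only if $V_{i-1} = V_{i+1}$, which is impossible as they are distinct nodes of the path $p$ joined to the common neighbor $V_i$. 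A final minor point: the claim implicitly presupposes a shortest d-connecting path exists, which holds because $\mathbf{X} \not\perp \mathbf{Y} \mid \mathbf{S}$ is false is not assumed — wait, here the hypothesis is $\mathbf{X} \perp \mathbf{Y} \mid \mathbf{S}$, so no d-connecting path exists and the statement is vacuous; I would note this degenerate reading but presume the intended hypothesis is $\mathbf{X} \not\perp \mathbf{Y} \mid \mathbf{S}$, under which the argument above applies.
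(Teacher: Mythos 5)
Your proposal is correct and follows essentially the same argument as the paper: splice out the shielded collider $B$ by the edge $A$--$C$, then verify the shorter path is still open by noting that $A\in an(\mathbf{S},\mathcal{G})$ (via $A\to B$ and $B\in an(\mathbf{S},\mathcal{G})$) covers the case where $A$ becomes a collider, while $A\notin\mathbf{S}$ (since $A$ was a non-collider on $p$) covers the case where it stays a non-collider, and symmetrically for $C$. Your observation that the stated hypothesis $\mathbf{X}\perp\mathbf{Y}\mid\mathbf{S}$ renders the lemma vacuous is also correct --- the paper carries the same typo and its proof likewise silently assumes a d-connecting path exists.
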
	
\begin{proof}
Suppose such $A,B,C$ in $p$ exists, write $p=p(L,A)\oplus (A,B,C)\oplus p(C,R),$ where $L\in\mathbf{X}$ and $R\in\mathbf{Y}.$ We claim that $q=p(L,A)\oplus (A,C)\oplus p(C,R)$ is also d-connecting given $\mathbf{S},$ which leads to a contradiction with the fact that $p$ is the shortest. It is clear that other nodes except $A,C$ on $q$ have the same collider/non-collider status on $q$ and $p,$ so we just need to consider $A,C$ on $q.$ Since $B$ is a collider on $p$ and $p$ is d-connecting given $\mathbf{S},$ we have $B\in an(\mathbf{S},\mathcal{G})$ and hence $A\in an(\mathbf{S},\mathcal{G})$ by $A\to B$ in $\mathcal{G}.$ Therefore, if $A$ is a collider on $q,$ it cannot block $q.$ Since $A$ is a non-collider on $p$ and $p$ is d-connecting given $\mathbf{S},$ we have $A\not\in\mathbf{S},$ so if $A$ is a non-collider on $q,$ it also cannot block $q.$ The same reasoning can be applied to conclude that $C$ cannot block $q.$ Therefore, $q$ is d-connecting given $\mathbf{S},$ which leads to a contradiction.
\end{proof}

\subsection{Proof for Theorem~\ref{thm:mb-by-mb}}

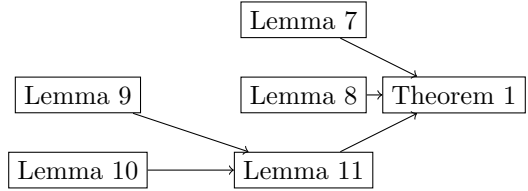
\begin{figure}[ht]
    \centering
    \begin{tikzpicture}[node distance=1.5cm]
        % Nodes
        \node at (3, 2) [draw] (lemma4) {Lemma~\ref{lemma:adj}};
        \node at (3, 1) [draw] (lemma5) {Lemma~\ref{lemma:sound}};
        \node at (0, 1) [draw] (lemma6) {Lemma~\ref{lemma:exist DEGS}};
        \node at (0, 0) [draw] (lemma7) {Lemma~\ref{lemma:complete induction}};
        \node at (3, 0) [draw] (lemma8) {Lemma~\ref{lemma:complete}};
        \node at (5, 1) [draw] (theorem1) {Theorem~\ref{thm:mb-by-mb}};
        
        % Arrows
        \draw[->] (lemma4) -- (theorem1);
        \draw[->] (lemma5) -- (theorem1);
        \draw[->] (lemma6) -- (lemma8);
        \draw[->] (lemma7) -- (lemma8);
        \draw[->] (lemma8) -- (theorem1);
    \end{tikzpicture}
    \caption{Proof structure of Theorem~\ref{thm:mb-by-mb}.}
    \label{fig:proof for thm1}
\end{figure}

Figure~\ref{fig:proof for thm1} shows how lemmas fit together to prove Theorem~\ref{thm:mb-by-mb}. Theorem~\ref{thm:mb-by-mb} directly follows from Lemma~\ref{lemma:adj},\ref{lemma:sound},\ref{lemma:complete}. Then we proof the lemmas shown in this structure. The first two lemmas show the soundness of Algorithm~\ref{alg:mb-by-mb in MPDAG}.

\begin{lemma}
\label{lemma:adj}
Suppose the joint distribution is markovian and faithful with the underlying DAG , and all independence tests are correct, then the MB-by-MB in MPDAG algorithm can correctly discover the edges which are connected to each node in DoneList without their orientation.
\end{lemma}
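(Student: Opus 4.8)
\textbf{Proof proposal for Lemma~\ref{lemma:adj}.}

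The plan is to reduce the claim to the known soundness properties of the MB-by-MB algorithm on a CPDAG (Theorems 1 and 2 in~\citep{wang2014discovering}), applied to each node popped from WaitList, and then argue that the extra edges present in $G$ from background knowledge $\mathcal{B}$ and the edges oriented by Meek's rules in Steps 9--14 are all correct edges of $\mathcal{G}^*$. So the argument has two ingredients: the skeleton-discovery part (the ``adjacency'' claim proper) and a bookkeeping part showing that the auxiliary structure carried around in $G$ never causes a spurious adjacency to be added.

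First I would fix some $Z \in \mathrm{DoneList}$ and recall that when $Z$ was processed the algorithm computed $\mathrm{MB}(Z)$ and a marginal graph $L_Z$ over $\mathrm{MB}^+(Z)$ that is faithful to the marginal distribution $P_{\mathrm{MB}^+(Z)}$. The key structural fact I would invoke is the standard one behind MB-by-MB: for the true DAG $\mathcal{G}$, $\mathrm{MB}(Z)$ contains $adj(Z,\mathcal{G})$, and for any $W \in adj(Z,\mathcal{G})$ the edge $Z - W$ (resp.\ its orientation if part of a v-structure through $Z$) appears in the CPDAG of $L_Z$; conversely a non-adjacent pair is separated by some subset of $\mathrm{MB}^+(Z)$, so no false edge incident to $Z$ survives. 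Since $\mathcal{G} \in [\mathcal{G}^*]$ and $\mathcal{G}^*$ shares the skeleton of its CPDAG, $adj(Z,\mathcal{G}) = adj(Z,\mathcal{G}^*)$, so Step 8 adds exactly the edges of $\mathcal{G}^*$ incident to $Z$ (as an undirected skeleton, plus some correctly oriented ones). I would then note that the edges pre-loaded from $\mathcal{B}$ in Step 3 are, by hypothesis, true edges of $\mathcal{G}$ hence of $\mathcal{G}^*$; and the orientations produced by Steps 10--13 are sound because each is an instance of a Meek rule whose premise consists of edges already certified to be in $\mathcal{G}^*$ together with a conditional independence in $\mathrm{IndSet}$ witnessing the required non-adjacency (this is exactly what licenses orienting, e.g., $b \to c$ in Step~10). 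Hence every edge ever placed in $G$ incident to a node of $\mathrm{DoneList}$ is an edge of $\mathcal{G}^*$, and by the converse direction no edge of $\mathcal{G}^*$ incident to such a node is missed.

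The step I expect to be the main obstacle is controlling the interaction between the dynamically growing $G$ and the order in which nodes enter $\mathrm{DoneList}$: one must check that whenever $Z$ is popped, the marginal graph $L_Z$ is computed (or reused via $L_Z = (L_{Z'})_{\mathrm{MB}^+(Z)}$ or $L_Z = G_{\mathrm{MB}^+(Z)}$) faithfully, i.e.\ that the shortcuts taken when $\mathrm{MB}^+(Z) \subseteq \mathrm{MB}^+(Z')$ or $\mathrm{MB}(Z) \subseteq \mathrm{DoneList}$ really do yield a graph faithful to $P_{\mathrm{MB}^+(Z)}$. This is where an induction on the pop order is needed: I would maintain as the inductive hypothesis that for every $Z' \in \mathrm{DoneList}$ the portion of $G$ over $\mathrm{MB}^+(Z')$ agrees with $\mathcal{G}^*$ on adjacencies (and on all orientations that are ``local'' in the sense of being forced by v-structures through $Z'$), so that the reuse of a previously learned $L_{Z'}$ or of $G$ itself is legitimate. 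The rest is routine: faithfulness plus correctness of the independence tests makes the CPDAG-of-$L_Z$ step exact, and the Meek-rule steps only ever consume certified inputs.
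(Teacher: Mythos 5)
Your proposal is correct and its core argument --- deciding adjacency of each DoneList node by appealing to Theorem 1 of \citet{wang2014discovering}, i.e., that $Z$ and $Y$ are d-separated by some subset of $\mathbf{V}$ if and only if they are d-separated by some subset of $\mathrm{MB}(Z)$, so that adjacency is decidable from the conditional independencies learned over $\mathrm{MB}^+(Z)$ --- is exactly the paper's proof. The additional bookkeeping about background-knowledge edges, Meek's rules, and orientation soundness is not needed for this adjacency-only statement; the paper handles orientation separately in Lemma~\ref{lemma:sound}.
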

\begin{proof}
According to Theorem 1 in \citep{wang2014discovering}, for each node $X$ in DoneList, with a correct $\mathrm{MB}(X)$ such that $X \perp (V\setminus \mathrm{MB}^+(X))|\mathrm{MB}(X),$ for any $Y\in V,$ $Y$ and $X$ are d-separated by a subset of $V$ if and only if $Y$ and $X$ are d-separated by a subset of $\mathrm{MB}(X).$ So with conditional independence set learned by observed data of $\mathrm{MB}^+(X),$ whether $X$ and $Y$ are connected could be correctly discovered.
\end{proof}

\begin{lemma}
\label{lemma:sound}
Suppose the joint distribution is markovian and faithful with the underlying DAG , and all independence tests are correct, then each directed edge in the output graph of MB-by-MB in MPDAG algorithm is correct, i.e. they are also directed in the true MPDAG with the same direction.
\end{lemma}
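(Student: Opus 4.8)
The plan is to establish the stronger invariant that \emph{at every stage of the execution of Algorithm~\ref{alg:mb-by-mb in MPDAG}, every directed edge currently present in $G$ is a directed edge of the true MPDAG $\mathcal{G}^*$ with the same orientation}, and then to read off the lemma from the state of $G$ at termination. I would argue by induction on the sequence of edge-orientation events performed by the algorithm. The base case is line 3, where $G=(\mathbf{V},\mathcal{B})$: since the background knowledge $\mathcal{B}$ is assumed correct, each edge $A\to B\in\mathcal{B}$ lies in the underlying DAG $\mathcal{G}$, and because $\mathcal{G}^*$ is obtained by orienting the CPDAG $\mathcal{C}$ consistently with $\mathcal{B}$ and then closing under Meek's rules, each such edge is directed the same way in $\mathcal{G}^*$.

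For the inductive step I would split according to where directed edges can be created. At line 8, the edges incident to $Z$ and the v-structures through $Z$ read off from the marginal (CP)DAG $L_Z$ are correct by Theorems 1 and 2 of~\citet{wang2014discovering}; in particular, any v-structure $a\to Z\leftarrow b$ in $L_Z$ is a genuine v-structure of $\mathcal{G}$, and genuine v-structures are oriented in $\mathcal{C}$ and hence in $\mathcal{G}^*$. When $L_Z$ is instead taken as an induced subgraph of a previously learned $L_{Z'}$ or of $G$ (the shortcuts described after line 7), the same conclusion holds, since the d-separations relevant to v-structures through $Z$ are preserved under restriction to $\mathrm{MB}^+(Z)$. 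At lines 10--14 a previously undirected edge of $G$ is oriented; that edge was added at line 8 and is therefore, by Lemma~\ref{lemma:adj}, a true skeleton edge of $\mathcal{G}^*$, so it only remains to check that the chosen orientation is the one forced in $\mathcal{G}^*$.

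To handle lines 10--14 I would use two standing facts. First, every triple recorded in IndSet is a true conditional independence, hence a true d-separation in $\mathcal{G}$ and in $\mathcal{G}^*$, because the CI tests are correct and only verified independences are stored; in particular the two non-adjacent endpoints in each rule really are non-adjacent in $\mathcal{G}^*$. Second, by the induction hypothesis every directed edge used as a premise of a rule is already oriented that way in $\mathcal{G}^*$. With these in hand the four rules fall to the standard contradiction arguments. For line 11 (a variant of Meek's rule 2), the alternative orientation together with the premise $a\to b\to c$ yields a directed cycle among edges that occur in every $\mathcal{G}\in[\mathcal{G}^*]$. For lines 10, 12 and 13 (variants of Meek's rules 1, 3 and 4), the alternative orientation, together with acyclicity of every $\mathcal{G}\in[\mathcal{G}^*]$, forces the premise edges to propagate into an orientation of the form $c\to a\leftarrow d$ (or $b\to a\leftarrow d$) at a node $a$ that is recorded inside the separating set of the two non-adjacent endpoints; since $a$ is then a collider lying in that separating set, the length-two path is d-connecting, contradicting the corresponding d-separation in $\mathcal{G}^*$. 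Closing the induction gives the lemma.

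The main obstacle I expect is the case analysis for lines 12 and 13: one must carefully verify that each intermediate undirected edge used in the propagation (the $a-c$ and $a-d$ edges) cannot be oriented away from $a$ in $\mathcal{G}^*$ without producing a directed cycle in some $\mathcal{G}\in[\mathcal{G}^*]$, and must confirm that the collider produced really sits at the node stored in the IndSet witness. A secondary point needing care is the compatibility of the marginal-graph shortcuts with the v-structure extraction at line 8, i.e.\ that passing to an induced subgraph over $\mathrm{MB}^+(Z)$ neither introduces nor destroys a v-structure through $Z$; this is where one leans again on the faithfulness of the marginal distribution to $L_Z$.
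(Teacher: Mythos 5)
Your proposal is correct, and for the background-knowledge and v-structure cases it coincides with the paper's proof (initialization plus Theorems 1--2 of \citet{wang2014discovering}). Where you genuinely diverge is in the soundness of the four orientation rules. The paper reasons directly inside the true MPDAG $M$: it uses the induction hypothesis to place the premise edges in $M$, uses the IndSet witness to rule out the offending collider, and then invokes the fact that $M$ is closed under Meek's rules, which for R3 and R4 forces an explicit enumeration of the possible configurations of the triple $(c,a,d)$ (cases 3.1--3.3 and 4.1--4.5 in the paper). You instead fix an arbitrary $\mathcal{G}\in[\mathcal{G}^*]$, assume the wrong orientation there, and use acyclicity to propagate the (already-verified) premise edges into a collider $c\to a\leftarrow d$ (resp.\ $b\to a\leftarrow d$, resp.\ $a\to b\leftarrow c$) sitting at a node stored in the recorded separating set, contradicting the corresponding d-separation; you then transfer back to $\mathcal{G}^*$ via the characterization that an edge oriented identically in every DAG of $[\mathcal{G}^*]$ is directed in $\mathcal{G}^*$ (Lemma~\ref{lemma:mpdag def}, i.e.\ Theorem 4 of \citet{meek1995causal}) --- make sure you cite that bridge explicitly, since without it ``wrong orientation impossible in every DAG'' does not by itself exclude the edge remaining undirected in $\mathcal{G}^*$. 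The payoff of your route is uniformity: R1, R3 and R4 all reduce to the same collider-in-separating-set contradiction and the sub-case analysis on $(c,a,d)$ disappears, because acyclicity within a single DAG does the propagation that the paper obtains from closure of $M$ under Meek's rules. The paper's route, in exchange, never needs to leave the MPDAG and makes the role of maximality explicit. The two secondary concerns you flag (validity of the marginal-graph shortcuts and preservation of v-structures under restriction to $\mathrm{MB}^+(Z)$) are exactly what the paper delegates to Lemma~\ref{lemma:adj} and to Theorems 1--2 of \citet{wang2014discovering}, so no gap remains there.
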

\begin{proof}
Denote $M$ to be the true MPDAG and $G$ the output PDAG of the algorithm. We show that each directed edge in output $G$ is correct, in the sense that it is also a directed edge in $M$ and has the same orientation in $G$ and $M.$ There are three sources of directed edges in $G$: background knowledge, learned by a v-structure or oriented by one of four Meek rules. Since $G$ is initialized by background knowledge, each directed edge from background knowledge in $G$ is correct. According to Theorem 2 in \citep{wang2014discovering}, each directed edge learned by a v-structure is correct.

Consider a directed edge in $G$ oriented by one of four Meek rules. By induction, we can assume the directed edges needed by a Meek rule is correct since they finally comes from a v-structure or a background knowledge. (1) For rule R1, if $a\to b-c \in G,$ we know that $a\to b\in M$, and $b,c$ are adjacent in $M.$ Since $(a,c,S_{ac})\in\,$IndSet and $b\in S_{ac},$ we know that $a,c$ are not adjacent in $M,$ and $a\to b\leftarrow c\not\in M.$ So $b\to c\in M$ by R1. (2) For rule R2, if $a\to b\to c-a\in G,$ we know that $a\to b\to c\in M$, and $a,c$ are adjacent in $M.$ So $a\to c\in M$ by R2. (3) For rule R3, if $a-b,\,a-c\to b,\,a-d\to b\in G,$ we know that $c\to b,\,d\to b\in M,$ and $(a,b),(a,c),(a,d)$ are adjacent pairs in $M.$ Since $(c,d,S_{cd})\in\,$IndSet and $a\in S_{cd},$ we know that $c,d$ are not adjacent in $M,$ and $c\to a\leftarrow d\not\in M.$ Enumerate all structure about $(c,a,d)$ in $M:$ (3.1) $a\to c,$ then $a\to b$ by R2; (3.2) $a\to d,$ then $a\to b$ by R2; (3.3) $c-a-d,$ then $a\to b$ by R3. So $a\to b\in M.$ (4) For rule R4, if $a-b,\,a-c\to b,\,a-d\to c\in G,$ we know that $d\to c\to b\in M,$ and $(a,b),(a,c),(a,d)$ are adjacent pairs in $M.$ Since $(b,d,S_{bd})\in\,$IndSet and $a\in S_{bd},$ we know that $b,d$ are not adjacent in $M$ and $b\to a\leftarrow d\not\in M.$ Enumerate all structure about $(c,a,d)$ in $M:$ (4.1) $c\to a,$ then $d\to a$ by R2 and $a\to b$ by R1; (4.2) $a\to c,$ then $a\to b$ by R2; (4.3) $c-a-d,$ then $a\to b$ by R4; (4.4) $c-a\to d$, which cannot exist since $a\to c$ by R2; (4.5) $c-a\leftarrow d,$ then $a\to b$ by R1. So $a\to b\in M.$ Until now we verified that all directed edges oriented by one of four Meek rules in MB-by-MB algorithm is a correct directed edge in $M.$ By induction we know all directed edges in $G$ is correct in $M.$
\end{proof}

Then we define the directed edge generation sequence of an MPDAG, which represents the order and way for each directed edge in an MPDAG to be oriented. As in the following definition, each directed edge is marked as one of the following labels: "V", "B", "R1", "R2", "R3", "R4", which respectively implies that this edge is oriented in a v-structure, by background knowledge, or by Meek's rules R1-R4. 

The motivation of this definition is that we need to prove that all directed edges around the target node are discovered by Algorithm~\ref{alg:mb-by-mb in MPDAG}. Actually, this fact can be explained by simple words: since all directed edges are oriented by v-structure, background knowledge, or one of four Meek's rules, and all nodes connect with the target node by an undirected path are explored by the Algorithm, continuing the Algorithm for exploring other nodes, which are blocked from the target node by directed edges, cannot help to orient the direction for undirected edges. If the reader believes that these words proves that all directed edges around the target node are discovered, there is no need to read the following proof carefully. 

The following lemmas lies in the intuition that if an directed edge in the MPDAG is not discovered in Algorithm~\ref{alg:mb-by-mb in MPDAG}, then another directed edge, which participated in the orientation of this edge, is neither discovered. By iteration, at last an directed edge which is in a v-structure in which all nodes are explored, or is itself a background knowledge, is not discovered, which conflicts with previous lemmas. Therefore, we need to define the "orientation procedure" of directed edges in an MPDAG.

\begin{definition}
(Directed edge generation sequence(DEGS) of an MPDAG) Let $M=(V,E_M)$ be an MPDAG, and let $E^d_M$ be its directed edges and $k=|E^d_M|$. A directed edge generation sequence of an MPDAG $M$ is a sequence of tuples $S=\langle(d_1,p_1,r_1),\dots,(d_k,p_k,r_k) \rangle.$ For each $i=1,2,\dots,k,$ $d_i\in E^d_M$ is a directed edge in $M$ and $E^d_M=\{d_1,d_2,\dots,d_k\}.$ $p_i$ is a set of directed edges, and $r_i$ is a label which takes one of these values: $\{"V","B","R1","R2","R3","R4"\}.$
\end{definition}

\begin{algorithm}[t]
\LinesNumbered
\caption{Check the consistency of a DEGS corresponding to an MPDAG.}
\label{alg:check DEGS}
\SetKwInOut{Input}{Input}
\SetKwInOut{Output}{Output}
\small

\Input{An MPDAG $M=(V,E_M)$, a DEGS $S=\langle (d_1,p_1,r_1),\dots,(d_k,p_k,r_k) \rangle$, a set of background knowledge $B\subset E_M$}
\Output{A bool variable indicates whether $S$ is consistent with $M,B$.}

$H$ = the skeleton of $M;$ ($H=(V,E_H)$ is a PDAG) \\
\ForEach{$i$ in $\{1,2,\dots,k\}$}{
\If{$p_i \not\subset E_H$ or $d_i\in E_H$}{
\Return FALSE;
}
\Switch{$r_i$}{
\uCase{"V"}{
\uIf{$p_i\neq \emptyset$ or $d_i$ is not contained in a v-structure of $M$}{\Return FALSE;}
}
\uCase{"B"}{
\uIf{$p_i\neq \emptyset$ or $d_i\not\in B$}{\Return FALSE;}
}
\uCase{"R1"}{
Find $a,b,c\in V$ such that $a\to b-c\in H,$ $d_i=b\to c, p_i=\{a\to b\}$ and $a,c$ are not adjacent in $H$. \\
\uIf{such $a,b,c$ does not exist}{\Return FALSE;}
}
\uCase{"R2"}{
Find $a,b,c\in V$ such that $a\to b\to c\in H,a-c\in H,d_i=a\to c,p_i=\{a\to b,b\to c\}.$ \\
\uIf{such $a,b,c$ does not exist}{\Return FALSE;}
}
\uCase{"R3"}{
Find $a,b,c,d\in V$ such that $b-a-c\in H,b\to d\leftarrow c\in H,a-d\in H,d_i=a\to d,p_i=\{b\to d,c\to d\},$ and $(b,a,c)$ is not a v-structure in $M.$ \\
\uIf{such $a,b,c,d$ does not exist}{\Return FALSE;}
}
\uCase{"R4"}{
Find $a,b,c,d\in V$ such that $b-a-c\in H,c\to d\to b\in H,a-d\in H,d_i=a\to b,p_i=\{c\to d\},$ and $(b,a,c)$ is not a v-structure in $M.$ \\
\uIf{such $a,b,c,d$ does not exist}{\Return FALSE;}
}
}
Add $d_i$ to $E_H.$
}
\If{$H\neq M$}{\Return FALSE;}
\Return TRUE;
\end{algorithm}

The consistency of DEGS $S$ with respect to an MPDAG $M$ and a set of background knowledge $B$ is defined as whether $M$ can be recovered from its skeleton with its directed edges sequentially oriented by v-structure, background knowledge, or one of the four Meek's rules. The sequential orientation procedure is described by $S.$ Definition~\ref{def:consistency of DEGS} gives a strict description of this nature.

\begin{definition}
\label{def:consistency of DEGS}
(Consistency of DEGS) Let $M=(V,E_M)$ be an MPDAG with background knowledge $B\subset E_M.$ A DEGS $S=\langle(d_1,p_1,r_1),\dots,(d_k,p_k,r_k) \rangle$ is consistent with $M,B$, if Algorithm~\ref{alg:check DEGS} returns TRUE for input $M,S,B.$
\end{definition}

\begin{lemma}
\label{lemma:exist DEGS}
For any MPDAG $M=(V,E_M)$ with background knowledge $B,$ there exists at least one DEGS $S$ that is consistent with $M,B.$
\end{lemma}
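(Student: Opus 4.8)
\textbf{Proof proposal for Lemma~\ref{lemma:exist DEGS}.} The plan is to exhibit a concrete DEGS by ``replaying'' the construction of the MPDAG $M$ from its skeleton in a convenient order, and then to check mechanically that Algorithm~\ref{alg:check DEGS} accepts it. First I would invoke the defining property of an MPDAG together with the soundness and completeness of Meek's rules (Theorem 4 in~\citet{meek1995causal}, cf.\ Lemma~\ref{lemma:mpdag def}): $M$ is obtained from its skeleton $H_0$ by orienting all v-structures of $M$, orienting all edges in $B$, and then applying Meek's rules R1--R4 until no further edge can be oriented, and this process terminates exactly at $M$. Moreover, the closure of a fixed initial set of orientations under R1--R4 does not depend on the order of rule applications (any edge that is forced stays forced), so I may carry out the process in the following staged order: (i) orient every v-structure edge of $M$; (ii) orient every edge of $B$ not yet oriented; (iii) repeatedly pick an edge whose orientation is forced by one of R1--R4 given the edges oriented so far, and orient it. Since every directed edge of $M$ is oriented exactly once during (i)--(iii), this yields a linear order $d_1,\dots,d_k$ on $E_M^d$, a label $r_i\in\{\mathrm{V},\mathrm{B},\mathrm{R1},\mathrm{R2},\mathrm{R3},\mathrm{R4}\}$ recording why $d_i$ was oriented, and a premise set $p_i$ of already-oriented edges, where $p_i=\emptyset$ when $r_i\in\{\mathrm{V},\mathrm{B}\}$ and $p_i$ is the set of directed edges matching the premise pattern of the corresponding Meek rule otherwise. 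Set $S=\langle(d_1,p_1,r_1),\dots,(d_k,p_k,r_k)\rangle$.

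Second, I would verify that Algorithm~\ref{alg:check DEGS} returns TRUE on input $(M,S,B)$. The algorithm maintains a graph $H$ initialized to the skeleton of $M$ and, on processing $(d_i,p_i,r_i)$, checks $p_i\subseteq E_H$ and $d_i\notin E_H$, then a case-specific pattern check for $r_i$, then adds $d_i$ to $E_H$. The invariant is that after the first $i$ tuples, $E_H$ is the skeleton together with exactly $d_1,\dots,d_i$. Given this invariant: $p_i\subseteq E_H$ holds because every premise edge of $d_i$ was oriented strictly earlier in stages (i)--(iii); $d_i\notin E_H$ holds because each edge of $E_M^d$ is listed once; and the pattern check for $r_i$ holds because that pattern is precisely what triggered the orientation of $d_i$ in the construction. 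The one delicate point is the ``$(b,a,c)$ is not a v-structure in $M$'' side-condition of R3 and R4: because all v-structures were oriented in stage (i), if $(b,a,c)$ were a v-structure of $M$ then $b\to a\leftarrow c$ would already be in $H$, so the premise ``$b-a-c\in H$'' would fail and R3/R4 would never be applied to that configuration in stage (iii); hence the side-condition is automatically satisfied whenever it is invoked. Finally, after all $k$ tuples, $E_H$ is the skeleton plus all of $E_M^d$, i.e.\ $H=M$, so the terminal test $H\neq M$ fails and the algorithm returns TRUE. Thus $S$ is consistent with $M,B$.

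The main obstacle is the careful chaining of the MPDAG definition with completeness of Meek's rules in the first step --- namely, that the staged orientation process recovers all of $M$ (no directed edge of $M$ is left undirected) and nothing beyond it; once this is in place, everything else is a routine walk through the checks in Algorithm~\ref{alg:check DEGS}. A secondary point deserving explicit attention is the ordering choice, i.e.\ orienting all v-structures before any Meek-rule application, which is what makes the R3/R4 side-conditions transparent; without it one would have to argue separately that those side-conditions never fail, complicating the verification.
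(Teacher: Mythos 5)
Your proposal is correct and follows essentially the same route as the paper: both construct the DEGS by first listing all v-structure edges with label ``V'', then recording the orientations from $B$ and the Meek-rule applications in the order they fire, justified by Theorem 4 of Meek (1995), and then walking through Algorithm~\ref{alg:check DEGS} to confirm consistency. Your verification step is in fact more explicit than the paper's (which simply asserts the algorithm returns TRUE), particularly your observation that orienting all v-structures first makes the ``$(b,a,c)$ is not a v-structure'' side-conditions of R3/R4 automatic.
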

\begin{proof}
From Proposition 1 in \citep{fang2022representation}, there exists an CPDAG $\mathcal{C}$ such that $[M]$ is the restricted set of $[\mathcal{C}]$ under background knowledge $B.$ As defined in~\citep{meek1995causal}, the pattern $\Pi$ of $\mathcal{C}$ is the partially directed graph which shares the same skeleton with $\mathcal{C}$ and has an directed edge if and only if that edge is in a v-structure. By Theorem 4 in \citep{meek1995causal}, $M$ can be obtained by applying Meek's rules R1-R4 and orienting edges according to $B$ in $\Pi.$ 

Let $S$ be a sequence of two parts. The first part contains all $(d,p,r)$ such that $d$ is a directed edge in $\Pi,$ $p=\emptyset$ and $r=$"V". The second part is a sequence of $(d,p,r)$ along with the procedure of applying Meek's rules R1-R4 and orienting edges according to $B$ in $\Pi.$ In each step orienting an edge $d_i$, if $d_i$ is oriented according to $B$, let $(d,p,r)=(d_i,\emptyset,"B").$ If $d_i$ is oriented by R1, then there exists $a,b,c\in V$ such that $a\to b-c$ before orienting $d_i,$ $d_i=b\to c,$ and $a,c$ are not adjacent in $M,$ then let $(d,p,r)=(d_i,\{a\to b\},"R1").$ If $d_i$ is oriented by R2, then there exists $a,b,c\in V$ such that $a\to b\to c$ and $a-c$ before orienting $d_i=a\to c$, then let $(d,p,r)=(d_i,\{a\to b,b\to c\},"R2").$ If $d_i$ is oriented by R3, then there exists $a,b,c,d\in V$ such that $b-a-c,b\to d\leftarrow c,a-d$ before orienting $d_i=a\to d$, then let $(d,p,r)=(d_i,\{b\to d,c\to d\},"R3").$ If $d_i$ is oriented by R4, then there exists $a,b,c,d\in V$ such that $b-a-c,c\to d\to b,a-d$ before orienting $d_i=a\to b,$ then let $(d,p,r)=(d_i,\{c\to d\},"R4").$ It can be verified that Algorithm~\ref{alg:check DEGS} returns TRUE by inputting $M,S,B.$
\end{proof}

\begin{lemma}
\label{lemma:complete induction}
Let $M=(V,E_M)$ be an MPDAG with background knowledge $B\subset E_M.$ Let $\mathbf{S}=\{S_1,\dots,S_q\}$ be the set of all DEGS that is consistent with $M,B.$ Let $G=(V,E_G)$ be the output PDAG of Algorithm~\ref{alg:mb-by-mb in MPDAG} with target $T$. Let $a,b\in V$ be two distinct nodes which are connected to $T$ with an undirected path in $G$. If $a\to b\in M$ but $a-b\in G,$ then for any $i=1,2,\dots,q,$ let $(d,p,r)\in S_i$ such that $d=a\to b,$ there exists $d'=a'\to b'\in p$ such that $(a',b')\neq (a,b), d'\in M,d'\not\in G,$ and $a',b'$ are connected to $T$ with an undirected path in $G.$
\end{lemma}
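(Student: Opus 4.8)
Fix a DEGS $S_i$ and let $(d,p,r)\in S_i$ be the tuple with $d=a\to b$; write $\mathcal{G}\in[M]$ for the true DAG underlying the data. The plan is to show first that $r$ is one of the four Meek labels (so $p\neq\emptyset$), and then to produce the required $d'\in p$ by a case analysis on $r$. Two observations set this up. Since Algorithm~\ref{alg:mb-by-mb in MPDAG} halts only when WaitList is empty and WaitList is last assigned (Step~15) the set of vertices joined to $T$ by an undirected path in the current $G$, minus DoneList, every vertex joined to $T$ by an undirected path in the output $G$ lies in DoneList; in particular $a,b\in\text{DoneList}$. Hence by Lemma~\ref{lemma:adj} the $G$-skeleton is correct at $a$ and at $b$, i.e.\ $adj(a,G)=adj(a,M)$ and $adj(b,G)=adj(b,M)$, and by Lemma~\ref{lemma:sound} every directed edge of $G$ is a directed edge of $M$ with the same orientation; so each $M$-edge incident to $a$ or $b$ occurs in $G$ either with its $M$-orientation or undirected.

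\textbf{Ruling out the base labels.} If $r=\mathtt{B}$ then $a\to b$ is in the background knowledge with which $G$ is initialized in Step~3, so $a\to b\in G$, contradicting $a-b\in G$. If $r=\mathtt{V}$ then $M$, hence $\mathcal{G}$, contains a v-structure $a\to b\leftarrow c$ with $a,c$ nonadjacent; then $b\in ch(a,\mathcal{G})$ and $c$ is a spouse of $a$, so $\{a,b,c\}\subseteq\mathrm{MB}^+(a)$, and — invoking the soundness/completeness of v-structure recovery in MB-by-MB (Theorems~1--2 in~\citep{wang2014discovering}) — this v-structure survives in the marginal graph $L_a$ learned when $a$ (which is in DoneList) is popped, so Step~8 inserts $a\to b$ into $G$, again a contradiction. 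The point that needs care is that $a$ stays d-separated from $c$ by a subset of $\mathrm{MB}^+(a)\setminus\{b\}$: this uses $pa(a,\mathcal{G})\subseteq\mathrm{MB}(a)$ and, when $c\in de(a,\mathcal{G})$, that the first vertex on a directed $a$-to-$c$ path is a child of $a$, hence in $\mathrm{MB}(a)$. Thus $r\in\{\mathtt{R1},\mathtt{R2},\mathtt{R3},\mathtt{R4}\}$ and $p\neq\emptyset$.

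\textbf{The Meek-rule cases.} Write the directed edges consumed by the rule as $p=\{d'_1,\dots,d'_m\}$ ($m\le 2$), as in Algorithm~\ref{alg:check DEGS}. It suffices to prove: \emph{(A)} some $d'_j\notin G$, and \emph{(B)} any such $d'_j$ has both endpoints joined to $T$ by an undirected path in $G$; since $d'_j\in p\subseteq E^d_M$ and $d'_j\neq d$ (the Meek configurations use three or four distinct vertices), $d'=d'_j$ is then the edge asserted by the lemma. For \emph{(B)}: in each rule $d'_j$ shares an endpoint with a configuration edge incident to $a$ or $b$; if $d'_j\notin G$, soundness forbids that incident edge from being directed against its $M$-orientation in $G$, and if it had its $M$-orientation then one of Steps~10--11 would already have oriented $a\to b$, so in the remaining case that incident edge is undirected in $G$, and appending it to an undirected $G$-path from $a$ (or $b$) to $T$ joins the far endpoint of $d'_j$ to $T$. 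For \emph{(A)}: assume $p\subseteq E_G$. For $\mathtt{R2}$ this contradicts $a-b\in G$ immediately (Step~11 fires on $a\to y\to b$ with $a-b$). For $\mathtt{R1},\mathtt{R3},\mathtt{R4}$ one uses a ``forced-separator'' argument: the nonadjacent pair on which the rule's conditional-independence test is performed (in $\mathtt{R1}$: the endpoints $x,b$ of $x\to a-b$), together with the rule's apex vertex (here $a$), lies inside the Markov blanket of a vertex the algorithm explores — typically $a$ or $b$, since both are in DoneList and all the relevant vertices are adjacent to one of them — and inside that blanket the apex is the unique interior vertex of an open path between the pair (here $x\to a\to b$), so it belongs to every separator the algorithm could record; hence the IndSet entry demanded by Step~10 (resp.\ 13, 14) is present and $a\to b$ gets oriented, contradicting $a-b\in G$.

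\textbf{Main obstacle.} The hard part is precisely the ``forced-separator'' claim in \emph{(A)} (and the analogous v-structure-survival claim in the $r=\mathtt{V}$ case): one must certify that the auxiliary conditional independencies needed by the Meek-rule steps actually land in IndSet, which requires identifying the relevant nonadjacent pair as d-separable \emph{within some Markov blanket the algorithm actually explores} and the apex vertex as the sole interior vertex of an open path there, all while tracking whether the configuration's auxiliary edges (the undirected edges $a'-b'$, $a'-c'$ in $\mathtt{R3}$, the edge $d'-b'$ in $\mathtt{R4}$) have already been correctly oriented in $G$ — $\mathtt{R4}$ being the most delicate instance. Lemmas~\ref{lemma:adj} and~\ref{lemma:sound} serve to pin down the $G$-skeleton and the correctness of existing orientations near $a$ and $b$, which is what keeps this case analysis finite and self-contained.
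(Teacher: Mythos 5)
Your overall skeleton matches the paper's proof: both arguments first rule out $r\in\{"V","B"\}$ (yours via the initialization of $G$ with $\mathcal{B}$ in Step 3 and the correctness of v-structure recovery at nodes in DoneList, exactly as in the paper) and then run a case analysis over the four Meek labels, with R1 and R2 dispatched by noting that if all prerequisite edges were directed in $G$ the corresponding orientation step would fire. Your preliminary observation that every node joined to $T$ by an undirected path in the output lies in DoneList, so that Lemmas~\ref{lemma:adj} and~\ref{lemma:sound} pin down the skeleton and the soundness of orientations at $a$ and $b$, is also exactly how the paper proceeds.

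The genuine gap is in R3 and R4, which is where the paper spends nearly all of its effort. Your claim (A) for these rules --- that if $p\subseteq E_G$ then the forced separator lands in IndSet ``and $a\to b$ gets oriented'' --- does not go through as stated: Steps 12--13 of Algorithm~\ref{alg:mb-by-mb in MPDAG} fire only when the auxiliary configuration edges ($a-c$ and $a-d$ for R3; $a-c$, $a-d$ and $d\to b$ for R4) sit in $G$ in exactly the required directed/undirected states, and nothing prevents some of them from having already been oriented in $G$. The paper therefore enumerates the possible states of these edges and, in each directed sub-case, derives the contradiction from a \emph{different} rule (R1 or R2 applied to a different triple) or, in the all-directed sub-case of R4, from acyclicity of $M$ (having $d\to a\to c$ in $M$ would contradict $c\to d\in M$) --- an argument your sketch never produces. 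R4 has a further prior obstruction that your (A)/(B) split does not address: the pair $(c,d)$ is only guaranteed to appear as an edge of $G$ at all once one shows that at least one of $c,d$ is itself connected to $T$ by an undirected path (so that its adjacencies are learned), which the paper establishes first via the ``not all three of $(a,c),(a,d),(b,d)$ are directed in $G$'' argument; without this, ``$c\to d\notin G$'' could hold vacuously and your step (B) has no undirected edge to append. You correctly flag all of this as the main obstacle, but flagging it is not the same as overcoming it: the missing sub-case enumerations are the substance of the paper's proof of this lemma.
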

\begin{proof}
Since $a\to b\in M,$ by the definition of consistent DEGS there exists $(d,p,r)\in S_i$ such that $d=a\to b.$ We firstly prove that $r\not\in \{"V","B"\},$ so $p\neq \emptyset.$ If $r="V",$ there exists a v-structure in $M$ which contains $d,$ i.e. there exists $c\in V$ such that $a\to b\leftarrow c\in M$ and $a,c$ are not adjacent in $M.$ However, since $a$ and $T$ are connected by a undirected path in $G,$ by Algorithm~\ref{alg:mb-by-mb in MPDAG} $a$ is in DoneList, so the v-structures contains $a$ as a parent node is correctly learned in $G,$ so $a\to b\in G,$ which contradicts with the assumption. If $r="B",$ we know $d\in B$ so $a\to b\in G,$ which contradicts with the assumption.

Then we talk about each case of $r.$ If $r="R1",$ there exists $c\in V$ such that $c\to a\to b\in M,$ and $c,b$ are not adjacent in $M,$ and $p=\{c\to a\}.$ Since $a,b$ is in DoneList, the adjacency of $a,b$ in $M$ is the same in $G,$ so $c,b$ are not adjacent in $G,$ and $(c,b,S_{cb})\in\,$IndSet, $a\in S_{cb}.$ Moreover, $a,c$ are adjacent in $G.$ If $c\to a\in G,$ then $a\to b\in G$ by R1, which contradicts with $a-b\in G,$ so $c-a\in G.$ So $d'=c\to a\in p$ satisfies the conclusion of lemma.

If $r="R2",$ there exists $c\in V$ such that $a\to c\to b\in M,$ and $p=\{a\to c, c\to b\}.$ Since $a,b$ are in DoneList, their adjacency are correctly learned so $(a,c),(c,b)$ are adjacent pairs in $G.$ If $a\to c\to b\in G,$ then $a\to b\in G$ by R2, which contradicts with $a-b\in G,$ so one of them must be undirected in $G.$ The undirected one satisfies the conclusion of lemma.

If $r="R3",$ there exists $c,d\in V$ such that $c\to b\leftarrow d\in M,$ $(a,c),(a,d)$ are adjacent pairs in $M,$ $c,d$ are not adjacent in $M,$ $(c,a,d)$ is not a v-structure in $M,$ and $p=\{c\to b, d\to b\}.$ Since $a,b$ are in DoneList, their adjacency are correctly learned so $(a,c),(a,d),(b,c),(b,d)$ are adjacent pairs in $G.$ If $c\to b\leftarrow d\in G,$ we enumerate all possible states of edges between $(a,c)$ and $(a,d)$ in $G.$ (1) They are both undirected edges in $G.$ Then $c$ is in DoneList, so $c,d$ is not adjacent in $G$, $(c,d,S_{cd})\in\,$IndSet and $a\in S_{cd},$ so $a\to b\in G$ by R3, which contradicts with $a-b\in G.$ (2) They are both directed edges in $G.$ Then $a\to c$ or $a\to d$ in $G,$ otherwise they will form a v-structure, contradicts with that they do not form a v-structure in $M.$ Without loss of generality suppose $a\to c\in G,$ then by $c\to b\in G$ and R2 we have $a\to b\in G,$ which contradicts with $a-b\in G.$ (3) Only one of them is directed. Without loss of generality suppose the edge between $(a,c)$ is directed in $G,$ then $d$ is in DoneList, so $c,d$ is not adjacent in $G,$ $(c,d,S_{cd})\in\,$IndSet and $a\in S_{cd}.$ So $a\to c\in G,$ otherwise $c\to a\in G$ implies $a\to b\in G$ by R1, which contradicts with the supposition that $a-b\in G.$ Then $a\to c,c\to b\in G$ implies $a\to b\in G$ by R2, which contradicts with $a-b\in G.$ Therefore, one of $c\to b,d\to b$ is not in $G,$ and that one satisfies the conclusion of lemma.

If $r="R4",$ there exists $c,d\in V$ such that $c\to d\to b\in M,$ $(a,c),(a,d)$ are adjacent pairs in $M,$ $p=\{c\to d\},$ and $(b,a,c)$ is not a v-structure in $M.$ i.e. $(b,c,S_{bc})\in\,$IndSet and $a\in S_{bc}.$ Since $a,b$ are in DoneList, their adjacency are correctly learned so $(a,c),(a,d),(b,d)$ are adjacent pairs in $G.$ We first claim that the three edges between $(a,c),(a,d),(b,d)$ are not all directed in $G.$ Assume, for the sake of contradiction, that they are all directed, then $d\to b\in G$ since $d\to b\in M.$ If $a\to d\in G,$ then $a\to b\in G$ by R2, which contradicts with $a-b\in G.$ So $d\to a\in G.$ If $c\to a\in G,$ then $a\to b\in G$ by R1, which contradicts with $a-b\in G.$ So $a\to c\in G.$ Since all directed edges in $G$ are correct, we have $d\to a\to c\in M,$ which contradicts with $c\to d\in M$ since it will form a directed cycle. So at least one of the three edges between $(a,c),(a,d),(b,c)$ are undirected in $G.$ Since $a,b$ are connected with $T$ by an undirected path, at least one of $c,d$ is also connected with $T$ by an undirected path, so its adjacency is correctly learned and $(c,d)$ is adjacent in $G.$

Now for the sake of contradiction assume that $c\to d\in G.$ Since the adjacency of $b$ is learned, $(b,c,S_{bc})\in\,$IndSet and $d\in S_{bc}.$ So $d\to b\in G$ by R1. Enumerate all possible cases of edges between $(a,d),(a,c).$ (1) If $a\to d\in G,$ then from $d\to b\in G$ we have $a\to b\in G$ by R2, which contradicts with $a-b\in G.$ (2) If $d\to a\in G,$ then from $c\to d\in G$ we have $c\to a\in G$ by R2, and have $a\to b\in G$ by R1, which contradicts with $a-b\in G.$ (3) If $a\to c\in G,$ then from $c\to d\in G$ we have $a\to d\in G$ and back to the first case. (4) If $c\to a\in G,$ then we have $a\to b\in G$ by R1, which contradicts with $a-b\in G.$ (5) If $d-a-c\in G,$ then we have $a\to b\in G$ by R4, which contradicts with $a-b\in G.$ So $c-d\in G.$

Now since one of $(c,d)$ is connected with $T$ by an undirected path, both of them are connected with $T$ by an undirected path. So $c\to d$ is the directed edge that satisfies the conclusion of lemma.
\end{proof}

\begin{lemma}
\label{lemma:complete}
Suppose the joint distribution is markovian and faithful with the underlying DAG , and all independence tests are correct, then the MB-by-MB in MPDAG algorithm can correctly discover all directed edges around target $T.$
\end{lemma}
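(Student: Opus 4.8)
The plan is to argue by contradiction using a well‑ordering argument over a fixed consistent directed edge generation sequence (DEGS), with Lemma~\ref{lemma:complete induction} supplying the inductive descent. Throughout, fix the output $G$ of Algorithm~\ref{alg:mb-by-mb in MPDAG}, write $M$ for the true MPDAG, and let $\mathbf{U}$ be the set of nodes connected to $T$ by an undirected path in $M$, together with $T$ itself; ``directed edges around $T$'' means the directed edges of $M$ incident to some node of $\mathbf{U}$.

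First I would show that $\mathbf{U}$ is contained in the final DoneList and, moreover, that every node of $\mathbf{U}$ is still connected to $T$ by an undirected path in the final $G$. This follows by induction on the length of a shortest undirected path from $T$ in $M$: $T$ itself is popped from WaitList into DoneList in the first iteration; and if $W$ lies in DoneList with $W-W'\in M$, then by Lemma~\ref{lemma:adj} the edge between $W$ and $W'$ is present in $G$, and by Lemma~\ref{lemma:sound} it is never oriented in $G$ (an undirected edge of $M$ cannot become a directed edge of $G$, and orientations are only ever added), so, since $W$ is connected to $T$ by an undirected path in $G$, so is $W'$, whence $W'$ is placed in WaitList at the update step and eventually into DoneList. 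In particular the whole undirected path in $M$ from $T$ to any $Z\in\mathbf{U}$ stays undirected in $G$.

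Next, suppose for contradiction that some directed edge $a\to b$ of $M$ with $a\in\mathbf{U}$ is not correctly discovered. By the previous step $a$ is in DoneList, so by Lemma~\ref{lemma:adj} the edge between $a$ and $b$ is present in $G$, and by Lemma~\ref{lemma:sound} it is not oriented $a\leftarrow b$; hence it appears as $a-b$ in $G$, and since $a$ is connected to $T$ by an undirected path in $G$, so is $b$. Let $D$ be the finite, nonempty set of directed edges $x\to y$ of $M$ with $x-y\in G$ and $x$ (equivalently $y$) connected to $T$ by an undirected path in $G$. Fix any DEGS $S=\langle(d_1,p_1,r_1),\dots,(d_k,p_k,r_k)\rangle$ consistent with $M$ and the background knowledge, which exists by Lemma~\ref{lemma:exist DEGS}; since every directed edge of $M$ occurs exactly once as some $d_i$, each $e\in D$ has a well‑defined position $\mathrm{pos}(e)$. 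Pick $e^\star\in D$ of minimal position $i^\star$. Applying Lemma~\ref{lemma:complete induction} to $e^\star$ and the tuple $(d_{i^\star},p_{i^\star},r_{i^\star})$ yields a directed edge $d'=a'\to b'\in p_{i^\star}$ with $(a',b')\neq e^\star$, $d'\in M$, $d'\notin G$, and $a',b'$ connected to $T$ by an undirected path in $G$; hence $d'\in D$. But the consistency check in Algorithm~\ref{alg:check DEGS} requires every edge of $p_{i^\star}$ to already lie in $E_H$ at step $i^\star$, i.e.\ to equal $d_j$ for some $j<i^\star$, so $\mathrm{pos}(d')<i^\star$, contradicting minimality. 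Thus $D=\emptyset$: every directed edge of $M$ incident to a node of $\mathbf{U}$ is correctly oriented in $G$. Combined with Lemma~\ref{lemma:adj} and Lemma~\ref{lemma:sound}, this gives $pa(Z,G)=pa(Z,M)$, $ch(Z,G)=ch(Z,M)$, $sib(Z,G)=sib(Z,M)$ for every $Z\in\mathbf{U}$, proving the lemma (and Theorem~\ref{thm:mb-by-mb}).

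The combinatorial heavy lifting is already packaged in Lemma~\ref{lemma:complete induction}; here the main thing to get right is the bookkeeping of the WaitList/DoneList dynamics — verifying that every node connected to $T$ by an undirected path in $M$ really does get explored and remains connected to $T$ by an undirected path in the final $G$, so that the hypotheses of Lemma~\ref{lemma:complete induction} are legitimately available — together with the clean observation that, because each $p_i$ is a subset of the already‑oriented edges in a consistent DEGS, choosing the earliest offending directed edge forces an even earlier one and hence the contradiction.
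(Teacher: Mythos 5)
Your proof is correct and follows essentially the same route as the paper's: fix a consistent DEGS via Lemma~\ref{lemma:exist DEGS}, assume a directed edge of $M$ around $T$ is missed in $G$, and use Lemma~\ref{lemma:complete induction} to descend to an earlier-positioned missed edge until the contradiction with $p_1=\emptyset$. Your two refinements — replacing the paper's ``repeat that procedure'' descent with a minimal-position (well-ordering) argument, and explicitly verifying via the WaitList/DoneList dynamics that every node of $\mathbf{U}$ is explored and stays undirected-connected to $T$ in $G$ so that the hypotheses of Lemma~\ref{lemma:complete induction} genuinely apply — are welcome tightenings of steps the paper leaves implicit, but not a different proof.
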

\begin{proof}
Denote $M$ to be the true MPDAG, $G$ be the output PDAG of Algorithm~\ref{alg:mb-by-mb in MPDAG}, and $B$ be the background knowledge. Let $S$ be a consistent DEGS of $M,B,$ which exists by Lemma~\ref{lemma:exist DEGS}. Assume, for the sake of contradiction, there exists an directed edge $d\in M$ which is around $T$ and $d\not\in G.$ Let $i$ be the index such that $(d_i,p_i,r_i)\in S$ and $d_i=d.$ Then by Lemma~\ref{lemma:complete induction}, there exists $d'\in p_i$ such that $d'\in M$, $d'\not\in G,$ and the head and tail node of $d'$ are connected with $T$ by an undirected path in $G.$ Let $j$ be the index such that $(d_j,p_j,r_j)\in S$ and $d_j=d'.$ By the consistency of DEGS, we know $j<i,$ and by Lemma~\ref{lemma:complete induction} there exists $d''\in p_j$ such that $d''\in M,$ $d''\not\in G,$and the head and tail node of $d''$ are connected with $T$ by an undirected path in $G.$ Repeat that procedure, it will lead to a contradiction because by the consistency of DEGS, $p_1=\emptyset.$
\end{proof}

\textbf{Restatement of Theorem~\ref{thm:mb-by-mb}}. 
\textit{Suppose the joint distribution is markovian and faithful with the underlying DAG, and all independence tests are correct, then the MB-by-MB in MPDAG algorithm can correctly discover the edges connected to the target $T.$ Further for each edge connected to $T,$ if it is directed in MPDAG, then it is correctly oriented in the output of the algorithm, otherwise it remains undirected in the output of the algorithm. The same holds for all nodes connected with $T$ by an undirected path in the MPDAG.}

\begin{proof}
The conclusion for target node $T$ holds directly from Lemma~\ref{lemma:adj}, Lemma~\ref{lemma:sound} and Lemma~\ref{lemma:complete}. For another node $T'$ which is connected with $T$ by an undirected path in the MPDAG, note that Algorithm~\ref{alg:mb-by-mb in MPDAG} gives the same output with $T'$ as the target node. So the conclusion also holds for $T'.$
\end{proof}

\subsection{Proof for Theorem~\ref{thm:algorithm faster}}
\begin{proof}
It suffices to prove that each independence test needed by Algorithm~\ref{alg:mb-by-mb in MPDAG} is also needed in the baseline method. 

Let $(V_1,V_2,\dots,V_p)$ be the nodes that are sequentially considered in Algorithm~\ref{alg:mb-by-mb in MPDAG}. That is, node $Z$ popped in Line 5 of Algorithm~\ref{alg:mb-by-mb in MPDAG} is iteratively $V_1,V_2,\dots,V_p.$ 

We first show that it is valid for the baseline method to explore these nodes in the same order. Let $T$ denote the target node, so clearly $V_1=T.$ By induction, we only need to show that after exploring $V_i$ in both algorithms, $T$ and $V_{i+1}$ are connected by an undirected path. After exploring $(V_1,V_2,\dots,V_i),$ each undirected edge in the local structure $G$ is only added by Line 8 in Algorithm~\ref{alg:mb-by-mb in MPDAG}, which is connected with $V_j$ for some $1\le j\le i.$ Therefore, this undirected edge is also added to $G$ in the baseline method. Moreover, with background knowledge $\mathcal{B}$ and applying Meek's rules, we can only add directed edges or orient undirected edges into directed edges. Therefore, any undirected edge after exploring $(V_1,\dots,V_i)$ in Algorithm~\ref{alg:mb-by-mb in MPDAG} also exists and is undirected in the baseline method. Since $T$ and $V_{i+1}$ are connected by an undirected path in Algorithm~\ref{alg:mb-by-mb in MPDAG}, they are also connected by an undirected path in the baseline method.

Then we consider conditional independence tests used by Algorithm~\ref{alg:mb-by-mb in MPDAG}. All conditional independence tests are used in learning the marginal graph over $\mathrm{MB}^+(Z)$ for each $Z\in (V_1,V_2,\dots,V_p),$ as shown in Line 7 in Algorithm~\ref{alg:mb-by-mb in MPDAG}. While learning the marginal graph, we need to check whether two nodes are d-separated by a subset of other nodes. This procedure is not affected by learned edges. Therefore, the total number of independence tests in Algorithm~\ref{alg:mb-by-mb in MPDAG} is not larger than the baseline method.
\end{proof}

\subsection{Proof for Lemma~\ref{lemma:identify critical set}}
\begin{proof}
It suffices to show that: (i) For every $\mathbf{Q}\in \mathcal{Q}$ such that $X\perp Y\mid pa(X,\mathcal{G}^*)\cup \mathbf{Q},$ we have $an(\mathbf{C},\mathcal{G}^*)\cap sib(X,\mathcal{G}^*)\subseteq \mathbf{Q},$ (ii) $an(\mathbf{C},\mathcal{G}^*)\cap sib(X,\mathcal{G}^*)\in \mathcal{Q},$ and (iii) $X\perp Y\mid pa(X,\mathcal{G}^*)\cup (an(\mathbf{C},\mathcal{G}^*)\cap sib(X,\mathcal{G}^*)).$ 

We first show (i). For the sake of contradiction, suppose that there exists $\mathbf{Q}\in \mathcal{Q}$ and $Z\in (an(\mathbf{C},\mathcal{G}^*)\cap sib(X,\mathcal{G})) \setminus \mathbf{Q}$ such that $X\perp Y\mid pa(X,\mathcal{G}^*)\cup \mathbf{Q}.$ Let $C \in \mathbf{C}$ be a descendant of $Z$ in $\mathcal{G}^*.$ By Theorem 1 in~\citet{fang2020ida}, $C$ is not in $pa(X,\mathcal{G}^*) \cup \mathbf{Q}.$ By the definition of critical set, there exists a chordless partially directed path $p=\langle X,C,\dots,Y\rangle$ in $\mathcal{G}^*.$ Let $\mathcal{G}_1$ be a DAG such that $X\to C$ in $\mathcal{G}_1.$ Then $p$ is causal in $\mathcal{G}_1.$ Since $p$ is chordless, every node except $C$ is not adjacent to $X.$ Therefore, $pa(X,\mathcal{G}^*)\cup \mathbf{Q}$ does not contain any node in $p,$ so $p$ is open given $pa(X,\mathcal{G}^*)\cup \mathbf{Q}.$ That contradicts with the fact that $X\perp Y\mid pa(X,\mathcal{G}^*)\cup \mathbf{Q}.$

We then show (ii). Since $X$ is not a definite cause of $Y$ in $\mathcal{G}^*,$ by Theorem 4.5 in~\citet{zuo2022counterfactual}, the induced subgraph of $\mathcal{G}^*$ over $\mathbf{C}$ is complete. Initially, let $\mathbf{Q}=\mathbf{C}.$ As long as there exists $W\in sib(X,\mathcal{G}^*)\setminus \mathbf{Q}$ and $R\in \mathbf{Q}$ such that $W\to R$ in $\mathcal{G}^*,$ we add $W$ into $\mathbf{Q}.$ By Lemma B.1 in~\citet{zuo2022counterfactual}, after each addition, $\mathbf{Q}$ remains to be complete. Moreover, after all additions, we have $\mathbf{Q}=an(\mathbf{C},\mathcal{G}^*)\cap sib(X,\mathcal{G}^*).$ Therefore, $an(\mathbf{C},\mathcal{G}^*)\cap sib(X,\mathcal{G}^*)$ is complete. Moreover, there does not exist $W\in sib(X,\mathcal{G}^*) \mid \mathbf{Q}$ and $R\in \mathbf{Q}$ such that $W\to R$ in $\mathcal{G}^*,$ so orienting $\mathbf{Q}\to X$ and $X\to sib(X,\mathcal{G}^*)\setminus \mathbf{Q}$ does not introduce any v-structure collided on $X$ or directed triangle containing $X.$ Therefore, $an(\mathbf{C},\mathcal{G}^*)\cap sib(X,\mathcal{G}^*)\in \mathcal{Q}.$

Finally, we show (iii). By Theorem 1 in~\citet{fang2020ida}, there exists a DAG $\mathcal{G}_2\in [\mathcal{G}^*]$ such that $pa(X,\mathcal{G})=pa(X,\mathcal{G}^*)\cup (an(\mathbf{C},\mathcal{G}^*)\cap sib(X,\mathcal{G}^*)).$ By Lemma 2 in~\citet{fang2020ida}, $X$ is not an ancestor of $Y$ in $\mathcal{G}_2.$ So by the Markov property, $X\perp Y\mid pa(X,\mathcal{G}^*)\cup (an(\mathbf{C},\mathcal{G}^*)\cap sib(X,\mathcal{G}^*)).$ 
\end{proof}

\subsection{Proof for Theorem~\ref{thm:correct alg non-an}}
We first show that when applying a new background knowledge in an MPDAG, there are two properties: (i) Orienting an edge outside a B-component does not affect the orientation inside the B-component; (ii) Orientation within a B-component can be done by applying Meek's rules within it. There has been a sufficient and necessary condition for a partially directed graph to be an MPDAG~\citep{fang2022representation}. However, to the best of our knowledge, there is no formal theories for these two properties.

The first theorem shows that for an MPDAG, orienting each of its B-components independently induces an orientation of the entire MPDAG.

\begin{theorem}
\label{thm:MEC decomposition}
Let $\mathcal{G}^*$ be an MPDAG. Let $\mathcal{G}^*_1,\mathcal{G}^*_2,\dots,\mathcal{G}^*_k$ be all B-components of $\mathcal{G}^*$ with vertices $\mathbf{V}_1,\mathbf{V}_2,\dots,\mathbf{V}_k.$ Then for all $\mathcal{G}_1\in [\mathcal{G}^*_1],\mathcal{G}_2\in [\mathcal{G}^*_2],\dots,\mathcal{G}_k\in [\mathcal{G}^*_k],$ there exists $\mathcal{G}\in [\mathcal{G}^*]$ such that the induced subgraph of $\mathcal{G}$ over $\mathbf{V}_i$ is identical to $\mathcal{G}_i$ for all $i=1,2,\dots,k.$
\end{theorem}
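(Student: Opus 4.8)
The plan is to induct on the number $k$ of B-components, peeling off at each step a \emph{source} B-component, i.e.\ one that receives no edge from a different B-component.

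Before the induction I would establish two facts. \textbf{(A) Restriction.} For any vertex set $\mathbf{W}$, the induced subgraph $\mathcal{G}^*_{\mathbf{W}}$ is again an MPDAG, and restricting any $\mathcal{G}_0\in[\mathcal{G}^*]$ to $\mathbf{W}$ gives a DAG in $[\mathcal{G}^*_{\mathbf{W}}]$. Indeed, adjacencies, directed edges and v-structures of an induced subgraph are exactly those of $\mathcal{G}^*$ confined to $\mathbf{W}$, so $(\mathcal{G}_0)_{\mathbf{W}}$ has the skeleton and v-structures of $\mathcal{G}^*_{\mathbf{W}}$ and orients its directed edges consistently; running this for the two members of $[\mathcal{G}^*]$ that orient a given undirected edge both ways (Lemma~\ref{lemma:mpdag def}) shows every undirected edge of $\mathcal{G}^*_{\mathbf{W}}$ is unforced, so $\mathcal{G}^*_{\mathbf{W}}$ is an MPDAG. \textbf{(B) A source B-component exists.} Pick $\mathcal{G}_0\in[\mathcal{G}^*]$ and a source vertex $v^*$ of the DAG $\mathcal{G}_0$; since the directed edges of $\mathcal{G}^*$ form a subset of those of $\mathcal{G}_0$, $v^*$ has no parent in $\mathcal{G}^*$. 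Let $\mathbf{V}_s$ be the B-component of $v^*$. If there were $u\to w$ in $\mathcal{G}^*$ with $u\notin\mathbf{V}_s$, $w\in\mathbf{V}_s$, take an undirected path $v^*=r_0-r_1-\cdots-r_t=w$ inside $\mathbf{V}_s$; assuming $u\to r_i$, the vertex $u$ must be adjacent to $r_{i-1}$ (otherwise Meek's rule R1 on $u\to r_i-r_{i-1}$ would orient $r_i\to r_{i-1}$, contradicting undirectedness), and this edge — lying between two B-components — is directed and cannot be $r_{i-1}\to u$ (otherwise rule R2 on $r_{i-1}\to u\to r_i$ with $r_{i-1}-r_i$ would orient $r_{i-1}\to r_i$), so $u\to r_{i-1}$; descending to $i=0$ gives $u\to v^*$, contradicting that $v^*$ has no parent. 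Hence no edge enters $\mathbf{V}_s$.

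For the induction itself: $k=1$ is trivial. For $k\ge 2$, pick a source B-component $\mathbf{V}_s$. By (A), $\mathcal{G}^{**}:=\mathcal{G}^*_{\mathbf{V}\setminus\mathbf{V}_s}$ is an MPDAG; its B-components are exactly $\{\mathbf{V}_i\}_{i\neq s}$, since deleting $\mathbf{V}_s$ removes only edges incident to $\mathbf{V}_s$ — all internal to $\mathbf{V}_s$ or directed — so no remaining component splits or merges, and the induced subgraph of $\mathcal{G}^{**}$ on each $\mathbf{V}_i$ ($i\ne s$) is still $\mathcal{G}^*_i$. By the induction hypothesis there is $\mathcal{H}\in[\mathcal{G}^{**}]$ restricting to $\mathcal{G}_i$ on each $\mathbf{V}_i$, $i\ne s$. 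Let $\mathcal{G}$ be the graph on $\mathbf{V}$ obtained by placing $\mathcal{G}_s$ on $\mathbf{V}_s$, $\mathcal{H}$ on $\mathbf{V}\setminus\mathbf{V}_s$, and adding the edges of $\mathcal{G}^*$ between $\mathbf{V}_s$ and $\mathbf{V}\setminus\mathbf{V}_s$ (all directed out of $\mathbf{V}_s$). Then $\mathcal{G}$ has the skeleton of $\mathcal{G}^*$, restricts to $\mathcal{G}_i$ on each $\mathbf{V}_i$, and orients every directed edge of $\mathcal{G}^*$ consistently (inside $\mathbf{V}_s$ via $\mathcal{G}_s\in[\mathcal{G}^*_s]$; inside $\mathbf{V}\setminus\mathbf{V}_s$ via $\mathcal{H}\in[\mathcal{G}^{**}]$; across, by construction). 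It remains to check $\mathcal{G}$ is a DAG with the v-structures of $\mathcal{G}^*$, which gives $\mathcal{G}\in[\mathcal{G}^*]$. Acyclicity: a directed cycle would avoid $\mathbf{V}_s$ (which receives no edges) and hence lie in the DAG $\mathcal{H}$. V-structures: one lying inside $\mathbf{V}_s$ (resp.\ inside $\mathbf{V}\setminus\mathbf{V}_s$) is a v-structure of $\mathcal{G}_s$ hence of $\mathcal{G}^*_s$ (resp.\ of $\mathcal{H}$ hence of $\mathcal{G}^{**}$), hence of $\mathcal{G}^*$; one whose collider is in $\mathbf{V}_s$ has both tails in $\mathbf{V}_s$ (no edge enters $\mathbf{V}_s$) and reduces to the previous case; and one $a\to b\leftarrow c$ with $b\notin\mathbf{V}_s$ and $a\in\mathbf{V}_s$ forces $c\to b$ to be directed in $\mathcal{G}^*$ — were it undirected, rule R1 on $a\to b-c$ (with $a,c$ nonadjacent) would orient $b\to c$ — so it is a v-structure of $\mathcal{G}^*$. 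The converse inclusion is immediate since $\mathcal{G}$ preserves the directed edges and non-adjacencies of $\mathcal{G}^*$.

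I expect the real obstacle to be fact (B), the existence of a source B-component: it is the only place the argument uses the global orientation structure rather than bookkeeping, and it rests on the Meek-rule propagation from the source vertex of a consistent extension sketched above (it can alternatively be extracted from the representation theory of \citet{fang2022representation}). The restriction lemma (A), the identification of the B-components of $\mathcal{G}^{**}$, and the v-structure case analysis in the gluing step are then routine.
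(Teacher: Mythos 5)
Your proof is correct in its essentials but follows a genuinely different route from the paper's. The paper constructs $\mathcal{G}$ in one shot, by overwriting each $\mathbf{V}_i$ with $\mathcal{G}_i$ simultaneously, and then verifies membership in $[\mathcal{G}^*]$ directly: a new v-structure would have both of its edges undirected in $\mathcal{G}^*$ (else Meek's R1 fires), hence would sit inside a single B-component and contradict $\mathcal{G}_i\in[\mathcal{G}^*_i]$; a directed cycle either lies in one B-component or uses a between-component edge, in which case it becomes a partially directed cycle in the chain skeleton, contradicting the fact (imported from Theorem~1 of \citet{fang2022representation}, which the paper also cites to know that each $\mathcal{G}^*_i$ is an MPDAG) that the chain skeleton is a chain graph. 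You instead induct on $k$ by peeling off a \emph{source} B-component, and your key lemma (B) — Meek-rule propagation of an incoming arrow $u\to w$ back along an undirected path to a parentless vertex $v^*$ — is in effect a self-contained reproof of that same acyclicity-of-the-quotient fact, just as your restriction lemma (A) reproves the MPDAG-hood of induced subgraphs that the paper takes from the literature. So your argument buys self-containedness (and lemma (B) is of independent interest), at the cost of extra bookkeeping that the paper's one-shot gluing avoids: identifying the B-components of $\mathcal{G}^*_{\mathbf{V}\setminus\mathbf{V}_s}$, and the three-way cross-boundary v-structure analysis, which the paper dispatches with the single R1 observation above. Two small points to tighten: in the acyclicity check you should also dispose of a directed cycle lying entirely inside $\mathbf{V}_s$ (immediate, since $\mathcal{G}_s$ is a DAG, but ``a directed cycle would avoid $\mathbf{V}_s$'' does not cover it); and the one-line justification of (A) should be expanded — exhibit the common CPDAG of the restrictions $(\mathcal{G}_0)_{\mathbf{W}}$, take the directed edges of $\mathcal{G}^*_{\mathbf{W}}$ as background knowledge, and use Lemma~\ref{lemma:mpdag def} together with your reversibility observation to conclude the resulting MPDAG coincides with $\mathcal{G}^*_{\mathbf{W}}$ — since ``every undirected edge is unforced'' is not by itself the definition of an MPDAG.
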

\begin{proof}
By Theorem 1 in~\citet{fang2022representation}, $\mathcal{G}^*_1,\mathcal{G}^*_2,\dots,\mathcal{G}^*_k$ are MPDAGs, so $[\mathcal{G}^*_1],[\mathcal{G}^*_2],\dots,[\mathcal{G}^*_k]$ are well-defined.

By the definition of B-component, each undirected edge in $\mathcal{G}^*$ lies in exactly one B-component of $\mathcal{G}^*.$ Let $\mathcal{G}$ be a directed graph constructed by letting the induced subgraph of $\mathcal{G}^*$ over $\mathbf{V}_i$ be $\mathcal{G}_i$ for all $i=1,2,\dots,k.$ Then $\mathcal{G}$ and $\mathcal{G}^*$ have the same skeleton, and all directed edges in $\mathcal{G}^*$ has the same direction in $\mathcal{G}.$ It suffices to show that $\mathcal{G}\in [\mathcal{G}^*],$ i.e. $\mathcal{G}$ has no directed cycle and no v-structure not included in $\mathcal{G}^*.$

For the sake of contradiction, we first suppose that there is a v-structure $A\to B\leftarrow C$ in $\mathcal{G}$ not included in $\mathcal{G}^*.$ Then both edges between $A,B$ and $B,C$ are undirected in $\mathcal{G}^*.$ Otherwise, without loss of generality, suppose $A\to B-C$ in $\mathcal{G}^*,$ then by Meek's rule 1 we have $A\to B\to C$ in $\mathcal{G}^*,$ leading to a contradiction. Since $A-B-C$ in $\mathcal{G}^*,$ $A,B,C$ are in the same B-component $\mathcal{G}^*_i.$ Then we have $A\to B\leftarrow C$ is a v-structure in $\mathcal{G}_i,$ which contradicts with the fact that $\mathcal{G}_i\in [\mathcal{G}^*_i].$

Then we assume that there is a directed cycle $p$ in $\mathcal{G}.$ If there is an directed edge in $p$ which is not in any $\mathcal{G}_i,$ then it is also directed in the chain skeleton of $\mathcal{G}^*.$ So $p$ is partially directed in the chain skeleton of $\mathcal{G}^*,$ which contradicts with the fact that the chain skeleton of $\mathcal{G}^*$ is a chain graph by Theorem 1 in~\citet{fang2022representation}. Since every edge between two B-components does not belong to any B-component, all nodes in $p$ is in the same B-component $\mathcal{G}^*_i.$ Then $p$ is a directed cycle in $\mathcal{G}_i,$ which contradicts with the fact that $\mathcal{G}_i\in [\mathcal{G}^*_i].$
\end{proof}

The converse of Theorem~\ref{thm:MEC decomposition} is trivial.

\begin{lemma}
\label{lemma:inverse decomposition of MEC}
Let $\mathcal{G}^*$ be an MPDAG. Let $\mathcal{G}^*_1,\mathcal{G}^*_2,\dots,\mathcal{G}^*_k$ be all B-components of $\mathcal{G}^*$ with vertices $\mathbf{V}_1,\mathbf{V}_2,\dots,\mathbf{V}_k.$ For every $\mathcal{G}\in [\mathcal{G}^*], i=1,2,\dots,k,$ let $\mathcal{G}_i$ denote the induced subgraph of $\mathcal{G}$ over $\mathbf{V}_i,$ then $\mathcal{G}_i\in [\mathcal{G}^*_i].$ 
\end{lemma}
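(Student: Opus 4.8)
The ``converse'' is the easy direction, so the plan is to verify directly the defining properties of membership in $[\mathcal{G}^*_i]$ rather than to invoke Theorem~\ref{thm:MEC decomposition}. Fix $\mathcal{G}\in[\mathcal{G}^*]$ and an index $i$, and write $\mathcal{G}^*_i=(\mathcal{G}^*)_{\mathbf{V}_i}$ and $\mathcal{G}_i=\mathcal{G}_{\mathbf{V}_i}$ for the induced subgraphs on the vertex set $\mathbf{V}_i$ of the $i$-th B-component. By Theorem~1 in~\citet{fang2022representation}, $\mathcal{G}^*_i$ is an MPDAG, so $[\mathcal{G}^*_i]$ is well defined as the set of DAGs in $[\mathcal{C}_i]$ consistent with background knowledge $\mathcal{B}_i$, where $\mathcal{C}_i$ is the CPDAG underlying $\mathcal{G}^*_i$ and $\mathcal{B}_i$ is contained in the directed edges of $\mathcal{G}^*_i$. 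Hence it suffices to check that $\mathcal{G}_i$ is a DAG which (a) has the same skeleton as $\mathcal{G}^*_i$, (b) orients every directed edge of $\mathcal{G}^*_i$ in the same direction, and (c) has the same v-structures as $\mathcal{G}^*_i$: then $\mathcal{G}_i$ has the skeleton and v-structures of $\mathcal{C}_i$, hence $\mathcal{G}_i\in[\mathcal{C}_i]$ since two DAGs are Markov equivalent iff they share skeleton and v-structures~\citep{verma1990equivalence}, and $\mathcal{G}_i$ agrees with $\mathcal{B}_i$ by (b), so $\mathcal{G}_i\in[\mathcal{G}^*_i]$.

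Three of these are immediate bookkeeping. First, $\mathcal{G}_i$ is acyclic as a subgraph of the DAG $\mathcal{G}$. For (a), an induced subgraph preserves adjacency among the retained vertices, so $\mathrm{skel}(\mathcal{G}_i)=\mathrm{skel}(\mathcal{G})_{\mathbf{V}_i}=\mathrm{skel}(\mathcal{G}^*)_{\mathbf{V}_i}=\mathrm{skel}(\mathcal{G}^*_i)$, using that $\mathcal{G}\in[\mathcal{G}^*]$ forces $\mathrm{skel}(\mathcal{G})=\mathrm{skel}(\mathcal{G}^*)$. For (b), a directed edge of $\mathcal{G}^*_i$ is a directed edge of $\mathcal{G}^*$ between two vertices of $\mathbf{V}_i$; by Lemma~\ref{lemma:mpdag def} it is oriented the same way in every member of $[\mathcal{G}^*]$, in particular in $\mathcal{G}$, and therefore in $\mathcal{G}_i$.

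The one step that calls for an argument is (c). The inclusion from $\mathcal{G}^*_i$ to $\mathcal{G}_i$ follows from (a) and (b): if $A\to B\leftarrow C$ is a v-structure of $\mathcal{G}^*_i$ then both arrows are present in $\mathcal{G}_i$ by (b) and $A,C$ stay non-adjacent by (a). Conversely, let $A\to B\leftarrow C$ be a v-structure of $\mathcal{G}_i$, so $A,B,C\in\mathbf{V}_i$ and $A,C$ are non-adjacent in $\mathcal{G}_i$. Since $\mathcal{G}_i$ is the subgraph of $\mathcal{G}$ \emph{induced} on $\mathbf{V}_i$, non-adjacency of $A$ and $C$ in $\mathcal{G}_i$ is the same as non-adjacency in $\mathcal{G}$; hence $A\to B\leftarrow C$ is a v-structure of $\mathcal{G}$. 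As $\mathcal{G}\in[\mathcal{G}^*]\subseteq[\mathcal{C}]$ and Meek's rules and background orientations only add orientations, every v-structure of $\mathcal{G}$ (equivalently, of $\mathcal{C}$) is already directed in $\mathcal{G}^*$, so $A\to B\leftarrow C$ is a v-structure of $\mathcal{G}^*$; since all three vertices lie in $\mathbf{V}_i$, it is a v-structure of $\mathcal{G}^*_i$. This establishes (c) and completes the argument.

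I do not expect a real obstacle here; the entire content is that passing to an induced subgraph preserves skeleton, edge orientations, acyclicity, and --- crucially --- non-adjacency of vertex pairs, so no spurious v-structure can appear and nothing already directed in $\mathcal{G}^*$ can be lost. The only place to be slightly careful is to use that $\mathbf{V}_i$ is precisely the vertex set of the B-component, which is exactly what makes ``non-adjacent in $\mathcal{G}_i$'' and ``non-adjacent in $\mathcal{G}$'' coincide; had $\mathcal{G}_i$ been a non-induced subgraph this equivalence could fail.
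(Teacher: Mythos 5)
Your proposal is correct and follows essentially the same route as the paper: both verify that $\mathcal{G}_i$ inherits acyclicity, the skeleton, the v-structures, and the orientations of the directed edges of $\mathcal{G}^*_i$ from the corresponding facts about $\mathcal{G}$ and $\mathcal{G}^*$. Your write-up is simply more explicit than the paper's (in particular about why induced subgraphs preserve non-adjacency and why no new v-structures can arise), which is a harmless elaboration rather than a different argument.
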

\begin{proof}
Since $\mathcal{G}\in [\mathcal{G}^*],$ all directed edges in $\mathcal{G}^*$ have the same direction in $\mathcal{G},$ and $\mathcal{G}$ and $\mathcal{G}^*$ have the same skeleton and v-structures. So all directed edges in $\mathcal{G}^*_i$ have the same direction in $\mathcal{G}_i,$ and $\mathcal{G}_i$ and $\mathcal{G}^*_i$ have the same skeleton and v-structures. Clearly $\mathcal{G}_i$ has no directed cycle. So $\mathcal{G}_i\in [\mathcal{G}^*_i].$
\end{proof}

Combining Theorem~\ref{thm:MEC decomposition} and Lemma~\ref{lemma:inverse decomposition of MEC}, we can construct an bijection from the MEC to the Cartesian product of MECs for each B-component.

\begin{definition}
Let $\mathcal{G}^*$ be an MPDAG and $\mathcal{G}^*_1,\dots,\mathcal{G}^*_k$ be all B-components of $\mathcal{G}^*$ with vertices $\mathbf{V}_1,\dots,\mathbf{V}_k.$ The \textbf{decomposition map} of $\mathcal{G}^*$ is defined as a map $\phi_{\mathcal{G}^*}$ from $[\mathcal{G}^*]$ to $\times_{i=1}^{k}[\mathcal{G}^*_i]$ that for each $\mathcal{G}\in [\mathcal{G}^*],$ $\phi_{\mathcal{G}^*}(\mathcal{G})=(\mathcal{G}_1,\mathcal{G}_2,\dots,\mathcal{G}_k),$ where $\mathcal{G}_i$ is the induced subgraph of $\mathcal{G}$ over $\mathbf{V}_i,$ $i=1,2,\dots,k.$
\end{definition}

\begin{corollary}
Every decomposition map is a bijection.
\end{corollary}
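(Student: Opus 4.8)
The plan is to read off the statement from Theorem~\ref{thm:MEC decomposition} and Lemma~\ref{lemma:inverse decomposition of MEC}, which between them already supply well-definedness and surjectivity; the only thing left to verify by hand is injectivity. First I would record that $\phi_{\mathcal{G}^*}$ is well-defined with the stated codomain: by Lemma~\ref{lemma:inverse decomposition of MEC}, for every $\mathcal{G}\in[\mathcal{G}^*]$ the induced subgraph $\mathcal{G}_i$ over $\mathbf{V}_i$ lies in $[\mathcal{G}^*_i]$, so $\phi_{\mathcal{G}^*}(\mathcal{G})\in\times_{i=1}^{k}[\mathcal{G}^*_i]$. Surjectivity is exactly Theorem~\ref{thm:MEC decomposition}: given any $(\mathcal{G}_1,\dots,\mathcal{G}_k)\in\times_{i=1}^{k}[\mathcal{G}^*_i]$, that theorem produces a $\mathcal{G}\in[\mathcal{G}^*]$ whose induced subgraph over each $\mathbf{V}_i$ is $\mathcal{G}_i$, i.e.\ $\phi_{\mathcal{G}^*}(\mathcal{G})=(\mathcal{G}_1,\dots,\mathcal{G}_k)$.

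For injectivity, suppose $\mathcal{G},\mathcal{G}'\in[\mathcal{G}^*]$ have $\phi_{\mathcal{G}^*}(\mathcal{G})=\phi_{\mathcal{G}^*}(\mathcal{G}')$. Both DAGs share the skeleton of $\mathcal{G}^*$, so it suffices to show that every edge of $\mathcal{G}^*$ gets the same orientation in $\mathcal{G}$ and $\mathcal{G}'$, and I would split into two cases. If $A\to B$ is directed in $\mathcal{G}^*$, then by Lemma~\ref{lemma:mpdag def} it is oriented $A\to B$ in every member of $[\mathcal{G}^*]$, in particular in both $\mathcal{G}$ and $\mathcal{G}'$. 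If $A-B$ is undirected in $\mathcal{G}^*$, then (as already used in the proof of Theorem~\ref{thm:MEC decomposition}) it belongs to exactly one B-component $\mathcal{G}^*_i$, so $A,B\in\mathbf{V}_i$; hence the orientation of this edge in $\mathcal{G}$ is recorded inside the induced subgraph $\mathcal{G}_i$, and likewise inside $\mathcal{G}'_i$. Since $\phi_{\mathcal{G}^*}(\mathcal{G})=\phi_{\mathcal{G}^*}(\mathcal{G}')$ forces $\mathcal{G}_i=\mathcal{G}'_i$ for every $i$, the two orientations agree. Every edge of $\mathcal{G}^*$ falls under one of the two cases, so $\mathcal{G}=\mathcal{G}'$, proving $\phi_{\mathcal{G}^*}$ injective and therefore a bijection.

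There is no genuinely hard step: the substance has been front-loaded into Theorem~\ref{thm:MEC decomposition} (surjectivity, via the gluing construction that avoids new v-structures and directed cycles) and Lemma~\ref{lemma:inverse decomposition of MEC} (well-definedness). The only point that deserves care — and the closest thing to an obstacle — is the bookkeeping in the injectivity argument: one must be certain that the B-components together account for \emph{all} undirected edges of $\mathcal{G}^*$, each in a single component and with both endpoints in that component's vertex set, and that directed edges of $\mathcal{G}^*$ are truly pinned down across $[\mathcal{G}^*]$. Both facts are already available — the former from the definition of B-components invoked in the proof of Theorem~\ref{thm:MEC decomposition}, the latter from Lemma~\ref{lemma:mpdag def} — so the argument stays short.
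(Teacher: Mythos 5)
Your proposal is correct and matches the paper's intended argument: the paper states the corollary as an immediate consequence of combining Theorem~\ref{thm:MEC decomposition} (surjectivity) and Lemma~\ref{lemma:inverse decomposition of MEC} (well-definedness), leaving the routine injectivity check implicit, which you have filled in correctly using Lemma~\ref{lemma:mpdag def} for directed edges and the fact that each undirected edge of $\mathcal{G}^*$ lies in exactly one B-component.
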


The next theorem shows that orientation of edges outside a B-component does not affect the orientation inside the B-component.

\begin{theorem}
Let $\mathcal{G}^*_1$ be an MPDAG, $\mathcal{B}_{11},\mathcal{B}_{12}$ be two different B-components of $\mathcal{G}^*_1$ with vertice sets $\mathbf{V}_1,\mathbf{V}_2,$ respectively. Let $X-Y$ be an undirected edge in $\mathcal{B}_{12}.$ Let $\mathcal{G}^*_2$ be the MPDAG that represents the restricted MEC $[\mathcal{G}^*_1]$ restricted by a new background knowledge $X\to Y.$ Then for any $\mathcal{G}_1\in [\mathcal{G}^*_1],$ there exists $\mathcal{G}_2\in [\mathcal{G}^*_2],$ such that the induced subgraph of $\mathcal{G}_1$ and $\mathcal{G}_2$ over $\mathbf{V}_1$ are identical.
\end{theorem}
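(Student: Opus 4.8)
The plan is to argue directly by a \emph{B-component swap}, using the decomposition map $\phi_{\mathcal{G}^*_1}$ of Theorem~\ref{thm:MEC decomposition}: since the new edge $X\to Y$ lives entirely inside the B-component $\mathcal{B}_{12}$ that contains it, we can re-orient that component while leaving $\mathcal{B}_{11}$ (and every other B-component) exactly as it is in $\mathcal{G}_1$.

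First I would record the defining identity $[\mathcal{G}^*_2]=\{\mathcal{G}\in[\mathcal{G}^*_1]\mid X\to Y\in\mathcal{G}\}$, which is simply what it means for $\mathcal{G}^*_2$ to represent the MEC $[\mathcal{G}^*_1]$ restricted by the direct causal clause $X\to Y$; this set is nonempty because $X-Y$ is undirected in $\mathcal{G}^*_1$, so Lemma~\ref{lemma:mpdag def} supplies a member of $[\mathcal{G}^*_1]$ orienting it $X\to Y$. Next, since $X-Y$ is an undirected edge of the B-component $\mathcal{B}_{12}$, both $X$ and $Y$ lie in $\mathbf{V}_2$, the edge $X-Y$ belongs to $(\mathcal{G}^*_1)_{\mathbf{V}_2}=\mathcal{B}_{12}$, and $\mathcal{B}_{12}$ is itself an MPDAG by Theorem~1 in~\citet{fang2022representation}; applying Lemma~\ref{lemma:mpdag def} inside $\mathcal{B}_{12}$ yields a DAG $H_2'\in[\mathcal{B}_{12}]$ with $X\to Y\in H_2'$.

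For the main step, let $\mathcal{B}_{11},\dots,\mathcal{B}_{1m}$ be all B-components of $\mathcal{G}^*_1$, with vertex sets $\mathbf{V}_1,\dots,\mathbf{V}_m$, and write $\phi_{\mathcal{G}^*_1}(\mathcal{G}_1)=(H_1,\dots,H_m)$, where $H_i\in[\mathcal{B}_{1i}]$ is the induced subgraph of $\mathcal{G}_1$ over $\mathbf{V}_i$. Replace only the coordinate indexed by $\mathcal{B}_{12}$ (which, by hypothesis, is not the coordinate indexed by $\mathcal{B}_{11}$), forming the tuple $(H_1,H_2',H_3,\dots,H_m)$; by Theorem~\ref{thm:MEC decomposition} there is a DAG $\mathcal{G}_2\in[\mathcal{G}^*_1]$ whose induced subgraph over each $\mathbf{V}_i$ is the corresponding coordinate. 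Then the induced subgraph of $\mathcal{G}_2$ over $\mathbf{V}_1$ equals $H_1$, i.e.\ equals the induced subgraph of $\mathcal{G}_1$ over $\mathbf{V}_1$; and the induced subgraph of $\mathcal{G}_2$ over $\mathbf{V}_2$ is $H_2'$, so the edge between $X$ and $Y$ is oriented $X\to Y$ in $\mathcal{G}_2$. Hence $\mathcal{G}_2\in[\mathcal{G}^*_1]$ with $X\to Y\in\mathcal{G}_2$, which by the first step is exactly $\mathcal{G}_2\in[\mathcal{G}^*_2]$, completing the argument.

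The one point needing care — and the only real obstacle, though it is already subsumed by the cited decomposition theorem rather than requiring new work — is the bookkeeping around overlapping vertex sets: distinct $\mathbf{V}_i$ may share nodes that are endpoints of directed edges, so one must check that the swapped tuple $(H_1,H_2',H_3,\dots,H_m)$ is a legitimate element of $\times_{i=1}^{m}[\mathcal{B}_{1i}]$, the domain to which Theorem~\ref{thm:MEC decomposition} is applied. This holds because any overlap among the $\mathbf{V}_i$ consists solely of directed edges of $\mathcal{G}^*_1$, which are oriented identically in every DAG of every $[\mathcal{B}_{1i}]$; in particular $H_1$, $H_2'$ and the untouched $H_i$ agree on all shared edges, and Theorem~\ref{thm:MEC decomposition} then delivers a single DAG realizing the whole tuple with no directed cycle and no v-structure outside $\mathcal{G}^*_1$.
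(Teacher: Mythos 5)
Your proposal is correct and follows essentially the same route as the paper's proof: both decompose $\mathcal{G}_1$ via the map $\phi_{\mathcal{G}^*_1}$, swap the $\mathcal{B}_{12}$-coordinate for a member of $[\mathcal{B}_{12}]$ orienting $X\to Y$ (your Lemma~\ref{lemma:mpdag def} step is the paper's appeal to Theorem~4 of Meek), and invoke Theorem~\ref{thm:MEC decomposition} to reassemble a DAG in $[\mathcal{G}^*_2]$ agreeing with $\mathcal{G}_1$ on $\mathbf{V}_1$. Your closing remark on consistency across shared directed edges is a harmless extra check already absorbed by the decomposition theorem.
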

\begin{proof}
Let $\phi_{\mathcal{G}^*_1}$ be the decomposition map of $\mathcal{G}^*_1$. Let $\mathcal{B}_{11},\mathcal{B}_{12},\dots,\mathcal{B}_{1k}$ be all B-components of $\mathcal{G}^*_1.$ For any $\mathcal{G}_1\in [\mathcal{G}^*_1],$ let $(\mathcal{G}_{11},\mathcal{G}_{12},\dots,\mathcal{G}_{1k})=\phi_{\mathcal{G}^*_1}(\mathcal{G}_1),$ where $\mathcal{G}_{11}$ is the induced subgraph of $\mathcal{G}_1$ over $\mathbf{V}_1,$ and $\mathcal{G}_{12}$ is the induced subgraph of $\mathcal{G}_1$ over $\mathbf{V}_2.$ Then for all $i=1,2,\dots,k,$ $\mathcal{G}_{1i}\in [\mathcal{B}_{1i}].$ Since $X-Y$ is an undirected edge in $\mathcal{B}_{12},$ by Theorem 4 in~\citet{meek1995causal}, there exists $\mathcal{G}_{22}\in [\mathcal{B}_{12}]$ such that $X\to Y$ in $\mathcal{G}_{22}.$ Let $\mathcal{G}_2=\phi^{-1}_{\mathcal{G}^*_1}(\mathcal{G}_{11},\mathcal{G}_{22},\mathcal{G}_{13},\mathcal{G}_{14},\dots,\mathcal{G}_{1k}),$ then by Theorem~\ref{thm:MEC decomposition}, $\mathcal{G}_2\in [\mathcal{G}^*_1].$ By the definition of $\phi_{\mathcal{G}^*_1},$ we have $X\to Y$ in $\mathcal{G}_2,$ so $\mathcal{G}_2\in [\mathcal{G}^*_2].$ By construction of $\mathcal{G}_2,$ the induced subgraph of $\mathcal{G}_1$ and $\mathcal{G}_2$ over $\mathbf{V}_1$ are both $\mathcal{G}_{11}.$
\end{proof}

The next theorem shows that when adding background knowledge in a B-component, we can apply Meek's rules within the B-component to get the new MPDAG.

\begin{theorem}
\label{thm:local orientation}
Let $\mathcal{G}^*_1$ be an MPDAG and $\mathcal{B}_{11}$ is a B-component of $\mathcal{G}^*_1$ over node set $\mathbf{V}_1.$ Let $X-Y$ be an undirected edge in $\mathcal{B}_{11}.$ Let $\mathcal{B}_{21}$ be the MPDAG over $\mathbf{V}_1$ which represents the restricted Markov equivalence class of $[\mathcal{B}_{11}]$ with background knowledge $X\to Y.$ Let $\mathcal{G}^*_2$ be the PDAG constructed by replacing the induced graph of $\mathcal{G}^*_1$ over $\mathbf{V}_1$ by $\mathcal{B}_{21}.$ Then $\mathcal{G}^*_2$ is the MPDAG representing the restricted Markov equivalence class of $[\mathcal{G}^*_1]$ with background knowledge $X\to Y.$
\end{theorem}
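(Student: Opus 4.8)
The plan is to identify $\mathcal{G}^*_2$ with the genuine MPDAG of the refined class by comparing the two graphs edge by edge, leaning on the B-component decomposition of Theorem~\ref{thm:MEC decomposition} and Lemma~\ref{lemma:inverse decomposition of MEC}. Write $\mathcal{B}_{11},\dots,\mathcal{B}_{1k}$ for the B-components of $\mathcal{G}^*_1$ on vertex sets $\mathbf{V}_1,\dots,\mathbf{V}_k$ (each $\mathcal{B}_{1i}$ being the induced subgraph $(\mathcal{G}^*_1)_{\mathbf{V}_i}$), let $\phi=\phi_{\mathcal{G}^*_1}$ be the decomposition map, and let $\mathcal{K}=\{\mathcal{G}\in[\mathcal{G}^*_1]:X\to Y\in\mathcal{G}\}$ be the restricted Markov equivalence class that $\mathcal{G}^*_2$ must represent. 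Since $X\to Y$ is direct causal information and, by Lemma~\ref{lemma:mpdag def}, $\mathcal{K}\neq\emptyset$, the class $\mathcal{K}$ is represented by a unique MPDAG $\mathcal{M}$~\citep{meek1995causal,fang2022representation}. Adding background knowledge never changes the skeleton, so $\mathcal{M}$, $\mathcal{G}^*_1$ and $\mathcal{G}^*_2$ all carry the same skeleton; hence it suffices to show that $\mathcal{M}$ and $\mathcal{G}^*_2$ orient every edge of that skeleton in the same way, which yields $\mathcal{M}=\mathcal{G}^*_2$ and finishes the proof.

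The first step I would carry out is to decompose $\mathcal{K}$ through $\phi$. Because $X,Y\in\mathbf{V}_1$, for $\mathcal{G}\in[\mathcal{G}^*_1]$ the event $X\to Y\in\mathcal{G}$ depends only on the coordinate $\phi(\mathcal{G})_1\in[\mathcal{B}_{11}]$, and it holds precisely when $\phi(\mathcal{G})_1\in[\mathcal{B}_{21}]$, by the definition of $\mathcal{B}_{21}$. Since $\phi$ is a bijection (Theorem~\ref{thm:MEC decomposition} with Lemma~\ref{lemma:inverse decomposition of MEC}), it therefore restricts to a bijection between $\mathcal{K}$ and $[\mathcal{B}_{21}]\times\prod_{i=2}^{k}[\mathcal{B}_{1i}]$. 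The consequence I will use repeatedly is that each coordinate map $\mathcal{G}\mapsto\phi(\mathcal{G})_i$ stays surjective onto $[\mathcal{B}_{21}]$ for $i=1$ and onto $[\mathcal{B}_{1i}]$ for $i\ge2$; together with the obvious fact that an edge with both endpoints in $\mathbf{V}_i$ receives the same orientation in $\mathcal{G}$ and in $\phi(\mathcal{G})_i$, this shows that the set of orientations of such an edge occurring over $\mathcal{K}$ coincides with the set occurring over $[\mathcal{B}_{21}]$ (when $i=1$) or over $[\mathcal{B}_{1i}]$ (when $i\ge2$).

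Then I would run the edge-by-edge comparison, splitting on the kind of edge $e$ in the shared skeleton and invoking Lemma~\ref{lemma:mpdag def} (an edge is directed in an MPDAG iff it has a fixed direction throughout the class it represents, and undirected iff both directions occur). If $e$ is directed in $\mathcal{G}^*_1$, it keeps that direction in every DAG of $[\mathcal{G}^*_1]\supseteq\mathcal{K}$, hence in $\mathcal{M}$, while in $\mathcal{G}^*_2$ it is either untouched by the surgery (if an endpoint lies outside $\mathbf{V}_1$) or survives into $\mathcal{B}_{21}$, which refines $\mathcal{B}_{11}$; so $\mathcal{M}$ and $\mathcal{G}^*_2$ agree on it. If $e$ is undirected in $\mathcal{G}^*_1$, it lies in exactly one B-component. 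When that component is $\mathcal{B}_{11}$, Step~1 tells us the orientations of $e$ occurring over $\mathcal{K}$ are exactly those occurring over $[\mathcal{B}_{21}]$, so $\mathcal{M}$ orients $e$ just as $\mathcal{B}_{21}$ does, i.e.\ just as $\mathcal{G}^*_2$ does. When that component is $\mathcal{B}_{1j}$ with $j\ge2$, the edge $e$ cannot lie in $(\mathcal{G}^*_1)_{\mathbf{V}_1}=\mathcal{B}_{11}$ (else it would belong to two B-components), so the surgery leaves it undirected in $\mathcal{G}^*_2$; and since $e$ is undirected in the MPDAG $\mathcal{B}_{1j}$, both orientations occur over $[\mathcal{B}_{1j}]$ and hence, via the bijection, over $\mathcal{K}$, so $\mathcal{M}$ leaves $e$ undirected too. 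Every edge falls into one of these cases, so $\mathcal{M}=\mathcal{G}^*_2$.

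The hard part will not be any single deep argument but the bookkeeping that makes the decomposition usable: checking that each B-component $\mathcal{B}_{1i}$ really is the induced subgraph $(\mathcal{G}^*_1)_{\mathbf{V}_i}$, that the operation ``replace the $\mathbf{V}_1$-induced subgraph by $\mathcal{B}_{21}$'' is well defined and alters no edge having an endpoint outside $\mathbf{V}_1$ (so that it cannot reorient an undirected edge belonging to another B-component), that $\mathcal{B}_{21}$ refines $\mathcal{B}_{11}$ and hence keeps all of its directed edges, and that ``$X\to Y$'' is genuinely a condition on the first coordinate of $\phi$. Once these are nailed down the verification above is routine, and a convenient feature of comparing against the already-guaranteed MPDAG $\mathcal{M}$ is that it obviates any separate argument that the glued graph $\mathcal{G}^*_2$ is itself an MPDAG (or even that it admits consistent DAG extensions).
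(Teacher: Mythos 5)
Your proposal is correct and follows essentially the same route as the paper: both verify the three MPDAG-defining conditions (shared skeleton, forced directions for directed edges, both orientations realizable for undirected edges) edge by edge, using the decomposition map of Theorem~\ref{thm:MEC decomposition} together with Lemma~\ref{lemma:inverse decomposition of MEC} to reduce orientation questions over the restricted class to the individual B-components. Your packaging of this as the restricted product bijection between $\mathcal{K}$ and $[\mathcal{B}_{21}]\times\prod_{i=2}^{k}[\mathcal{B}_{1i}]$ is a slightly tidier statement of what the paper accomplishes by explicitly constructing witnessing DAGs via $\phi^{-1}_{\mathcal{G}^*_1}$, but the substance is identical.
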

\begin{proof}
Let $\mathcal{B}_{11},\mathcal{B}_{12},\dots,\mathcal{B}_{1k}$ be all B-components of $\mathcal{G}^*_1$ with vertice sets $\mathbf{V}_1,\mathbf{V}_2,\dots,\mathbf{V}_k.$ Let $\mathcal{S}$ denote the restricted Markov equivalence class of $[\mathcal{G}^*_1]$ with background knowledge $X\to Y.$ By the definition of MPDAG, it suffices to show that (i) $\mathcal{G}^*_2$ has the same skeleton with every DAG in $\mathcal{S}$; (ii) For every directed edge in $\mathcal{G}^*_2,$ it has the same direction for every DAG in $\mathcal{S}$; (iii) For every undirected edge $A-B$ in $\mathcal{G}^*_2,$ there exists two DAGs $\mathcal{G}_1,\mathcal{G}_2$ in $\mathcal{S}$ such that $A\to B$ in $\mathcal{G}_1$ and $A\leftarrow B$ in $\mathcal{G}_2.$

We first show (i). By the construction of $\mathcal{B}_{21},$ $\mathcal{B}_{21}$ has the same skeleton with $\mathcal{B}_{11}.$ So $\mathcal{G}^*_2$ has the same skeleton with $\mathcal{G}^*_1,$ and has the same skeleton with every DAG in $[\mathcal{G}^*_1].$ Therefore, $\mathcal{G}^*_2$ has the same skeleton with every DAG in $\mathcal{S}.$

Then we show (ii). In the following proof, let $\phi_{\mathcal{G}^*_1}$ be the decomposition map of $\mathcal{G}^*_1.$ For every directed edge $A\to B$ in $\mathcal{G}^*_2,$ either both $A$ and $B$ are in $\mathbf{V}_p$ for some $1\le p \le k, $ or $A\in \mathbf{V}_p,B\in \mathbf{V}_q$ for some $p\ne q,1\le p,q\le k.$ If $A\in \mathbf{V}_p,B\in \mathbf{V}_q$ for some $p\ne q$, by the construction of $\mathcal{G}^*_2,$ $A\to B$ is also in $\mathcal{G}^*_1,$ so $A\to B$ has the same direction in every DAG in $\mathcal{S}.$ Now suppose that both $A$ and $B$ are in $\mathbf{V}_p$ for some $1\le p \le k.$ If $p > 1,$ by construction of $\mathcal{G}^*_2,$ $A\to B$ is also in $\mathcal{G}^*_1,$ so $A\to B$ has the same direction in every DAG in $\mathcal{S}.$ If $p=1,$ then $A\to B$ in $\mathcal{B}_{21}.$ For every DAG $\mathcal{G}\in \mathcal{S},$ we have $\mathcal{G}\in [\mathcal{G}^*_1]$ and $X\to Y$ in $\mathcal{G}.$ Denote $(\mathcal{G}_{11},\mathcal{G}_{12},\dots,\mathcal{G}_{1k})=\phi_{\mathcal{G}^*_1}(\mathcal{G}).$ Then by Lemma~\ref{lemma:inverse decomposition of MEC}, $\mathcal{G}_{11}\in [\mathcal{B}_{11}].$ Since $X\to Y$ in $\mathcal{G}$ and $\mathcal{G}_{11}$ is the induced subgraph of $\mathcal{G}$ over $\mathbf{V}_1,$ we have $X\to Y$ in $\mathcal{G}_{11},$ so $\mathcal{G}_{11}\in [\mathcal{B}_{21}].$ Since $A\to B$ in $\mathcal{B}_{21},$ we have $A\to B$ in $\mathcal{G}_{11},$ so $A\to B$ in $\mathcal{G}.$

Finally we show (iii). Let $A-B$ be an undirected edge in $\mathcal{G}^*_2.$ By the construction of $\mathcal{B}_{21},$ every undirected edge in $\mathcal{B}_{21}$ is also undirected in $\mathcal{B}_{11}.$ So every undirected edge in $\mathcal{G}^*_2$ is also undirected in $\mathcal{G}^*_1.$ Therefore, $A-B$ is undirected in $\mathcal{G}^*_1.$ Let $\mathcal{G}_1,\mathcal{G}_2$ be two DAGs in $[\mathcal{G}^*_1]$ such that $A\to B$ in $\mathcal{G}_1$ and $A\leftarrow B$ in $\mathcal{G}_2.$ Denote $(\mathcal{G}_{11},\mathcal{G}_{12},\dots,\mathcal{G}_{1k})=\phi_{\mathcal{G}^*_1}(\mathcal{G}_1)$ and $(\mathcal{G}_{21},\mathcal{G}_{22},\dots,\mathcal{G}_{2k})=\phi_{\mathcal{G}^*_1}(\mathcal{G}_2).$ 
Since $A-B$ is undirected in $\mathcal{G}^*_1,$ both $A,B$ are in $\mathbf{V}_p$ for some $1\le p\le k.$ If $p > 1,$ let $\mathcal{G}_{31}$ be a DAG in $[\mathcal{B}_{21}]$ over vertice set $\mathbf{V}_1.$ Let $\mathcal{G}_3=\phi^{-1}_{\mathcal{G}^*_1}(\mathcal{G}_{31},\mathcal{G}_{12},\mathcal{G}_{13},\dots,\mathcal{G}_{1k})$ and $\mathcal{G}_4=\phi^{-1}_{\mathcal{G}^*_1}(\mathcal{G}_{31},\mathcal{G}_{22},\mathcal{G}_{23},\dots,\mathcal{G}_{2k}).$ Then by Theorem~\ref{thm:MEC decomposition}, $\mathcal{G}_3,\mathcal{G}_4\in [\mathcal{G}^*_1].$ Since the induced subgraph of both $\mathcal{G}_3$ and $\mathcal{G}_4$ over $\mathbf{V}_1$ are $\mathcal{G}_{31},$ which is in $[\mathcal{B}_{21}],$ we have $X\to Y$ in both $\mathcal{G}_3$ and $\mathcal{G}_4$, so $\mathcal{G}_3,\mathcal{G}_4\in \mathcal{S}.$ Since $p>1,$ by the construction of $\mathcal{G}_3$ and $\mathcal{G}_4$ we have $A\to B$ in $\mathcal{G}_3$ and $A\leftarrow B$ in $\mathcal{G}_4.$

If $p=1,$ then $A-B$ in $\mathcal{B}_{21}.$ So there exists $\mathcal{G}_{21},\mathcal{G}_{31}$ in $[\mathcal{B}_{21}]$ over vertice set $\mathbf{V}_1$ such that $A\to B$ in $\mathcal{G}_{21}$ and $A\leftarrow B$ in $\mathcal{G}_{31}.$ Let $\mathcal{G}_1$ be a DAG in $\mathcal{G}^*_1$ and denote $(\mathcal{G}_{11},\mathcal{G}_{12},\dots,\mathcal{G}_{1k})=\phi_{\mathcal{G}^*_1}(\mathcal{G}_1).$ Let $\mathcal{G}_2=\phi^{-1}_{\mathcal{G}^*_1}(\mathcal{G}_{21},\mathcal{G}_{12},\mathcal{G}_{13},\dots,\mathcal{G}_{1k})$ and $\mathcal{G}_3=\phi^{-1}_{\mathcal{G}^*_1}(\mathcal{G}_{31},\mathcal{G}_{12},\mathcal{G}_{13},\dots,\mathcal{G}_{1k}).$ By Theorem~\ref{thm:MEC decomposition}, we have $\mathcal{G}_2,\mathcal{G}_3\in [\mathcal{G}^*_1].$ Moreover, since $\mathcal{G}_{21},\mathcal{G}_{31}\in [\mathcal{B}_{21}],$ we have $X\to Y$ in both $\mathcal{G}_{21}$ and $\mathcal{G}_{31},$ so $X\to Y$ in both $\mathcal{G}_2$ and $\mathcal{G}_3,$ so $\mathcal{G}_2,\mathcal{G}_3\in \mathcal{S}.$ Since $A\to B$ in $\mathcal{G}_{21}$ and $A\leftarrow B$ in $\mathcal{G}_{31},$ we have $A\to B$ in $\mathcal{G}_2$ and $A\leftarrow B$ in $\mathcal{G}_3.$ 
\end{proof}

Leveraging these theorems, we can give a proof for Theorem~\ref{thm:correct alg non-an}.

\textbf{Restatement of Theorem~\ref{thm:correct alg non-an}.} \textit{Let $\mathcal{G}^*$ be the MPDAG under background knowledge $\mathcal{B}$ consisting of direct causal information $\mathcal{B}_1$ and non-ancestral information $\mathcal{B}_2.$ Let $\mathcal{G}\in [\mathcal{G}^*]$ be the true underlying DAG and $\mathcal{D}$ be i.i.d. observations generated from a distribution Markovian and faithful with respect to $\mathcal{G}.$ Let $X$ be a target node in $\mathcal{G}.$ Suppose all conditional independencies are correctly checked, and let $G$ be the output of Algorithm~\ref{alg:local struct:direct and non-an} with input $X,\mathcal{D},\mathcal{B}$.
Then for each $Z$ connected with $X$ by an undirected path in $\mathcal{G}^*,$ including $X$ itself, we have $pa(Z,G)=pa(Z,\mathcal{G}^*),ch(Z,G)=ch(Z,\mathcal{G}^*),sib(Z,G)=sib(Z,\mathcal{G}^*).$}

\begin{proof}
Let $\mathcal{G}^*_0$ be the MPDAG under observational data and background knowledge $\mathcal{B}_1,$ and $G_0$ be the variable $G$ after Step 1 in Algorithm~\ref{alg:local struct:direct and non-an}, i.e., the output of Algorithm~\ref{alg:mb-by-mb in MPDAG} under input $X,\mathcal{D},\mathcal{B}_1.$  By Theorem~\ref{thm:mb-by-mb}, for each $Z$ connected with $X$ by an undirected path in $\mathcal{G}^*_0,$ including $X$ itself, we have $pa(Z,G_0)=pa(Z,\mathcal{G}^*_0), ch(Z,G_0)=ch(Z,\mathcal{G}^*_0), sib(Z,G_0)=sib(Z,\mathcal{G}^*_0).$

Write $\mathcal{B}_2=\{(N_j,T_j)\}_{j=1}^{k},$ which implies that $N_j$ is not a cause of $T_j$ in the underlying DAG $\mathcal{G}$ for $j=1,2,\dots,k.$ For each $j=1,2,\dots,k,$ let $\mathcal{G}^*_j$ denote the MPDAG representing the restricted Markov equivalence class of $[\mathcal{G}^*_{j-1}]$ with a new background knowledge that $N_j$ is not a cause of $T_j.$ Let $G_j$ denote the PDAG $G$ after the $j$-th iteration of Step 2 to Step 12 in Algorithm~\ref{alg:local struct:direct and non-an}. By induction, it suffices to show that for each $j=1,2,\dots,k,$ if for each $Z$ connected with $X$ by an undirected path in $\mathcal{G}^*_{j-1},$ including $X$ itself, we have $pa(Z,G_{j-1})=pa(Z,\mathcal{G}^*_{j-1}), ch(Z,G_{j-1})=ch(Z,\mathcal{G}^*_{j-1}), sib(Z,G_{j-1})=sib(Z,\mathcal{G}^*_{j-1}),$ then for each $Z$ connected with $X$ by an undirected path in $\mathcal{G}^*_j,$ including $X$ itself, we have $pa(Z,G_j)=pa(Z,\mathcal{G}^*_j), ch(Z,G_j)=ch(Z,\mathcal{G}^*_j), sib(Z,G_j)=sib(Z,\mathcal{G}^*_j).$ By symmetry, we only need to consider $j=1.$

First suppose that $N_1$ is not connected with $X$ by an undirected path in $G_0.$ Then $N_1$ is not connected with $X$ by an undirected path in $\mathcal{G}^*_0,$ so $N_1$ is not in the B-component containing $X$ in $\mathcal{G}^*_0.$ Let $\mathbf{C}_{\mathcal{G}^*_0}(N_1,T_1)$ denote the critical set of $N_1$ with respect to $T_1$ in $\mathcal{G}^*_0.$ By Lemma 2 in~\citet{fang2020ida} and Lemma 4.3 in~\citet{zuo2022counterfactual}, the fact that $N_1$ is not a cause of $T_1$ is equivalent to that for every $C\in \mathbf{C}_{\mathcal{G}^*_0}(N_1,T_1),$ we have $C\to N_1$ in the true underlying DAG $\mathcal{G}.$ Therefore, $\mathcal{G}^*_1$ is the MPDAG representing the restricted Markov equivalence class $[\mathcal{G}^*_0]$ with background knowledge $C\to N_1$ for every $C\in \mathbf{C}_{\mathcal{G}^*_0}(N_1,T_1).$ Let $\mathcal{B}_{01}$ be the B-component of $\mathcal{G}^*_0$ containing $X$ over vertice set $\mathbf{V}_1,$ and $\mathcal{B}_{02}$ be the B-component of $\mathcal{G}^*_0$ containing $N_1$ over vertice set $\mathbf{V}_2.$ Let $\mathcal{B}_{12}$ be the MPDAG constructed by adding background knowledge $C\to N_1$ for every $C\in \mathbf{C}_{\mathcal{G}^*_0}(N_1,T_1)$ to $\mathcal{B}_{02}.$ Then by sequentially applying Theorem~\ref{thm:local orientation}, $\mathcal{G}^*_1$ is obtained by replacing the induced subgraph of $\mathcal{G}^*_0$ over $\mathbf{V}_2$ with $\mathcal{B}_{02}.$ Therefore, the induced subgraph of $\mathcal{G}^*_1$ over $\mathbf{V}_1$ is identical to $\mathcal{B}_{01}.$ So the B-component of $\mathcal{G}^*_1$ containing $X$ is also $\mathcal{B}_{01}.$ Since all directed edges in $\mathcal{G}^*_0$ have the same direction in $\mathcal{G}^*_1,$ for each $Z$ connected with $X$ by an undirected path in $\mathcal{G}^*_1,$ including $X$ itself, we have $pa(Z,\mathcal{G}^*_1)=pa(Z,\mathcal{G}^*_0), pa(Z,\mathcal{G}^*_1)=ch(Z,\mathcal{G}^*_0), pa(Z,\mathcal{G}^*_1)=sib(Z,\mathcal{G}^*_0).$ By Steps 3 to 11 in Algorithm~\ref{alg:local struct:direct and non-an}, we have $G_1=G_0,$ so we also have $pa(Z,G_1)=pa(Z,\mathcal{G}^*_1), ch(Z,G_1)=ch(Z,\mathcal{G}^*_1), sib(Z,G_1)=sib(Z,\mathcal{G}^*_1).$

Then suppose that $N_1$ is connected with $X$ by an undirected path in $G_0.$ Then $N_1$ is connected with $X$ by an undirected path in $\mathcal{G}^*_0,$ so $N_1$ and $X$ are in the same B-component of $\mathcal{G}^*_0$ over vertice set $\mathbf{V}_1,$ denoted by $\mathcal{B}_{01}.$ By Theorem~\ref{thm:mb-by-mb}, we have $pa(N_1,G_0)=pa(N_1,\mathcal{G}^*_0), ch(N_1,G_0)=ch(N_1,\mathcal{G}^*_0), sib(N_1,G_0)=sib(N_1,\mathcal{G}^*_0).$ Let $\mathcal{Q}$ be the set of all $\mathbf{Q}\subseteq sib(X,\mathcal{G}^*_0)$ such that orienting $\mathbf{Q}\to X$ and $X\to sib(X,\mathcal{G}^*_0)\setminus \mathbf{Q}$ does not introduce any v-structure collided on $X$ or any directed triangle containing $X.$ Then after Steps 4 to 9 in Algorithm~\ref{alg:local struct:direct and non-an}, we have $\mathrm{candC}=\cap \left\{ \mathbf{Q}\in \mathcal{Q} \mid X\perp Y \mid pa(X,\mathcal{G}^*_0)\cup \mathbf{Q} \right\}.$ By Lemma~\ref{lemma:identify critical set}, we have $\mathrm{candC}=an(\mathbf{C}_{\mathcal{G}^*_0}(N_1,T_1),\mathcal{G}^*_0)\cap sib(X,\mathcal{G}^*_0).$ Therefore, orienting $\mathrm{candC} \to N_1$ in Step 10 in Algorithm~\ref{alg:local struct:direct and non-an} is equivalent to orienting $an(\mathbf{C}_{\mathcal{G}^*_0}(N_1,T_1),\mathcal{G}^*_0)\cap sib(X,\mathcal{G}^*_0) \to N_1.$ The background knowledge that $N_1$ is not a cause of $T_1$ implies $N_1$ is not a definite cause of $T_1$ in $\mathcal{G}^*_0.$ So by Theorem 4.5 in~\citet{zuo2022counterfactual}, we have $\mathbf{C}_{\mathcal{G}^*_0}(N_1,T_1)\subseteq sib(N_1,\mathcal{G}^*_0).$ For every $C\in \mathbf{C}_{\mathcal{G}^*_0}(N_1,T_1)$ and $D\in pa(C,\mathcal{G}^*_0)\cap sib(N_1,\mathcal{G}^*_0),$ if we orient $C\to N_1$ as a new background knowledge, by $D\to C\to N_1$, $D-N_1$ and Meek's rule 2, we should orient $D\to N_1$ to get the MPDAG $\mathcal{G}^*_1.$ Inductively applying this fact we also should orient $an(\mathbf{C}_{\mathcal{G}^*_0}(N_1,T_1),\mathcal{G}^*_0)\cap sib(X,\mathcal{G}^*_0) \to N_1.$ Conversely, if we orient $an(\mathbf{C}_{\mathcal{G}^*_0}(N_1,T_1),\mathcal{G}^*_0)\cap sib(X,\mathcal{G}^*_0) \to N_1,$ then we have oriented $\mathbf{C}_{\mathcal{G}^*_0}(N_1,T_1)\to N_1$ by the fact that $\mathbf{C}_{\mathcal{G}^*_0}(N_1,T_1)\subseteq an(\mathbf{C}_{\mathcal{G}^*_0}(N_1,T_1),\mathcal{G}^*_0)$ and $\mathbf{C}_{\mathcal{G}^*_0}(N_1,T_1)\subseteq sib(X,\mathcal{G}^*_0).$ Therefore, Step 10 in Algorithm 2 is equivalent to orienting $\mathbf{C}_{\mathcal{G}^*_0}(N_1,T_1)\to N_1$ and applying Meek's rules in $G_0.$

In this case, let $\mathcal{B}_{01}$ be the B-component of $\mathcal{G}^*_0$ including $N_1$ and $X,$ with vertice set $\mathbf{V}_1.$ Let $\mathcal{B}_{01},\mathcal{B}_{02},\dots,\mathcal{B}_{0k}$ be all B-components of $\mathcal{G}^*_0,$ with vertice sets $\mathbf{V}_1,\mathbf{V}_2,\dots,\mathbf{V}_k,$ respectively. Then $\mathbf{C}_{\mathcal{G}^*_0}(N_1,T_1)\subseteq sib(X,\mathcal{G}^*_0)\subseteq \mathbf{V}_1.$ Let $\mathcal{B}_{11}$ be the MPDAG over vertice set $\mathbf{V}_1,$ which represents the restricted Markov equivalence class of $[\mathcal{B}_{01}]$ with background knowledge $\mathbf{C}_{\mathcal{G}^*_0}(N_1,T_1)\to N_1.$ By applying Theorem~\ref{thm:local orientation} inductively, we know that $\mathcal{G}^*_1$ can be constructed by replacing the induced subgraph of $\mathcal{G}^*_0$ over $\mathbf{V}_1$ by $\mathcal{B}_{11}.$ 

Let $G_{01}$ be the induced subgraph of $G_0$ over vertice set $\mathbf{V}_1,$ then $G_{01}$ is identical to $\mathcal{B}_{01}.$ By orienting $\mathbf{C}_{\mathcal{G}^*_0}(N_1,T_1)\to N_1$ and applying Meek's rules in $G_0$ to get $G_1,$ the induced subgraph of $G_1$ over $\mathbf{V}_1$ is identical to $\mathcal{B}_{11}$ by Theorem 4 in~\citet{meek1995causal}. Since all directed edges in $G_0$ have the same direction in $G_1,$ we can conclude that for every $Z$ connected with $X$ in $\mathcal{G}^*_1,$ including $X$ itself, we have $pa(Z,\mathcal{G}^*_1)=pa(Z,G_1), ch(Z,\mathcal{G}^*_1)=ch(Z,G_1), sib(Z,\mathcal{G}^*_1)=sib(Z,G_1).$

\end{proof}

\subsection{Proof for Theorem~\ref{thm:definite non-cause}}
\begin{proof}
If $X$ is a definite non-cause of $Y,$ then for any DAG $G$ represented by $M$, $X\not\in an(Y,G).$ By Theorem 1 in~\citep{fang2020ida}, there exists a DAG $G$ represented by $M$ such that $pa(X,G)=pa(X,M).$ By the Markov property we have $X\perp Y|pa(X,G),$ so $X\perp Y|pa(X,M).$

Conversely, if $X$ is not a definite non-cause of $Y,$ then there exists a DAG $G$ represented by $M$ such that $X\in an(Y,G).$ Let $\pi$ be a causal path in $G$ from $X$ to $Y,$ then each node on $\pi$ is not in $pa(X,M),$ otherwise it is also in $pa(X,G)$ and it will form a directed cycle in $G.$ So $X\not\perp Y|pa(X,M).$
\end{proof}

\subsection{Proof for Theorem~\ref{thm:explicit cause}}

\begin{figure}[ht]
    \centering
    \begin{tikzpicture}[node distance=1.5cm]
        % Nodes
        \node at (7, 4) [draw] (lemma11) {Lemma~\ref{lemma:explicit if}};
        \node at (7, 3) [draw] (lemma12) {Lemma~\ref{lemma:explicit only if adjacent}};
        \node at (7, 2) [draw] (lemma13) {Lemma~\ref{lemma:explicit only if causal}};
        \node at (7, 1) [draw] (lemma17) {Lemma~\ref{lemma:explicit only if noncausal}};
        \node at (7, 0) [draw] (lemma18) {Lemma~\ref{lemma:explicit only if pa or sib}};
        \node at (1, 1) [draw] (lemma14) {Lemma~\ref{lemma:explicit oin has collider}};
        \node at (4, 3) [draw] (lemma15) {Lemma~\ref{lemma:explicit only if noncausal unsheilded}};
        \node at (4, 2) [draw] (lemma16) {Lemma~\ref{lemma:explicit oin pre-causal}};
        \node at (10, 2) [draw] (theorem4) {Theorem~\ref{thm:explicit cause}};
        
        % Arrows
        \draw[->] (lemma11) -- (theorem4);
        \draw[->] (lemma12) -- (theorem4);
        \draw[->] (lemma13) -- (theorem4);
        \draw[->] (lemma17) -- (theorem4);
        \draw[->] (lemma18) -- (theorem4);
        \draw[->] (lemma14) -- (lemma15);
        \draw[->] (lemma14) -- (lemma16);
        \draw[->] (lemma14) -- (lemma17);
        \draw[->] (lemma15) -- (lemma17);
        \draw[->] (lemma16) -- (lemma17);
    \end{tikzpicture}
    \caption{Proof structure of Theorem~\ref{thm:explicit cause}.}
\label{fig:proof for thm4}
\end{figure}
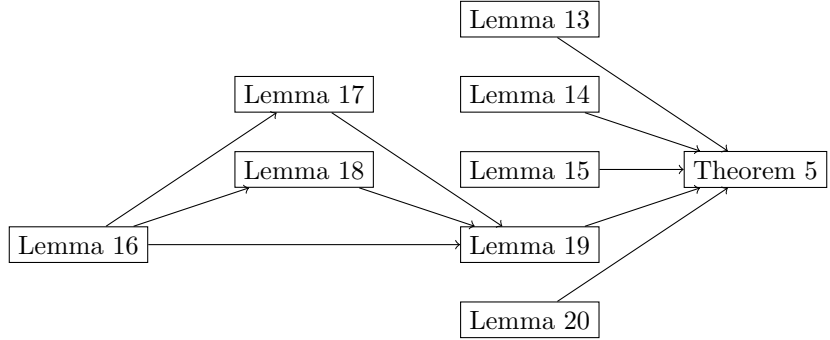

Figure~\ref{fig:proof for thm4} shows how lemmas fit together to prove Theorem~\ref{thm:explicit cause}. The first lemma shows the necessity of Theorem~\ref{thm:explicit cause}.

\begin{lemma}
\label{lemma:explicit if}
Let $X$ be an explicit cause of $Y$ in an MPDAG $\mathcal{G}^*,$ then $X \not\perp Y|pa(X,\mathcal{G}^*)\cup sib(X,\mathcal{G}^*).$
\end{lemma}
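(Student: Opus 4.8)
The plan is to produce an explicit witness path for the d-connection. Since $X$ is an explicit cause of $Y$, we have $X\in an(Y,\mathcal{G}^*)$, so there is a causal path $p=\langle X=V_1,V_2,\dots,V_k=Y\rangle$ in $\mathcal{G}^*$; choosing $p$ of minimal length I may assume its nodes are distinct. By Lemma~\ref{lemma:mpdag def} every edge $V_i\to V_{i+1}$ of $p$ keeps its orientation in any DAG $\mathcal{G}\in[\mathcal{G}^*]$, so $p$ is a causal path in $\mathcal{G}$ as well. It therefore suffices to show that $p$ is d-connecting given $\mathbf{Z}:=pa(X,\mathcal{G}^*)\cup sib(X,\mathcal{G}^*)$ in $\mathcal{G}$, which then yields $X\not\perp Y\mid pa(X,\mathcal{G}^*)\cup sib(X,\mathcal{G}^*)$.

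The key step is to check that $\mathbf{Z}$ contains no node of $p$. Trivially $X\notin\mathbf{Z}$. For $i\ge 2$, the subpath $\langle V_1,\dots,V_i\rangle$ is causal in $\mathcal{G}^*$, so $X\in an(V_i,\mathcal{G}^*)$ and $X\ne V_i$. If $V_i\in\mathbf{Z}$, then $V_i$ is adjacent to $X$ in $\mathcal{G}^*$, so Lemma~\ref{lemma:partial cycle} forces $X\to V_i\in\mathcal{G}^*$; but $V_i\in pa(X,\mathcal{G}^*)$ would mean $V_i\to X\in\mathcal{G}^*$, and $V_i\in sib(X,\mathcal{G}^*)$ would mean $X-V_i\in\mathcal{G}^*$, each contradicting $X\to V_i$. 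Hence $V_i\notin\mathbf{Z}$ for every $i\ge 2$; in particular $Y=V_k\notin\mathbf{Z}$, so the statement is non-degenerate.

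Finally, $p$ is causal, hence every internal node $V_2,\dots,V_{k-1}$ is a non-collider on $p$ and $p$ has no colliders; by the previous step none of these internal nodes lies in $\mathbf{Z}$. Therefore $p$ is not blocked by $\mathbf{Z}$ in $\mathcal{G}$, so $X$ and $Y$ are d-connected given $\mathbf{Z}$, proving the claim. I do not expect a real obstacle here: the only point needing care is the exclusion of an internal node of the causal path from $pa(X,\mathcal{G}^*)\cup sib(X,\mathcal{G}^*)$, and this is exactly the ``no partially directed cycle through an ancestor'' fact packaged in Lemma~\ref{lemma:partial cycle}.
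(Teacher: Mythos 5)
Your proposal is correct and follows essentially the same route as the paper's proof: exhibit the directed path from $X$ to $Y$ in $\mathcal{G}^*$, note it stays causal in every DAG of $[\mathcal{G}^*]$ via Lemma~\ref{lemma:mpdag def}, and use Lemma~\ref{lemma:partial cycle} to exclude its non-initial nodes from $pa(X,\mathcal{G}^*)\cup sib(X,\mathcal{G}^*)$, so the collider-free path is open. No gaps.
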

\begin{proof}
Let $\pi$ be a directed path in $\mathcal{G}^*$ from $X$ to $Y,$ and $\mathcal{G}$ be any DAG in $[\mathcal{G}^*].$ By Lemma~\ref{lemma:mpdag def}, $\pi$ is also directed in $\mathcal{G}.$ For any node $Z\neq X$ on $\pi,$ if $Z$ and $X$ are adjacent, by Lemma~\ref{lemma:partial cycle} we have $X\to Z\in \mathcal{G}^*,$ so $Z\not\in pa(X,\mathcal{G}^*)\cup sib(X,\mathcal{G}^*).$ Since there is no collider on $\pi$ in $\mathcal{G},$ $\pi$ is open given $pa(X,\mathcal{G}^*)\cup sib(X,\mathcal{G}^*).$ Therefore, $X \not\perp Y|pa(X,\mathcal{G}^*)\cup sib(X,\mathcal{G}^*).$
\end{proof}

In the following lemmas, we prove the sufficiency of Theorem~\ref{thm:explicit cause}. Therefore, we assume that $X$ is not an explicit cause of $Y$ in $\mathcal{G}^*,$ and we want to show that $X \perp Y|pa(X,\mathcal{G}^*)\cup sib(X,\mathcal{G}^*),$ which is equivalent to that any path $\pi$ between $X$ and $Y$ are blocked by $pa(X,\mathcal{G}^*)\cup sib(X,\mathcal{G}^*).$ Let $\mathcal{G}$ be any DAG represented by $\mathcal{G}^*.$
Lemma~\ref{lemma:explicit only if adjacent} considers the case that $\pi$ is an edge between $X$ and $Y.$ Lemma~\ref{lemma:explicit only if causal} considers the case that $\pi$ is causal in $\mathcal{G}$ with its second node in $ch(X,\mathcal{G}^*).$ Lemma~\ref{lemma:explicit only if noncausal} considers the case that $\pi$ is non-causal in $\mathcal{G}$ with its second node in $ch(X,\mathcal{G}^*).$ Lemma~\ref{lemma:explicit only if pa or sib} considers the case that the second node in $\pi$ is in $pa(X,\mathcal{G}^*)\cup sib(X,\mathcal{G}^*).$ Combining all this lemmas, we can conclude that sufficiency of Theorem~\ref{thm:explicit cause} holds.

\begin{lemma}
\label{lemma:explicit only if adjacent}
Let $X,Y$ be two distinct nodes that are not adjacent in an MPDAG $\mathcal{G}^*.$ Suppose $X$ is not an explicit cause of $Y$ in $\mathcal{G}^*.$ Let $\mathcal{G}$ by any DAG in $[\mathcal{G}^*].$ If $X,Y$ are adjacent, then the path $\pi$ as the edge between $X$ and $Y$ is blocked by $pa(X,\mathcal{G}^*)\cup sib(X,\mathcal{G}^*).$
\end{lemma}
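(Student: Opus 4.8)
The plan is to treat Lemma~\ref{lemma:explicit only if adjacent} as the degenerate base case of the sufficiency argument for Theorem~\ref{thm:explicit cause}: the only path between $X$ and $Y$ consisting of a single edge arises when $X$ and $Y$ are adjacent, and the task is just to show that this edge, viewed as the path $\pi=\langle X,Y\rangle$, is blocked by $\mathbf{S}:=pa(X,\mathcal{G}^*)\cup sib(X,\mathcal{G}^*)$. Since $\mathcal{G}$ and $\mathcal{G}^*$ share a skeleton, "$X,Y$ adjacent" is the same in both graphs, so I would work entirely in $\mathcal{G}^*$.

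First I would translate the hypothesis that $X$ is not an explicit cause of $Y$ into the graphical statement $X\notin an(Y,\mathcal{G}^*)$, using the equivalence recorded immediately after the definition of explicit cause (an explicit cause is exactly an ancestor in the MPDAG). In particular $\mathcal{G}^*$ cannot contain the arrow $X\to Y$, since that single directed edge would already be a causal path from $X$ to $Y$ in $\mathcal{G}^*$, forcing $X\in an(Y,\mathcal{G}^*)$.

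Next, because $X$ and $Y$ are adjacent in $\mathcal{G}^*$ and the edge between them is not $X\to Y$, it is either $Y\to X$ or $X-Y$; hence $Y\in pa(X,\mathcal{G}^*)$ in the first case and $Y\in sib(X,\mathcal{G}^*)$ in the second, so in all cases $Y\in\mathbf{S}$. (If an explicit citation is wanted this is essentially the contrapositive use of Lemma~\ref{lemma:partial cycle}, but only the definition of ancestor is really needed.) Finally, $\pi$ is the length-one path $\langle X,Y\rangle$ whose endpoint $Y$ lies in the conditioning set $\mathbf{S}$, so $\pi$ is blocked by $\mathbf{S}$; here I would spell out once the convention that a path is blocked as soon as the conditioning set contains one of its endpoints — this is forced by reading $X\perp Y\mid\mathbf{S}$ through conditional independence, since $Y$ is deterministic given any set containing it, and it is precisely why placing $Y$ into the conditioning set in Theorem~\ref{thm:explicit cause} introduces no circularity.

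I do not expect a real obstacle: this is a bookkeeping step. The only points that need care are (i) invoking the explicit-cause $\Leftrightarrow$ ancestor-in-$\mathcal{G}^*$ equivalence rather than arguing DAG by DAG, and (ii) being precise about what "blocked" means for a path of length one, given that the preliminaries phrase the blocking condition only over non-endpoint vertices. I would dispatch (ii) either by stating the endpoint convention explicitly, or equivalently by reformulating the conclusion as "$Y\in pa(X,\mathcal{G}^*)\cup sib(X,\mathcal{G}^*)$" and observing that this already forces $X\perp Y\mid pa(X,\mathcal{G}^*)\cup sib(X,\mathcal{G}^*)$ along this trivial path.
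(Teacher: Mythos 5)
Your proposal is correct and follows essentially the same route as the paper's proof: from the hypothesis that $X$ is not an explicit cause of $Y$, the edge must be $Y\to X$ or $X-Y$ in $\mathcal{G}^*$, hence $Y\in pa(X,\mathcal{G}^*)\cup sib(X,\mathcal{G}^*)$ and the single-edge path is blocked. The paper phrases the final step as ``$Y$ is a non-collider on $\pi$'' while you make the endpoint-blocking convention explicit, but this is the same argument with a little extra care rather than a different approach.
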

\begin{proof}
Since $X$ is not an explicit cause of $Y$ in $\mathcal{G}^*,$ we have $\pi=X\leftarrow Y$ or $\pi=X-Y$ in $\mathcal{G}^*,$ so $Y\in pa(X,\mathcal{G}^*)\cup sib(X,\mathcal{G}^*)$ and $Y$ is a non-collider in $\pi,$ so $\pi$ is blocked by $pa(X,\mathcal{G}^*)\cup sib(X,\mathcal{G}^*).$ 
\end{proof}

\begin{lemma}
\label{lemma:explicit only if causal}
Let $X,Y$ be two distinct nodes that are not adjacent in an MPDAG $\mathcal{G}^*.$ Suppose $X$ is not an explicit cause of $Y$ in $\mathcal{G}^*.$ Let $\mathcal{G}$ by any DAG in $[\mathcal{G}^*].$ If there exists a causal path $\pi=\langle X=v_0,v_1,\dots,v_n,v_{n+1}=Y\rangle$ in $\mathcal{G}$ from $X$ to $Y$ such that $X\to v_1\in \mathcal{G}^*,$ then $\pi$ is blocked by $pa(X,\mathcal{G}^*)\cup sib(X,\mathcal{G}^*).$ 
\end{lemma}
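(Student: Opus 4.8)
The plan is a shortest-counterexample argument. Abbreviate $\mathbf{Z}=pa(X,\mathcal{G}^*)\cup sib(X,\mathcal{G}^*)$. Since $\pi$ is causal in $\mathcal{G}$ it contains no collider, so $\pi$ is blocked by $\mathbf{Z}$ if and only if some non-endpoint $v_i$ ($1\le i\le n$) lies in $\mathbf{Z}$; moreover the prefix $v_0\to\cdots\to v_i$ of $\pi$ gives $X\in an(v_i,\mathcal{G})$, so acyclicity of $\mathcal{G}$ rules out $v_i\to X$, and hence no internal node of $\pi$ can be a parent of $X$ — it would in fact suffice to exhibit an internal $v_i\in sib(X,\mathcal{G}^*)$. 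Suppose the lemma fails; fix $\mathcal{G}^*,X,Y,\mathcal{G}$ for which it fails and take, among all causal paths in $\mathcal{G}$ from $X$ to $Y$ whose first edge is $X\to v_1\in\mathcal{G}^*$ and which are not blocked by $\mathbf{Z}$, one with the fewest internal nodes; call it $\pi=\langle X=v_0,v_1,\dots,v_n,v_{n+1}=Y\rangle$.

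The first step is to show $\pi$ is chordless in $\mathcal{G}$. If $v_i$ and $v_j$ are adjacent in $\mathcal{G}$ with $i<j-1$, then $v_i\to v_j\in\mathcal{G}$ by Corollary~\ref{cor:partial cycle} (as $v_i\in an(v_j,\mathcal{G})$ along $\pi$), so deleting $v_{i+1},\dots,v_{j-1}$ yields a causal path in $\mathcal{G}$ from $X$ to $Y$ that is again unblocked by $\mathbf{Z}$, because its internal nodes are internal nodes of $\pi$ and hence lie outside $\mathbf{Z}$. If $i\ge1$ this shortcut still begins $X\to v_1\in\mathcal{G}^*$, contradicting minimality. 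If $i=0$ then $X\to v_j\in\mathcal{G}$: when $j=n+1$ this makes $X$ and $Y$ adjacent, contrary to hypothesis; when $j\le n$, the node $v_j$ is adjacent to $X$ in $\mathcal{G}^*$ but is neither a parent of $X$ (acyclicity) nor a sibling of $X$ (unblockedness), hence $X\to v_j\in\mathcal{G}^*$, and the shortcut $\langle X,v_j,\dots,Y\rangle$ again contradicts minimality. So $\pi$ has no chord in $\mathcal{G}$, and therefore none in $\mathcal{G}^*$, which has the same skeleton.

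The second step finishes with Meek's rule R1. Let $k$ be the largest index with $X\to v_1\to\cdots\to v_k$ a directed path in $\mathcal{G}^*$; this is well defined and $k\ge1$ by the hypothesis $X\to v_1\in\mathcal{G}^*$. If $k=n+1$ then $X\to\cdots\to Y$ is a directed path in $\mathcal{G}^*$, i.e. $X\in an(Y,\mathcal{G}^*)$, contradicting that $X$ is not an explicit cause of $Y$; hence $k\le n$. By maximality of $k$, the edge between $v_k$ and $v_{k+1}$ is not $v_k\to v_{k+1}$ in $\mathcal{G}^*$, and since $v_k\to v_{k+1}\in\mathcal{G}$ it is not $v_{k+1}\to v_k$ in $\mathcal{G}^*$ either (Lemma~\ref{lemma:mpdag def}); so it is undirected, $v_k-v_{k+1}\in\mathcal{G}^*$. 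But $v_{k-1}\to v_k\in\mathcal{G}^*$ (with $v_{k-1}=X$ when $k=1$), and $v_{k-1},v_{k+1}$, being two steps apart on the chordless path $\pi$, are non-adjacent in $\mathcal{G}^*$; since $\mathcal{G}^*$ is an MPDAG and hence closed under Meek's rules, rule R1 applied to $v_{k-1}\to v_k-v_{k+1}$ would force $v_k\to v_{k+1}\in\mathcal{G}^*$, a contradiction. This contradiction proves the lemma.

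I expect the chord-removal bookkeeping in the first step to be the delicate part: after each shortcut one must verify that the resulting path still starts with an edge directed out of $X$ in $\mathcal{G}^*$ and is still unblocked, and the $i=0$ subcase genuinely needs the observation that an $X$-adjacent internal node lying outside $\mathbf{Z}$ must be a child of $X$ in $\mathcal{G}^*$. Once chordlessness is secured, the collision with Meek's rule R1 is immediate.
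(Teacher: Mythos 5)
Your proof is correct and follows essentially the same route as the paper's: both arguments reduce $\pi$ to a chordless causal path whose first edge is directed out of $X$ in $\mathcal{G}^*$ (using that any $X$-adjacent internal node must be a child of $X$ in $\mathcal{G}^*$, since acyclicity rules out a parent and unblockedness rules out a sibling), and then conclude that such a path would be fully directed in $\mathcal{G}^*$, contradicting that $X$ is not an explicit cause of $Y$. The only differences are organizational: you set up the reduction as a minimal-counterexample argument rather than shortcutting from the last $X$-adjacent node, and you replace the paper's appeal to Lemma~\ref{lemma: perkovic} with a direct one-step Meek R1 argument at the maximal directed prefix, which is exactly how that cited lemma is proved.
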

\begin{proof}
If there exists a node $v_k, 1<k\le n$ such that $X,v_k$ are adjacent, then by Corollary~\ref{cor:partial cycle}, $X\to v_k\in G.$ So by Lemma~\ref{lemma:mpdag def}, $v_k\not\in pa(X,\mathcal{G}^*)$. If $v_k\in sib(X,\mathcal{G}^*),$ $\pi$ is blocked by $pa(X,M)\cup sib(X,\mathcal{G}^*)$ since $v_k$ is not a collider on $\pi.$ Therefore, we only need to consider the case that for all such $k$ that $X$ and $v_k$ are adjacent, we have $v_k\in ch(X,\mathcal{G}^*).$ Let $1\le m\le n$ be the largest index that $v_m$ and $X$ are adjacent, i.e. $v_m\in ch(X,\mathcal{G}^*).$ Denote $Y=v_{n+1}.$ For any $i,j\in \{m,m+1,\dots,n,n+1\}$ and $i+1<j,$ if $v_i,v_j$ are adjacent in $\mathcal{G},$ by Corollary~\ref{cor:partial cycle} we have $v_i\to v_j\in G$. Then the path $\pi(v_m,v_i)\oplus \langle v_i,v_j\rangle \oplus \pi(v_j,Y)$ is also a causal path in $\mathcal{G}.$ Repeat this procedure until this causal path is unshielded, and denote it by $p.$ Note that all nodes on $p$ is not adjacent to $X,$ so $\langle X,v_m\rangle \oplus p$ is unshielded, hence of definite status in $\mathcal{G}^*,$ and it is causal in $G,$ So it is b-possibly causal in $\mathcal{G}^*.$ So since $v_m\in ch(X,\mathcal{G}^*)$ and by Lemma~\ref{lemma: perkovic}, it is causal in $\mathcal{G}^*.$ That contradicts with that $X$ is not an explicit cause of $Y.$ Therefore, $\pi$ is blocked by $pa(X,\mathcal{G}^*)\cup sib(X,\mathcal{G}^*).$
\end{proof}

Before proving the case that $\pi$ is non-causal in $\mathcal{G},$ we need some technical lemmas. The first lemma is trivial, which shows that in this case, $\pi$ has at least one collider.

\begin{lemma}
\label{lemma:explicit oin has collider}
Let $X,Y$ be two distinct nodes that are not adjacent in an MPDAG $\mathcal{G}^*.$ Suppose $X$ is not an explicit cause of $Y$ in $\mathcal{G}^*.$ Let $\mathcal{G}$ by any DAG in $[\mathcal{G}^*].$ Suppose $\pi=\langle X=v_0,v_1,\dots,v_n,v_{n+1}=Y\rangle$ is a non-causal path in $\mathcal{G}$ from $X$ to $Y$ such that $X\to v_1\in \mathcal{G}^*,$ then there exists $1\le p \le n$ such that $v_{p-1}\to v_p\leftarrow v_{p+1}$ is a collider on $\pi.$
\end{lemma}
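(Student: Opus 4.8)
The plan is to argue by contradiction: assume $\pi$ has no collider, and show this forces $\pi$ to be a causal path from $X$ to $Y$ in $\mathcal{G}$, contradicting the hypothesis that $\pi$ is non-causal in $\mathcal{G}$. Since $\mathcal{G}$ is a DAG, every edge of $\pi$ is directed, so each internal node $v_i$ with $1\le i\le n$ is either a collider $v_{i-1}\to v_i\leftarrow v_{i+1}$ on $\pi$ or a non-collider on $\pi$; under the contradiction hypothesis every $v_i$ is a non-collider.

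First I would pin down the orientation of the first edge: from $X\to v_1\in\mathcal{G}^*$ and Lemma~\ref{lemma:mpdag def} we get $v_0=X\to v_1$ in $\mathcal{G}$, which serves as the base case. Then I would prove by induction on $i$ that $v_i\to v_{i+1}$ in $\mathcal{G}$ for every $i=0,1,\dots,n$: assuming $v_{i-1}\to v_i$, if the edge between $v_i$ and $v_{i+1}$ were oriented $v_i\leftarrow v_{i+1}$, then $v_{i-1}\to v_i\leftarrow v_{i+1}$ would be a collider on $\pi$, contradicting the assumption; hence $v_i\to v_{i+1}$. Carrying the induction through to $i=n$ gives $v_0\to v_1\to\cdots\to v_{n+1}$, i.e. $\pi$ is causal in $\mathcal{G}$ from $X$ to $Y$, which is the desired contradiction. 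Therefore $\pi$ contains a collider, i.e. there is some $1\le p\le n$ with $v_{p-1}\to v_p\leftarrow v_{p+1}$.

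There is essentially no obstacle here: the statement is an immediate consequence of the definition of a collider together with acyclicity of $\mathcal{G}$ and the fact that the first edge of $\pi$ points away from $X$. The only things to be careful about are the edge-orientation bookkeeping in the inductive step (ruling out $v_i\leftarrow v_{i+1}$, which is exactly the collider case) and noting that the hypotheses ``$X,Y$ not adjacent'' and ``$X$ is not an explicit cause of $Y$'' play no role in this particular lemma — they are inherited from the surrounding argument and are only used in the subsequent lemmas (Lemma~\ref{lemma:explicit only if noncausal unsheilded}, Lemma~\ref{lemma:explicit oin pre-causal}, Lemma~\ref{lemma:explicit only if noncausal}) that build on it.
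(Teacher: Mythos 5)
Your proof is correct and is essentially the paper's argument in contrapositive form: the paper takes the first index $p$ with $v_p\leftarrow v_{p+1}$ (which exists since $\pi$ is non-causal and satisfies $p>0$ because $v_0\to v_1$) and notes that minimality forces $v_{p-1}\to v_p$, yielding the collider, while you run the same induction forward under the assumption that no collider exists. Your side remark that the non-adjacency and non-explicit-cause hypotheses are not used in this particular lemma is also accurate.
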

\begin{proof}
If $v_i\to v_{i+1}$ in $\mathcal{G}$ for all $0\le i\le n,$ then $\pi$ is a causal path in $\mathcal{G},$ which contradicts with the fact that $\pi$ is non-causal. Let $p$ be the smallest index such that $v_p \leftarrow v_{p+1}$ in $\mathcal{G}.$ Clearly $p>0$ since $X=v_0\to v_1$ in $\mathcal{G}.$ Then we have $v_{p-1}\to v_p\leftarrow v_{p+1}$ in $\mathcal{G}.$ 
\end{proof}

The next technical lemma shows that if we want to prove that $\pi$ is blocked by $pa(X,\mathcal{G}^*)\cup sib(X,\mathcal{G}^*)$ in non-causal case, we only need to consider the case that the first collider on $\pi$ in $\mathcal{G}$ is unshielded. 

\begin{lemma}
\label{lemma:explicit only if noncausal unsheilded}
Let $X,Y$ be two distinct nodes that are not adjacent in an MPDAG $\mathcal{G}^*.$ Suppose $X$ is not an explicit cause of $Y$ in $\mathcal{G}^*.$ Let $\mathcal{G}$ by any DAG in $[\mathcal{G}^*].$ Suppose $\pi=\langle X=v_0,v_1,\dots,v_n,v_{n+1}=Y\rangle$ is a non-causal path in $\mathcal{G}$ from $X$ to $Y$ such that $X\to v_1\in \mathcal{G}^*.$ Let $v_{p-1}\to v_p\leftarrow v_{p+1}$ be the collider on $\pi$ which is closest to $X.$ If $v_{p-1}$ and $v_{p+1}$ are adjacent, and $\pi$ is open given $pa(X,\mathcal{G}^*)\cup sib(X,\mathcal{G}^*),$ then $\pi'=\pi(X,v_{p-1}) \oplus \langle v_{p-1},v_{p+1} \rangle \oplus \pi(v_{p+1},Y)$ is also a non-causal path in $\mathcal{G}$ from $X$ to $Y$ with its second node in $ch(X,\mathcal{G}^*),$ and $\pi'$ is also open given $pa(X,\mathcal{G}^*)\cup sib(X,\mathcal{G}^*).$
\end{lemma}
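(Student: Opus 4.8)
# Proof Proposal for Lemma~\ref{lemma:explicit only if noncausal unsheilded}

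The plan is to verify the three claimed properties of the shortcut path $\pi' = \pi(X,v_{p-1}) \oplus \langle v_{p-1},v_{p+1}\rangle \oplus \pi(v_{p+1},Y)$ one at a time, using the hypotheses that $\pi$ has $X\to v_1\in\mathcal{G}^*$, that $v_p$ is the collider on $\pi$ closest to $X$, that $v_{p-1}$ and $v_{p+1}$ are adjacent in $\mathcal{G}$, and that $\pi$ is open given $\mathbf{S} := pa(X,\mathcal{G}^*)\cup sib(X,\mathcal{G}^*)$. First I would observe that because $v_p$ is the collider closest to $X$, the subpath $\pi(X,v_p)$ is causal in $\mathcal{G}$: every edge on it points away from $X$, in particular $X\to v_1\to\dots\to v_{p-1}\to v_p$. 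Hence $X\in an(v_{p-1},\mathcal{G})$, so $X,v_{p-1}$ adjacent would force $X\to v_{p-1}\in\mathcal{G}$ by Corollary~\ref{cor:partial cycle}; combined with $X$ not being adjacent to the second node's being a child, I get that $v_1\in ch(X,\mathcal{G}^*)$ already (since $X\to v_1\in\mathcal{G}^*$), which settles the ``second node in $ch(X,\mathcal{G}^*)$'' part as long as $p\ge 2$; if $p=1$ a small separate argument is needed since then the second node of $\pi'$ is $v_{p+1}=v_2$, and I must check $v_p=v_1$ being a collider contradicts $X\to v_1\in\mathcal{G}^*$ directly — so actually $p\ge 2$ always, and the second node of $\pi'$ is still $v_1\in ch(X,\mathcal{G}^*)$.

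Second, I would check that $\pi'$ is non-causal in $\mathcal{G}$. The edge $\langle v_{p-1},v_{p+1}\rangle$ in $\mathcal{G}$: since $v_{p-1}\in an(v_p,\mathcal{G})$ via $v_{p-1}\to v_p$, and $v_{p+1}\to v_p$ as well, if the edge were $v_{p-1}\to v_{p+1}$ then $v_{p+1}$ and $v_p$ would close a cycle only if $v_{p+1}\in an(v_{p-1})$ — but more simply, $v_{p-1}\to v_{p+1}\to\dots$ could be causal, so I cannot argue non-causality purely from this edge. Instead I use that $\pi$ is non-causal: if $\pi'$ were causal in $\mathcal{G}$, then $X=v_0\to v_1\to\dots\to v_{p-1}\to v_{p+1}\to\dots\to Y$ would be a directed path, making $X\in an(Y,\mathcal{G})$. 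But then, tracing through the unshielded-path/Lemma~\ref{lemma: perkovic} machinery already used in Lemma~\ref{lemma:explicit only if causal} (extract a chordless causal subpath, prepend $X$, note $v_1\in ch(X,\mathcal{G}^*)$, apply Lemma~\ref{lemma: perkovic}), $X$ would be an explicit cause of $Y$ in $\mathcal{G}^*$ — contradiction. Actually the cleanest route: $\pi'$ causal in $\mathcal{G}$ with second node $v_1\in ch(X,\mathcal{G}^*)$ is exactly the situation handled by Lemma~\ref{lemma:explicit only if causal} (after shortcutting to unshielded), whose conclusion contradicts ``$X$ not an explicit cause,'' so $\pi'$ must be non-causal.

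Third, I would check $\pi'$ is open given $\mathbf{S}$. Every node on $\pi'$ other than $v_{p-1}$ and $v_{p+1}$ retains its collider/non-collider status and its membership in $\mathbf{S}$ from $\pi$, so it cannot block $\pi'$. For $v_{p+1}$: on $\pi$ it was the ``right parent'' of the collider $v_p$, hence a non-collider there; its status on $\pi'$ depends on the orientation of $\langle v_{p-1},v_{p+1}\rangle$ and of the edge $v_{p+1}\!-\!v_{p+2}$ on $\pi$. I would argue that whatever the case, $v_{p+1}\notin\mathbf{S}$: if $v_{p+1}\in pa(X,\mathcal{G}^*)\cup sib(X,\mathcal{G}^*)$ then $X$ and $v_{p+1}$ are adjacent, and since $X\in an(v_{p-1},\mathcal{G})\subseteq an(\text{nodes before }v_{p+1})$... — here I need $v_{p+1}\in de(X,\mathcal{G})$, which holds because $v_{p-1}\to v_{p+1}$ would give it, but if $v_{p+1}\to v_{p-1}$ it need not; so instead: if $v_{p+1}$ is a non-collider on $\pi'$ and lies in $\mathbf{S}$, it blocks $\pi'$ and we are fine (we want to show $\pi'$ open leads somewhere — wait, we are GIVEN $\pi$ open and want $\pi'$ open). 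Let me restate: I must show $v_{p+1}$ does not block $\pi'$. If $v_{p+1}$ is a collider on $\pi'$, then since $v_p\in de(v_{p+1},\mathcal{G})$ (as $v_{p+1}\to v_p$) and $v_p$ is a collider on $\pi$ with $de(v_p,\mathcal{G})\cap\mathbf{S}\ne\emptyset$ (because $\pi$ open), we get $de(v_{p+1},\mathcal{G})\cap\mathbf{S}\ne\emptyset$, so $v_{p+1}$ does not block. If $v_{p+1}$ is a non-collider on $\pi'$, it was a non-collider on $\pi$ too (this needs the edge-orientation bookkeeping: $v_{p+1}$ non-collider on $\pi'$ means not both incident $\pi'$-edges point in, and one incident edge $v_{p+1}\!-\!v_{p+2}$ is shared with $\pi$ where $v_{p+1}$ was a non-collider, so $v_{p+1}\notin\mathbf{S}$ already from $\pi$ being open), hence does not block. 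The analogous (easier) analysis for $v_{p-1}$: it keeps its left-incident edge from $\pi$; $v_{p-1}\to v_p$ on $\pi$ made it a non-collider on $\pi$, so $v_{p-1}\notin\mathbf{S}$, and whatever its status on $\pi'$ it does not block (if it becomes a collider, $v_{p-1}\to v_{p+1}$ or the edge points the other way — but $v_{p-1}$ collider on $\pi'$ needs $v_{p+1}\to v_{p-1}$, and then $v_{p-1}\in an(\mathbf{S})$ via $v_{p-1}\to v_p$, so it does not block).

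The main obstacle I anticipate is the careful case analysis for the orientation of the new edge $\langle v_{p-1},v_{p+1}\rangle$ and its effect on the collider/non-collider status of $v_{p-1}$ and $v_{p+1}$ on $\pi'$ — there are a handful of subcases, and in each I must confirm that a node which newly becomes a collider has a descendant in $\mathbf{S}$ (leveraging $v_p\in de$ of both $v_{p-1}$ and $v_{p+1}$, or rather $v_{p-1}\to v_p$ and $v_{p+1}\to v_p$, together with $\pi$ being open at $v_p$), and that a node which is a non-collider on $\pi'$ was already non-blocking on $\pi$. The acyclicity of $\mathcal{G}$ (ruling out e.g. $v_{p+1}\to v_{p-1}$ when $v_{p-1}\to v_p\leftarrow v_{p+1}$ — this is consistent, not a cycle, so it cannot be ruled out, which is precisely why the case split is needed) is what makes this finicky rather than automatic.
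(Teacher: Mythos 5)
Your overall strategy matches the paper's: case-split on the orientation of the chord $\langle v_{p-1},v_{p+1}\rangle$ and of the adjacent $\pi$-edges, use $v_{p-1}\to v_p$ and $v_{p+1}\to v_p$ together with $v_p\in an\bigl(pa(X,\mathcal{G}^*)\cup sib(X,\mathcal{G}^*),\mathcal{G}\bigr)$ to show that any node that becomes a collider on $\pi'$ still has a descendant in the conditioning set, and use the fact that $v_{p-1},v_{p+1}$ are non-colliders on the open path $\pi$ to conclude they are not in the conditioning set. That part of your openness analysis is sound and is essentially the paper's argument.

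However, there is a genuine gap in your handling of the ``second node in $ch(X,\mathcal{G}^*)$'' claim. You assert that $p\ge 2$ always because ``$v_p=v_1$ being a collider contradicts $X\to v_1\in\mathcal{G}^*$ directly.'' This is false: a collider at $v_1$ on $\pi$ is exactly $X\to v_1\leftarrow v_2$, whose first edge is $X\to v_1$, so it is fully consistent with $X\to v_1\in\mathcal{G}^*$ (e.g.\ $X\to v_1\leftarrow v_2\to Y$). Hence $p=1$, i.e.\ $X=v_{p-1}$, is a live case, and it is precisely the case in which the second node of $\pi'$ changes from $v_1$ to $v_{p+1}=v_2$, so the conclusion does not come for free. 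The paper treats it explicitly: if $X=v_{p-1}$ and $v_{p-1}\to v_{p+1}$ in $\mathcal{G}$, then $v_{p+1}$ is a non-collider on $\pi$, so $v_{p+1}\notin pa(X,\mathcal{G}^*)\cup sib(X,\mathcal{G}^*)$ (else $\pi$ would be blocked), forcing $v_{p+1}\in ch(X,\mathcal{G}^*)$; and if $X=v_{p-1}$ with $v_{p+1}\to v_{p-1}=X$, then $v_{p+1}\in pa(X,\mathcal{G}^*)\cup sib(X,\mathcal{G}^*)$ blocks $\pi$ at a non-collider, so that subcase cannot occur. You need to add this argument. A secondary, smaller slip: Lemma~\ref{lemma:explicit only if causal} concludes that a causal path with second node in $ch(X,\mathcal{G}^*)$ is \emph{blocked}, not that $X$ is an explicit cause; so to rule out $\pi'$ being causal you must additionally observe that all interior nodes of a causal $\pi'$ are non-colliders on $\pi$ as well (using $v_{p-1}\to v_p\leftarrow v_{p+1}$), so blockedness of $\pi'$ transfers to $\pi$ and contradicts openness. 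Your first sketch of this step skips the subcase where an intermediate node of $\pi'$ lies in $sib(X,\mathcal{G}^*)$, which yields blockedness rather than an explicit-cause contradiction.
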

\begin{proof}
We will discuss for two cases: $v_{p-1}\to v_{p+1}\in G$ and $v_{p-1}\leftarrow v_{p+1}\in G.$

Suppose that $v_{p-1}\to v_{p+1}\in G.$ We first show that $\pi'$ is also a non-causal path in $\mathcal{G}$ from $X$ to $Y$ with its second node in $ch(X,\mathcal{G}^*).$ If $X=v_{p-1},$ then $\pi$ is blocked by $v_{p+1}$ if $v_{p+1}\in pa(X,\mathcal{G}^*)\cup sib(X,\mathcal{G}^*),$ so $v_{p+1}\in ch(X,\mathcal{G}^*).$ If $X\neq v_{p-1},$ then $\pi'$ also begins with $X\to v_1.$ So $\pi'$ is also a path from $X$ to $Y$ with the second node in $ch(X,\mathcal{G}^*).$
If $\pi'$ is causal in $G$, by Lemma~\ref{lemma:explicit only if causal} we know $\pi'$ is blocked by $pa(X,\mathcal{G}^*)\cup sib(X,\mathcal{G}^*).$ Since all nodes in $\pi'$ are non-collider and $v_{p-1}\to v_p\in G,$ they are also non-collider on $\pi.$ So $\pi$ is also blocked by $pa(X,\mathcal{G}^*)\cup sib(X,\mathcal{G}^*),$ which conflicts with given conditions. Therefore, $\pi'$ is non-causal in $\mathcal{G}.$

Since $\pi'$ is non-causal in $\mathcal{G},$ we know that $v_{p+1}\neq Y.$ We talk about the edge between $v_{p+1}$ and $v_{p+2}$ in $\mathcal{G}.$ If $v_{p+1}\to v_{p+2}\in \mathcal{G},$ then all non-colliders on $\pi'$ are also non-colliders on $\pi,$ and all colliders on $\pi'$ are also colliders on $\pi.$ Since $\pi$ is open given $pa(X,\mathcal{G}^*)\cup sib(X,\mathcal{G}^*),$ $\pi'$ is also open given $pa(X,\mathcal{G}^*)\cup sib(X,\mathcal{G}^*).$ If $v_{p+2}\to v_{p+1}\in G,$ then $v_{p+1}$ is a collider on $\pi'.$ Since $v_p$ is a collider on $\pi$ and $\pi$ is open given $pa(X,\mathcal{G}^*)\cup sib(X,\mathcal{G}^*)$, we have $v_p\in an(pa(X,\mathcal{G}^*)\cup sib(X,\mathcal{G}^*), \mathcal{G}),$ so by $v_p\leftarrow v_{p+1}$ we also have $v_{p+1}\in an(pa(X,\mathcal{G}^*)\cup sib(X,\mathcal{G}^*), \mathcal{G}).$ Since other nodes except $v_{p+1}$ on $\pi'$ is a collider if and only if it is a collider on $\pi,$ we conclude that $\pi'$ is open given $pa(X,\mathcal{G}^*)\cup sib(X,\mathcal{G}^*).$

Now suppose that $v_{p-1}\leftarrow v_{p+1}\in G.$ If $X=v_{p-1},$ we have $v_{p+1}\in pa(X,\mathcal{G}^*)\cup sib(X,\mathcal{G}^*),$ then $\pi$ is blocked by $pa(X,\mathcal{G}^*)\cup sib(X,\mathcal{G}^*),$ so $X\neq v_{p-1}.$ Therefore, $\pi'$ is also a non-causal path from $X$ to $Y$ with its second node in $ch(X,\mathcal{G}^*).$ Since $v_p$ is a collider on $\pi$ and $\pi$ is open given $pa(X,\mathcal{G}^*)\cup sib(X,\mathcal{G}^*)$, we have $v_p\in an(pa(X,\mathcal{G}^*)\cup sib(X,\mathcal{G}^*), \mathcal{G}),$ so by $v_{p-1}\to v_p$ in $\mathcal{G},$ we also have $v_{p-1}\in an(pa(X,\mathcal{G}^*)\cup sib(X,\mathcal{G}^*), \mathcal{G}).$ Since other nodes except $v_{p-1}$ on $\pi'$ is a collider if and only if it is a collider on $\pi,$ we conclude that $\pi'$ is open given $pa(X,\mathcal{G}^*)\cup sib(X,\mathcal{G}^*).$
\end{proof}

Until now, we have shown that if the first collider on $\pi$ is shielded, there exists a shorter path $\pi'$ such that $\pi'$ is blocked implies $\pi$ is blocked, and $\pi'$ also have the second node in $ch(X,\mathcal{G}^*)$. Since $\pi'$ is blocked if it is causal in $\mathcal{G}$, we only need to consider the case that it is non-causal in $\mathcal{G}.$ If the first collider in $\pi'$ is shielded, we can repeat this procedure and get a path $\pi''$ shorter than $\pi',$ and we just need to prove $\pi''$ is blocked. At last, we just need to consider a path with its first collider unshielded.

The last technical lemma shows that we only need to consider the case that $\pi(X,v_p)$ is causal in $\mathcal{G}^*.$

\begin{lemma}
\label{lemma:explicit oin pre-causal}
Let $X,Y$ be two distinct nodes that are not adjacent in an MPDAG $\mathcal{G}^*.$ Suppose $X$ is not an explicit cause of $Y$ in $\mathcal{G}^*.$ Let $\mathcal{G}$ by any DAG in $[\mathcal{G}^*].$ Suppose $\pi=\langle X=v_0,v_1,\dots,v_n,v_{n+1}=Y\rangle$ is a non-causal path in $\mathcal{G}$ from $X$ to $Y$ such that $X\to v_1\in \mathcal{G}^*.$ Let $v_{p-1}\to v_p\leftarrow v_{p+1}$ be the collider on $\pi$ which is closest to $X.$ Suppose that $v_{p-1}$ and $v_{p+1}$ are not adjacent, and $\pi$ is open given $pa(X,\mathcal{G}^*)\cup sib(X,\mathcal{G}^*).$ Let $1\le k\le p$ be the largest index such that $X$ and $v_k$ are adjacent, let $s$ be a shortest subpath of $\pi(v_k,v_p)$ which is causal in $\mathcal{G},$ and let $t=\langle X,v_k\rangle \oplus s \oplus \pi(v_p,Y).$ Then $t$ is also a non-causal path in $\mathcal{G}$ with its second node in $ch(X,\mathcal{G}^*),$ and $t$ is open given $pa(X,\mathcal{G}^*)\cup sib(X,\mathcal{G}^*).$
\end{lemma}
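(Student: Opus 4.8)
The plan is to establish, in order: that $\pi(X,v_p)$ is causal in $\mathcal{G}$; that $t$ is a well-defined path that is non-causal in $\mathcal{G}$; that the second node of $t$ lies in $ch(X,\mathcal{G}^*)$; and that $t$ is open given $pa(X,\mathcal{G}^*)\cup sib(X,\mathcal{G}^*)$. First, since $X\to v_1\in\mathcal{G}^*$ (hence $X\to v_1\in\mathcal{G}$) and each $v_i$ with $1\le i\le p-1$ lies strictly before the first collider $v_p$ and is therefore a non-collider on $\pi$, a short induction gives $v_i\to v_{i+1}\in\mathcal{G}$ for $0\le i\le p-1$; thus $\pi(X,v_p)=\langle X,v_1,\dots,v_p\rangle$ is causal in $\mathcal{G}$, and in particular $X\in an(v_k,\mathcal{G})$. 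As in the proof of Lemma~\ref{lemma:explicit only if causal}, a shortest causal subpath $s$ of $\pi(v_k,v_p)$ is chordless (any chord $v_i,v_j$ is oriented $v_i\to v_j$ by Corollary~\ref{cor:partial cycle} and can be used to shortcut), and by maximality of $k$ no node of $s$ other than $v_k$ is adjacent to $X$; hence $\langle X,v_k\rangle\oplus s$ is chordless, which is the point of the reduction.

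Next I would check that $t=\langle X,v_k\rangle\oplus s\oplus\pi(v_p,Y)$ is a genuine path: the three pieces are internally disjoint and overlap only at $v_k$ and $v_p$ (using that $X$ is adjacent to $v_k$, that $X\in an(v_k,\mathcal{G})$, and the maximality of $k$). It is non-causal in $\mathcal{G}$ because the node of $t$ immediately before $v_p$ points into $v_p$ (the segment $\langle X,v_k\rangle\oplus s$ is causal) while $v_{p+1}\to v_p\in\mathcal{G}$, so $v_p$ is a collider on $t$ and $t$ is not directed.

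The substantive step is showing the second node of $t$, namely $v_k$, lies in $ch(X,\mathcal{G}^*)$. Since $X\in an(v_k,\mathcal{G})$ and $X,v_k$ are adjacent, acyclicity rules out $v_k\in pa(X,\mathcal{G}^*)$, so it suffices to exclude $v_k\in sib(X,\mathcal{G}^*)$. If $k<p$, then $v_k$ is a non-collider on $\pi$, so $v_k\in sib(X,\mathcal{G}^*)$ would block $\pi$, contradicting that $\pi$ is open given $pa(X,\mathcal{G}^*)\cup sib(X,\mathcal{G}^*)$. If $k=p$ with $p\ge 2$ (the case $p=1$ is immediate, since then $v_k=v_1$ and $X\to v_1\in\mathcal{G}^*$ by hypothesis), I would argue by contradiction: suppose $X-v_p\in\mathcal{G}^*$. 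Non-adjacency of $v_{p-1}$ and $v_{p+1}$ makes $v_{p-1}\to v_p\leftarrow v_{p+1}$ a v-structure in $\mathcal{G}$, so $v_{p-1}\to v_p\in\mathcal{G}^*$; Meek's rule R1 on $v_{p-1}\to v_p-X$ then forces $v_{p-1}$ and $X$ to be adjacent, for otherwise $v_p\to X\in\mathcal{G}^*$, contradicting $X-v_p\in\mathcal{G}^*$. But $v_{p-1}$ is a non-collider on $\pi$, so openness of $\pi$ gives $v_{p-1}\notin pa(X,\mathcal{G}^*)\cup sib(X,\mathcal{G}^*)$, whence $X\to v_{p-1}\in\mathcal{G}^*$; now Meek's rule R2 applied to $X\to v_{p-1}\to v_p$ together with $X-v_p$ forces $X\to v_p\in\mathcal{G}^*$, a contradiction. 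I expect this v-structure and Meek-rule argument for the $k=p$ sub-case to be the main obstacle.

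Finally I would verify $t$ is open given $pa(X,\mathcal{G}^*)\cup sib(X,\mathcal{G}^*)$; crucially, every node of $t$ is a node of $\pi$. The nodes of $t$ strictly before $v_p$ are non-colliders on $t$ and on $\pi$ (the front segment of $t$ is causal), so by openness of $\pi$ none of them is in the conditioning set. The node $v_p$ is a collider on both $t$ and $\pi$, and openness of $\pi$ gives $v_p\in an(pa(X,\mathcal{G}^*)\cup sib(X,\mathcal{G}^*),\mathcal{G})$. For the nodes after $v_p$ the edges of $t$ coincide with those of $\pi$, so their collider/non-collider status is unchanged and openness transfers. Hence no node blocks $t$, as required.
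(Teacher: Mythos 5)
Your proof is correct and follows essentially the same route as the paper: use the openness of $\pi$ together with the v-structure at $v_p$ and Meek's rules to conclude $X\to v_k\in\mathcal{G}^*$, observe that $\langle X,v_k\rangle\oplus s$ is chordless and causal, and transfer openness from $\pi$ to $t$ because collider status is preserved. The only cosmetic difference is in ruling out $X-v_p\in\mathcal{G}^*$ when $k=p$: you derive the contradiction from $v_{p-1}$ via R1 and R2, whereas the paper derives it from $v_{p+1}$ (forcing $v_{p+1}\in pa(X,\mathcal{G}^*)\cup sib(X,\mathcal{G}^*)$, which would block $\pi$); both arguments are valid.
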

\begin{proof}
Since $v_{p-1}$ and $v_{p+1}$ are not adjacent, $v_{p-1}\to v_p\leftarrow v_{p+1}$ forms a v-structure, so $v_{p-1}\to v_p\leftarrow v_{p+1}\in \mathcal{G}^*.$ 

If $X$ and $v_p$ are adjacent, we have $X\to v_p\in \mathcal{G}$ by Corollary~\ref{cor:partial cycle}, so $v_p\not\in pa(X,\mathcal{G}^*).$ If $X-v_p\in \mathcal{G}^*,$ since $v_{p+1}\to v_p\in \mathcal{G}^*$ we know $X$ and $v_{p+1}$ are adjacent, otherwise the edge $X-v_p$ is oriented. If $X\to v_{p+1}\in \mathcal{G}^*,$ by Meek's rule R2 we have $X\to v_p\in \mathcal{G}^*,$ which leads to a contradiction. So $v_{p+1}\in pa(X,\mathcal{G}^*)\cup sib(X,\mathcal{G}^*),$ and then $\pi$ is blocked by $pa(X,\mathcal{G}^*)\cup sib(X, \mathcal{G}^*)$ since $v_{p+1}$ is a non-collider on $\pi.$ It contradicts with the assumption. Therefore, if $X$ and $v_p$ are adjacent, we can conclude that $X\to v_p\in \mathcal{G}^*.$ For $1< i< p,$ $v_i$ is a non-collider on $\pi.$ Since $\pi$ is open given $pa(X,\mathcal{G}^*)\cup sib(X,\mathcal{G}^*),$ we know $v_i\not\in pa(X,\mathcal{G}^*)\cup sib(X,\mathcal{G}^*).$  Therefore, for any $1<i\le p$, if $X$ and $v_i$ are adjacent, we have $X\to v_i\in \mathcal{G}^*.$ 

Since $X$ and $v_k$ are adjacent, we have $X\to v_k\in \mathcal{G}^*.$ By construction of $s$ and $t,$ we know $\langle X,v_k\rangle \oplus s$ is a chordless path in $\mathcal{G}^*,$ and is causal in $\mathcal{G},$ and $v_k\in ch(X,\mathcal{G}^*).$ So by Lemma~\ref{lemma: perkovic}, $\langle X,v_k\rangle \oplus s$ is causal in $\mathcal{G}^*.$ Since $\pi$ is open given $pa(X,\mathcal{G}^*)\cup sib(X,\mathcal{G}^*),$ and any node on $t$ is a collider if and only if it is also a collider on $\pi,$ $t$ is open given $pa(X,\mathcal{G}^*)\cup sib(X,\mathcal{G}^*).$ 
\end{proof}

Now we finally come to the lemma that shows sufficiency of Theorem~\ref{thm:explicit cause} holds for the case that $\pi$ is non-causal in $\mathcal{G}$ with its second node in $ch(X,\mathcal{G}^*).$ 

\begin{lemma}
\label{lemma:explicit only if noncausal}
Let $X,Y$ be two distinct nodes that are not adjacent in an MPDAG $\mathcal{G}^*.$ Suppose $X$ is not an explicit cause of $Y$ in $\mathcal{G}^*.$ Let $\mathcal{G}$ by any DAG in $[\mathcal{G}^*].$ If there exists a non-causal path $\pi=\langle X=v_0,v_1,\dots,v_n,v_{n+1}=Y\rangle$ in $\mathcal{G}$ from $X$ to $Y$ such that $X\to v_1\in \mathcal{G}^*,$ then $\pi$ is blocked by $pa(X,\mathcal{G}^*)\cup sib(X,\mathcal{G}^*).$ 
\end{lemma}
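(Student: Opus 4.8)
Looking at Lemma~\ref{lemma:explicit only if noncausal}, the goal is to show that any non-causal path $\pi$ from $X$ to $Y$ in $\mathcal{G}$ whose second node $v_1$ satisfies $X\to v_1\in\mathcal{G}^*$ is blocked by $pa(X,\mathcal{G}^*)\cup sib(X,\mathcal{G}^*)$. This is the last piece needed for the sufficiency direction of Theorem~\ref{thm:explicit cause}, and I already have all the technical lemmas set up.

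Let me think about how these fit together. Suppose for contradiction that such a $\pi$ is open given $pa(X,\mathcal{G}^*)\cup sib(X,\mathcal{G}^*)$. By Lemma~\ref{lemma:explicit oin has collider}, $\pi$ has at least one collider, so there is a collider $v_{p-1}\to v_p\leftarrow v_{p+1}$ closest to $X$. Now there are two cases depending on whether $v_{p-1}$ and $v_{p+1}$ are adjacent. If they are adjacent, Lemma~\ref{lemma:explicit only if noncausal unsheilded} produces a strictly shorter path $\pi'$ from $X$ to $Y$ that is still non-causal in $\mathcal{G}$, still has its second node in $ch(X,\mathcal{G}^*)$, and is still open given $pa(X,\mathcal{G}^*)\cup sib(X,\mathcal{G}^*)$. (Here I should be careful: the hypothesis of Lemma~\ref{lemma:explicit only if noncausal} is $X\to v_1\in\mathcal{G}^*$, i.e. $v_1\in ch(X,\mathcal{G}^*)$, which matches the "second node in $ch(X,\mathcal{G}^*)$" phrasing used in the technical lemmas.) If instead $v_{p-1}$ and $v_{p+1}$ are not adjacent, then $v_{p-1}\to v_p\leftarrow v_{p+1}$ is a v-structure; I invoke Lemma~\ref{lemma:explicit oin pre-causal}, taking $k$ to be the largest index $\le p$ with $X,v_k$ adjacent and $s$ a shortest causal subpath of $\pi(v_k,v_p)$, to obtain a path $t=\langle X,v_k\rangle\oplus s\oplus\pi(v_p,Y)$ which is again non-causal in $\mathcal{G}$, has second node in $ch(X,\mathcal{G}^*)$, and is open given $pa(X,\mathcal{G}^*)\cup sib(X,\mathcal{G}^*)$ — and whose portion up to $v_p$ is causal in $\mathcal{G}^*$.

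The plan is then an induction on path length (or a minimal-counterexample argument): take $\pi$ to be a shortest open path satisfying the hypotheses. The shielded-collider case (Lemma~\ref{lemma:explicit only if noncausal unsheilded}) strictly shortens the path, contradicting minimality, so the first collider of $\pi$ must be unshielded. Then apply Lemma~\ref{lemma:explicit oin pre-causal}: either $t$ is strictly shorter, again contradicting minimality, or (if no shortening happens) we are reduced to the situation where $\pi(X,v_p)$ is itself causal in $\mathcal{G}^*$ and $v_{p-1}\to v_p\leftarrow v_{p+1}$ is a v-structure in $\mathcal{G}^*$. In that final configuration I need to derive a contradiction directly: since $X\to v_1\in\mathcal{G}^*$ and $\pi(X,v_p)$ is causal in $\mathcal{G}^*$, we have $X\in an(v_p,\mathcal{G}^*)$; combined with $v_{p+1}\to v_p\in\mathcal{G}^*$ one argues about the edge between $X$ and $v_{p+1}$ (they cannot be d-separated easily because $v_p$ is a collider that is an ancestor of the conditioning set) — essentially, the v-structure at $v_p$ together with the causal path forces some orientation that either blocks $\pi$ after all or contradicts acyclicity via Meek's rules (R1/R2), much as in the adjacency analysis inside Lemma~\ref{lemma:explicit oin pre-causal}. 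I would spell this endgame out carefully, using Corollary~\ref{cor:partial cycle}, Lemma~\ref{lemma:partial cycle} and Meek's rule R2 to rule out $X\to v_{p+1}$ and hence conclude $v_{p+1}\in pa(X,\mathcal{G}^*)\cup sib(X,\mathcal{G}^*)$, which blocks $\pi$ at the non-collider $v_{p+1}$ — the contradiction.

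The main obstacle I anticipate is the bookkeeping in the induction: ensuring at every reduction step that all three properties are preserved (non-causal in $\mathcal{G}$, second node still a child of $X$ in $\mathcal{G}^*$, and still open given $pa(X,\mathcal{G}^*)\cup sib(X,\mathcal{G}^*)$) and that the path length genuinely decreases, so that the recursion terminates. The technical lemmas are stated precisely to deliver exactly these invariants, so the proof of Lemma~\ref{lemma:explicit only if noncausal} itself should be short — essentially a well-founded-recursion argument that peels off shielded colliders via Lemma~\ref{lemma:explicit only if noncausal unsheilded}, then handles the first unshielded collider via Lemma~\ref{lemma:explicit oin pre-causal}, landing in a base case where the prefix is causal in $\mathcal{G}^*$ and a direct Meek-rule argument closes it. The subtle point to get right is the base case of the recursion in Lemma~\ref{lemma:explicit oin pre-causal} (when $t$ fails to be strictly shorter because $k=p-1$ or $s$ is already the full subpath), where I must extract the explicit-cause contradiction rather than recurse.
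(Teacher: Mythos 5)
Your reduction steps match the paper exactly: assume $\pi$ is open, take the collider $v_{p-1}\to v_p\leftarrow v_{p+1}$ closest to $X$, peel off shielded colliders with Lemma~\ref{lemma:explicit only if noncausal unsheilded}, and use Lemma~\ref{lemma:explicit oin pre-causal} to assume WLOG that the collider is a v-structure in $\mathcal{G}^*$ and that $\pi(X,v_p)$ is causal in $\mathcal{G}^*$. The gap is in your endgame, which is where the real content of this lemma lives. You propose to derive the contradiction by analyzing the edge between $X$ and $v_{p+1}$, ruling out $X\to v_{p+1}$ and concluding $v_{p+1}\in pa(X,\mathcal{G}^*)\cup sib(X,\mathcal{G}^*)$. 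But nothing forces $X$ and $v_{p+1}$ to be adjacent at all --- $v_{p+1}$ is merely some parent of $v_p$ in $\mathcal{G}$ and can be arbitrarily far from $X$ --- so this local analysis cannot close the argument. You also never actually use the one fact that makes the collider dangerous: for $\pi$ to be open at $v_p$, we must have $v_p\in an\bigl(pa(X,\mathcal{G}^*)\cup sib(X,\mathcal{G}^*),\mathcal{G}\bigr)$. Without exploiting that descending path, there is no contradiction to be had.

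The paper's argument in this final configuration is: let $s=\langle v_p=b_0,b_1,\dots,b_r\rangle$ be a shortest causal path in $\mathcal{G}$ from $v_p$ to the conditioning set. Since $\pi(X,v_p)\oplus s$ is causal in $\mathcal{G}$, Corollary~\ref{cor:partial cycle} gives $X\to b_r$ in $\mathcal{G}$, so $b_r\notin pa(X,\mathcal{G}^*)$ and hence $b_r\in sib(X,\mathcal{G}^*)$. Then one propagates the pattern $v_{p-1}\to b_k\leftarrow v_{p+1}$ in $\mathcal{G}^*$ down the path $s$: taking $k$ maximal with this property, one shows $k<r$ (else $v_{p-1}\to b_r\in\mathcal{G}^*$ forces $X\to b_r\in\mathcal{G}^*$ by Lemma~\ref{lemma:partial cycle}, contradicting $b_r\in sib(X,\mathcal{G}^*)$), then that $b_k-b_{k+1}$ must be undirected in $\mathcal{G}^*$ (else Lemma~\ref{lemma: perkovic} makes the whole route to $b_r$ causal in $\mathcal{G}^*$, same contradiction), and finally that Meek's rules R1 and R3 (using that $v_{p-1},v_{p+1}$ are non-adjacent) force $v_{p-1}\to b_{k+1}\leftarrow v_{p+1}\in\mathcal{G}^*$, contradicting the maximality of $k$. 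This v-structure--propagation argument along the descending path is the missing idea in your proposal; the R2-based local analysis you sketch does not substitute for it.
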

\begin{proof}
By Lemma~\ref{lemma:explicit oin has collider}, there exists $1\le p \le n$ such that $v_{p-1}\to v_p\leftarrow v_{p+1}$ is a collider on $\pi.$ Let $v_{p-1}\to v_p\leftarrow v_{p+1}$ be the collider on $\pi$ which is closest to $X.$ By Lemma~\ref{lemma:explicit only if noncausal unsheilded}, without loss of generality we can assume that $v_{p-1}$ and $v_{p+1}$ are not adjacent, so $v_{p-1}\to v_p\leftarrow v_{p+1}$ forms a v-structure and we have $v_{p-1}\to v_p\leftarrow v_{p+1}\in \mathcal{G}^*.$ By Lemma~\ref{lemma:explicit oin pre-causal}, without loss of generality we can assume that $\pi(X,v_p)$ is causal in $\mathcal{G}^*.$

Now consider the collider $v_p.$ We only need to consider the case that $v_p\in an(pa(X,\mathcal{G}^*)\cup sib(X,\mathcal{G}^*), \mathcal{G}),$ otherwise $\pi$ is blocked by $pa(X,\mathcal{G}^*)\cup sib(X,\mathcal{G}^*)$. Let $s=\langle v_p,b_1,b_2,\dots,b_r\rangle$ be the shortest causal path in $\mathcal{G}$ from $v_p$ to $pa(X,\mathcal{G}^*)\cup sib(X,\mathcal{G}^*),$ and $b_r\in pa(X,\mathcal{G}^*)\cup sib(X,\mathcal{G}^*).$ Moreover, since $\pi(X,v_p)\oplus s$ is a causal path from $X$ to $b_r$ in $\mathcal{G},$ we have $X\to b_r\in  \mathcal{G}$ by Corollary~\ref{cor:partial cycle}. Therefore $b_r\not\in pa(X,\mathcal{G}^*)$ and hence $b_r\in sib(X,\mathcal{G}^*).$

Denote $b_0=v_p.$ Let $0\le k< r$ be the largest index such that $v_{p-1}\to b_k\leftarrow v_{p+1}\in \mathcal{G}^*.$ Note that if $v_{p-1}\to b_r\in \mathcal{G}^*$ then by Lemma~\ref{lemma:partial cycle} we have $X\to b_r\in \mathcal{G}^*,$ which leads to a contradiction with $b_r\in sib(X,\mathcal{G}^*).$ So $v_{p-1}\to b_{k+1}\leftarrow v_{p+1}$ does not exist in $\mathcal{G}^*.$ 

If $b_k\to b_{k+1}\in \mathcal{G}^*,$ since $s(b_k,b_r)$ is the shortest causal path in $\mathcal{G}$ from $b_k$ to $b_r,$ by Lemma~\ref{lemma: perkovic} we know $s(b_k,b_r)$ is causal in $\mathcal{G}^*.$ Then $\pi(X,v_{p-1})\oplus \langle v_{p-1},b_k\rangle \oplus s(b_k,b_r)$ is causal in $\mathcal{G}^*,$ and by Lemma~\ref{lemma:partial cycle} we have $X\to b_r\in \mathcal{G}^*,$ which leads to a contradiction with $b_r\in sib(X,\mathcal{G}^*).$ So $b_k-b_{k+1}\in \mathcal{G}^*.$

Since $v_{p-1}\to b_k\leftarrow v_{p+1}$ and $b_k-b_{k+1}\in \mathcal{G}^*,$ we know $(v_{p-1},b_{k+1}),(v_{p+1},b_{k+1})$ are adjacent pairs. Since $v_{p-1}\to b_k\to b_{k+1}\in \mathcal{G}$ and $v_{p+1}\to b_k\to b_{k+1}\in \mathcal{G},$ we have $v_{p-1}\to b_{k+1}\in \mathcal{G}$ and $v_{p+1}\to b_{k+1}\in \mathcal{G},$ so $v_{p-1}\not\in ch(b_{k+1},\mathcal{G}^*)$ and $v_{p+1}\not\in ch(b_{k+1},\mathcal{G}^*).$ By the selection of $b_k$ we know $v_{p-1}\to b_{k+1}\leftarrow v_{p+1}$ does not exist in $\mathcal{G}^*.$ If $v_{p-1}-b_{k+1}-v_{p+1}\in \mathcal{G}^*,$ it contradicts with Meek's rule R3 since $v_{p-1}$ and $v_{p+1}$ are not adjacent. If $v_{p-1}\to b_{k+1}-v_{p+1}\in \mathcal{G}^*$ or $v_{p-1}-b_{k+1}\leftarrow v_{p+1}\in \mathcal{G}^*,$ it contradicts with Meek's rule R1 since $v_{p-1}$ and $v_{p+1}$ are not adjacent. Since all cases lead to a contradiction, we have ended the proof.
\end{proof}

\begin{lemma}
\label{lemma:explicit only if pa or sib}
Let $X,Y$ be two distinct nodes that are not adjacent in an MPDAG $\mathcal{G}^*.$ Suppose $X$ is not an explicit cause of $Y$ in $\mathcal{G}^*.$ Let $\mathcal{G}$ by any DAG in $[\mathcal{G}^*].$ If there exists a path $\pi=\langle X=v_0,v_1,\dots,v_n,v_{n+1}=Y\rangle$ in $\mathcal{G}$ from $X$ to $Y$ such that $v_1\in pa(X,\mathcal{G}^*)\cup sib(X,\mathcal{G}^*),$ then $\pi$ is blocked by $pa(X,\mathcal{G}^*)\cup sib(X,\mathcal{G}^*).$ 
\end{lemma}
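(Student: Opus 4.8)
The plan is to argue by contradiction: suppose the path $\pi=\langle X=v_0,v_1,\dots,v_n,v_{n+1}=Y\rangle$ in $\mathcal{G}$ with $v_1\in pa(X,\mathcal{G}^*)\cup sib(X,\mathcal{G}^*)$ is \emph{open} given $\mathbf{Z}:=pa(X,\mathcal{G}^*)\cup sib(X,\mathcal{G}^*)$, and derive a contradiction. First I would make the obvious reduction: since $v_1\in\mathbf{Z}$, if $v_1$ were a non-collider on $\pi$ then $\pi$ would be blocked by $\mathbf{Z}$; hence $v_1$ is a collider, i.e. $X\to v_1\leftarrow v_2$ in $\mathcal{G}$ (the node $v_2$ exists because $v_1=Y$ would make $X,Y$ adjacent, contradicting the hypothesis). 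The edge $X\to v_1$ in $\mathcal{G}$ forbids $v_1\in pa(X,\mathcal{G}^*)$ by Lemma~\ref{lemma:mpdag def}, so $v_1\in sib(X,\mathcal{G}^*)$, i.e. $X-v_1\in\mathcal{G}^*$.

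Next I would show $X$ and $v_2$ are adjacent: otherwise $X\to v_1\leftarrow v_2$ is a v-structure in $\mathcal{G}$, hence (v-structures being shared across $[\mathcal{G}^*]$, by Lemma~\ref{lemma:mpdag def}) $X\to v_1\in\mathcal{G}^*$, contradicting $X-v_1\in\mathcal{G}^*$. Now case-split on the orientation of the $X$–$v_2$ edge in $\mathcal{G}^*$. Note $v_2\neq Y$ (else $X,Y$ adjacent), so $v_2$ is an interior node of $\pi$; and since $v_2\to v_1$ in $\mathcal{G}$, $v_2$ is a non-collider on $\pi$. If $v_2\to X$ or $X-v_2$ in $\mathcal{G}^*$, then $v_2\in\mathbf{Z}$, and being a non-collider on $\pi$ it blocks $\pi$ — contradiction. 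The only remaining possibility is $X\to v_2\in\mathcal{G}^*$.

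In that case I would reroute through $v_2$: let $\pi'=\langle X,v_2,v_3,\dots,v_{n+1}=Y\rangle$, a path in $\mathcal{G}$ (as $X\to v_2$) whose second node satisfies $X\to v_2\in\mathcal{G}^*$. By Lemma~\ref{lemma:explicit only if causal} (if $\pi'$ is causal in $\mathcal{G}$) or Lemma~\ref{lemma:explicit only if noncausal} (if not), $\pi'$ is blocked by $\mathbf{Z}$. The crucial observation is that the blocking node of $\pi'$ cannot be $v_2$: since $v_2\to v_1$ in $\mathcal{G}$ and $v_1\in sib(X,\mathcal{G}^*)\subseteq\mathbf{Z}$, $v_2$ has a descendant in $\mathbf{Z}$, so if $v_2$ is a collider on $\pi'$ it is open, and if it is a non-collider it is not in $\mathbf{Z}$ (because $v_2\in ch(X,\mathcal{G}^*)$); either way $v_2$ does not block $\pi'$. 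Hence $\pi'$ is blocked at some interior node $v_i$ with $i\geq 3$. But for $i\geq 3$ the path-neighbours $v_{i-1},v_{i+1}$ of $v_i$ are the same on $\pi$ and $\pi'$, so $v_i$ has the same collider/non-collider status on both, and the conditions "$v_i\in\mathbf{Z}$'' and "$de(v_i,\mathcal{G})\cap\mathbf{Z}=\emptyset$'' are graph-intrinsic; therefore $v_i$ blocks $\pi$ as well, contradicting that $\pi$ is open.

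The main obstacle is this last case — the rerouting step — and in particular the point that $v_2$ can never serve as the blocking node of the shortened path $\pi'$, precisely because the edge $v_2\to v_1$ hands $v_2$ a descendant inside the conditioning set $\mathbf{Z}$. The remaining parts (reducing to $v_1$ a collider, ruling out $v_1\in pa(X,\mathcal{G}^*)$, forcing adjacency of $X$ and $v_2$ via the v-structure argument, and disposing of the two easy orientation cases) are short checks using Lemma~\ref{lemma:mpdag def} and the preservation of v-structures within $[\mathcal{G}^*]$.
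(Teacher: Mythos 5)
Your proposal is correct and follows essentially the same route as the paper's proof: reduce to the case where $v_1$ is a collider with $X-v_1\in\mathcal{G}^*$, force adjacency of $X$ and $v_2$ via the shared-v-structure argument, dispose of $v_2\in pa(X,\mathcal{G}^*)\cup sib(X,\mathcal{G}^*)$ as a non-collider, and otherwise reroute along $\langle X,v_2\rangle\oplus\pi(v_2,Y)$, invoking Lemmas~\ref{lemma:explicit only if causal} and~\ref{lemma:explicit only if noncausal} and observing that $v_2$ cannot be the blocking node since $v_1\in sib(X,\mathcal{G}^*)$ is a descendant of $v_2$. The only difference is cosmetic (you phrase it as a contradiction and make a few of the paper's implicit checks explicit).
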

\begin{proof}
If $v_1\in pa(X,\mathcal{G}^*),$ then $v_1\to X\in \mathcal{G}$ so $v_1$ is not a collider on $\pi$ in $\mathcal{G}.$ So $\pi$ is blocked by $pa(X,\mathcal{G}^*)\cup sib(X,\mathcal{G}^*).$

If $v_1\in sib(X,\mathcal{G}^*),$ we just need to consider the case that $v_1$ is a collider on $\pi$, otherwise $\pi$ is blocked by $pa(X,\mathcal{G}^*)\cup sib(X,\mathcal{G}^*).$  Then $X$ and $v_2$ are adjacent since $(X,v_1,v_2)$ is a collider in $\mathcal{G}$ and $X-v_1\in \mathcal{G}^*.$ If $v_2\in pa(X,\mathcal{G}^*)\cup sib(X,\mathcal{G}^*),$ then $\pi$ is blocked by $pa(X,\mathcal{G}^*)\cup sib(X,\mathcal{G}^*)$ since $v_2$ is a non-collider on $\pi.$ Suppose that $v_2\in ch(X,\mathcal{G}^*).$ Let $s=\langle X,v_2\rangle \oplus \pi(v_2,Y).$ Since $v_1\in sib(X,\mathcal{G}^*), v_2\in ch(X,\mathcal{G}^*)$ and $v_2\to v_1\in \mathcal{G},$ $v_2$ cannot block $s$ whether it is a collider on $s$ or it is a non-collider on $s.$ For other nodes except $v_2$ on $s,$ they are collider on $s$ if and only if they are collider on $\pi.$ Since $s$ is a path from $X$ to $Y$ with $v_2\in ch(X,\mathcal{G}^*),$ from (2) we know $s$ is blocked by $pa(X,\mathcal{G}^*)\cup sib(X,\mathcal{G}^*),$ so $\pi$ is also blocked by $pa(X,\mathcal{G}^*)\cup sib(X,\mathcal{G}^*).$
\end{proof}

\subsection{Proof for Theorem~\ref{thm:implicit cause}}
To prove Theorem~\ref{thm:implicit cause}, we need the following lemma:
\begin{lemma}
\label{lemma: maxclique is pa}
Suppose that $\mathcal{G}^*$ is an MPDAG, $X$ is a node in $\mathcal{G}^*,$ and $\mathbf{Q}$ is a maximal clique in the induced subgraph of $\mathcal{G}^*$ over $sib(X,\mathcal{G}^*).$ Then there exists a DAG $G$ represented by $\mathcal{G}^*,$ such that $pa(X,G)=pa(X,\mathcal{G}^*)\cup \mathbf{Q}$ and $ch(X,G)=ch(X,\mathcal{G}^*)\cup sib(X,\mathcal{G}^*) \setminus \mathbf{Q}.$
\end{lemma}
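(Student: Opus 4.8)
I want to construct a DAG $G$ by orienting the undirected edges of $\mathcal{G}^*$ so that exactly the siblings in $\mathbf{Q}$ become parents of $X$ and the remaining siblings become children of $X$, then verify $G \in [\mathcal{G}^*]$, i.e. that this orientation extends to a full consistent DAG with no new v-structure and no directed cycle. The natural tool is Meek's orientation machinery: I will add the background-knowledge-style clauses $\{Q \to X : Q \in \mathbf{Q}\}$ and $\{X \to W : W \in sib(X,\mathcal{G}^*)\setminus \mathbf{Q}\}$ to $\mathcal{G}^*$, apply Meek's rules to obtain a new MPDAG $\mathcal{G}'$, and then invoke the standard fact (Theorem 4 in \citet{meek1995causal}) that any MPDAG has a consistent DAG extension, picking $G \in [\mathcal{G}']$. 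For this to work I must check two things: (i) the added orientations around $X$ together with $\mathcal{G}^*$ are consistent, meaning Meek's rules do not force any edge at $X$ back to the opposite orientation and do not create a cycle through $X$; and (ii) once I have such a $G$, its parent and child sets at $X$ are exactly as claimed, which needs that no \emph{other} directed edge in $G$ reaches into $sib(X,\mathcal{G}^*)$ in a way that changes the orientation at $X$ — but since $\mathbf{Q}\cup (sib(X,\mathcal{G}^*)\setminus\mathbf{Q}) = sib(X,\mathcal{G}^*)$ exhausts all siblings and the parents/children of $X$ in $\mathcal{G}^*$ are already fixed in every DAG of $[\mathcal{G}^*]$, this is automatic once consistency is established.

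\textbf{Key steps, in order.} First, I would verify that orienting $\mathbf{Q}\to X$ and $X\to sib(X,\mathcal{G}^*)\setminus\mathbf{Q}$ introduces no v-structure collided at $X$: a new collider at $X$ would require two non-adjacent nodes both pointing into $X$; parents of $X$ in $\mathcal{G}^*$ may already be mutually non-adjacent, but any pair consisting of a parent $P$ and some $Q\in\mathbf{Q}$, or two elements of $\mathbf{Q}$, must be handled — elements of $\mathbf{Q}$ are pairwise adjacent since $\mathbf{Q}$ is a clique, so no new v-structure among them; for $P\in pa(X,\mathcal{G}^*)$ and $Q\in\mathbf{Q}$, if they were non-adjacent then $P\to X- Q$ would already have forced $X\to Q$ by Meek's R1 in $\mathcal{G}^*$, contradicting $Q\in sib(X,\mathcal{G}^*)$; hence $P,Q$ are adjacent and no v-structure arises. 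Second, I check no directed triangle (partially directed cycle of length three) through $X$ is created — this is where the maximality of $\mathbf{Q}$ enters: if $Q\in\mathbf{Q}$, $W\in sib(X,\mathcal{G}^*)\setminus\mathbf{Q}$ and $W\to Q$ in $\mathcal{G}^*$, then $X\to W\to Q\to X$ is a directed cycle; I must rule this out, and the point is that $\mathbf{Q}$ being a \emph{maximal} clique (not merely a clique) means every sibling adjacent to all of $\mathbf{Q}$ is already in $\mathbf{Q}$, and combined with the structure of MPDAGs (no partially directed cycles within a chain-component-like region, via Lemma~\ref{lemma:partial cycle}) one shows no such $W\to Q$ edge exists pointing back. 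Third, having ruled out new v-structures and cycles locally at $X$, I apply Meek's rules to close up to an MPDAG $\mathcal{G}'$ and invoke the DAG-extension theorem to get $G\in[\mathcal{G}']\subseteq[\mathcal{G}^*]$. Finally I read off $pa(X,G)=pa(X,\mathcal{G}^*)\cup\mathbf{Q}$ and $ch(X,G)=ch(X,\mathcal{G}^*)\cup(sib(X,\mathcal{G}^*)\setminus\mathbf{Q})$ directly, since all edges at $X$ are now oriented and we chose their orientations.

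\textbf{Main obstacle.} The delicate part is showing that the local orientation at $X$ is \emph{consistent}, i.e. that applying Meek's rules after adding $\mathbf{Q}\to X$ and $X\to(sib\setminus\mathbf{Q})$ never re-orients one of these edges nor creates a directed cycle. This is essentially the claim that $\mathbf{Q}$ (maximal clique) lies in the set $\mathcal{R}$ of admissible "parent augmentations" of $X$ from IDA theory — the content of Theorem~1 in \citet{fang2020ida}, which the paper cites as fundamental to IDA. So in practice the cleanest route is: reduce the statement to that known IDA result by checking $\mathbf{Q}$ satisfies its hypotheses (a clique of siblings with no $W\to Q$ edge from outside, which maximality supplies), rather than re-deriving consistency from scratch. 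I would structure the proof to cite \citet{fang2020ida} for the existence of the DAG with the prescribed parent set, and spend the detailed work only on confirming that a maximal clique meets that characterization — that confirmation, using Lemma~\ref{lemma:partial cycle} to exclude back-edges, is the real technical kernel.
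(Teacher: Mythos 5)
Your overall strategy is the same as the paper's: reduce the existence claim to Theorem~1 of \citet{fang2020ida} and spend the work verifying that a maximal clique $\mathbf{Q}$ satisfies its hypothesis, namely that there is no $S\in\mathbf{Q}$ and $C\in adj(X,\mathcal{G}^*)\setminus(\mathbf{Q}\cup pa(X,\mathcal{G}^*))$ with $C\to S$ in $\mathcal{G}^*$. You correctly identify this verification as the technical kernel and correctly identify that maximality is what must be exploited. However, the step you leave to ``the structure of MPDAGs \dots via Lemma~\ref{lemma:partial cycle}'' is exactly the step that does not follow from the tools you name, and it is where the paper does its only real work. To contradict maximality from a single back-edge $C\to S$ (with $S\in\mathbf{Q}$), you must show that $C$ is a \emph{sibling} of $X$ and that $C$ is adjacent to \emph{every} element of $\mathbf{Q}$, so that $\mathbf{Q}\cup\{C\}$ is a strictly larger clique in the induced subgraph over $sib(X,\mathcal{G}^*)$. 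Lemma~\ref{lemma:partial cycle} (ancestor plus adjacency implies a directed edge) gives you orientations of edges that are already known to exist; it cannot manufacture the adjacencies between $C$ and the other members of $\mathbf{Q}$, nor rule out $C\in ch(X,\mathcal{G}^*)$. The paper obtains both facts in one stroke from Lemma~B.1 of \citet{zuo2022counterfactual} (a clique of siblings with an incoming directed edge from a node adjacent to $X$ extends to a larger clique of siblings). Without that lemma or an equivalent Meek's-rules case analysis, your argument that ``no such $W\to Q$ edge exists pointing back'' is an assertion, not a proof.

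A secondary, smaller point: your first paragraph frames the construction as adding orientations and re-running Meek's rules, and your first ``key step'' (no new v-structure at $X$, via R1 applied to a parent--sibling pair) is fine but is already subsumed by the hypothesis of Theorem~1 in \citet{fang2020ida}; once you commit to citing that theorem, the only thing left to check is the no-back-edge condition, and the collider/cycle discussion around $X$ is redundant. So the proposal is structurally sound and matches the paper, but the one nontrivial deduction --- that a violating edge $C\to S$ forces $\mathbf{Q}\cup\{C\}$ to be a clique of siblings --- is missing, and the lemma you propose to use in its place would not close it.
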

\begin{proof}
By Theorem 1 in~\citep{fang2020ida}, we just need to prove that there does not exist $S\in \mathbf{Q}$ and $C\in \mathrm{adj}(X,\mathcal{G}^*)\setminus (\mathbf{Q}\cup pa(X,\mathcal{G}^*))$ such that $C\to S.$ If such $S,C$ exists, by Lemma B.1 in~\citep{zuo2022counterfactual}, $\mathbf{Q}\cup \{C\}$ induces a complete subgraph of $\mathcal{G}^*$ and $C\in sib(X,\mathcal{G}^*)\setminus \mathbf{Q}.$ That means $\mathbf{Q}\cup \{C\}$ induces a complete subgraph in the induced subgraph of $\mathcal{G}^*$ over $sib(X,\mathcal{G}^*),$ which contradicts with that $\mathbf{Q}$ is maximal.
\end{proof}

Then we prove Theorem~\ref{thm:implicit cause}:
\begin{proof}
Let $\mathbf{C}$ be the critical set of $X$ with respect to $Y$ in $\mathcal{G}^*.$ Suppose that $X$ is an implicit cause of $Y,$ then by Theorem~\ref{thm:explicit cause}, $X\perp Y|pa(X,\mathcal{G}^*)\cup sib(X,\mathcal{G}^*).$ For any $\mathbf{Q}\in \mathcal{Q},$ from Theorem 4.5 in~\citep{zuo2022counterfactual} we know that $\mathbf{C}\setminus \mathbf{Q}\neq \emptyset,$ otherwise $\mathbf{C}$ induces a complete graph. So there is a chordless partially directed path in $\mathcal{G}^*$ from $X$ to $Y,$ denoted by $\pi=\langle X,v_1,v_2,\dots,v_n,Y\rangle,$ such that $v_1\not\in \mathbf{Q}.$ Let $G$ be a DAG represented by $\mathcal{G}^*$ such that $X\to v_1\in G.$ There is no collider on $\pi$ in $G,$ otherwise since $\pi$ is chordless, that collider is a v-structure in $\mathcal{G}^*,$ which contradicts with that $\pi$ is partially directed in $\mathcal{G}^*.$ Therefore, $\pi$ is causal in $G.$ Since $\pi$ is chordless, $v_2,v_3,\dots,v_n,Y$ are not adjacent with $X,$ so they are not in $pa(X,\mathcal{G}^*)\cup \mathbf{Q}.$ $v_1\not\in pa(X,\mathcal{G}^*)$ since $\pi$ is partially directed, and $v_1\not\in \mathbf{Q}$ by the selection of $\pi.$ So $\pi$ is not blocked by $pa(X,\mathcal{G}^*)\cup \mathbf{Q}$ in $G.$ Therefore, $X\not\perp Y|pa(X,\mathcal{G}^*)\cup \mathbf{Q}.$

Conversely, suppose $X\perp Y| pa(X,\mathcal{G}^*)\cup sib(X,\mathcal{G}^*)$ and $X\not\perp Y|pa(X,\mathcal{G}^*)\cup \mathbf{Q}$ for any $\mathbf{Q}\in \mathcal{Q}$ holds. Then by Theorem~\ref{thm:explicit cause}, $X$ is not an explicit cause of $Y.$ Assume, for the sake of contradiction, that $X$ is not an implicit cause of $Y.$ Then $X$ is not a definite cause of $Y.$ By Theorem 4.5 in~\citep{zuo2022counterfactual}, $\mathbf{C}\cup ch(X,\mathcal{G}^*)=\emptyset,$ and either $\mathbf{C}=\emptyset$ or $\mathbf{C}$ induces an complete subgraph of $G.$ Both cases implies that there exists $\mathbf{Q}\in\mathcal{Q}$ such that $\mathbf{C}\subseteq \mathbf{Q}.$ We will show that $X\perp Y|pa(X,\mathcal{G}^*)\cup \mathbf{Q},$ which leads to a contradiction. By Lemma~\ref{lemma: maxclique is pa}, there exists a DAG $G$ represented by $\mathcal{G}^*$ such that $pa(X,G)=pa(X,\mathcal{G}^*)\cup \mathbf{Q}$ and $ch(X,G)=ch(X,\mathcal{G}^*)\cup sib(X,\mathcal{G}^*)\setminus \mathbf{Q}.$ So $\mathbf{C}\subset pa(X,G).$ By Lemma 2 in~\citep{fang2020ida}, $X\not\in an(Y,G).$ So by the Markov property, $X\perp Y|pa(X,\mathcal{G}^*)\cup \mathbf{Q}.$ That contradicts with the assumption. 

\end{proof}

\section{Examples}
\label{append:example}

\begin{figure}[t]
\centering
\subfloat[$\mathcal{G}$]
{
\label{fig_eg:subfig1}\includegraphics[width=0.25\textwidth]{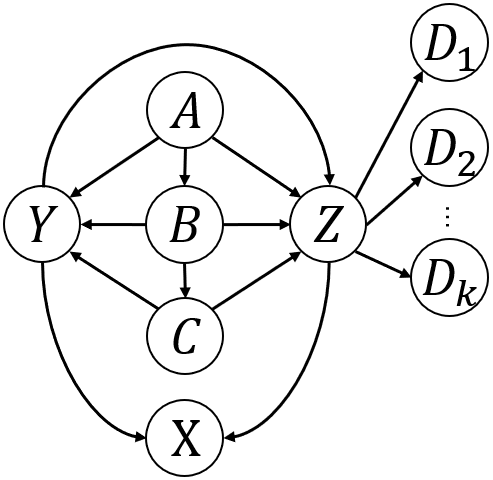}
}
\subfloat[$G_{(X)}$]
{
\label{fig_eg:subfig2}\includegraphics[width=0.25\textwidth]{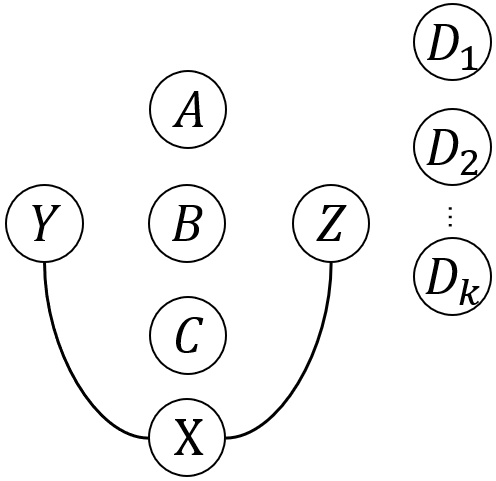}
}
\subfloat[$G_{(X,Y)}$]
{
\label{fig_eg:subfig3}\includegraphics[width=0.25\textwidth]{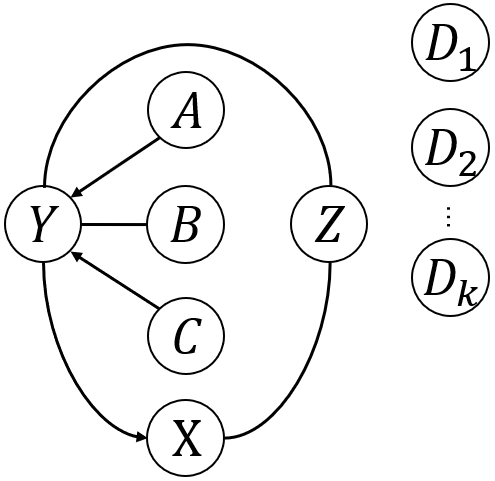}
}
\\
\subfloat[$G_{(X,Y,Z)}$]
{
\label{fig_eg:subfig4}\includegraphics[width=0.25\textwidth]{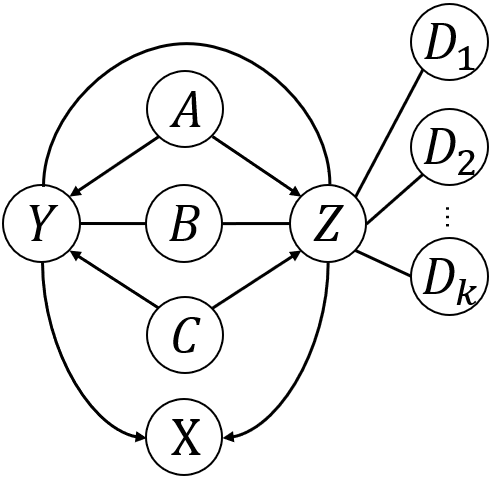}
}
\subfloat[$G^{A\to Y}_{(X)}$]
{
\label{fig_eg:subfig5}\includegraphics[width=0.25\textwidth]{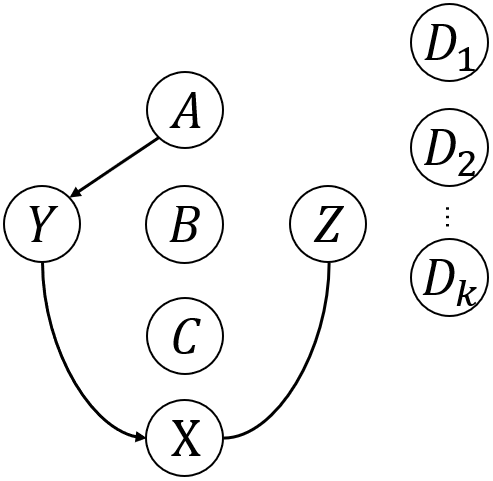}
}
\subfloat[$G^{A\to Y}_{(X,Z)}$]
{
\label{fig_eg:subfig6}\includegraphics[width=0.25\textwidth]{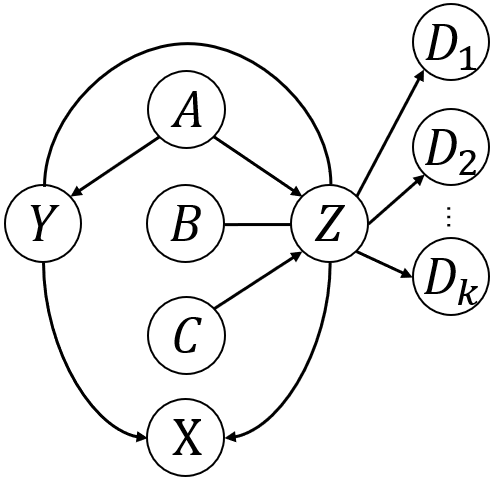}
}
\caption{An example for learning local structure. (a) The original DAG. (b)-(f) Learned structure $G$ defined at Step $3$ in Algorithm~\ref{alg:mb-by-mb in MPDAG}. $G_{D}^{\mathcal{B}}$ denote the learned graph after running Step $4$-$16$ in Algorithm~\ref{alg:mb-by-mb in MPDAG} for each node in $D$, with background knowledge $\mathcal{B}$.} 
\label{fig:eg}
\end{figure}

In this section, we use a more complicated example to demonstrate that how Algorithm~\ref{alg:mb-by-mb in MPDAG} is performed, and in which situation Algorithm~\ref{alg:mb-by-mb in MPDAG} needs more conditional independence tests than the baseline method. 

Suppose that the true DAG $\mathcal{G}$ is plotted by Figure~\ref{fig_eg:subfig1}. We want to learn the local structure around $X.$ Firstly, assume that there is no prior knowledge. In this way, Algorithm~\ref{alg:mb-by-mb in MPDAG} coincides with the baseline method. As following Step $4$-$16$ in Algorithm~\ref{alg:mb-by-mb in MPDAG}, we first find the MB of $X,$ which is $\mathrm{MB}(X)=\{X,Y,Z\}.$ Since any two nodes in $\{X,Y,Z\}$ are not independent given any subset of $\{X,Y,Z\}$ except themselves, we conclude that $X$ and $Y,Z$ are adjacent in $\mathcal{G}.$ Moreover, we do not learn any v-structure in the marginal graph since there is no independence. Therefore, we draw undirected edges $Y-X-Z$ in the learned local structure $G,$ giving Figure~\ref{fig_eg:subfig2}.

After that, we have $\mathrm{WaitList}=\{Y,Z\}$ since they are connected to $X$ by an undirected path in $G.$ Now we find the MB of $Y,$ which is $\mathrm{MB}(Y)=\{A,B,C,X,Z\}.$ By learning the marginal graph over $Y\cup \mathrm{MB}(Y),$ we find that $Y$ and $A,B,C,X,Z$ are adjacent, and we also find that $A\perp C|B,$ which implies that $A\to Y\leftarrow C$ is a v-structure in $\mathcal{G}.$ When finding $\mathrm{MB}(X)$ is previous steps, we have found that $X\perp A|Y,Z.$ Therefore, we orient $Y\to X$ in Step $10$ in Algorithm~\ref{alg:mb-by-mb in MPDAG}. Since there is no more v-structure including $Y,$ we put $Y$ into DoneList and update $\mathrm{WaitList}=\{Z\},$ giving Figure~\ref{fig_eg:subfig3}.

Finally, we find the MB of $Z,$ which contains all nodes except $Z.$ By learning the marginal graph, we first find that all other nodes are adjacent with $Z.$ As we found that $A\perp C|B$ previously, we directly know that $A\to Z\leftarrow C$ is a v-structure in $\mathcal{G}.$ When finding $\mathrm{MB}(X)$ is previous steps, we have found that $X\perp A|Y,Z.$ Therefore, we orient $Z\to X$ in Step $10$ in Algorithm~\ref{alg:mb-by-mb in MPDAG}. After that, there is no undirected edge connected with $X,$ so the algorithm returns and gives Figure~\ref{fig_eg:subfig4}. We find that the local structure around $X$ is $Y\to X\leftarrow Z.$

Now suppose that $A\to Y$ is known as prior knowledge. In Algorithm~\ref{alg:mb-by-mb in MPDAG}, we first add this edge to an empty graph $G.$ Then we find the MB of $X,$ which is $\mathrm{MB}(X)=\{Y,Z\}.$ By learning the marginal graph over $\{X,Y,Z\},$ we find that $X$ and $Y,Z$ are adjacent. We also find that $X\perp A|Y,Z.$ Therefore, we orient $Y\to X$ in Step $10$ in Algorithm~\ref{alg:mb-by-mb in MPDAG}. Therefore, we have Figure~\ref{fig_eg:subfig5} as the current local structure.

Then we have $\mathrm{WaitList}=\{Z\}.$ We find that $\mathrm{MB}(Z)=\mathbf{V}\setminus \{Z\},$ so we need to learn the entire graph to find $adj(Z,\mathcal{G})$ and v-structures including $Z.$ The first step is to find $adj(Z,\mathcal{G}),$ which iterate over $T\in \mathrm{V}\setminus \{Z\}$ and tests whether $Z\perp T\mid \mathbf{S}$ for each $\mathbf{S}\subseteq \mathrm{V}\setminus \{Z,T\}.$ This step is also done when there is no prior knowledge. However, we do not know $A\perp C\mid B$ now, so we need to do the second step, that is, to test other conditional independencies over $\mathbf{V},$ until we find that $A\perp C\mid B,$ orient the v-structure $A\to Z\leftarrow C,$ and orient $Z\to X$ and $Z\to D_i$ for $1\le i\le k$ in Step $10$ in Algorithm~\ref{alg:mb-by-mb in MPDAG}. Suppose that $k$ is large, then we need more conditional independence tests to learn the local structure $Y\to X\leftarrow Z,$ comparing with the scenario without prior knowledge.

Although this example shows that in extreme situation we may use more conditional independence tests to learn the local structure with prior knowledge, experiments in Section~\ref{exp:local structure} show that the presence of prior knowledge reduce the number of conditional independence test needed in average.

\section{Algorithm for local structure learning with all types of background knowledge}
\label{append:algorithm}

\begin{algorithm}[t]
\LinesNumbered
\caption{Learning the local structure around $X$ given background knowledge consisting of direct causal information $\mathcal{B}_1$, non-ancestral information $\mathcal{B}_2$ and ancestral information $\mathcal{B}_3$.}
\label{alg:local struct:all types}
\SetKwInOut{Input}{Input}
\SetKwInOut{Output}{Output}
\small

\Input{Target node $X$, observational data $\mathcal{D}$, background knowledge $\mathcal{B}=\mathcal{B}_1\cup \mathcal{B}_2\cup \mathcal{B}_3$.}
\Output{A PDAG $G$ containing the local structure of $X$ and a set of direct causal clauses within $G$.}
Let $G$ be the output of MB-by-MB algorithm~\citep{wang2014discovering} under input $X,\mathcal{D},\mathcal{B}_1$.

Let $\mathrm{DCC}=\emptyset$.

\ForEach{$(F_j,T_j)\in \mathcal{B}_3$}{
\If{$F_j$ and $X$ are connected by an undirected path in $G$}{
Let $\mathbf{candC} = sib(N_j,G), \mathrm{hasCand}=\mathrm{FALSE}.$

Let $\mathcal{Q}$ be the set of all maximal cliques in the induced subgraph of $G$ over $sib(N_j,G)$.

Let $\mathcal{Q}_1$ be the set of all $\mathbf{Q}\subseteq sib(F_j,G)$ such that letting $\mathbf{Q}\to F_j$ and $F_j\to sib(F_j,G)\setminus \mathbf{Q}$ does not introduce any v-structure collided on $F_j$.

\ForEach{$\mathbf{Q}\in \mathcal{Q}$}{
\If{$F_j \perp T_j \mid pa(F_j,G)\cup \mathbf{Q}$}{
Update $\mathbf{candC}:=\mathbf{candC}\cap \mathbf{Q}.$

Update $\mathrm{hasCand}:=\mathrm{TRUE}.$
}
}
\If{$\mathrm{hasCand}$}{
\ForEach{$\mathbf{Q}\in \mathcal{Q}_1\setminus \mathcal{Q}$}{
\If{$F_j \perp T_j \mid pa(F_j,G)\cup \mathbf{Q}$}{
Update $\mathbf{candC}:=\mathbf{candC}\cap \mathbf{Q}.$
}
}
Add $F_j \overset{or}{\to} \mathbf{candC}$ into $\mathrm{DCC}$.
}
}
}

\ForEach{$(F_i,T_i)\in \mathcal{B}_1$}{
\If{$F_j$ and $X$ are connected by an undirected path in $G$}{
Orient $F_i\to T_i$ and apply Meek's rules on $G$.
}
}

Run Step 2 to 12 of Algorithm~\ref{alg:local struct:direct and non-an}.

\Return{G, DCC}

\end{algorithm}

By Theorem~\ref{thm:correct alg non-an}, background knowledge with direct causal information and non-ancestral information can be utilized for learning local structure of any MPDAG by Algorithm~\ref{alg:local struct:direct and non-an}. Ancestral information that $F_j$ is a cause of $T_j$ is equivalent to that there exists $C\in \mathbf{C}_{\mathcal{G}^*}(F_j,T_j)$ such that $F_j\to C$ in the true underlying DAG $\mathcal{G},$ where $\mathcal{G}^*$ is an MPDAG representing a restricted Markov equivalence class $[\mathcal{G}^*]$ containing $\mathcal{G},$ and $\mathbf{C}_{\mathcal{G}^*}(F_j,T_j)$ is the critical set of $F_j$ with respect to $T_j$ in $\mathcal{G}^*.$ With theories of direct causal clause (DCC,~\citep{fang2022representation}), we rewrite this fact as $F_j\overset{or}{\to} \mathbf{C}_{\mathcal{G}^*}(F_j,T_j).$

\section{Additional experimental results}
\label{append:experiment}

\begin{figure}[t!]
	% \vspace{-1.0em}
	\centering
	
	\subfloat[$n=100$, $N=100$ \label{fig:kappa_100_100}]{
		\begin{minipage}[t]{0.22\textwidth}
			\centering
			\includegraphics[width=\textwidth]{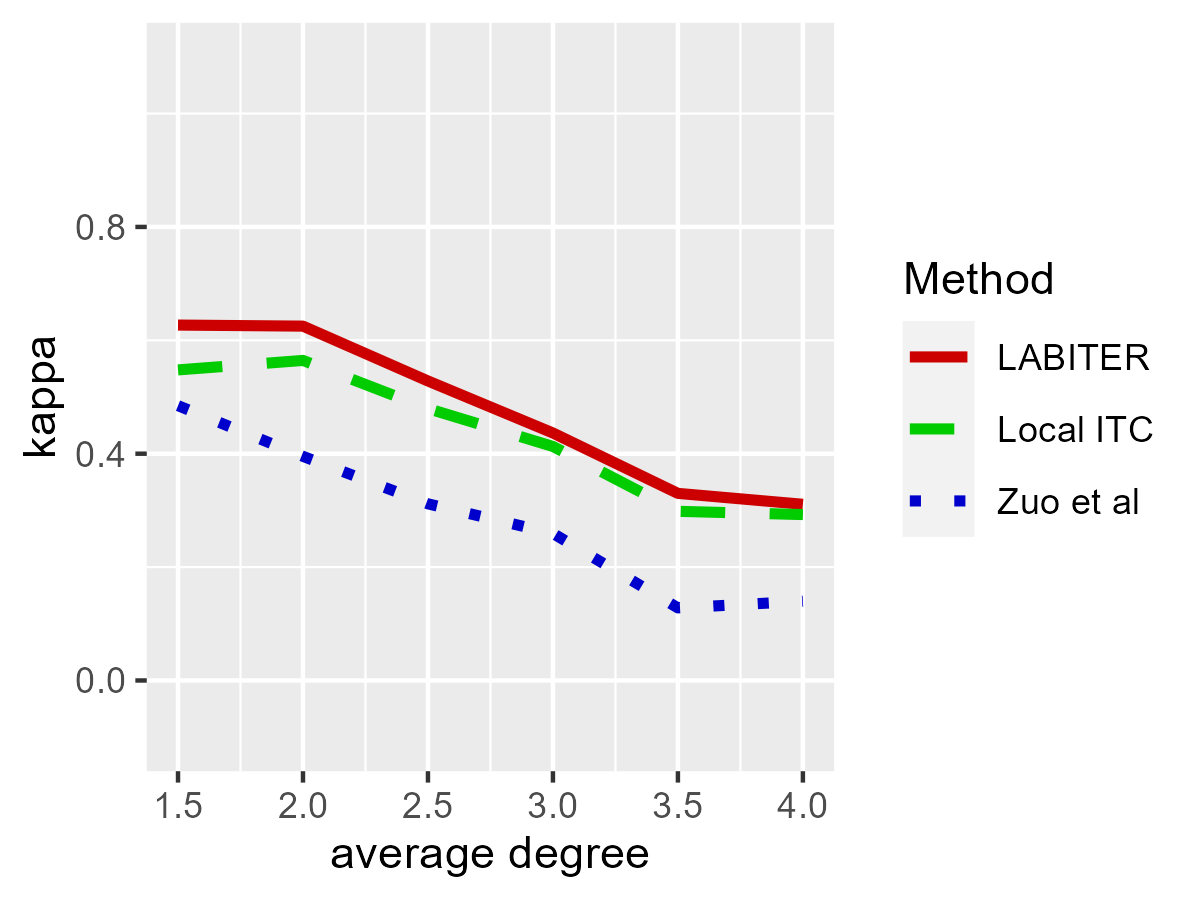}
			%\caption{fig1}
		\end{minipage}%
	}%
	\hspace{0.01\textwidth}
	\subfloat[$n=100$, $N=200$  \label{fig:kappa_100_200}]{
		\begin{minipage}[t]{0.22\textwidth}
			\centering
			\includegraphics[width=\textwidth]{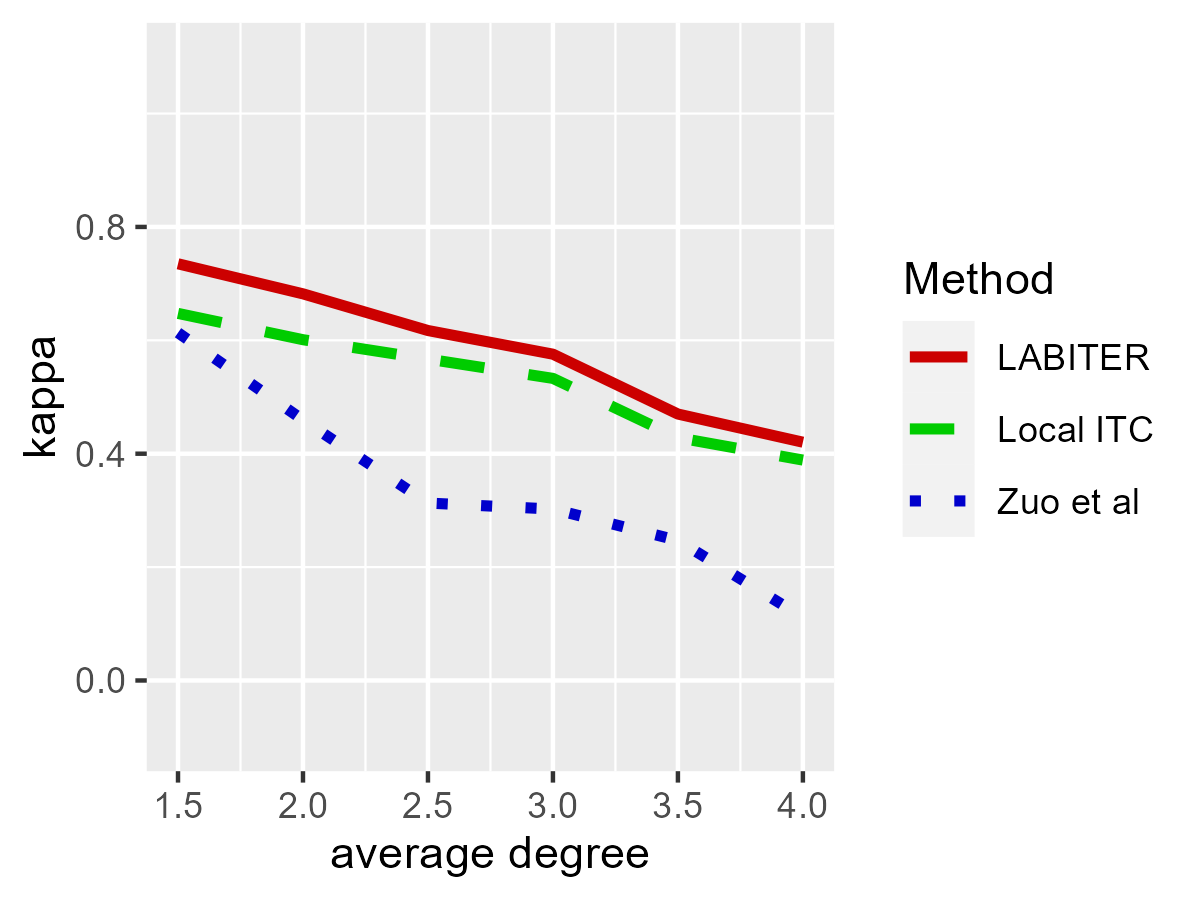}
			%\caption{fig1}
		\end{minipage}%
	}%
	\hspace{0.01\textwidth}
	\subfloat[$n=100$, $N=500$ \label{fig:kappa_100_500}]{
		\begin{minipage}[t]{0.22\textwidth}
			\centering
			\includegraphics[width=\textwidth]{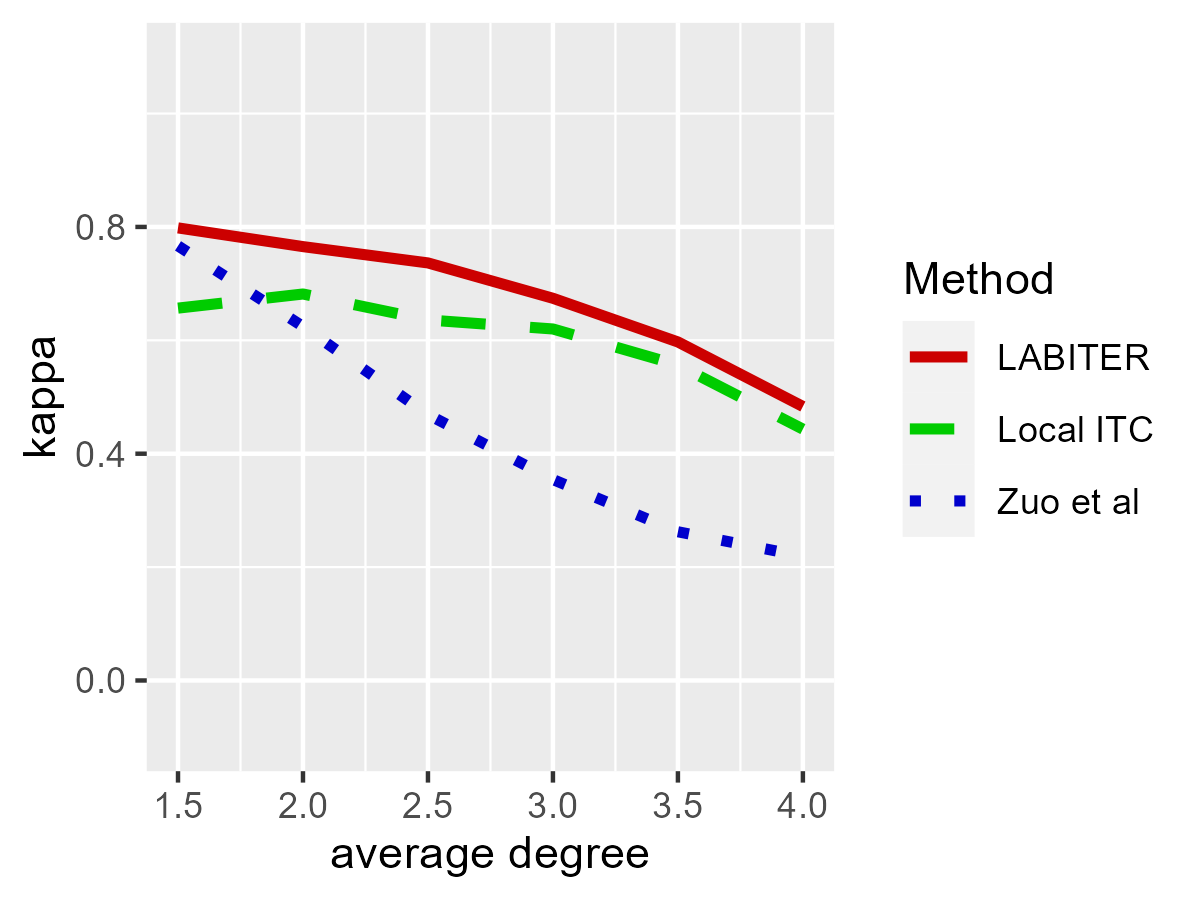}
			%\caption{fig1}
		\end{minipage}%
	}%
	\hspace{0.01\textwidth}
	\subfloat[$n=100$, $N=1000$ \label{fig:kappa_100_1000}]{
		\begin{minipage}[t]{0.22\textwidth}
			\centering
			\includegraphics[width=\textwidth]{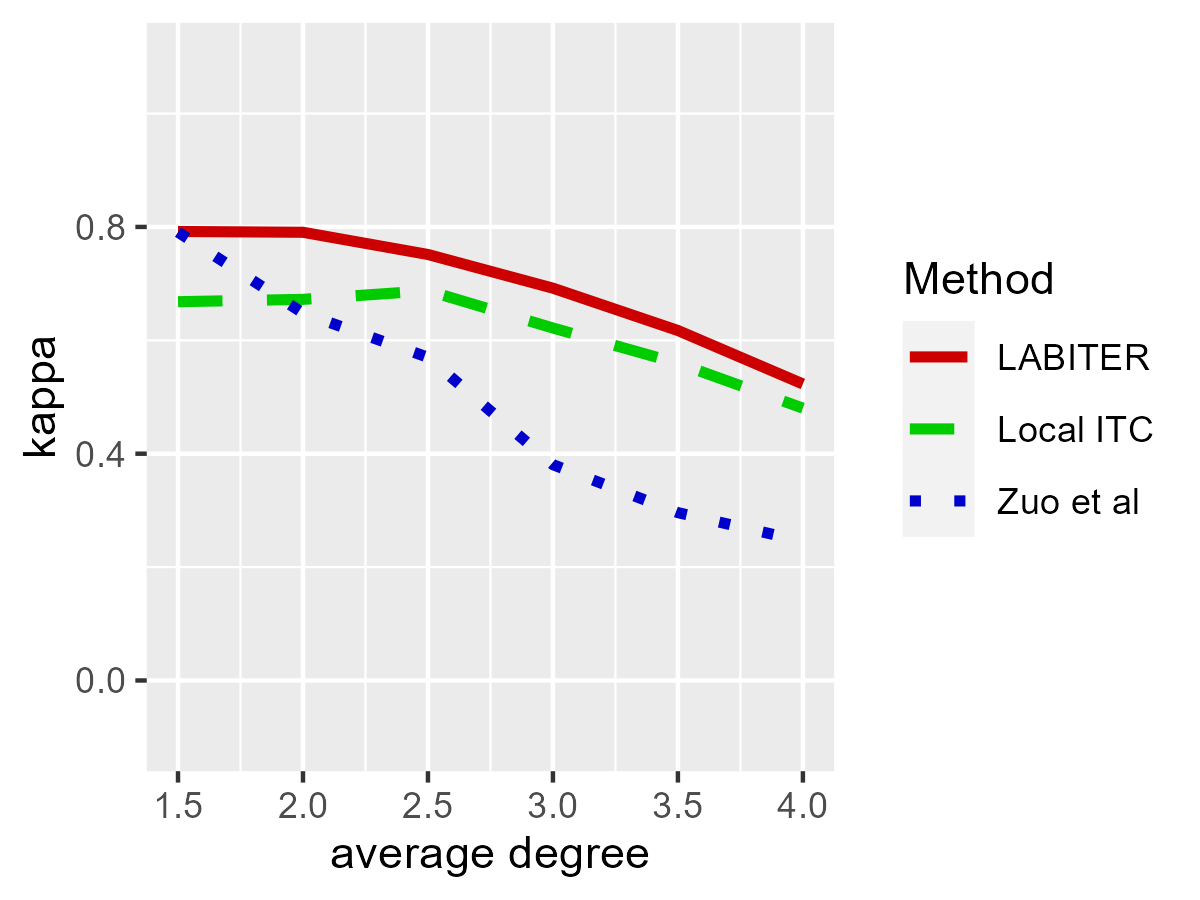}
			%\caption{fig1}
		\end{minipage}%
	}%

    \subfloat[$n=100$, $N=100$ \label{fig:time_100_100}]{
		\begin{minipage}[t]{0.22\textwidth}
			\centering
			\includegraphics[width=\textwidth]{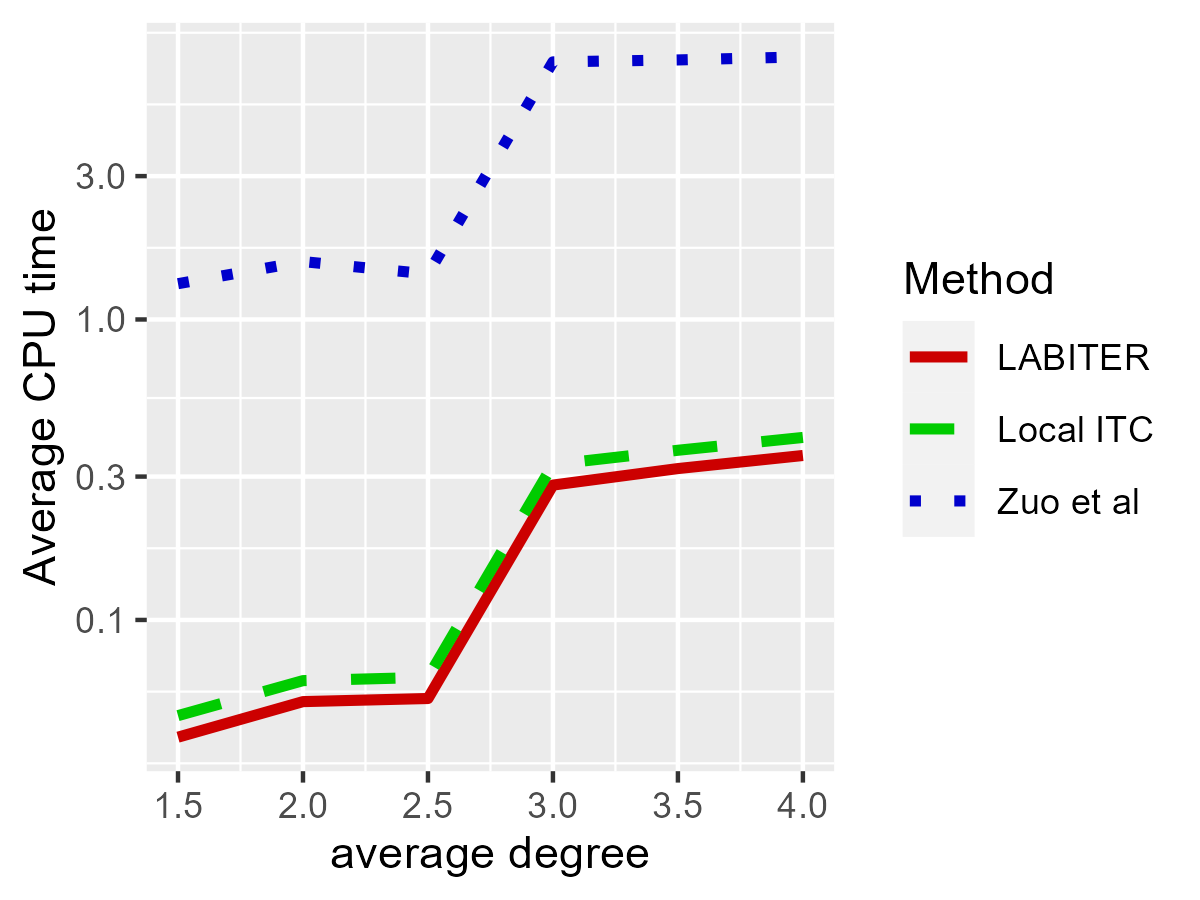}
			%\caption{fig1}
		\end{minipage}%
	}%
	\hspace{0.01\textwidth}
	\subfloat[$n=100$, $N=200$  \label{fig:time_100_200}]{
		\begin{minipage}[t]{0.22\textwidth}
			\centering
			\includegraphics[width=\textwidth]{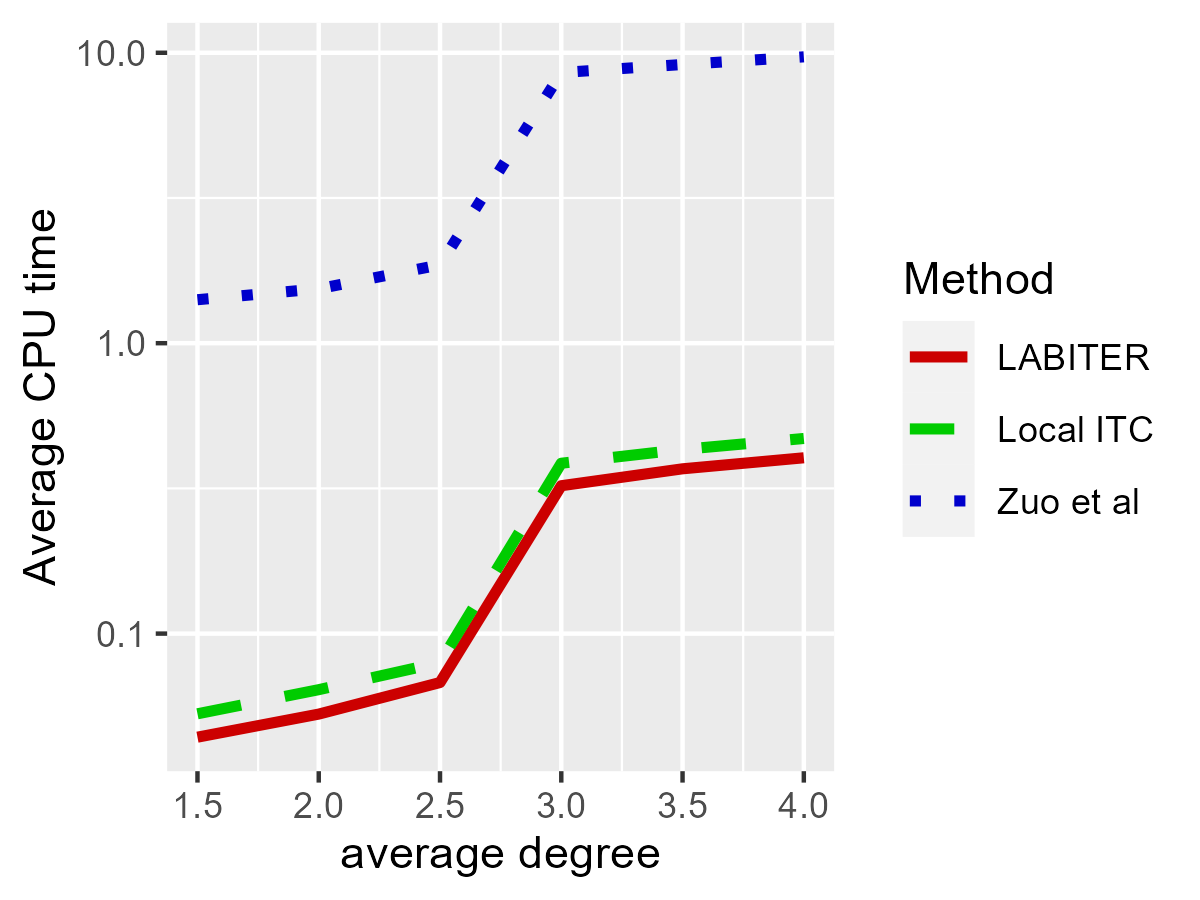}
			%\caption{fig1}
		\end{minipage}%
	}%
	\hspace{0.01\textwidth}
	\subfloat[$n=100$, $N=500$ \label{fig:time_100_500}]{
		\begin{minipage}[t]{0.22\textwidth}
			\centering
			\includegraphics[width=\textwidth]{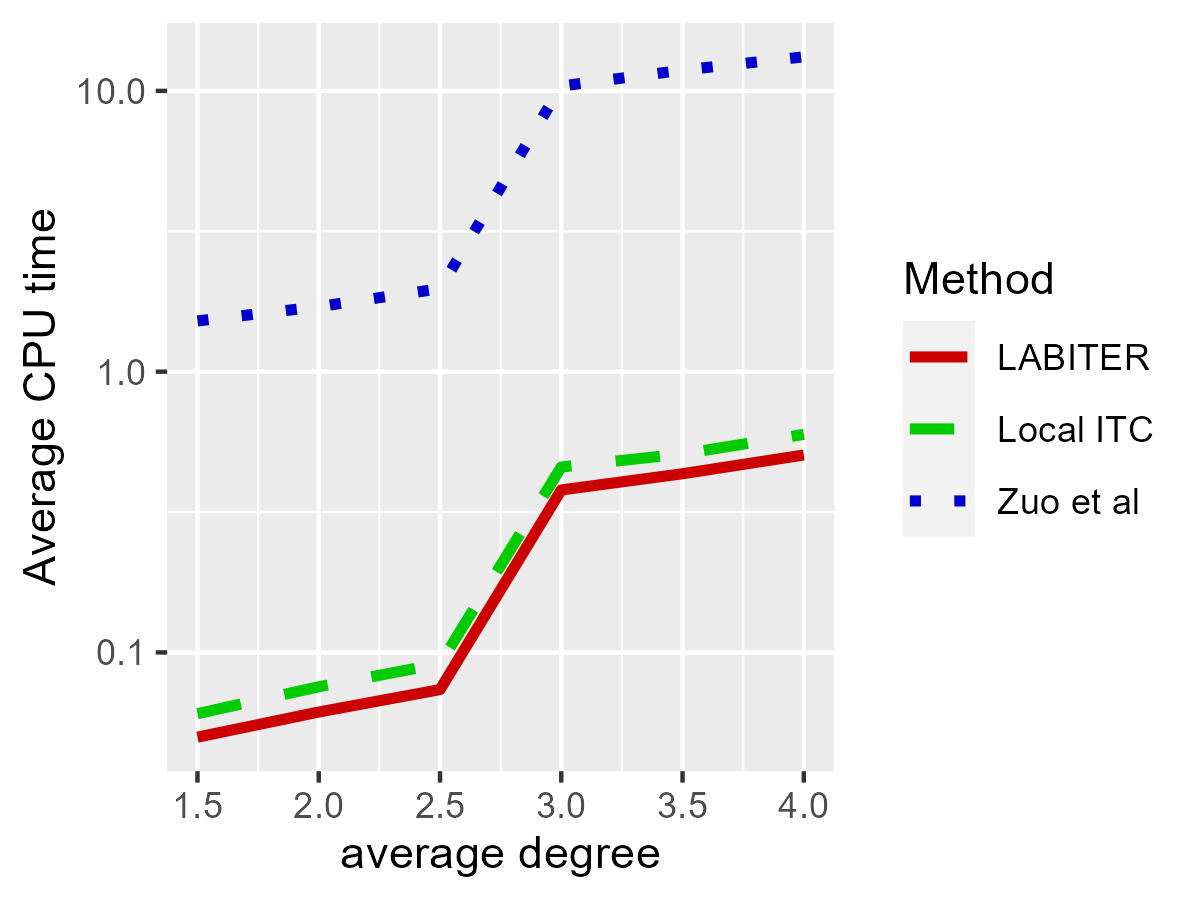}
			%\caption{fig1}
		\end{minipage}%
	}%
	\hspace{0.01\textwidth}
	\subfloat[$n=100$, $N=1000$ \label{fig:time_100_1000}]{
		\begin{minipage}[t]{0.22\textwidth}
			\centering
			\includegraphics[width=\textwidth]{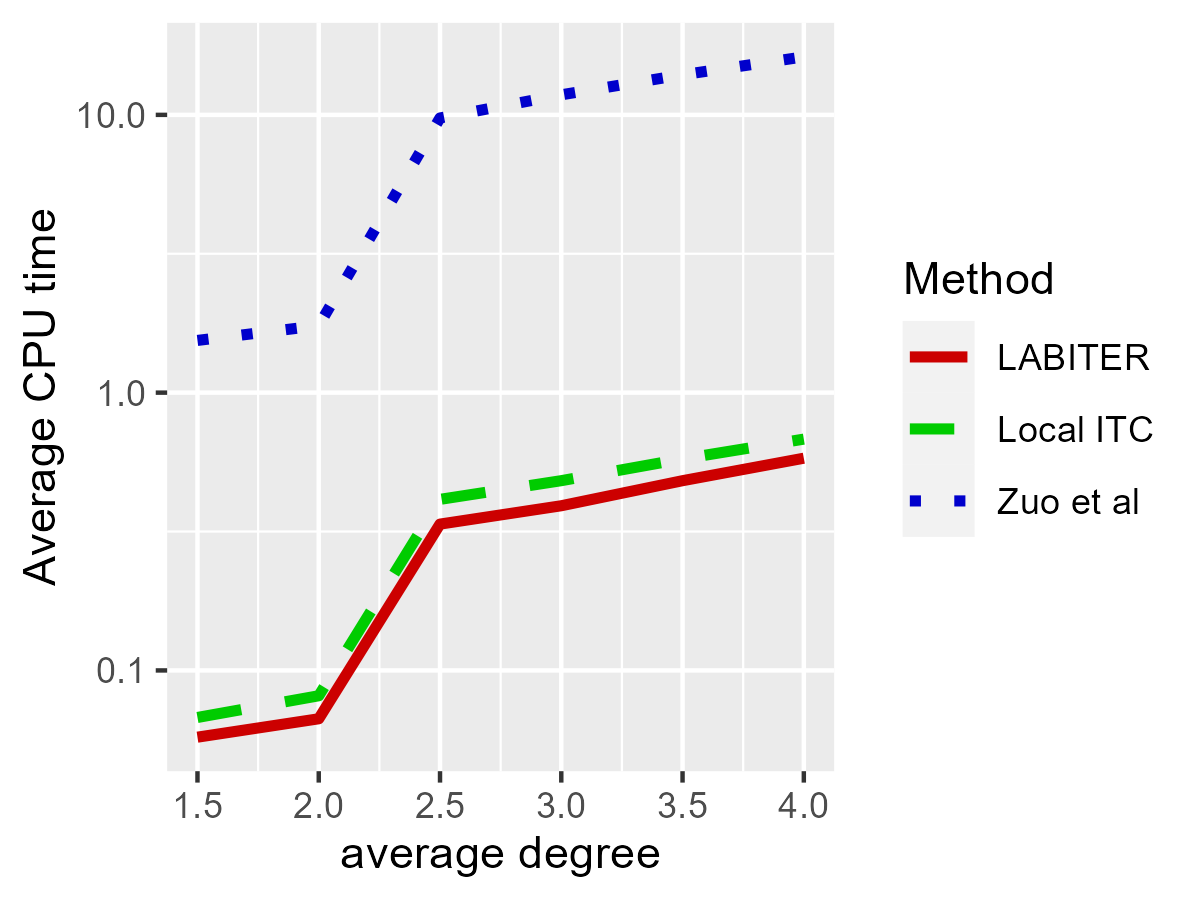}
			%\caption{fig1}
		\end{minipage}%
	}%
	
	\caption{(a)-(d) The Kappa coefficients of different methods on random graphs with $n$ nodes and $N$ samples are drawn. (e)-(h) The average CPU time of different methods. }
	\label{fig:true:kappa_n100}
	% \vspace{-1.0em}
\end{figure}

\end{document}